\documentclass{article}

\newif\ifshownavigationpage
\newif\ifshowreminders
\newif\ifshownotationindex
\newif\ifshowtheoremlinks
\newif\ifshowtheoremtree
\newif\ifshowtheoremlist
\newif\ifshowequationlist

\newif\ifshowcomments
\newif\ifhighlight 
\newif\ifelaborate

\newif\ifshowaddressedcomments
\newif\ifshowrvin
\newif\ifshowrvout


\showcommentstrue


\usepackage{amsthm, amsmath, amssymb}
\usepackage{thmtools}
\usepackage{soul}
\usepackage{authblk}
\usepackage{framed}
\usepackage{mathrsfs}
\usepackage{hyperref}
\hypersetup{
    colorlinks,
    linkcolor={red!50!black},
    citecolor={blue!50!black},
    urlcolor={blue!80!black}
}
\usepackage{booktabs} 

\usepackage[dvipsnames]{xcolor}
\usepackage{graphicx, multicol}
\usepackage{marvosym, wasysym}
\usepackage{fancyhdr}
\usepackage{stackengine}
\usepackage{algorithm}
\usepackage{bm}
\usepackage[noend]{algpseudocode}
\usepackage{bbm}
\usepackage{verbatim}
\usepackage{mdframed}
\usepackage{needspace}
\makeatletter
\renewcommand{\ALG@beginalgorithmic}{\scriptsize}
\makeatother
\usepackage{xcolor}
\allowdisplaybreaks

\DeclareFontFamily{U}{mathx}{}
\DeclareFontShape{U}{mathx}{m}{n}{ <-> mathx10 }{}
\DeclareSymbolFont{mathx}{U}{mathx}{m}{n}
\DeclareFontSubstitution{U}{mathx}{m}{n}

\renewcommand{\dj}[1]{\bm{d}_{J_1}^{_{#1}}}
\newcommand{\dlp}[1]{\bm{d}_{L_p}^{_{#1}}}
\newcommand{\tofdd}{\stackrel{f.d.d.}{\to}}

\ifhighlight
\else
    
\fi

\newcommand{\rvoutopacity}{20}
\ifshowrvin
    
    \newcommand{\chrin}[1]{{\color{blue}{#1}}}
\else
    
    \newcommand{\chrin}[1]{#1}
\fi

\ifshowrvout
    \newcommand{\rvout}[1]{{\color{red!\rvoutopacity}{#1}} }
    \newcommand{\chrout}[1]{{\color{blue!\rvoutopacity}{#1}} }    
    \newcommand{\rvoutm}[1]{{\color{black!\rvoutopacity}{\ifmmode\text{\sout{\ensuremath{\displaystyle#1}}}\else\sout{#1}\fi}} } 
\else
    \newcommand{\rvout}[1]{}
    \newcommand{\chrout}[1]{}
\fi

\ifshowcomments
    \newcommand{\summ}[1]{{\color{blue}[summary: #1]} } 
    \newcommand{\chr}[1]{{\color{PineGreen}[CR: #1]} } 
    \newcommand{\xw}[1]{{\color{RoyalBlue}[XW: #1]} } 
    \ifshowaddressedcomments
        \newcommand{\chra}[1]{{\color{PineGreen}\sout{[CR: #1]}} } 
        \newcommand{\xwa}[1]{{\color{RoyalBlue}\sout{[XW: #1]}} } 
    \else
        \newcommand{\chra}[1]{} 
        \newcommand{\xwa}[1]{} 
    \fi
\else
    \newcommand{\summ}[1]{} 
    \newcommand{\chr}[1]{} 
    \newcommand{\chra}[1]{} 
    \newcommand{\xw}[1]{} 
    \newcommand{\xwa}[1]{} 
\fi


\usepackage[shortlabels]{enumitem}

\newlist{thmdependence}{itemize}{10}
\setlist[thmdependence]{nosep,label=-}

\newcommand{\thmtreenode}[5]{\item[#1] \linkdest{location, thm tree #3} {#2}~\ref{#3} \linktopf{#3} \thmsum{#4}{#5}}
\newcommand{\thmtreenodewopf}[5]{\item[#1] \linkdest{location, thm tree #3} {#2}~\ref{#3} \thmsum{#4}{#5}}
\newcommand{\thmtreeref}[2]{\item[\elsewhere] {{\hyperlink{location, thm tree #2}{\color{gray}#1}}}~\ref{#2}\thmsum{0.5}{}}

\ifshowtheoremlinks
    \newcommand{\linksinthm}[1]{\emph{\linkdest{location, #1}\linktopf{#1} \linktothmtree{location, thm tree #1} }}
    \newcommand{\linksinthmwopf}[1]{\emph{\linkdest{location, #1} \linktothmtree{location, thm tree #1} }}
    \newcommand{\linksinpf}[1]{\linkdest{location, proof of #1}\linktothm{#1} \linktothmtree{location, thm tree #1} }
\else
    \newcommand{\linksinthm}[1]{}
    \newcommand{\linksinthmwopf}[1]{}
    \newcommand{\linksinpf}[1]{}
\fi

\ifshownotationindex
    \newcommand{\notationdef}[2]{\linkdest{location, notation definition of #1}\hyperlink{location, notation index of #1}{#2}}
    \newcommand{\notationidx}[2]{\linkdest{location, notation index of #1}\hyperlink{location, notation definition of #1}{#2}}
\else
    \newcommand{\notationdef}[2]{#2}
\fi
    
\newcommand{\linktopf}[1]{\hyperlink{location, proof of #1}{\pflinksymbol}}
\newcommand{\linktothm}[1]{\hyperlink{location, #1}{\thmlinksymbol}}
\newcommand{\linktothmtree}[1]{\hyperlink{#1}{\thmtreelinksymbol}}
\newcommand{\thmlinksymbol}{{\tiny [Theorem]}}
\newcommand{\pflinksymbol}{{\tiny [Proof]}}
\newcommand{\thmtreelinksymbol}{{\tiny [ThmTree]}}

\newcommand{\complete}{{\color{black}\checkmark}}

\newcommand{\issue}{{\color{red}\checkmark}}
\newcommand{\elsewhere}{}

\newcommand{\thmsum}[2]{\quad{\color{gray}\begin{minipage}[t]{#1\linewidth}{#2}\vspace{0.5\baselineskip}\end{minipage}}}

\makeatletter
\newcommand{\linkdest}[1]{\Hy@raisedlink{\hypertarget{#1}{}}}
\makeatother

\newcommand{\elaborateopacity}{50}
\newcommand{\elaboratecolor}{RawSienna}
\ifelaborate
    \newcommand{\elaborate}[1]{{\color{\elaboratecolor!\elaborateopacity}{
    \begin{framed}
    \noindent {\footnotesize[Elaboration]}
    #1 
    \end{framed}
    }}\noindent}
\else
    \newcommand{\elaborate}[1]{}
\fi

\usepackage[margin=1.2in]{geometry}


\newtheorem{theorem}{Theorem}
\newtheorem{lemma}[theorem]{Lemma}
\newtheorem{corollary}[theorem]{Corollary}
\newtheorem{proposition}[theorem]{Proposition}

\newtheorem{definition}[theorem]{Definition}

\newtheorem{assumption}{Assumption}
\newtheorem{remark}{Remark}
\newtheorem{condition}{Condition}

\newtheorem*{theorem-nonumber}{Theorem}
\newtheorem*{condition-nonumber}{Condition}
\newtheorem*{proposition-nonumber}{Proposition}


\usepackage{mathtools}
\DeclarePairedDelimiter{\ceil}{\lceil}{\rceil}
\DeclarePairedDelimiter\floor{\lfloor}{\rfloor}

\newcommand{\D}{\mathbb D}

\newcommand{\cmt}[1]{#1} 
\renewcommand{\cmt}[1]{} 
\renewcommand{\P}{\mathbf{P}}
\newcommand{\Q}{\mathbf{Q}}
\newcommand{\E}{\mathbf{E}}
\newcommand{\RV}{\mathcal{RV}}
\newcommand{\R}{\mathbb{R}}
\newcommand{\Z}{\mathbb{Z}}
\renewcommand{\S}{\mathbb{S}}
\newcommand{\C}{\mathbb{C}}
\newcommand{\M}{\mathbb{M}}

\renewcommand{\complement}{c}

\usepackage[normalem]{ulem}

\usepackage{multirow}
\usepackage{longtable}

\usepackage{array}
\def\delequal{\mathrel{\ensurestackMath{\stackon[1pt]{=}{\scriptscriptstyle\Delta}}}}
\def\distequal{\mathrel{\ensurestackMath{\stackon[1pt]{=}{\scriptstyle d}}}}
\newcommand{\norm}[1]{\left\lVert#1\right\rVert}
\usepackage{mathtools}




\algrenewcommand\algorithmicrequire{\textbf{Require:}}
\algrenewcommand\algorithmicensure{\textbf{Postcondition:}}

\usepackage{subfigure}
\usepackage{float}
\usepackage{subfiles}
\title{Global Dynamics of Heavy-Tailed SGDs in Nonconvex Loss Landscape: Characterization and Control}

\DeclareMathAccent{\widecheck}{0}{mathx}{"71}

\author[1]{Xingyu Wang} 
\author[2]{Chang-Han Rhee}
\affil[1]{Quantitative Economics, University of Amsterdam\\
    Amsterdam, 1018 WB, NL}
\affil[2]{Industrial Engineering and Management Sciences, Northwestern University\\
    Evanston, IL, 60613, USA}
\begin{document}
\maketitle

\begin{abstract}
\noindent
Stochastic gradient descent (SGD) and its variants enable
modern artificial intelligence. 
However, theoretical understanding lags far behind their empirical success.
It is widely believed that SGD has a curious ability to avoid sharp local minima in the loss landscape, which are associated with poor generalization. To unravel this mystery and further enhance such capability of SGDs, it is imperative to go beyond the traditional local convergence analysis and obtain a comprehensive understanding of SGDs' global dynamics.
In this paper, we develop a set of technical machinery based on the recent large deviations and metastability analysis in \cite{wang2023large} and obtain sharp characterization of the global dynamics of heavy-tailed SGDs.
In particular, we reveal a fascinating phenomenon in deep learning: by injecting and then truncating heavy-tailed noises during the training phase, SGD can almost completely avoid sharp minima and achieve better generalization performance for the test data.
Simulation and deep learning experiments confirm our theoretical prediction that heavy-tailed SGD with gradient clipping finds local minima with a more flat geometry and achieves better generalization performance.
\end{abstract}

\counterwithin{equation}{section}
\counterwithin{lemma}{section}
\counterwithin{corollary}{section}
\counterwithin{theorem}{section}
\counterwithin{definition}{section}
\counterwithin{proposition}{section}
\counterwithin{figure}{section}
\counterwithin{table}{section}

\tableofcontents

\section{Introduction}

\xwa{
generalization gap; flat minima.
\cite{zhang2017understanding}: nearly perfect fit under random noise or labels.
\cite{doi:10.1073/pnas.1903070116}: challenge the classical bias-variance (underfitting vs overfitting).
\cite{hochreiter1997flat}: flat minima folklore goes back to this paper.
\cite{keskar2017on,jastrzębski2018factorsinfluencingminimasgd}: correlation between sharpness, generalization performance, and hyperparameters for training.
\cite{dinh2017sharp}: sharp minima can generalize.
\cite{NIPS2017_10ce03a1}: rigorously supported  the connection between sharpness and generation via the PAC-Bayesian framework (see also \cite{dziugaite2017computingnonvacuousgeneralizationbounds}).
Dynamical Stability perspective: \cite{NEURIPS2018_6651526b}.
Implicit regularization in sharpness through the study of approximating SDE: \cite{pmlr-v125-blanc20a,li2022happenssgdreacheszero}; see also \cite{NEURIPS2021_e6af401c} using coupling techniques.
Adaptive sharpness: proposed in \cite{pmlr-v139-kwon21b}, with \cite{pmlr-v202-andriushchenko23a} critique.
Empirical evidence: ?.
\cite{pmlr-v187-kaur23a}: challenge the connection between generalization (also in SAM) and largest eigenvalue of Hessian.
\cite{bahri-etal-2022-sharpness}: benefits of sharpness methods in language models.
\cite{Jiang*2020Fantastic}: large scale study.
\cite{chen2022vision}: benefits for visual transformers.
\cite{NEURIPS2021_bcb41ccd}: benefits for domain generalization tasks.
\cite{10.1007/978-3-031-40837-3_13}: benefits for graph neural networks.
}

Deep learning has seen unprecedented successes in a wide range of contexts, including image recognition, natural language processing, and game playing
\cite{lecun2015deep,NIPS2012_c399862d,NIPS2017_3f5ee243,silver2016mastering},
effectively laying the foundation for the modern machine learning and artificial intelligence revolution.
At the core of such sweeping empirical successes lies a central mystery: 
the ability of deep neural networks to generalize from the available training data to unseen test data.
In particular, modern deep learning tasks often employ heavily over-parameterized model architectures that are able to perfectly fit the training data or even random labels (see \cite{zhang2017understanding})
yet still generalize remarkably well during the test phase.
This observation challenges the classical bias-variance tradeoff (i.e., under-fitting vs.\ over-fitting) in the model capacity and generalization performance (see, e.g., \cite{doi:10.1073/pnas.1903070116})
and calls for new perspectives.

Regarding the generalization mystery in deep learning, a hypothesis that has become increasingly popular recently is that generalization is closely related to the sharpness of the loss landscape. 
More precisely, the training of the machine learning models is typically formulated as an optimization problem $\min_{\bm \theta}f(\bm \theta)$, where the training algorithm updates the model weights $\bm \theta$ in order to minimize the loss function $f(\cdot)$ induced by the training data and model architecture at hand. 
Such loss landscapes $f(\cdot)$ exhibit highly non-convex and sophisticated geometry with a plethora of local minima; see, e.g., \cite{li2018visualizing,NEURIPS2019_48042b1d}.
The flat-minima folklore dates back to \cite{hochreiter1997flat}, and carries a simple yet compelling intuition as argued in \cite{keskar2017on}:
models tend to generalize well at a local minimum $\bm \theta$ where the training loss landscape $f(\cdot)$ exhibits a flatter geometry, as such $\bm \theta$ ensures a consistent and robust model performance under the small perturbation of loss landscape when switching from the training to the test setting. 
Moreover, \cite{keskar2017on,jastrzębski2018factorsinfluencingminimasgd} observe that SGDs (i.e., with $\nabla f(\cdot)$ estimated over randomly chosen small batches of training data during each iteration)
yield solutions with flatter geometry and better generalization performance when compared to the deterministic gradient descent (GD) iterates (i.e., using the entire training set for each iteration).
Since then, the rigorous justification of the connection between sharpness and generalization has become an active field of research,
with existing work
built upon PAC-Bayes theory (see \cite{NIPS2017_10ce03a1,dziugaite2017computingnonvacuousgeneralizationbounds}),
taking the dynamical stability perspective (see \cite{NEURIPS2018_6651526b}),
or studying the implicit regularization of sharpness in SGDs (see \cite{pmlr-v125-blanc20a,li2022happenssgdreacheszero,NEURIPS2021_e6af401c}).
While these theoretical analyses are inevitably complicated by factors
such as
the wide range of candidates for sharpness metrics (leading eigenvalue of $\nabla^2 f(\bm \theta)$, trace of $\nabla^2 f(\bm \theta)$, expected sharpness \cite{pmlr-v97-zhu19e, NIPS2017_10ce03a1},
PAC-Bayes-based sharpness metrics \cite{NIPS2017_10ce03a1}, adaptive sharpness \cite{pmlr-v139-kwon21b}, etc.),
the lack of invariance property in many sharpness notions under equivalent reparameterization of model weights
(see \cite{dinh2017sharp}),
and the inherently data- and task-dependent nature of the problem (see \cite{pmlr-v202-andriushchenko23a}),
on the empirical front there is a growing body of evidence showing that SGD tends to find flatter minima and attain better test accuracy when compared to GD, and that seeking flatter minima leads to better generalization performance in practice across a wide range of contexts, including language and vision models, graph neural networks, and domain generalization tasks; see, e.g., \cite{Jiang*2020Fantastic,bahri-etal-2022-sharpness,chen2022vision,10.1007/978-3-031-40837-3_13,NEURIPS2021_bcb41ccd}.

Therefore, it is important to develop principled approaches for understanding and further enhancing SGD's ability to avoid sharp minima. 
In this paper, we focus on the characterization and control the global dynamics of SGD when exploring a multimodal loss landscape with several local minima.
In particular, we examine the global dynamics of SGD driven by truncated heavy-tailed noise:
given the step size (i.e., learning rate parameter) $\eta > 0$ and initial value $\bm x \in \R^d$,
we consider the recursion 
\begin{align}
     \bm X^{\eta|b}_0(\bm x) = \bm x;\quad 
     \bm X^{\eta|b}_{t+1}(\bm x) & = \bm X^{\eta|b}_{t}(\bm x) + \varphi_b\Big(-\eta \nabla f\big( \bm X^{\eta|b}_{t}(\bm x)\big) + \eta \bm\sigma\big( \bm X^{\eta|b}_{t}(\bm x)\big)\bm Z_{t+1}\Big)\quad \forall t = 0,1,2,\cdots,
     \label{def, intro, clipped SGD}
\end{align}
where the gradients $\nabla f(\cdot)$ are perturbed by noise terms $\bm Z_t$'s with power-law heavy-tailed distributions (formally captured by the notion of multivariate regular variation; see Section~\ref{subsec: global dynamics problem setting}),
the coefficient $\bm \sigma(\cdot):\R^{d} \to \R^{d \times d}$ captures the structure of noise at different states,
and the stochastic gradients are truncated by the gradient clipping operator with threshold $b > 0$, i.e., 
\begin{align}
    \notationdef{notation-truncation-operator-level-b}{\varphi_b}(\bm w) 
    \delequal{} 
    (b \wedge \norm{\bm w}) \cdot \frac{\bm w}{\norm{\bm w}},
    \ \ \ \forall \bm w \neq \bm 0;
    \qquad
    {\varphi_b}(\bm 0) \delequal \bm 0. \label{defTruncationClippingOperator, intro}
\end{align}
That is, $\varphi_b(\bm w)$ maintains the direction of the vector $\bm w$ but rescales it to ensure that the norm would not exceed the threshold $b$.
We show that under the presence of truncated heavy-tailed noise, SGD would \emph{almost always stay at the widest minima} over the loss landscape $f(\cdot)$.
Furthermore, this intriguing phenomenon inspires us to propose a new optimization algorithm for finding local minima and improving generalization performance in deep learning.
To be more precise, the main contributions of this paper can be summarized as follows.
\begin{itemize}
    \item 
        {\bf Theoretical Contributions: Characterization of Global Dynamics.}
        We establish a scaling limit of the (possibly truncated) heavy-tailed SGDs \eqref{def, intro, clipped SGD} over a multi-well potential at the process level.         
        The scaling limit is a Markov jump process whose state space consists of the local minima of the potential. 
        In particular, Theorem~\ref{corollary irreducible case} systematically characterize a curious phenomenon that the truncated heavy-tailed processes avoid narrow local minima altogether in the limit.
        As a direct application, we state an ergodic theorem (Corollary~\ref{corollary, elimination of sharp minima, metastability}), which shows that the fraction of time such processes spend in the narrow attraction field converges to zero as the step-size tends to zero.

    \item 
        {\bf Algorithmic Contributions: Control of SGDs using Truncated Heavy Tails.}
        Inspired by the sharp characterization of the global behavior of heavy-tailed SGDs,
        we propose a new training strategy for seeking flat minima in deep learning.
        Specifically, by injecting and then truncating heavy-tailed noise in SGD,
        this novel optimization algorithm consistently finds local minima with a flatter geometry and improved generalization performance when compared to vanilla SGD methods in deep learning experiments.
    \end{itemize}

\noindent
Below, we provide an overview of the paper and a comparison to related literature.

\subsection{Overview of the paper}

\begin{figure}[ht]
\vskip 0.2in

\begin{center}
\begin{tabular}{ c c}
\includegraphics[width=0.33\textwidth]{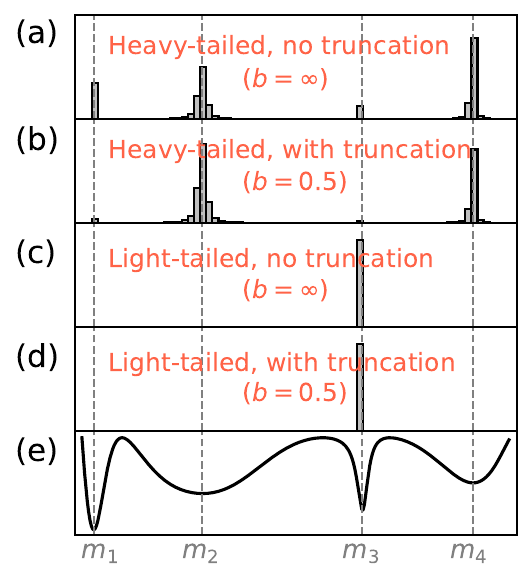}\qquad \qquad& \qquad\qquad \includegraphics[width=0.35\textwidth]{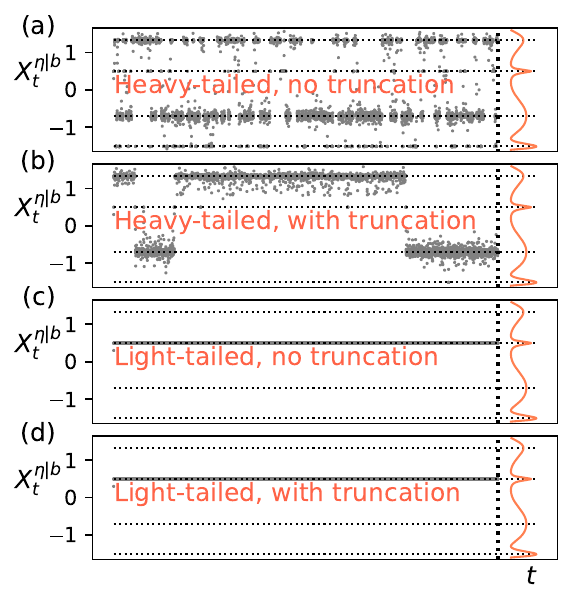}
\\
\end{tabular}
\caption{ 
\textbf{(Left)} Histograms of the locations visited by SGD when driven by different noise
and exploring the multimodal function $f$ plotted in part (e), 
with the dashed lines indicating the local minima of $f$. 
Under truncated heavy-tailed noise, SGD hardly ever visits the two narrower minima $m_1$ and $m_3$, and spends almost all its time around the wider minima $m_2$ and $m_4$.
\textbf{(Right)} Typical trajectories of different SGD methods when exploring $f$, with the dashed lines indicating the locations of the local minima.
Without truncation, heavy-tailed SGDs keep jumping between all local minima of $f$ (see part (a) of the figure). 
In contrast, when driven by truncated heavy-tailed noise, the global dynamics of SGD resemble those of a continuous-time Markov chain that only visits the wider minima of $f$ (see part (b) of the figure).
}
\label{fig histograms}
\end{center}
\vskip -0.2in
\end{figure}

\xwa{source of heavy tails: multiplicative noise (a new paper?) imbalanced dataset }

\xwa{
heavy tails in SGD (papers);
links to metastability analysis (papers);
disadvantage: not width;
disadvantage: lack of control (?);
cooling down L\'evy flights paper;
the RL + heavy tail paper (?)
}

We begin with a brief review of the related literature about heavy tails and gradient clipping,
the key ingredients in algorithm \eqref{def, intro, clipped SGD}.
Heavy tails formally capture the phenomenon where the probability of extreme outliers is relatively high,
which are not exceptions but rather a common feature in modern machine learning tasks.
They arise through multiple mechanisms, including the distribution of gradient noise in SGD \cite{csimcsekli2019heavy,simsekli2019tail,garg2021on},
the imbalance of the training datasets \cite{kunstner2024heavytailed,10.1145/3357713.3384290},
the stationary distribution of SGD under multiplicative noise \cite{gurbuzbalaban2020heavy,hodgkinson2020multiplicative},
and the implicit regularization of weight matrices in SGD \cite{mahoney2019traditional}.
As noted above, of particular interest and relevance to this work is the global dynamics of heavy-tailed SGDs over a multimodal function.
This is closely related to the field of \emph{metastability analysis}, which studies how a stochastic process stays in a semi-stable equilibrium state (in our context, around a local minimum) for a certain amount of time,
and then transitions between such states over longer time scales.

Metastability analyses trace back to the seminal works of Kramers and Eyring \cite{eyring1935chemical,kramers1940brownian,glasstone1941theory},
with the classical Freidlin–Wentzell theory \cite{freidlin1970onsmall,freidlin1984random}
establishing a systematic framework for metastability analysis under light-tailed dynamics.
While attempts have been made to interpret the global dynamics of SGD over non-convex loss landscapes using Freidlin-Wentzell theory (see \cite{azizian2024what,azizian2025the}),
the validity of this light-tailed approach in modern deep learning is challenged not only by the prevalence of heavy-tailed noise, but also by the unreasonably slow exploration predicted by the theory.
Indeed, Freidlin–Wentzell theory reveals that under light-tailed dynamics, the transition times between metastable sets grow exponentially with the noise scale (corresponding to $\eta$ in \eqref{def, intro, clipped SGD} in our setting).
That is, under the standard training paradigm with small step sizes, it would take an astronomically long time for SGD to escape from any local minimum, let alone explore the loss landscape (see Figure~\ref{fig histograms} (left, c \& d) and (right, c \& d) for numerical illustrations in the univariate setting).
This fails to align with SGD's ability to locate flatter minima within a reasonable time horizon \cite{keskar2017on}.
In contrast, \cite{imkeller2006first,pavlyukevich2008metastable}
reveal a fundamentally different metastable behavior under power-law heavy tails:
when driven by regularly varying Lévy processes,
the asymptotic limit of a univariate SDE (after appropriate scaling of time and noise magnitude)
is a continuous-time Markov chain visiting all local minima of the potential function.
In particular, the exit times from any local minimum now scale polynomially, with the prefactor depending on the width of the associated attraction field.
As highlighted in \cite{csimcsekli2019heavy,simsekli2019tail},
these metastability analyses imply that when driven by heavy-tailed noise,
SGD not only explores the landscape much more efficiently (i.e., transition times between local minima are polynomial in $\eta^{-1}$ rather than exponential),
but also tends to spend more time around the wider minima of the loss landscape,
thus providing new perspectives on the flat-minima folklore and the generalization mystery in SGD.
\cite{csimcsekli2019heavy,simsekli2019tail} also empirically verify the connections between the heavy-tailedness in stochastic gradients and the test accuracy in computer vision tasks.
It is worth noting that \cite{imkeller2009exponential} also investigates exit events driven by multiple big jumps, though these multiple big jumps are driven by dynamics exhibiting Weibull tails, a different type of heavy tail that decays faster than the power-law tails studied in this paper.

Despite the strong relevance of metastability theory in deep learning,
as well as its successful extension to multivariate and discrete-time (i.e., SGD-type) settings \cite{imkeller2010first,hogele2014exit,doi:10.1142/S0219493715500197,pavlyukevich2011first,debussche2013dynamics,NEURIPS2019_a97da629},
there have been relatively few attempts to translate such unique metastable behavior into algorithmic insights for seeking flat minima.
In reinforcement learning, \cite{JMLR:v25:21-1343} investigates exit times in policy gradient methods;
the results are in the same spirit as \cite{imkeller2006first,pavlyukevich2008metastable} and imply a preference for wider minima under heavy-tailed policy distributions.
For supervised learning tasks, \cite{gong2025adaptiveheavytailedstochasticgradient} proposes heuristics for injecting and iteratively modifying heavy-tailed noise in SGD,
though its application may suffer from a lack of theoretical justification and the nontrivial computational cost of estimating the trace of the Hessian of loss functions.
In the loosely related context of global optimization over non-convex functions, \cite{pavlyukevich2007levy} combines simulated annealing with heavy-tailed metastability theory to trap Lévy flights around the global minima.
It is worth noting that their algorithm hinges on the efficient exploration of the entire landscape by Lévy-driven SDEs,
which is due to the fast transitions under heavy-tailed dynamics among all local minima, regardless of whether they are wide or narrow 
(see Figure~\ref{fig histograms} (right, a)).
In summary, the potential of metastability-guided optimization toward flat minima remains largely unexplored, a gap this work addresses by characterizing the metastability of truncated heavy-tailed SGDs and their stronger preference for flat minima.

\xwa{clipping;
for convergence rates under heavy tails (papers);
lack of metastability analyses
}

Gradient clipping is a simple and effective technique that prevents excessively large gradients from causing model explosions or numerical instability during training.
First applied by \cite{pascanu2013difficulty} in the context of deep learning,
gradient clipping has since been employed as a default in various settings (e.g., \cite{Engstrom2020Implementation, merity2018regularizing, graves2013generating}). 
Gradient clipping also naturally lends itself to SGD under heavy-tailed noise,
as truncation techniques have long been recognized as effective tools for robust estimation in the presence of extreme variability
(see, e.g., \cite{a2b24e2d-61b3-33c3-8145-4a780deacc0d,6576820}).
Recent progress such as \cite{Zhang2020Why,NEURIPS2020_abd1c782,sun2024gradientnormalizationprovablybenefits,nguyen2023highprobabilityconvergenceclippedsgd,lee2025efficient} establishes faster or more stable convergence when heavy-tailed noise is clipped,
with some works extending beyond vanilla SGD to address adaptive first-order methods and decentralized settings.
Complementing the existing analyses focusing on convergence rates under clipped heavy tails, our results show that a proper clipping regime can also improve heavy-tailed SGD's ability to identify and stay around wide minima.

\xwa{
\begin{itemize}
    \item MC reduction result, explain, figure
    \item reveal much more refined structure in heavy-tailed metastability 
    \item principle of a single big jump, polynomial time scale
    \item width: jump number, exit time 
    \item widest 
    \item proof 
    \item corollary, special case
    \item algorithmic contribution 
    \item corollary, intuition (train long enough) 
    \item tail inflation strategy 
    \item effective in experiments 
\end{itemize}
}

On the theoretical front, the main contribution of this work is the metastability analyses for SGD iterates driven by truncated heavy-tailed noise over a multimodal function; see Figure~\ref{fig histograms} (left, e) for an illustration of a univariate example.
Under suitable conditions, Theorem~\ref{corollary irreducible case} establishes the following sample-path level convergence
\begin{align}
    \{ \bm X^{\eta|b}_{ \floor{ t / \lambda^*_b(\eta)  }  }(\bm x):\ t > 0  \} \Rightarrow \{ \bm Y^{*|b}_t:\ t > 0  \},\qquad\text{as }\eta \downarrow 0,
    \label{main result, intro}
\end{align}
where $\lambda^*_b(\eta)$ is a deterministic function representing the proper time scaling for observing the asymptotics \eqref{main result, intro},
and the limiting process $\bm Y^{*|b}_t$ is a continuous-time Markov chain whose generator only depends on the clipping threshold $b$ and the geometry of $f$.
In particular,  $\bm Y^{*|b}_t$ only visits the \emph{widest minima} over $f$, where the width of each minimum $m_i$ (and the associated attraction field) is captured by the notion of $\mathcal J_b(i)$ introduced in \eqref{def: J * b i,j, metastability}.

We present the rigorous definitions and statements in Section~\ref{subsec: main results},
and highlight here the main takeaway of Theorem~\ref{corollary irreducible case}:
under small $\eta$, the global dynamics of the truncated heavy-tailed SGD $\bm X^{\eta|b}_{t}(\bm x)$ closely resemble those of a Markov chain that only visits and make transitions between the widest region over $f$.
Figure~\ref{fig histograms} clearly illustrates these phenomena (see Section~\ref{sec:numerical-experiments} for details of the numerical experiments).
Under light-tailed gradient noise, SGD remains trapped in sharp minima, regardless of gradient clipping; see parts (c) and (d) of Figure~\ref{fig histograms} (left, right).
In contrast, when driven by heavy-tailed noise, SGD jumps between different local minima instead being trapped at one of them; see parts (a) and (b) of Figure~\ref{fig histograms} (left, right).
Furthermore, a clear distinction arises between clipped and unclipped cases:
without clipping, SGD constantly jumps around local minima ${m_1, m_2, m_3, m_4}$ and spends a significantly proportion of time at each of them (see part (a) of Figure~\ref{fig histograms} (left, right)),
whereas under clipping, heavy-tailed SGD resembles a Markov jump process that only visits the two wide minima ${m_1, m_3}$, and spends almost all time there (see part (b) of Figure~\ref{fig histograms} (left, right)).

Theorem~\ref{corollary irreducible case} extends far beyond the existing metastability analyses for heavy-tailed dynamics (e.g., \cite{imkeller2006first,pavlyukevich2008metastable,hogele2014exit,doi:10.1142/S0219493715500197}),
and reveals the existence of a much more refined mathematical structure when truncation is involved.
Prior works are manifestations of the \emph{principle of a single big jump}---a well-known phenomenon in extreme value theory---as the transitions between metastable sets are almost always caused by a single step with disproportionately large noise,
and
the transitions times are (roughly) of order $1/\eta^{\alpha}$ with $\alpha$ being the power-law tail index for the noise distribution.
See also Corollary~\ref{theorem: metastability, unclipped} where, essentially as a special case of Theorem~\ref{corollary irreducible case}, 
we send $b \to \infty$ in \eqref{main result, intro}
and recover the metastable behavior governed by the principle of a single big jump for heavy-tailed SGDs without truncation.
Nevertheless,
this intuition clearly fails under the gradient clipping mechanism, which confines the one-step movement of $\bm X^{\eta|b}_{t}(\bm x)$ within a bounded set of radius $b$ regardless of the original size of the heavy-tailed noise.
Instead, 
the number of steps required to escape from a local minimum $m_i$ now depends on the interplay between the clipping threshold $b$ and the geometry (in particular, width) of the local minimum.
This gives rise to the notion of width $\mathcal J_b(i)$ in \eqref{def: J * b i,j, metastability},
defined as the minimum number of jumps (with size bounded by $b$) required to escape from the attraction of $m_i$.

More precisely,  our proof of Theorem~\ref{corollary irreducible case} builds upon the first exit analyses for (truncated) heavy-tailed dynamics developed in \cite{wang2023large}.
Specializing the results to our setting, we obtain that, when initialized in an attraction  field $I_i$,
the time it takes $\bm X^{\eta|b}_t$ to escape from $I_i$ is (roughly) of order 
\begin{align}
    1/\eta^{ \mathcal J_b(i) \cdot (\alpha - 1) + 1  },
    \label{proof strategy, remark, exit time order}
\end{align}
i.e., it scales (roughly) polynomially with the exponent determined by the width metric $\mathcal J_b(i)$ in \eqref{def: J * b i,j, metastability},
and the exits are almost always driven by exactly $\mathcal J_b(i)$ big jumps (i.e., disproportionately large noise $\bm Z_t$'s).
This discrete hierarchy in exit times---depending on $\mathcal J_b(i)$---suggests that, compared to the time $\bm X^{\eta|b}_t$ spends at the widest minima (those with $\mathcal J_b(i) = \mathcal J^*_b$; see \eqref{def: J * b}), the time spent at narrower minima is almost negligible under small $\eta$
due to the smaller power-law rates in \eqref{proof strategy, remark, exit time order}.
To make this argument rigorous, we apply two technical tools.
First, the first exit analyses in  \cite{wang2023large} provide not only the scaling of the exit times but also the precise asymptotic prefactors as well as the asymptotic law of the exit locations,
thus revealing the transition probabilities between attraction fields. 
(See Section~\ref{sec: review, first exit analyses} in Appendix for a more detailed review.)
Moreover, in Section~\ref{subsec: proof, abstract framework, metastability} we develop a general framework for establishing sample-path level convergence in distributions to jump processes,
given the convergence of the jump times and locations (which is 3exactly the content of the first exit analyses).
Combining these tools, we provide in Section~\ref{subsec: proof, theorems, metastability} the proof of Theorem~\ref{corollary irreducible case},
with the proof of key propositions detailed in Section~\ref{subsec: proof, propositions, metastability}.


Furthermore, our metastability analysis translates to a novel algorithmic framework for finding wide minima in deep learning tasks.
As noted earlier, Theorem~\ref{corollary irreducible case} suggests that (a time scaled version of) $\bm X^{\eta|b}_{t}(\bm x)$ spends almost all time around the wide minima over $f$.
This is confirmed through a continuous mapping argument in Corollary~\ref{corollary, elimination of sharp minima, metastability}, which informally states that 
\begin{align}
  \frac{1}{T / \lambda^*_b(\eta) }\sum_{t=1}^{T/ \lambda^*_b(\eta)} \mathbf{I}\bigg\{
    \bm X^{\eta|b}_{t}(\bm x) \in  \bigcup_{ i:\ \bm m_i \in \text{widest minima}}B_\epsilon(\bm m_i)
    \bigg\}
     \stackrel{p}{\to}
    1,
    \qquad 
    \text{ as }\eta \downarrow 0
    \label{corollary, ergodicity, intro}
\end{align}
for any $T,\epsilon > 0$,
where $B_\epsilon(\bm y)$ denotes the $L_2$ open ball around $\bm y$ with radius $\epsilon > 0$.
In other words, provided that truncated heavy-tailed SGD has been running for long enough (by the criterion of the time scaling $\lambda^*_b(\eta)$ in \eqref{corollary, ergodicity, intro}),
it spends almost all time  around wide minima under small learning rate $\eta$.
We provide the rigorous statements in Section~\ref{subsec: implication, elimination of narrow minima}, and stress that \eqref{corollary, ergodicity, intro} suggests a highly effective method for finding wide minima in deep learning tasks using truncated heavy-tailed noise.
We flesh out this idea in Section~\ref{subsec: implication, elimination of narrow minima} by proposing a new training strategy that estimates the the gradient noise from data, inflates the tail distribution of the noise using heavy-tailed variables, and then truncates the heavy-tailed stochastic gradient by the gradient clipping operator.
Section~\ref{subsec: experiment 1, ablation study} conducts deep learning experiments and confirms that our truncated heavy-tailed optimizer finds solutions with flatter geometry and better generalization performance when compared to standard SGD.
Moreover, Section~\ref{subsec: experiment 2, adam wrn} shows that, even when incorporated with adaptive gradient methods, more complex model architecture, and training techniques to generalization performance of SGD,
our truncated heavy-tailed method still improves upon the fine-tuned baseline and finds flat solution with better generalization performance.

\subsection{Comparison to Related Works}

\xwa{
algorithms tailored for improving sharpness
}

\xwa{
SAM
\begin{itemize}
    \item OG SAM paper: \cite{foret2021sharpnessaware}
    \item extensions (modifying the perturbation directions, reducing the computational cost due to multiple gradient evaluations ): \cite{pmlr-v139-kwon21b,pmlr-v162-kim22f,zhuang2022surrogate,NEURIPS2024_5bf2b802}
    \item despite its popularity, global dynamics (selecting minima in different valleys) remains unsolved.
    \item implicit regularization of variance, magnitude of stochastic gradients, or value of smoothed functions:
        \cite{luo2025unveilingmsharpnessstructurestochastic,wen2022how,pmlr-v202-monzio-compagnoni23a}
    \item select wider minima, but lie within the same valley:
        \cite{zhou2025sharpnessaware,wen2022how}
    \item SDE work: \cite{pmlr-v202-monzio-compagnoni23a,luo2025unveilingmsharpnessstructurestochastic}
    \item exponential exit time (with different rates depending on noise structure)
    \item limitation of these analyses: \cite{NEURIPS2023_5305b789}
    \item flatness-aware SGLD and its invariant distribution: \cite{bruno2025flatnessawarestochasticgradientlangevin}
    
\end{itemize}
}

This paper focuses on the characterization and control of the \emph{global dynamics of SGD}
for attaining {strong preference to flat minima when exploring a multimodal loss landscape},
a crucial goal that remains unexplored in existing literature about optimization towards flat minima.
Specifically,
in light of the flat-minima folklore, several optimization algorithms have been proposed by incorporating explicit or implicit regularization on sharpness into stochastic first-order methods; e.g., \cite{chaudhari2017entropysgd,Zhou_Li_Feng_Huang_2025,NEURIPS2022_69b55345}.
Two of the most popular approaches, due to their effectiveness and scalability, are Sharpness-Aware Minimization (SAM) and Stochastic Weight Averaging (SWA). 
Originally proposed by \cite{foret2021sharpnessaware},
SAM 
interprets sharpness as $\max_{ \norm{\bm \delta} \leq \rho}f(\bm \theta + \bm \delta) - f(\bm \theta)$,
and 
aims to solve $\min_{\bm \theta}\max_{ \norm{\bm \delta} \leq \rho}f(\bm \theta + \bm \delta)$, which considers the loss under bounded perturbations to model weights.
Due to tractability and efficiency concerns regarding the min-max objective, SAM resorts to a first-order Taylor expansion
and updates the model weights by 
$\bm \theta \leftarrow \bm \theta - \eta \nabla f\big( \bm \theta + \rho\frac{ \nabla f(\bm \theta)  }{\norm{\nabla f(\bm \theta)} }  \big)$,
where $\eta$ denotes the step size (i.e., learning rate) parameter, and the $\nabla f(\cdot)$'s are estimated over small batches.
Since then, several extensions of SAM have been proposed (see, e.g., \cite{pmlr-v139-kwon21b,pmlr-v162-kim22f,zhuang2022surrogate,NEURIPS2024_5bf2b802}) to modify the perturbation directions in SAM or to reduce the computational cost of multiple gradient evaluations.
While it has been argued that SAM and its variants resemble SGDs with loss function regularized by its Hessian, by the magnitude of stochastic gradients, or under a smoothed version of the loss function (see  \cite{luo2025unveilingmsharpnessstructurestochastic,wen2022how,pmlr-v202-monzio-compagnoni23a}),
the question of interest here is whether (and to what extent) SAM is able to find and remain near minima with more stable geometry among the numerous minima in non-convex and multimodal loss landscapes.
Regarding this question, theoretical analyses (e.g., \cite{zhou2025sharpnessaware,wen2022how}) on SAM have so far provided affirmative answers only at a \emph{local} level:
locally within a certain attraction field (in particular, over a connected region attaining small values of the loss function $f$), SAM can avoid sharp regions (in terms of the trace of the Hessian) and move toward flatter areas.
However, this is not verified at a \emph{global} level, which would require SAM to efficiently traverse a multimodal landscape and identify flat minima from different attraction fields.\footnote{In the example of Figure~\ref{fig histograms} (Left, e), this refers to the efficient exploration of the disconnected minima $m_i$'s and identification of the ones with more stable geometry.
In fact, it is likely that SAM becomes rather inefficient when moving between different attraction fields,
as SAM resembles Brownian-motion-driven SDEs under small step size $\eta$ \cite{pmlr-v202-monzio-compagnoni23a,luo2025unveilingmsharpnessstructurestochastic},
which spends exponentially long time (in $\eta$) to escape any attraction field as characterized by the classical Freidlin-Wentzell theory \cite{freidlin1984random}.
}

\xwa{
the averaging methods (SWA).
\cite{izmailov2018averaging}: empirical observations (the first one?).
Recent empirical studies: \cite{10.1007/978-3-031-40837-3_13,NEURIPS2022_69b55345,pmlr-v238-nitanda24a}.
Theoretical justification: \cite{pmlr-v238-nitanda24a}, building upon the smoothed function perspective in \cite{pmlr-v80-kleinberg18a}.
Limitation: theoretical analysis is lacking; relies on one-point strong convexity.
}

Similar limitations arise in SWA, an approach that produces the final model weights by taking an average over the training trajectory.
As noted by \cite{izmailov2018averaging}, SWA finds wider solutions with improved generalization performance compared to SGD.
The benefits of SWA have been further confirmed across a wide range of tasks (see \cite{10.1007/978-3-031-40837-3_13,NEURIPS2022_69b55345,pmlr-v238-nitanda24a}).
Theoretical justifications are provided by drawing connections to convex optimization theory, where the Polyak–Ruppert type averaging scheme leads to optimal convergence rates in SGD (see, e.g., \cite{NIPS2011_40008b9a}).
In particular, \cite{pmlr-v238-nitanda24a} builds on an alternative perspective that treats the stochasticity in SGD as a smoothing of the objective function (see \cite{pmlr-v80-kleinberg18a}) and assumes that the smoothed function is nearly convex when viewed from a (likely flat and wide) minimum;
in this case, SWA resembles averaged SGD over convex functions, thereby enjoying its faster and more stable convergence.
Nevertheless, such one-point convexity assumptions are not guaranteed and are likely to fail for multimodal landscapes without aggressive smoothing (i.e., under reasonable step sizes and noise magnitudes),
rendering the analogy to averaged SGD over convex functions largely irrelevant when studying the global dynamics in the training of deep neural networks.
See also the analyses in \cite{NEURIPS2019_01d8bae2}, which confirm that, locally within an attraction field with asymmetric geometry, the averaging scheme can help bias the model weights toward the flatter side.

\xwa{
algorithmic stability: \cite{pmlr-v201-raj23a,pmlr-v202-raj23a,dang2025algorithmicstabilitystochasticgradient}.
Logic: Uniform-stability $\rightarrow$ generalization bounds.
Disadvantages:  Lack of insight about choice of $\alpha$ or comparison between light vs heavy; bounds rely on range of datasets, or are tailored for quadratic-like functions.
}

On a related note, recent works \cite{pmlr-v201-raj23a,pmlr-v202-raj23a,dang2025algorithmicstabilitystochasticgradient} study the generalization of heavy-tailed SGDs (and variants) through the lens of algorithmic stability.
Specifically, the notion of uniform stability is characterized by the change in the output of an algorithm when the training dataset differs by exactly one data point,
and verifying uniform stability immediately yields upper bounds on the generalization error of empirical risk minimization (see \cite{pmlr-v48-hardt16}).
We note that this line of research so far has focused on developing technical tools for establishing bounds on the uniform stability of heavy-tailed SGD (or the continuous-time SDE as its proxy), rather than providing algorithmic insights for improving generalization performance (e.g., quantitative comparison of the generalization error of light-tailed vs.\ heavy-tailed SGD, or suggestions on ideal heavy-tailedness or noise distributions in practice for minimizing generalization error).

Earlier versions of some of the results presented in this paper appeared in \cite{wang2022eliminating}. Specifically, Theorems 3 and 2 in \cite{wang2022eliminating} correspond to the one-dimensional (and constant diffusion coefficient) cases of Theorem~\ref{corollary irreducible case} and Corollary~\ref{corollary, elimination of sharp minima, metastability}, which establish the general multidimensional case with state-dependent diffusion coefficients.

The rest of this paper is organized as follows. 
Section~\ref{subsec: global dynamics problem setting} collects frequently used notations and definitions
and states the problem setting. 
Section~\ref{sec: metastability} presents the main results of this paper.
Specifically, Section~\ref{subsec: main results} studies the scaling limit of $\bm X^{\eta|b}_t(\bm x)$ and characterizes the global dynamics of heavy-tailed SGD under truncation.
Inspired by this result, 
Section~\ref{subsec: implication, elimination of narrow minima} proposes an algorithm that controls the training dynamics of SGD through tail inflation and truncation.
Section~\ref{sec: experiments combined} presents simulation and deep learning experiments.
The technical proofs are deferred to the Appendix.

\section{Notations and Problem Settings}
\label{subsec: global dynamics problem setting}

Let $\mathbb Z$ be the set of integers, $\mathbb{N} = \{1,2,\cdots\}$ be the set of positive integers, and $\mathbb Z_+ = \{0,1,2,\cdots\}$ be the set of non-negative integers.
Let $\notationdef{set-for-integers-below-n}{[n]} = \{1,2,\cdots,n\}$ for any positive integer $n$, with convention $[0] = \emptyset$.
For any $x \in \R$,
let
$\notationdef{floor-operator}{\floor{x}}\delequal \max\{n \in \mathbb{Z}:\ n \leq x\}$
and
$\notationdef{ceil-operator}{\ceil{x}} \delequal \min\{n \in \Z:\ n \geq x\}$
be the rounded-down and rounded-up operators, respectively .
Given $x,y \in \R$, let $x \wedge y \delequal \min\{x,y\}$ and $x \vee y \delequal \max\{x,y\}$.
Consider a metric space $(\mathbb{S},\bm{d})$ with
$\notationdef{notation-borel-sigma-algebra}{\mathscr{S}_\mathbb{S}}$
being its Borel $\sigma$-algebra.
For any $E\subseteq \mathbb{S}$,
let
$\notationdef{notation-interior-of-set-E}{E^\circ}$ and $\notationdef{notation-closure-of-set-E}{E^-}$ be the interior and closure of $E$, respectively.
For any $r > 0$, 
let
$\notationdef{notation-epsilon-enlargement-of-set-E}{E^r} \delequal 
\{ y \in \mathbb{S}:\ \bm{d}(E,y)\leq r\}$ be the $\epsilon$-enlargement of $E$.
Here, for any set $A \subseteq \mathbb{S}$ and any $x \in \mathbb{S}$,
we define $\bm{d}(A,x) \delequal \inf\{\bm{d}(y,x):\ y \in A\}$.
Let
$
\notationdef{notation-epsilon-shrinkage-of-set-E}{E_{r}} \delequal
((E^c)^r)^\complement
$
be the $r$-shrinkage
of $E$.
We say that set $A \subseteq \mathbb{S}$ is bounded away from $B \subseteq \mathbb{S}$ under $\bm d$
if $\inf_{ x\in A,y\in B }\bm{d}(x,y) > 0$.
Given two sequences of positive real numbers $(x_n)_{n \geq 1}$ and $(y_n)_{n \geq 1}$, 
we say that $x_n = \bm{O}(y_n)$ (as $n \to \infty$) if there exists some $C \in [0,\infty)$ such that $x_n \leq C y_n\ \forall n\geq 1$.
Besides, we say that $x_n = \bm{o}(y_n)$ if $\lim_{n \rightarrow \infty} x_n/y_n = 0$.

Throughout this paper, we
consider the $L_2$ norm 
$
\norm{(x_1,\cdots,x_k)} = \sqrt{\sum_{j = 1}^k x^2_k}
$
on Euclidean spaces.
Besides, we adopt the $L_2$ vector norm induced matrix norm
$
\norm{\textbf A} = \sup_{ \bm x \in \R^q:\ \norm{\bm x} = 1 }\norm{\textbf A\bm x}
$
for any $\textbf A \in \R^{p \times q}$.
For each $\bm x \in \R^d$ and $r > 0$,
we use $B_r(\bm x) \delequal \{ \bm y \in \R^d: \norm{\bm y - \bm x} < r  \}$
to denote the open ball centered at $\bm x$ with radius $r$,
and 
 $\bar B_r(\bm x) \delequal \{ \bm y \in \R^d: \norm{\bm y - \bm x} \leq  r  \}$
for the corresponding closed ball.

Throughout this paper, we fix some positive integer $d$ to denote the dimensionality of the problem at hand,
and use
$\mathbb{D}(I)$ to denote the space of all $\R^d$-valued càdlàg functions on the domain $I$,
where we only consider domains of the form $I = [0,T]$ or $I = [0,\infty)$.
In this paper, we characterize sample-path level convergence of $\R^d$-valued stochastic processes in terms of the following two modes.
First, 
we say that $\{S^\eta_t: t>0\}$ converges to $\{S^*_t: t > 0\}$ \emph{in finite-dimensional distributions} (f.d.d.) if we have
$
    \big(S^\eta_{t_1},\cdots,S^\eta_{t_k}\big) \Rightarrow \big(S^*_{t_1},\cdots, S^*_{t_k}\big)
$    
as $\eta \downarrow 0$ for any $k\geq 1$ and $0 < t_1 < t_2 < \cdots < t_k < \infty$.
We also denote this as $\{S^\eta_{t}: t>0\}\tofdd \{S^*_{t}: t>0\}$. 
Note that in this paper the convergence in f.d.d.\  is required only on $(0,\infty)$ and does not concern the law at $t=0$. 
Next, we recall the convergence 
w.r.t.\ the $L_p$ topology in $\D[0,\infty)$.
For any $p \in [1,\infty)$ and $T \in (0,\infty)$, let
\begin{align}
    \dlp{[0,T]}(x,y) \delequal \bigg( \int_0^T \norm{x_t - y_t}^p dt \bigg)^{1/p},
    \qquad \forall x,y \in \mathbb D[0,T]
    \label{def, Lp distance}
\end{align}
be the $L_p$ metric on $\mathbb D[0,T]$.
For any $T > 0$, define the projection $\pi_T: \D[0,\infty) \to \D[0,T]$ by
\begin{align}
    \pi_T(\xi)_t = \xi_t,\qquad\ \forall t \in [0,T].
    \label{def: projection mapping pi T}
\end{align}
Now, we define
\begin{align}
   \dlp{[0,\infty)}(x,y)
    \delequal 
    \sum_{k \geq 1}
    \frac{
        1 \wedge \dlp{[0,k]}\big(\pi_k(x),\pi_k(y)\big)
    }{
        2^k
    },
    \qquad
    \forall x,y \in \D[0,\infty)
     \label{def, Lp metric D 0 infty}
\end{align}
and note that $\dlp{[0,\infty)}$ is a metric on $\D[0,\infty)$.
We say that a sequence of càdlàg processes $\{S^\eta_t:\ t \geq 0\}$
converges in distribution to $\{S^*_t:\ t \geq 0\}$ \emph{w.r.t.\ the $L_p$ topology in $\D[0,\infty)$} as $\eta \downarrow 0$ if
$
\lim_{\eta \downarrow 0}\E g(S^\eta_{ \boldsymbol{\cdot} }) = \E g(S^*_{ \boldsymbol{\cdot} })
$
for all $g:\D[0,\infty) \to \R$ that is bounded and continuous (w.r.t.\ the topology induced by $\dlp{[0,\infty)}$).
We denote this by $S^\eta_{ \boldsymbol{\cdot} }\Rightarrow S^*_{ \boldsymbol{\cdot} }$ in $(\D[0,\infty),\dlp{[0,\infty)})$ 
or
$\{S^\eta_{t}:  t\geq 0\} \Rightarrow \{S^*_{t}: t \geq 0\}$ in $(\D[0,\infty),\dlp{[0,\infty)})$.

Next, we set up the problem by formally introducing truncated heavy-tailed SGDs and the assumptions on multimodal loss landscape.
Consider a multimodal potential function $f:\R^d \to \R$ with local minima $\{\bm m_1,\bm m_2, \ldots, \bm m_K\}$,
associated with attraction fields $\{I_1,I_2,\ldots,I_K\}$.
More precisely, let
\begin{align}
\bm y_0(\bm x) =\bm x,\qquad
   \frac{d\bm{y}_t(\bm x)}{dt} = -\nabla f\big(\bm{y}_t(\bm x)\big) \ \ \forall t \geq 0
   \label{def ODE path y t}
\end{align}
be the gradient flow path over $f$ under the initial value $\bm x$.
We make the following assumption throughout this section.
Recall that given a set $I$, we use $I^-$ to denote its closure.

\begin{assumption} 
\label{assumption: geometry, metastability}
    Let $f:\R^d \to \R$ be a function in $\mathcal{C}^1(\R^d)$,
    and let $K \geq 2$ be a positive integer. 
    There exist $(I_k)_{k \in [K]}$---a collection of non-empty open sets that are mutually disjoint---and $(\bm m_k)_{k \in [K]}$ with $\bm m_k \in I_k$ for each $k \in [K]$,
    such that $\bigcup_{k \in [K]}(I_k)^- = \R^d$, and the following claims hold.
    \begin{enumerate}[(i)]
        \item 
            \textbf{(Attraction fields of local minima)}
            For each $k \in [K]$, we have $\nabla f(\bm m_k) = \bm 0$, and the claim 
            \begin{align}
                \bm y_t(\bm x) \in I_k\ \forall t \geq 0;\qquad \lim_{t\to \infty}\bm y_t(\bm x) = \bm m_k
                \nonumber
            \end{align}
            holds for all $\bm x \in I_k$.

        \item 
            \textbf{(Contraction around local minima)}
            For each $k \in [K]$, it holds for all $\epsilon > 0$ small enough that
            $
            \nabla f(\bm x)^\top (\bm x - \bm m_k) > 0\ \forall\bm x \in \bar B_{\epsilon}(\bm m_k) \setminus \{\bm m_k\}.
            $

        \item 
            \textbf{(Dissipativity)} It holds for any $M$ large enough that $\inf_{ \norm{\bm x} \geq M }\nabla f(\bm x)^\top \bm x > 0$.

    \end{enumerate}
\end{assumption}

See Figure~\ref{fig typical transition graph} (Left) for an univariate example of such $f$ with $K= 3$,
where the local maxima $s_i$'s partition $\R$ into different regions $\notationdef{notation-attraction-field-I-i}{I_i} = (s_{i-1},s_i)$.
Such regions can be viewed as the \emph{attraction fields} 
of the local minima $m_i$'s.
That is, the ODE $\bm y_t(x)$ defined in \eqref{def ODE path y t} admits the limit $\bm y_t(x) \to m_i$ (as $t \to \infty$) for each $x \in I_i$.
We add two remarks regarding Assumption~\ref{assumption: geometry, metastability}.
First, 
we impose the condition $K \geq 2$ simply to avoid the trivial case where there exists only one attraction field (so there are no transitions between different attraction fields).
Besides, 
condition (ii) holds if $f$ is locally $\mathcal C^2$ and locally strongly convex around each $m_k$,
and condition (iii) is standard for ensuring that the gradient flows always return to a compact region of $\R^d$.

Next, we introduce SGDs driven by truncated heavy-tailed noise, the main object of study in this paper.
Specifically, let \notationdef{notation-Z-iid-noise-LDP}{$\bm Z_1,\bm Z_2,\ldots$} be iid copies of some random vector $\bm Z$ taking values in $\R^d$.
Given the initial value $\bm x \in \R^d$, step length $\eta > 0$, truncation threshold $b \in (0,\infty)$,
and the diffusion coefficient (i.e., noise magnitude matrix) $\notationdef{sigma}{\bm \sigma}:\mathbb{R}^d\to \mathbb{R}^{d\times d}$,
let the discrete-time process $\big\{ \notationdef{notation-X-eta-j-truncation-b-LDP}{\bm X^{\eta|b}_t(\bm x)}: t \in \mathbb N\big\}$ in $\R^d$ be defined by the recursion
\begin{align}
    \bm X^{\eta|b}_0(\bm x) = \bm x,
    \quad 
    {\bm X^{\eta|b}_t(\bm x)} = \bm X^{\eta|b}_{t - 1}(\bm x) +  \varphi_b\Big(-\eta \nabla f\big(\bm X^{\eta|b}_{t - 1}(\bm x)\big) + \eta \bm \sigma\big(\bm X^{\eta|b}_{t - 1}(\bm x)\big)\bm Z_t\Big)\ \ \forall t \geq 1,
    \label{def: X eta b j x, clipped SGD}
\end{align}
where
the gradient clipping operator $\varphi_\cdot(\cdot)$ is defined by
\begin{align}
    \notationdef{notation-truncation-operator-level-b}{\varphi_b}(\bm w) 
    \delequal{} 
    (b \wedge \norm{\bm w}) \cdot \frac{\bm w}{\norm{\bm w}},
    \ \ \ \forall \bm w \neq \bm 0;
    \qquad
    {\varphi_b}(\bm 0) \delequal \bm 0. \label{defTruncationClippingOperator}
\end{align}
In other words, the truncation operator $\varphi_b(\bm w)$ in \eqref{def: X eta b j x, clipped SGD} maintains the direction of the vector $\bm w$ but rescales it to ensure that the norm would not exceed the threshold $b$.
In particular, we are interested in the case where $\bm Z_i$'s are heavy-tailed, which is formally captured via the notion of multivariate regular variation.
We say that a measurable function $\phi:(0,\infty) \to (0,\infty)$ is regularly varying as $x \rightarrow\infty$ with index $\beta$ (denoted as $\phi(x) \in \RV_\beta(x)$ as $x \to \infty$) if $\lim_{x \rightarrow \infty}\phi(tx)/\phi(x) = t^\beta$ for each $t>0$,
and that 
$\phi(\eta)$
is regularly varying as $\eta \downarrow 0$ with index $\beta$ 
if $\lim_{\eta \downarrow 0} \phi(t\eta)/\phi(\eta) = t^\beta$ for each $t > 0$ (denoted by $\phi(\eta) \in \notationdef{notation-RV-LDP}{\RV_{\beta}}(\eta)$ as $\eta \downarrow 0$).
For a standard treatment to regularly varying functions, see, e.g.,
\cite{resnick2007heavy, foss2011introduction}. 
Let
\begin{align}
    \notationdef{notation-H}{H(x)} \delequal \P(\norm{\bm Z} > x). \label{def: H, law of Z_j}
\end{align}
For any $\alpha > 0$, let $\notationdef{notation-measure-nu-alpha}{\nu_\alpha}$ be the (Borel) measure on $(0,\infty)$ with
\begin{align}
     \nu_\alpha[x,\infty) = x^{-\alpha}. \label{def: measure nu alpha}
\end{align}
Let $\notationdef{notation-R-d-unit-sphere}{\mathfrak N_d} \delequal \{\bm x \in \R^d:\ \norm{\bm x} = 1\}$ be the unit sphere of $\R^d$.
Let $\Psi: \R^d \to [0,\infty) \times \mathfrak N_d$ be
\begin{align}
    \notationdef{notation-Phi-polar-transform}{\Psi(\bm x)} \delequal 
    \begin{cases}
         \Big(\norm{\bm x},\frac{\bm x}{\norm{\bm x}}\Big) &\text{ if }\bm x \neq 0
         \\
         \big( 0, (1,0,0,\cdots,0)\big) & \text{ otherwise}
    \end{cases},
    \label{def: Phi, polar transform in Rm}
\end{align}
where the origin is included in the domain of $\Psi$ as a convention and is of no consequence to the proofs. 
Thus, $\Psi$ can be interpreted as the polar transform with domain extended to $\bm 0$.
Throughout, we work with the following heavy-tailed assumption regarding the noise term $\bm Z$.
Note that in \eqref{claim, Rd heavy tailed assumption},
the vague convergence is equivalent to convergence in $\mathbb M\Big( 
            \big([0,\infty) \times \mathfrak N_d \big)
            \setminus
            \big( \{0\} \times \mathfrak N_d \big)
            \Big)$;
see Remark~2 in \cite{wang2023large} for details, and  \cite{lindskog2014regularly} for elaborations on the mode of $\mathbb M$-convergence for measures.

\begin{assumption}[Regularly Varying Noise]\label{assumption gradient noise heavy-tailed}
$\E \bm Z = \bm 0$. 
Besides, there exist some $\notationdef{alpha-noise-tail-index-LDP}{\alpha} > 1$ and 
a probability measure $\mathbf S(\cdot)$ on the unit sphere $\mathfrak N_d$ such that
\begin{itemize}
    \item 
        $H(x) \in \RV_{-\alpha}(x)$ as $x \to \infty$,
    \item 
        for the polar coordinates $(R,\bm \Theta) \delequal \Psi(\bm Z)$, we have (as $x \to \infty$)
        \begin{align}
            \frac{
                \P\Big( (x^{-1}R, \bm \Theta) \in\ \cdot\ \Big)
            }{
                H(x)
            }
            \xrightarrow{v}
            \nu_\alpha \times \mathbf S,
            \label{claim, Rd heavy tailed assumption}
        \end{align}
        where
        $\xrightarrow{v}$ denotes vague convergence,

    \item 
        the measure $\mathbf S(dx) = f_{\mathbf S}(x)dx$ admits a density over $\mathfrak N_d$, with $\inf_{ x \in \mathfrak N_d }f_{\mathbf S}(x) > 0$.
\end{itemize}
\end{assumption}

We also impose the following regularity conditions on $\nabla f(\cdot)$ and $\bm \sigma(\cdot)$.

\begin{assumption}[Lipschitz Continuity]
\label{assumption: lipschitz continuity of drift and diffusion coefficients}
There exists some $\notationdef{notation-Lipschitz-constant-L-LDP}{D} \in (0,\infty)$ such that
$$\norm{\bm \sigma(\bm x) - \bm \sigma(\bm y)} \vee \norm{\nabla f(\bm x)-\nabla f(\bm y)} \leq D\norm{\bm x - \bm y},
\qquad \forall \bm x,\ \bm y \in \mathbb{R}^d.$$
\end{assumption}

\begin{assumption}[Nondegeneracy]
\label{assumption: nondegeneracy of diffusion coefficients}
$\bm\sigma(\bm x)$ is not a singular matrix for any $\bm x\in \R^d$.
\end{assumption}

\section{Main Results}
\label{sec: metastability}

This section presents the main results of this paper.
Section~\ref{subsec: main results} shows that, after proper time-scaling, the sample paths of truncated heavy-tailed SGDs converge in distribution to those of a Markov jump process;
curiously, the state space of this limiting process consists of only the widest local minima of the loss landscape.
Inspired by such intriguing global dynamics in heavy-tailed SGDs,
Section~\ref{subsec: implication, elimination of narrow minima} proposes a novel algorithm for finding wide minima and improving the generalization performance in the training of deep leanring models.

\subsection{Characterization of Global Dynamics of Heavy-Tailed SGD}
\label{subsec: main results}

The goal of this paper is to rigorously show that the global dynamics of $\bm X^{\eta|b}_t(\bm x)$ (i.e., truncated heavy-tailed SGDs) closely resemble those of a Markov jump process that only visits the ``widest'' attraction fields over $f$.
To facilitate the presentation of the main results, we first introduce a few definitions.
For each $b > 0$ and $\bm x \in \R^d$, let $\mathcal{G}^{(0)|b}(\bm x) \delequal \{\bm x\}$,
and (for each $k \geq 1$)
\begin{align}
    {\mathcal G^{(k)|b}(\bm x)}
    \delequal
    \bigg\{
        \bm y_t(\bm z) + \varphi_b\Big( \bm \sigma\big( \bm y_t(\bm z)  \big)  \bm w \Big):\ 
        t > 0,\ \bm w \in \R^d,\ \bm z \in {\mathcal G^{(k-1)|b}(\bm x)}
    \bigg\},
    \label{property: hierarchy of set G k b}
\end{align}
where the gradient flow $\bm y_t(\cdot)$ is defined in \eqref{def ODE path y t}.
Intuitively speaking, 
${\mathcal G^{(k)|b}(\bm x)}$ is
the region accessible by the gradient flow path initialized at $\bm x$ and with $k$ perturbations , where the size of each perturbation is modulated by $\bm \sigma(\cdot)$ and truncated under $b$.
Note also that $\mathcal G^{(k)|b}(\bm x)$ is monotone in $k$ and $b$, in the sense that
$
\mathcal{G}^{(k)|b}(\bm x) \subseteq \mathcal{G}^{(k+1)|b}(\bm x), 
$
and
$
\mathcal{G}^{(k)|b}(\bm x) \subseteq \mathcal{G}^{(k)|b^\prime}(\bm x) 
$
for all $0 < b \leq b^\prime$.
Equipped with the definition of  $\mathcal G^{(k)|b}(\bm x)$, we are ready to introduce the notion of width for each attraction field that will be considered throughout this paper.
Recall that under Assumption~\ref{assumption: geometry, metastability}, there are $K$ distinct attraction fields over $f$, associated with the local minima $m_i$'s.
For each $i \in [K]$, let
\begin{align}
    \mathcal J_b(i) \delequal \min\big\{ k \geq 0:\ \mathcal G^{(k)|b}(\bm m_i) \cap (I_i)^\complement \neq \emptyset
    \big\}.
    \label{def: J * b i,j, metastability}
\end{align}
That is, we characterize the {width of $I_i$} by considering \emph{the minimum number of perturbations (with sizes truncated under $b$) required to escape the attraction of $m_i$}.

\begin{remark}[Connection to the Relative Width of $I_i$]
We add a few remarks regarding the connection between $\mathcal J_b(i)$ and the width of $I_i$.
Let $r(i) \delequal \inf\{ \norm{\bm m_i - \bm y}:\ \bm y \notin I_i  \}$
be the effective width of $I_i$ (starting from the local minimum $m_i$).
Note that 
(1) the term $\ceil{ r(i)/b  }$ is the width of $I_i$ relative to the truncation threshold $b$,
(2) the quantity $\mathcal J_b(i)$ in  \eqref{def: J * b i,j, metastability} is upper bounded by the relative width $\ceil{ r(i)/b  }$ due to the simple observation that $\mathcal G^{(k)|b}(\bm x) \supseteq \bar B_{kb}(\bm x)$,
and
(3) in the univariate setting,  the relative width $\ceil{ r(i)/b  }$ coincides with $\mathcal J_b(i)$; see, e.g., \cite{wang2022eliminating}.
\end{remark}

Theorem~\ref{corollary irreducible case} shows that under proper time-scaling, the sample path of $\bm X^{\eta|b}_t(\bm x)$ converges in distribution to a Markov jump process that only visits the local minima belonging to the \emph{widest attraction fields} of $f$.
Specifically, we use
\begin{align}
    \notationdef{notation-J-*-b-V}{\mathcal J^*_b} \triangleq \max_{i \in [K]}\mathcal{J}_b(i)
    \label{def: J * b}
\end{align}
to denote the largest width---characterized by $\mathcal J_b(i)$ in \eqref{def: J * b i,j, metastability}---of attraction fields over $f$.
As explained in Section~\ref{sec: review, first exit analyses} of the Appendix,
under Assumptions~\ref{assumption: geometry, metastability}--\ref{assumption: nondegeneracy of diffusion coefficients} we have that $\mathcal J_b(i) < \infty\ \forall i \in [K]$,
and hence $\mathcal J^*_b < \infty$.
Then, the set
\begin{align}
   \notationdef{notation-V-*-b}{V^*_b} \delequal \{m_i:\ \mathcal J_b(i) = \mathcal{J}^*_b\}
    \label{def: V * b, metastability}
\end{align}
is well-defined and 
contains all the local minima over $f$ that belongs to a \emph{widest} attraction field.

In order to formally present the law of the limiting process in Theorem~\ref{corollary irreducible case} (which only visits states in $V^*_b$),
we introduce a few more definitions.
Given $A \subseteq \R$, 
let
$
\notationdef{order-k-time-on-[0,t]}{A^{k \uparrow}} \delequal 
\{
(t_1,\cdots,t_k) \in A^k:\ t_1 < t_2 < \cdots < t_k
\}
$
be the set containing sequences of increasing real numbers on $A$ with length $k$.
For any $b$, $T \in (0,\infty)$ and $k \in \mathbb{N}$,
define the mapping 
$\notationdef{notation-h-k-t-bar-mapping-truncation-level-b-LDP}{h^{(k)|b}_{[0,T]}}: \mathbb{R}^d\times \mathbb{R}^{d \times k}  \times (0,{T}]^{k\uparrow} \to \mathbb{D}{[0,T]}$ as follows.
Given
$\bm x \in \mathbb{R}^d$,
$\textbf{W} = (\bm w_1,\cdots,\bm w_k) \in \mathbb{R}^{d \times k}$, 
and $\bm{t} = (t_1,\cdots,t_k)\in (0,T]^{k\uparrow}$, let $\xi = h^{(k)|b}_{[0,T]}(\bm x,\textbf{W},\bm{t})$ be the solution to
\begin{align}
    \xi_0 & = \bm x; \label{def: perturb ode mapping h k b, 1}
    \\
    \frac{d\xi_s}{d s} & = -\nabla f(\xi_s)\ \ \ \forall s \in [0,{T}],\ s \neq t_1,t_2,\cdots,t_k; \label{def: perturb ode mapping h k b, 2}
    \\
    \xi_s & = \xi_{s-} + \varphi_b\big( \bm \sigma(\xi_{s-})\bm w_j\big)\ \ \ \text{ if }s = t_j\text{ for some }j\in[k]. \label{def: perturb ode mapping h k b, 3}
\end{align}
That is,  $h^{(k)|b}_{[0,T]}(\bm x,\textbf{W},\bm{t})$ produces an ODE path perturbed by jumps $\bm w_1,\cdots,\bm w_k$ (with sizes modulated by $\bm \sigma(\cdot)$ and then truncated under threshold $b$) at times $t_1,\cdots,t_k$.
For $k = 0$,
we adopt the convention that $\xi = h^{(0)|b}_{[0,T]}(\bm x)$ is simply the gradient flow path
${d\xi_t}/{d t} = - \nabla f( \xi_t)$
under the initial condition $\xi_0 = \bm x$.
Next,
define 
$
\widecheck{g}^{(k)|b}: \R^d \times \R^{d\times k} \times (0,\infty)^{k\uparrow} \to \R^d
$
as the location of the gradient flow path with $k$ perturbation, right after the last perturbation; that is,
\begin{align}
    \notationdef{notation-check-g-k-b}{\widecheck{g}^{(k)|b}\Big(\bm x,\textbf W,(t_1,\ldots,t_k)\Big)}
    \delequal 
    h^{(k)|b}_{[0,t_k+1]}\Big(\bm x,\textbf W,(t_1,\ldots,t_k)\Big)(t_{k}).
    \label{def: mapping check g k b, endpoint of path after the last jump, first exit analysis}
\end{align}
Note that the definition remains the same if
we use mapping $h^{(k)|b}_{[0,T]}$
with any $T \in [t_k,\infty)$ instead of $h^{(k)|b}_{[0,t_k + 1]}$,
and we pick the $+1$ offset for simplicity.
Under $k = 0$,
we adopt the convention that $\widecheck{g}^{(0)|b}(\bm x) = \bm x$.
Note that an equivalent definition for $\mathcal G^{(k)|b}(\bm x)$ in \eqref{property: hierarchy of set G k b}
is that 
(for any $k \geq 1$, $b> 0$, and $\bm x \in \R^d$)
\begin{align}
    \notationdef{notation-set-G-k-b}{\mathcal G^{(k)|b}(\bm x)}
    & =
    \bigg\{
    \widecheck{g}^{(k - 1)|b}
    \Big( \varphi_b\big(\bm \sigma(\bm x)\bm w_1\big),
    (\bm w_2,\cdots, \bm w_k), \bm t 
    \Big):\ 
    \textbf W = (\bm w_1,\cdots,\bm w_k) \in \R^{d \times k},
    \bm t \in (0,\infty)^{k - 1 \uparrow}
    \bigg\}
     \label{def: set G k b}
\end{align}
Moreover, recall 
the measures $\nu_\alpha$ in \eqref{def: measure nu alpha} and $\mathbf S$ in Assumption~\ref{assumption gradient noise heavy-tailed},
and the polar transform $\Psi$ in \eqref{def: Phi, polar transform in Rm}.
Define Borel measures (for any $k \geq 1$, $\bm x \in \R^d$, and $b > 0$)
\begin{equation}\label{def: measure check C k b}
    \begin{aligned}
        \notationdef{notation-check-C-k-b}{\widecheck{ \mathbf C }^{(k)|b}(\ \cdot\ ; \bm x)}
    & \delequal 
    \int \mathbf{I}\bigg\{ \widecheck{g}^{(k-1)|b}\Big( \varphi_b\big(\bm\sigma(\bm x)\bm w_1\big),(\bm w_2,\cdots,\bm w_k),\bm t \Big) \in \ \cdot \  \bigg\}
     \big((\nu_\alpha \times \mathbf S)\circ \Psi\big)^k(d \textbf W) \times \mathcal{L}^{k-1\uparrow}_\infty(d\bm t),
    \end{aligned}
\end{equation}
where 
$\alpha > 1$ is the heavy-tail index in Assumption~\ref{assumption gradient noise heavy-tailed},
$\textbf W = (\bm w_1, \bm w_2, \cdots, \bm w_k) \in \R^{d \times k}$,
\notationdef{notation-measure-L-k-up-infty}{$\mathcal{L}^{k\uparrow}_\infty$}
is the Lebesgue measure restricted on $\{ (t_1,\cdots,t_k) \in (0,\infty)^k:\ 0 < t_1 < t_2 < \cdots < t_k \}$,
and
$\big((\nu_\alpha \times \mathbf S)\circ \Psi\big)^k$ is the $k$-fold of $(\nu_\alpha \times \mathbf S)\circ \Psi$,
which
is the composition of the product measure $\nu_\alpha \times \mathbf S$ with the polar transform $\Psi$:
\begin{align}
    \big((\nu_\alpha \times \mathbf S)\circ \Psi\big)(B) \delequal 
    (\nu_\alpha \times \mathbf S)\big( \Psi(B) \big),\qquad \forall \text{ Borel set }B \subseteq \R^d\setminus\{\bm 0\}.
    \label{def, nu alpha times S composition polar transform}
\end{align}
By the equivalence of \eqref{property: hierarchy of set G k b} and \eqref{def: set G k b},
one can that the measure ${\mathcal G^{(k)|b}(\bm x)}$ in \eqref{def: measure check C k b} is supported on the set $\mathcal G^{(k)|b}(\bm x)$.

We state a few regularity conditions for the technical analyses in Theorem~\ref{corollary irreducible case}.
First, Definition~\ref{def main paper typical transition graph} reveals the connectivity between different attraction fields over $f$.
In particular,
the intuition behind the condition $\mathcal G^{ ( \mathcal J_b(i) )|b  }(\bm m_i) \cap I_j \neq \emptyset$ below is that,
in terms of number of perturbations required in the gradient flow,
the ``hardness'' of going from local minimum $\bm m_i$ to a different attraction field $I_j$ is the same as that of simply escaping the current attraction field $I_i$ (see \eqref{def: J * b i,j, metastability}).

\begin{figure*}
\centering
\includegraphics[width=1\textwidth]{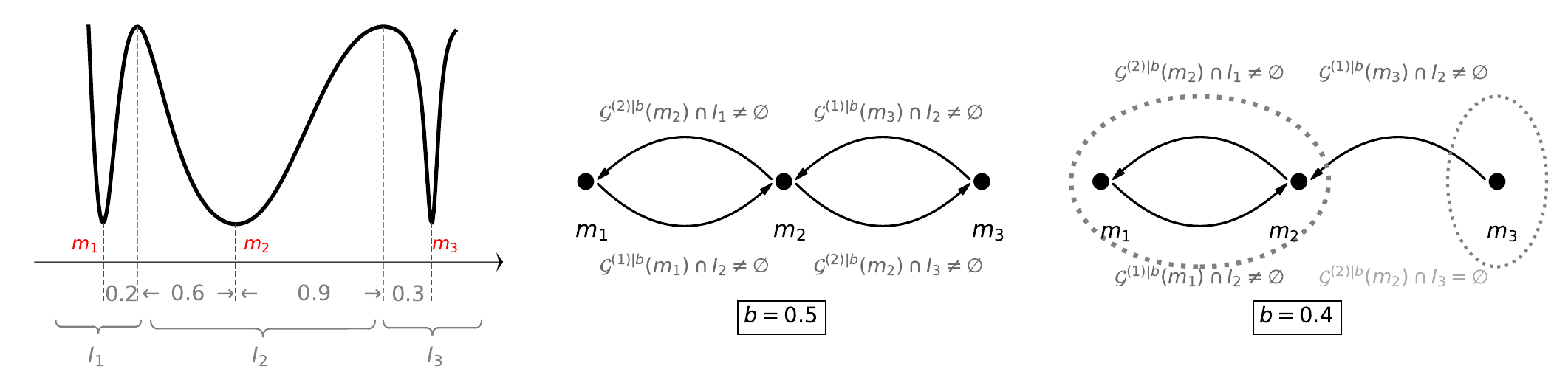}
\caption{\label{fig typical transition graph} 
Typical transition graphs under different choices of the truncation threshold $b$, illustrated with a univariate example.
\textbf{(Left)} A univariate function $f$ with three attraction fields, where the numbers indicate the distance between each local minimum $m_i$ and the neighboring attraction field to the left or right.
Note that in this univariate setting, we have $\mathcal J_b(i) = \ceil{ r(i) / b }$ where $r(i) = \inf\{ |m_i - y|:\ y \notin I_i  \}$,
and, for each $k \leq \mathcal J_b(i)$, we have $\mathcal G^{(k)|b}(m_i) = [ m_i - kb, m_i + kb ]$.
\textbf{(Middle)} The typical transition graph under $b = 0.5$. 
In particular, note that $\mathcal J^*_b(2) = \ceil{ 0.6/b } = 2$,
and the interval $\mathcal G^{( \mathcal J_b(2) )|b}(m_2) = [m_2 - 2 b, m_2 + 2b]$ intersects with both $I_1$ and $I_3$ (so the edges $m_2 \to m_1$ and $m_2 \to m_3$ are included in the typical transition graph).
The entire graph $\mathcal{G}_b$ is irreducible since all nodes communicate with each other. 
\textbf{(Right)} The typical transition graph under $b = 0.4$.
In this case, note that we still have $\mathcal J^*_b(2) = \ceil{ 0.6/b } = 2$,
but now $[m_2 - 2 b, m_2 + 2b]$ does not intersect with $I_3$.
As a result, 
the typical transition graph  does not contain the edge $m_2 \to m_3$, leading to two communication classes $G_1 = \{m_1,m_2\},\ G_2 = \{m_3\}$.
}
\end{figure*} 

\begin{definition}[Typical Transition Graph] \label{def main paper typical transition graph}
Given a function $f$ satisfying Assumption \ref{assumption: geometry, metastability}
and some $b > 0$, 
the \textit{typical transition graph} associated with threshold $b$
is a directed graph $\notationdef{notation-G-b-typical-transition-graph}(V,E_b)$ such that
\begin{itemize}
    \item $\notationdef{notation-V-of-G-b}{V} = \{\bm m_1,\cdots, \bm m_{ K }\}$;
    \item An edge $(\bm m_i\rightarrow \bm m_j)$ is in $\notationdef{notation-E-b-of-G-b}{E_b}$ iff 
    $\mathcal G^{ ( \mathcal J_b(i) )|b  }(\bm m_i) \cap I_j \neq \emptyset$.
\end{itemize}
\end{definition}
The typical transition graph $(V,E_b)$ can be decomposed into different communication classes that are mutually exclusive.
For $\bm m_i,\bm m_j$ with $i \neq j$, we say that  $\bm m_i$ and $\bm m_j$ communicate if and only if there exists a path $(\bm m_i \to \bm m_{k_1} \to \cdots \to \bm m_{k_n} \to \bm m_j)$ as well as a path $(\bm m_j \to \bm m_{k^\prime_1} \to \cdots \to \bm m_{k^\prime_{n^\prime}} \to \bm m_i)$ on the typical transition graph.
See Figure~\ref{fig typical transition graph} (Middle) and (Right) for the illustration of irreducible and reducible cases, respectively.
Specifically, we impose the following assumption and focus on the case where $\mathcal G_b$ is {irreducible},
i.e., all nodes communicate with each other in the graph $(V,E_b)$.
\begin{assumption}\label{assumption, value of b, irreducible, metastability}
The typical transition graph is irreducible.
\end{assumption}

We focus on  the irreducible case in the main paper for clarity of presentation,
and we note that in the reducible case, analogous results would hold  locally within each communication class of the typical transition graph:
when visiting a given communication class, the  truncated heavy-tailed SGDs
$\bm X^{\eta|b}_t(\bm x)$ closely resemble a Markov jump process that only visits the widest minima in that communication class:
see Section~\ref{sec: appendix, reducible case} of the Appendix for statements of analogous results in the reducible case;
see also
Theorem~H.2 and H.3 of \cite{wang2022eliminating} for results in a simplified univariate setting.

We also work with the following conditions on the choice of $b$. 
Similar regularity conditions are imposed in related works; see, e.g., \cite{doi:10.1142/S0219493715500197,wang2023large}.
Here, $\partial E = E^- \setminus E^\circ$ denotes the boundary set of $E$.

\begin{assumption} \label{assumption: regularity condition on b}
The following claims hold for each $i \in [K]$:
\begin{enumerate}[(i)]
    \item 
        $\widecheck{\mathbf C}^{ (\mathcal J_b(i))|b }\Big( \bigcup_{ j \in [K]} \partial I_j;\ \bm m_i  \Big) = 0$,
         and 
        $
        \widecheck{\mathbf C}^{ (\mathcal J_b(i))|b }\big( (I_i)^\complement;\ \bm m_i  \big) > 0;
        $

    \item 
        The set $(I_i)^\complement$ is bounded away from $\mathcal G^{ (\mathcal J_b(i) - 1)|b }(\bm m_i)$ (under the Euclidean norm).
       
\end{enumerate}
\end{assumption}

Recall 
the definition of largest width $\mathcal J^*_b$ in \eqref{def: J * b},
and
that $H(\cdot) = \P(\norm{\bm Z} > \cdot)$ and $\lambda(\eta) = \eta^{-1}H(\eta^{-1}) \in \RV_{\alpha - 1}(\eta)$.
Define the function
\begin{align}
    \notationdef{notation-scale-function-lambda-*-b}{\lambda^*_b(\eta)} \delequal \eta \cdot \big(\lambda(\eta)\big)^{ \mathcal J^*_b } \in \RV_{ \mathcal J^*_b\cdot (\alpha-1)  + 1 }(\eta)
    \qquad\text{as }\eta \downarrow 0,
    \label{def scale function lambda * b eta}
\end{align}
which will be used for the time scaling below.
We are now ready to state the main result.

\begin{theorem} \label{corollary irreducible case}
\linksinthm{corollary irreducible case}
Let Assumptions \ref{assumption: geometry, metastability}--\ref{assumption: regularity condition on b} hold.
Let $p \in [1,\infty)$, $i_0 \in [K]$, and $\bm x_0 \in I_{i_0}$.
As $\eta \downarrow 0$,
$$
\big\{\bm X^{\eta|b}_{\floor{ \boldsymbol{\cdot}/\lambda^*_b(\eta) }}(\bm x_0):\ t > 0\big\}\tofdd 
\{\bm Y^{*|b}_t:\ t > 0\}
\quad\text{and}\quad
\bm X^{\eta|b}_{\floor{ \boldsymbol{\cdot}/\lambda^*_b(\eta) }}(\bm x_0)
\Rightarrow
\bm Y^{*|b}_{\boldsymbol{\cdot}}
\text{ in $(\D[0,\infty),\dlp{[0,\infty)})$},
$$
where $\notationdef{notation-CTMC-Y-*-b}{\bm Y^{*|b}_t}$ is a continuous-time Markov chain with state space $V^*_b$ (see \eqref{def: V * b, metastability}).

\end{theorem}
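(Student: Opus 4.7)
The plan is to decompose the sample path of $\bm X^{\eta|b}_t(\bm x_0)$ into an alternating sequence of ``waiting near a local minimum'' and ``jumping to another attraction field'' episodes, apply the first exit analyses of \cite{wang2023large} (reviewed in Section~\ref{sec: review, first exit analyses}) to each episode, and then invoke the abstract sample-path framework in Section~\ref{subsec: proof, abstract framework, metastability} to assemble the limit. The central mechanism is the hierarchical separation of time scales: starting near $\bm m_i$, the exit time of $\bm X^{\eta|b}$ from $I_i$ is of order $1/\bigl(\eta \,\lambda(\eta)^{\mathcal J_b(i)}\bigr) \in \mathcal{RV}_{-[\mathcal J_b(i)(\alpha-1)+1]}(\eta)$. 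Under the reference clock $\lambda^*_b(\eta) = \eta \cdot \lambda(\eta)^{\mathcal J^*_b}$, a visit to a narrower field ($\mathcal J_b(i) < \mathcal J^*_b$) collapses to zero limiting length, while a visit to a widest field ($\mathcal J_b(i) = \mathcal J^*_b$) contributes a strictly positive exponential holding time, so the limit should live on $V^*_b$.

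Concretely, I would first establish, for each $\bm m_i \in V^*_b$, the joint convergence of (a) the $I_i$-exit time scaled by $\lambda^*_b(\eta)$ to an exponential random variable with rate $q_i = \widecheck{\mathbf C}^{(\mathcal J^*_b)|b}((I_i)^\complement;\bm m_i)$, and (b) the exit location to the normalized measure $\widecheck{\mathbf C}^{(\mathcal J^*_b)|b}(\,\cdot\, \cap (I_i)^\complement;\bm m_i)/q_i$; both are positive and well-defined by Assumption~\ref{assumption: regularity condition on b}(i), and both are direct outputs of the first exit analyses. Next, for any narrow field $I_j$ with $\mathcal J_b(j) < \mathcal J^*_b$, the same analysis gives an exit time of order $\eta \,\lambda(\eta)^{\mathcal J_b(j)} = \bm{o}(\lambda^*_b(\eta))$, so the process transits through $I_j$ in vanishing rescaled time. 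Using Assumption~\ref{assumption, value of b, irreducible, metastability}, I would iterate this to define a \emph{post-exit cascade}: after leaving $\bm m_i \in V^*_b$, the process falls into some $I_{j_1}$, cascades through finitely many narrow fields $I_{j_1},\ldots,I_{j_\ell}$ in vanishing time, and is absorbed into some $\bm m_{j_\ell} \in V^*_b$; convolving the exponential holding time with the induced absorption law on $V^*_b$ produces the generator of $\bm Y^{*|b}$. The same cascade applied to the initial excursion from $\bm x_0 \in I_{i_0}$ (which need not be widest) also takes vanishing time and determines the initial distribution of $\bm Y^{*|b}$, consistent with f.d.d.\ convergence being required only on $(0,\infty)$. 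Finally, because the fraction of time spent outside $V^*_b$ vanishes in probability, the $L_p$ convergence in $(\D[0,\infty),\dlp{[0,\infty)})$ should follow from the f.d.d.\ convergence plus tightness of the excursion lengths, which is the content of the abstract framework of Section~\ref{subsec: proof, abstract framework, metastability} (and aligns with Corollary~\ref{corollary, elimination of sharp minima, metastability}).

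The main obstacle will be the post-exit cascade through chains of narrow minima. Specifically, one must show that (i) composing exit laws across several narrow fields yields, in the $\eta \downarrow 0$ limit, a single well-defined hitting distribution on $V^*_b$; (ii) the cumulative time spent in these narrow excursions is $\bm{o}(\lambda^*_b(\eta))$ with probability going to one, uniformly in the cascade length; and (iii) the Markov structure is preserved after each absorption into $V^*_b$, in the sense that the first exit analysis can be restarted with uniform control over small neighborhoods of each $\bm m_i$. Resolving these points will rely on the strong Markov property of $\bm X^{\eta|b}_t$, the local contraction in Assumption~\ref{assumption: geometry, metastability}(ii) (which brings the process into any prescribed neighborhood of $\bm m_i$ in uniformly bounded time after entering $I_i$), the boundary-null condition in Assumption~\ref{assumption: regularity condition on b}(i) (which rules out pathological cascades landing on $\partial I_j$), and the separation-from-$\mathcal G^{(\mathcal J_b(i)-1)|b}(\bm m_i)$ condition in Assumption~\ref{assumption: regularity condition on b}(ii) (which guarantees that a genuine big-jump escape must occur and thus that the prefactor in the asymptotics is non-degenerate).
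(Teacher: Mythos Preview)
Your proposal is correct and follows essentially the same route as the paper. The paper makes your ``post-exit cascade'' precise by defining an explicit approximator $\hat{\bm X}^{\eta,\epsilon|b}_t(\bm x_0)$ that records \emph{all} successive local-minimum visits (including to narrow minima) as a jump process via the mapping $\Phi$ of Definition~\ref{def jump process}, with visits to narrow minima corresponding to instantaneous jumps $U_j=0$; Proposition~\ref{proposition: hat X converges to CTMC, metastability} then proves convergence of this approximator to $\bm Y^{*|b}$ (your obstacles (i) and (iii) are handled there via Lemma~\ref{proposition: transition time, metastability} and the absorption chain $(S_n)$ with probabilities $\theta_b(\cdot|\cdot)$), and Proposition~\ref{proposition: hat X close to X, metastability} shows closeness of $\hat{\bm X}$ and $\bm X$ in $L_p$ (your obstacle (ii), handled by Lemma~\ref{lemma: metastability, proportion of time not around local minima}), after which Lemma~\ref{lemma: metastability, abstract framework} assembles the pieces exactly as you anticipate.
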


We defer the detailed proof
to Section~\ref{subsec: proof, theorems, metastability} of the Appendix.
Here,
we discuss the the implication of Section~\ref{subsec: proof, theorems, metastability},
its connection  to existing works on metastability of heavy-tailed stochastic systems,
and state the law of the limiting process $\bm Y^{*|b}_t$.

Consider (untruncated) heavy-tailed SGDs defined by the recursion
$
\bm X^\eta_t(\bm x) = \bm X^\eta_{t-1}(\bm x) - \eta \nabla f\big(  \bm X^\eta_{t-1}(\bm x) \big) + 
\eta \bm \sigma\big(  \bm X^\eta_{t-1}(\bm x) \big)\bm Z_t,
$
given the initial value $\bm X^\eta_0(\bm x) = \bm x$ and step length $\eta > 0$.
Equivalently, $\bm X^{\eta}_t(\bm x)$ can be constructed by extending the definition of $\bm X^{\eta|b}_t(\bm x)$ in \eqref{def: X eta b j x, clipped SGD} and setting $b = \infty$ so the truncation operator $\varphi_\infty$ degenerates to the identity mapping.
The global dynamics of $\bm X^{\eta}_t(\bm x)$ can be revealed by sending the truncation threshold $b$ to $\infty$ in Theorem~\ref{theorem: metastability, unclipped}.
More specifically, let
\begin{align}
    \notationdef{notation-check-C}{\widecheck{\mathbf C}(\ \cdot\ ;\bm x)} \delequal \int \mathbf{I}\Big\{ \bm x + \bm \sigma(\bm x) \bm w \in\ \cdot\ \Big\}\nu_\alpha(d\bm w),
    \label{def: measure check C}
\end{align}
with $\nu_\alpha$ defined in \eqref{def: measure nu alpha}.
Also, we further impose a boundedness condition to facilitate the analyses in the untruncated case.
\begin{assumption}[Boundedness]  
\label{assumption: boundedness of drift and diffusion coefficients}
There exists some $\notationdef{notation-constant-C-boundedness-assumption}{C} \in (0,\infty)$ such that
\begin{align*}
    \norm{\nabla f(\bm x)} \vee \norm{\bm \sigma(\bm x)} \leq  C,\qquad \forall \bm x \in \mathbb{R}^d.
\end{align*}
\end{assumption}

Recall that $H(\cdot) = \P(\norm{\bm Z}> \cdot)$.
Corollary~\ref{theorem: metastability, unclipped} shows that, under the $1/H(\eta^{-1})$ time scaling, 
the sample path of $\bm X^\eta_t(\bm x)$ converges in distribution to that of a Markov jump process visiting all local minima over $f$.

\begin{corollary}\label{theorem: metastability, unclipped}
\linksinthm{theorem: metastability, unclipped}
Let Assumptions~\ref{assumption: geometry, metastability}--\ref{assumption: nondegeneracy of diffusion coefficients} and \ref{assumption: boundedness of drift and diffusion coefficients} hold.
Suppose that $\widecheck{\mathbf C}\big( \bigcup_{j \in [K]}\partial I_j; \bm m_i  \big) = 0$ holds for each $i \in [K]$.
Then, for each $p \in [1,\infty)$, $i_0 \in [K]$, and $\bm x_0 \in I_{i_0}$, we have
$$
\big\{\bm X^{\eta}_{\floor{t/H(\eta^{-1}) }}(\bm x_0):\ t > 0\big\}
\tofdd 
\{\bm Y^{*}_t:\ t > 0\}
\quad\text{and}\quad
\bm X^{\eta}_{\floor{ \boldsymbol{\cdot}/H(\eta^{-1}) }}(\bm x_0)
\Rightarrow
\bm Y^{*}_{\boldsymbol{\cdot}}
\text{ in $(\D[0,\infty),\dlp{[0,\infty)})$}
$$
as $\eta \downarrow 0$.
Here, $\notationdef{notation-CTMC-Y-*}{\bm Y^*_t}$ is a continuous-time Markov chain
with state space $V = \{\bm m_1,\ldots,\bm m_K\}$,
initial value $\bm Y^*_0 = \bm m_{i_0}$, and infinitesimal generator
\begin{align*}
    q(i,j) & =  \widecheck{\mathbf C}( I_j;\bm m_i)\qquad \forall \bm m_i,\ \bm m_j \in V\text{ with }\bm m_i \neq \bm m_j,
    \\ 
    q(i,i) & = - \sum_{ j \in [K]:\ j \neq i }q(i,j)
    = 
    -\widecheck{\mathbf C}\big( (I_i)^\complement; \bm m_i  \big)
    \qquad \forall \bm m_i \in V.
\end{align*}
    
\end{corollary}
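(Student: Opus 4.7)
My plan is to derive Corollary~\ref{theorem: metastability, unclipped} by specializing the argument of Theorem~\ref{corollary irreducible case} to the degenerate truncation level $b = \infty$, for which the clipping operator $\varphi_\infty$ reduces to the identity and hence $\bm X^{\eta}_t(\bm x) = \bm X^{\eta|\infty}_t(\bm x)$. In this regime, a single perturbation can reach any point of $\R^d$, so $\mathcal G^{(1)|\infty}(\bm m_i) = \R^d$, giving $\mathcal J_\infty(i) = 1$ for every $i$ and thus $\mathcal J^*_\infty = 1$, $V^*_\infty = V$, and the time scaling $\lambda^*_\infty(\eta) = \eta \cdot \lambda(\eta) = H(\eta^{-1})$, which matches the statement. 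A routine check identifies the measure $\widecheck{\mathbf C}^{(1)|\infty}(\,\cdot\,;\bm m_i)$ from \eqref{def: measure check C k b} with $\widecheck{\mathbf C}(\,\cdot\,;\bm m_i)$ in \eqref{def: measure check C} (interpreting $\nu_\alpha$ as the radial projection of $(\nu_\alpha \times \mathbf S) \circ \Psi$ through the angular distribution in Assumption~\ref{assumption gradient noise heavy-tailed}), which pins down the transition rates of the limiting chain.

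The argument then proceeds in three steps mirroring the strategy outlined for Theorem~\ref{corollary irreducible case}. First, I would invoke the first-exit analysis from \cite{wang2023large} (reviewed in Section~\ref{sec: review, first exit analyses} of the Appendix) in the special case $b = \infty$: starting from any $\bm x \in I_i$, the exit time $\tau_i^{\eta}$ from $I_i$, scaled by $H(\eta^{-1})$, converges in distribution to an $\mathrm{Exp}\big(\widecheck{\mathbf C}((I_i)^{\complement};\bm m_i)\big)$ variable, and jointly the exit location converges weakly to the probability measure $\widecheck{\mathbf C}(\,\cdot\,;\bm m_i)/\widecheck{\mathbf C}((I_i)^{\complement};\bm m_i)$ restricted to $(I_i)^{\complement}$. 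This is a direct manifestation of the single-big-jump principle: a single regularly varying innovation $\bm Z_t$ of order $\eta^{-1}$ is overwhelmingly likely to trigger the exit, placing the process at approximately $\bm m_i + \bm \sigma(\bm m_i)\bm Z_t$. The null-boundary condition $\widecheck{\mathbf C}(\bigcup_j \partial I_j;\bm m_i)=0$ plays the role of Assumption~\ref{assumption: regularity condition on b}(i), ensuring weak convergence with Portmanteau-type arguments.

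Second, I would verify the ``forgetting'' property needed to set up a strong Markov structure: between big jumps, the drift (plus small noise) contracts the iterate into a shrinking neighborhood of the host local minimum within a number of steps that is negligible compared to $1/H(\eta^{-1})$. Here conditions (ii) and (iii) of Assumption~\ref{assumption: geometry, metastability} (local contraction around minima and global dissipativity) together with the boundedness Assumption~\ref{assumption: boundedness of drift and diffusion coefficients} are used; the latter replaces the truncation threshold $b$ as a uniform control on the one-step increments and is the mechanism by which we rule out the process escaping to infinity via a single bad jump. Consequently, the distribution of the next exit depends in the limit only on which attraction field is currently occupied, and the exit times and locations at successive regenerations are asymptotically iid conditional on the visited minimum.

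Finally, I would apply the abstract lifting framework developed in Section~\ref{subsec: proof, abstract framework, metastability} of the Appendix, which upgrades convergence of the sequence of (scaled) exit times and exit locations to full sample-path convergence in finite-dimensional distributions and in the $L_p$ topology on $\D[0,\infty)$. This identifies the limit as the continuous-time Markov chain $\bm Y^*_t$ on $V$ with initial state $\bm m_{i_0}$ and the stated generator, completing the proof. I expect the main technical obstacle to be Step 1: ruling out contributions from pathological exit mechanisms (multiple moderate jumps, or exits along $\partial I_j$) under only the boundedness assumption. This is where the non-degeneracy (Assumption~\ref{assumption: nondegeneracy of diffusion coefficients}) and the density lower bound on $\mathbf S$ enter, guaranteeing that the one-big-jump contribution dominates and lands in the interior of a neighboring attraction field.
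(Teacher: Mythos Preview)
Your plan is conceptually valid but takes a genuinely different route from the paper. The paper does \emph{not} set $b=\infty$ and rerun the machinery of Theorem~\ref{corollary irreducible case}; instead it applies Theorem~\ref{corollary irreducible case} for each fixed large finite $b$ (where, as you note, $\mathcal J_b(i)=1$ for all $i$, the typical transition graph is complete, $V^*_b=V$, and $\lambda^*_b(\eta)=H(\eta^{-1})$), and then couples the two processes: under Assumption~\ref{assumption: boundedness of drift and diffusion coefficients}, on the event $\{\eta\norm{\bm Z_t}\le b/(2C)\ \forall t\le\lfloor T/H(\eta^{-1})\rfloor\}$ the clipped and unclipped trajectories coincide exactly, and the complement has asymptotic probability at most $T(2C/b)^\alpha$ by regular variation. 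A Portmanteau bound then gives $\limsup_{\eta}\P\big(\bm X^{\eta}_{\lfloor\boldsymbol\cdot/H(\eta^{-1})\rfloor}(\bm x_0)\in A\big)\le\P(\bm Y^{*|b}_{\boldsymbol\cdot}\in A)+T(2C/b)^\alpha$ for closed $A$; sending $b\to\infty$ kills the error term and sends $q_b(i,j)\to q(i,j)$, identifying the limit as $\bm Y^*$.

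Your direct approach would force you to re-establish the first-exit analysis (Theorem~\ref{theorem: first exit analyses, adapted}) and the supporting lemmas for the unclipped process; the paper's appendix states these only for finite $b$, and it is precisely to avoid redoing that work that the paper uses the coupling-and-limit detour. Incidentally, the obstacle you anticipate in Step~1 (multiple moderate jumps, boundary exits) is actually the \emph{easy} case once $\mathcal J_\infty=1$; the real extra work in your route is the unclipped first-exit lemma itself. One point to correct: Assumption~\ref{assumption: boundedness of drift and diffusion coefficients} bounds $\nabla f$ and $\bm\sigma$, not the one-step increments---with $\bm Z_t$ heavy-tailed, the unclipped step $\eta\bm\sigma(\cdot)\bm Z_t$ is unbounded and the process \emph{can} jump arbitrarily far in one step. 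The role of boundedness in the paper's proof is exactly the coupling inequality above (it guarantees $\varphi_b$ is inactive whenever $\eta\norm{\bm Z_t}$ is not large), not a uniform increment bound.
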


We defer the detailed proof to Section~\ref{subsec: proof, theorems, metastability} of the Appendix,
and note that proof strategy is to send $b \to \infty$ in Theorem~\ref{theorem: metastability, unclipped} and carefully analyze the limits involved. 
In particular, under $b = \infty$, we have $\mathcal G^{(1)|\infty}(\bm x) = \R^d$ in \eqref{def: set G k b} and hence 
 $\mathcal J_\infty(i) = 1\ \forall i \in [K]$ in \eqref{def: J * b i,j, metastability}
as well as
$
\mathcal J^*_\infty = 1,
$
$
V^*_\infty = V
$
in \eqref{def: J * b} and \eqref{def: V * b, metastability}.
That is, without truncation, it is possible to reach any point in $\R^d$ with one jump when starting from a local minimum $\bm m_i$,
and each attraction field is considered equally wide---in terms of $\mathcal J_b(i)$ in \eqref{def: J * b i,j, metastability}---when compared to the infinite truncation threshold $b = \infty$.

Corollary~\ref{theorem: metastability, unclipped} is in the same spirit as prior work on metastability analyses under heavy-tailed noise. 
For instance, \cite{pavlyukevich2008metastable} studied univariate SDEs driven by regularly varying L\'evy processes,
and showed that transitions between different local minima are almost always caused by a single disproportionately large jump, while the rest of the dynamics follow a functional law of large numbers.
Moreover, the transition times scale polynomially in the noise magnitude, with the exponent determined solely by the power-law index of the (untruncated) L\'evy noise
(likewise, in Corollary~\ref{theorem: metastability, unclipped} the time scale revealing the global dynamics of $\bm X^\eta_t$ is dictated by $H(\eta^{-1}) = \P(\norm{\bm Z} > \eta^{-1}) \in \RV_{\alpha}(\eta)$, 
which only depends on the law of the heavy-tailed noise with $\alpha > 1$ being the corresponding heavy-tailed index in Assumption~\ref{assumption gradient noise heavy-tailed}).
See also \cite{doi:10.1142/S0219493715500197} for multivariate extensions to hyperbolic dynamical systems.
These results are manifestations of the \emph{principle of a single big jump}, a well-known phenomenon in extreme value theory that often governs rare events and metastable behaviors in heavy-tailed systems.

In contrast, this paper reveals a more refined mathematical structure in the global dynamics of heavy-tailed systems, where the governing factor is \emph{the number of jumps required to escape the attraction of a local minimum}.
Specializing to $\bm X^{\eta|b}_t$ where the stochastic dynamics are truncated above a fixed threshold $b$ by \eqref{defTruncationClippingOperator},
Theorem~\ref{corollary irreducible case} shows that the polynomial scaling of transition times now depends on both $\alpha$ (i.e., law of the noise) and ``width'' of the attraction fields (i.e., $\mathcal J_b(i)$ in \eqref{def: J * b i,j, metastability}).
The global dynamics of truncated heavy-tailed SGDs are in turn determined by the maximal width $\mathcal J^*_b$ in \eqref{def: J * b}.
In summary, our results provide a much more complete characterization of the metastability of heavy-tailed SGD: 
its global dynamics exhibit sophisticated phase transitions that depend in a discretized manner on the truncation threshold $b$ through key quantities $\mathcal J_b(i)$ and $\mathcal J^*_b$ that play the role of the width for the attraction fields.

To conclude Section~\ref{subsec: main results}, we specify the law of limiting process ${\bm Y^{*|b}_t}$ in Theorem~\ref{corollary irreducible case}.
Recall the measure $\widecheck{ \mathbf C}^{(k)|b}(\ \cdot\ ;\bm x)$ in \eqref{def: measure check C k b}.
Let
\begin{align}
    \notationdef{notation-q-b-i,j}{q_{b}(i,j)} \delequal \widecheck{\mathbf C}^{ ( \mathcal J_b(i) )|b }( I_j;\bm m_i),
    \qquad 
    \notationdef{notation-q-b-i}{q_b(i)} \delequal \widecheck{\mathbf C}^{ ( \mathcal J_b(i) )|b }\big( (I_i)^\complement; \bm m_i\big).
    \label{def: q b i, q b i j, generator for Y * b}
\end{align}
By condition (i) in Assumption~\ref{assumption: regularity condition on b}, we have
$
\sum_{j \in [K]:\ j \neq i}q_b(i,j) = q_b(i)
$ 
for each $i \in [K]$.
Furthermore, one can show that $q_b(i) \in (0,\infty)$ for each $i \in [K]$ (see the proof at the beginning of Section~\ref{subsec: proof, propositions, metastability} in Appendix).
This allows us to define a discrete-time Markov chain $(S_n)_{n \geq 0}$ over state space $V = \{\bm m_1, \bm m_2, \ldots, \bm m_K\}$, with any state $v \in V^*_b$ being an absorbing state, such that the one-step transition kernel is defined by
$\P(S_{n+1} = \bm m_j|S_n = \bm m_i) = q_b(i,j)/q_b(i)$
for any $\bm m_i \in V\setminus V^*_b$ and any $\bm m_j \in V$.
Next, define (for each $\bm m_i \in V$ and $\bm m_j \in V^*_b$)
\begin{align}
    \theta_b(\bm m_j|\bm m_i) \delequal \P(S_n = \bm m_j\text{ for some }n \geq 0\ |\ S_0 = \bm m_i)
    \label{def: absorption prob, theta b i j}
\end{align}
as the absorption probability at any $\bm m_j \in V^*_b$ when starting from $\bm m_i$.
By definition, for each $\bm m_i \in V^*_b$, we have
$
\theta_b(\bm m_i|\bm m_i) = 1.
$
Now, we are ready to define the initial distribution of
${\bm Y^{*|b}_t}$ by
\begin{align}
    \P(\bm Y^{*|b}_0 = \bm m_j) = \theta_b(\bm m_j|\bm m_{i_0}),\qquad \forall \bm m_j \in V^*_b,
    \label{def: Y * b, initial distribution}
\end{align}
where $\bm x_0$ is the initial value of SGD prescribed in Theorem~\ref{corollary irreducible case},
and
$i_0 \in [K]$ is the unique index with $\bm x_0 \in I_{i_0}$.
Next, the transition of this continuous-time Markov chain is governed by 
\begin{align}
    \P( \bm Y^{*|b}_{t+h} = \bm m_j\ |\ \bm Y^{*|b}_t = \bm m_i)
    =  h \cdot \sum_{j^\prime \in [K]:\ j^\prime \neq i} q_b(i,j^\prime)\theta_b(\bm m_j|\bm m_{j^\prime})
    + \bm{o}(h),\qquad\text{ as }h\downarrow 0
    \label{def, generator of Y * b}
\end{align}
for any $\bm m_i,\ \bm m_j \in V^*_b$ with $\bm m_i \neq \bm m_j$.
In other words, the infinitesimal generator of $\bm Y^{*|b}_t$ is 
\begin{align}
    Q^{*|b}(i,j) & = \sum_{j^\prime \in [K]:\ j^\prime \neq i} q_b(i,j^\prime)\theta_b(\bm m_j|\bm m_{j^\prime})
    \qquad 
    \forall 
    \bm m_i,\ \bm m_j \in V^*_b\text{ with }\bm m_i \neq \bm m_j,
    \label{def: generator of Y * b, 1}
    \\ 
    Q^{*|b}(i,i) & = - \sum_{\bm m_j \in V^*_b:\ j \neq i}Q^{*|b}(i,j)
    \qquad \forall \bm m_i \in V^*_b.
    \label{def: generator of Y * b, 2}
\end{align}


\subsection{Control of Global Dynamics of Heavy-Tailed SGD}
\label{subsec: implication, elimination of narrow minima}

In Section~\ref{subsec: implication, elimination of narrow minima}, we discuss the connection between Theorem~\ref{corollary irreducible case} and the control of training dynamics in deep learning.
Specifically, Theorem~\ref{corollary irreducible case} suggests that, under truncated heavy-tailed noise, SGD spends almost all time around the widest minima; this is rigorously characterized by Corollary~\ref{corollary, elimination of sharp minima, metastability} below.
Given its connection to the flat-minima folklore regarding the generalization in deep learning,
we then propose a novel algorithm that injects and then truncates heavy tails during the training of deep neural nets in order to find flat minima and improve generalization performance.

First of all, as the limiting process $\bm Y^{*|b}_t$ in Theorem \ref{corollary irreducible case} only visits the set $V^*_b$,
it is natural to expect that (under small step length $\eta$) the truncated heavy-tailed SGDs spend almost all time in the widest attraction fields of the loss landscape, and narrow minima are almost compeltely eliminated from their trajectories.
This conjecture can be easily made precise through a continuous mapping argument.
In particular, given any $\epsilon,\ T>0$, let
\begin{align*}
    g(\xi) = \frac{1}{T}\int_0^T\mathbf{I}\bigg\{ \xi_t \in \bigcup_{\bm m_j \in V^*_b }B_\epsilon(\bm m_j)  \bigg\}dt,
\end{align*}
and note that $g:\D[0,\infty)\to \R$
is continuous (w.r.t.\ $\dlp{[0,\infty)}$ in \eqref{def, Lp metric D 0 infty}) at any $\xi$ that
only takes values in $V^*_b$ and only makes finitely many jumps over $[0,T]$.
We then obtain Corollary~\ref{corollary, elimination of sharp minima, metastability} by combining the $L_p$ convergence stated in Theorem~\ref{corollary irreducible case} with the continuous mapping theorem.

\begin{corollary}
\label{corollary, elimination of sharp minima, metastability}
\linksinthm{corollary, elimination of sharp minima, metastability}
Let $\epsilon,\ T > 0$.
Under the conditions in Theorem~\ref{corollary irreducible case},
\begin{align*}
    \frac{1}{ \floor{T / \lambda^*_b(\eta)} }\sum_{t=1}^{ \floor{T / \lambda^*_b(\eta)} } \mathbf{I}\bigg\{
    \bm X^{\eta|b}_{t}(\bm x_0) \in  \bigcup_{\bm m_j \in V^*_b }B_{\epsilon}(\bm m_j)
    \bigg\}
     \stackrel{p}{\to}
    1,
    \qquad 
    \text{ as }\eta \downarrow 0,
\end{align*}
where $\stackrel{p}{\to}$ stands for convergence in probability.
\end{corollary}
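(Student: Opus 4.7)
\emph{Proof proposal.} The plan is to follow the hint that precedes the corollary and combine the $L_p$ convergence from Theorem~\ref{corollary irreducible case} with (a semicontinuous version of) the continuous mapping theorem, then translate the resulting statement about a time-integral into the stated Cesàro average through a deterministic Riemann-sum computation. Write $A = \bigcup_{\bm m_j \in V^*_b} B_\epsilon(\bm m_j)$ and, as suggested in the paper, define $g:\D[0,\infty) \to \R$ by $g(\xi) = \frac{1}{T}\int_0^T \mathbf{I}\{\xi_t \in A\}\,dt$.

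The first step is to prove $g\big(\bm X^{\eta|b}_{\floor{\cdot / \lambda^*_b(\eta)}}(\bm x_0)\big) \stackrel{p}{\to} 1$. I would work with the complementary functional $h(\xi) = 1 - g(\xi) = \frac{1}{T}\int_0^T \mathbf{I}\{\xi_t \in A^\complement\}\,dt$, and target $h \stackrel{p}{\to} 0$. Since the sample paths of $\bm Y^{*|b}$ take values in $V^*_b \subseteq A$, we have $h(\bm Y^{*|b}) = 0$ almost surely. The key analytic observation is that $\mathbf{I}\{\cdot \in A^\complement\}$ is upper semicontinuous on $\R^d$ (because $A^\complement$ is closed), so whenever $\xi^n \to \xi$ in $L_p[0,T]$, a subsequence satisfies $\xi^{n_k}_t \to \xi_t$ for Lebesgue a.e.\ $t$, and the reverse Fatou lemma yields $\limsup_k h(\xi^{n_k}) \leq h(\xi)$. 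Applying Skorokhod's representation to the weak convergence $\bm X^{\eta|b}_{\floor{\cdot / \lambda^*_b(\eta)}}(\bm x_0) \Rightarrow \bm Y^{*|b}$ in $(\D[0,\infty), \dlp{[0,\infty)})$ from Theorem~\ref{corollary irreducible case} (restricted to $[0,T]$), this upper-semicontinuity propagates to $\limsup_{\eta \downarrow 0} h \leq h(\bm Y^{*|b}) = 0$ a.s.\ along any subsequence of the Skorokhod versions. Since $h \geq 0$ and the subsequence was arbitrary, the subsequence principle gives $h \to 0$ in probability.

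For the second step, I unpack $g$ at the rescaled process. The map $s \mapsto \bm X^{\eta|b}_{\floor{s/\lambda^*_b(\eta)}}(\bm x_0)$ is piecewise constant on intervals of length $\lambda^*_b(\eta)$, so with $N_\eta = \floor{T/\lambda^*_b(\eta)}$,
\begin{align*}
    g\Big(\bm X^{\eta|b}_{\floor{\cdot / \lambda^*_b(\eta)}}(\bm x_0)\Big)
    = \frac{\lambda^*_b(\eta)}{T}\sum_{t=0}^{N_\eta - 1}\mathbf{I}\{\bm X^{\eta|b}_t(\bm x_0) \in A\} + R_\eta,
\end{align*}
where the fractional terminal bin obeys $|R_\eta| \leq \lambda^*_b(\eta)/T \to 0$. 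Using $\lambda^*_b(\eta) N_\eta / T \to 1$ and absorbing the single-index shift between $\sum_{t=0}^{N_\eta-1}$ and $\sum_{t=1}^{N_\eta}$ (which contributes at most $1/N_\eta$), the right-hand side equals $\frac{1}{N_\eta}\sum_{t=1}^{N_\eta}\mathbf{I}\{\bm X^{\eta|b}_t(\bm x_0) \in A\}$ up to a deterministic $o(1)$ error; combined with the first step, this yields the claim.

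The main obstacle I anticipate is precisely the semicontinuity issue in the first step: $g$ is not globally continuous on $\D[0,\infty)$ because $\mathbf{I}\{\cdot \in A\}$ jumps on $\partial A$, so a naive continuous mapping argument fails. The saving grace is the one-sidedness of the situation---the limit $\bm Y^{*|b}$ lives strictly inside the open set $A$ and never touches $\partial A$, so only upper semicontinuity of $h$ is required, and this is readily supplied by the $L_p$-plus-reverse-Fatou route above.
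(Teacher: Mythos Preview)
Your proposal is correct and follows essentially the same route as the paper: define $g(\xi)=\frac{1}{T}\int_0^T\mathbf{I}\{\xi_t\in A\}\,dt$, use the $L_p$ convergence of Theorem~\ref{corollary irreducible case} together with a continuous-mapping argument to get $g\to 1$ in probability, then identify $g$ with the discrete Cesàro average up to an $o(1)$ remainder. The only cosmetic difference is that the paper directly asserts $g$ is continuous (in the $\dlp{[0,\infty)}$ topology) at every path taking values in $V^*_b$ with finitely many jumps on $[0,T]$, and then invokes the extended continuous mapping theorem (continuity on a set of full limit measure); you instead establish upper semicontinuity of $h=1-g$ via $L_p$-subsequences, a.e.\ pointwise convergence, and reverse Fatou, then pass through Skorokhod. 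Both arguments exploit exactly the same fact---that $\bm Y^{*|b}$ lives in the open set $A$ at positive distance from $\partial A$---and the paper leaves the Riemann-sum conversion implicit while you spell it out.
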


Corollary~\ref{corollary, elimination of sharp minima, metastability} confirms that, as $\eta \downarrow 0$ and as long as we run truncated SGDs for long enough (i.e., the number of steps is comparable to the time scale $1/\lambda^*_b(\eta)$),
the proportion of time $\bm X^{\eta|b}_{t}(\bm x)$ spends around the widest minima (in terms of $\mathcal J_b(i)$) converges to 1. 
That is, truncated heavy-tailed noise can help SGD to almost always stay around the widest minima over the loss landscape; see, e.g., the numerical experiments in Figure~\ref{fig histograms} (left, b).

Such intriguing global dynamics are particularly relevant in deep learning.
Indeed, arriving at and staying around local minima with flatter geometry during  the training of deep neural networks often leads to better generalization performance in the test phase (see, e.g.,
\cite{Jiang*2020Fantastic,NEURIPS2022_69b55345,Li_2025_CVPR}). 
Corollary~\ref{corollary, elimination of sharp minima, metastability} then suggests that by running truncated heavy-tailed SGD for long enough (i.e., comparable to the time scale $1/\lambda^*_b(\eta)$) under a small step size $\eta$, we are almost certain to avoid the sharper, narrower local minima at the end of training. 

In order to translate our theoretical results into algorithmic insights,
we propose a novel training strategy that incorporates truncated heavy tails into the training of deep neural networks.  
While heavy-tailed noise has been empirically observed in deep learning, its presence and prevalence in specific tasks, as well as the validity of methods used to detect it, remain subtle topics of ongoing debate (see, e.g., \cite{panigrahi2019non,pmlr-v238-battash24a}).
Moreover, even when heavy-tailed noise is present, its exact degree of heavy-tailedness may not be ideal for efficient training.
For instance, the time scale at which the global dynamics described in Theorem~\ref{corollary irreducible case} and  Corollary~\ref{corollary, elimination of sharp minima, metastability} would manifest
is governed by the function $\lambda^*_b$ in \eqref{def scale function lambda * b eta} and depends on the heavy-tailed index $\alpha$.
As a result, under small step length $\eta$, the training time required to observe the preference toward the widest minima can become prohibitively long if $\alpha$ is too large (i.e., the tails in gradient noise are not sufficiently heavy). 
Therefore, it is also important to consider algorithmic framework that allows controlled injection of heavy-tailedness into the noise.

More precisely, given the current weights of a neural network $\theta$,
our approach is to update the model weights through the recursion of the form
\begin{align}
    \theta \leftarrow \theta - \varphi_b\big( \eta \cdot  g_{\text{heavy}}(\theta) \big),
    \label{def: truncated heavy tailed iteration, ablation study}
\end{align}
where $\varphi_b$ is the gradient clipping operator \eqref{defTruncationClippingOperator},
$\eta$ is the step length (i.e., learning rate),
and $g_{\text{heavy}}(\theta)$ is some stochastic gradient evaluated at $\theta$ perturbed by heavy-tailed dynamics.
Of course, the key step in implementing this training strategy is the construction of the heavy-tailed stochastic gradient $g_{\text{heavy}}(\theta)$ such that it is unbiased, i.e., 
$
\E g_{\text{heavy}}(\theta) = g_{\text{GD}}(\theta)
$
with $g_{\text{GD}}(\theta)$ being the (deterministic) true gradient evaluated using the entire training dataset,
and exhibits heavy-tailed laws. 
To this end, we estimate gradient noise via training data, and then conduct tail inflation for the noise term.
More precisely, let
\begin{align}
    g_{\text{heavy}}(\theta) \distequal g_{\text{SB}*}(\theta) + Z \big( g_{\text{SB}}(\theta) - g_{\text{GD}}(\theta) \big),
    \label{algorithm: def of heavy taild gradient}
\end{align}
where $g_{\text{SB}}(\theta)$ and $g_{\text{SB}*}(\theta)$ are the small-batch stochastic gradients,
and $Z \distequal cW$ with $W$ being a Pareto$(\alpha)$ random variable, and $c,\alpha$ being parameters of the algorithm (of course, a new independent copy of $Z$ will be drawn for each new gradient step). 
Here, note that the term $g_{\text{SB}}(\theta) - g_{\text{GD}}(\theta)$ represents gradient noise by definition,
and multiplying it with the heavy-tailed random variable $Z$ leads to inflation for the tail distribution of the noise.
We further note two details regarding the implementation of $g_{\text{heavy}}(\theta)$.
First,  due to the prohibitive cost of evaluating the true gradient $g_{\text{GD}}(\theta)$ in most tasks,
we instead use $g_{\text{LB}}(\theta)$, which is the stochastic gradient evaluated on a large batch of the training data (and is still unbiased for estimating $g_{\text{GD}}(\theta)$).
As a result, the heavy-tailed stochastic gradient is constructed by
\begin{align}
    g_{\text{heavy}}(\theta) \distequal g_{\text{SB}*}(\theta) + Z \big( g_{\text{SB}}(\theta) - g_{\text{LB}}(\theta) \big). 
    \label{update our heavy tailed method}
\end{align}
Second, depending on whether we use the same small batches for $g_{\text{SB}}(\theta)$ and $g_{\text{SB}*}(\theta)$,
we end up with two versions of the algorithm:
 in \emph{our method 1} (labeled as ``our 1'' in Table \ref{table ablation study}), we independently choose two small batches of the training data, 
 while in \emph{our method 2}  (labeled as ``our 2'' in Table \ref{table ablation study}), we use the same batch for $g_{\text{SB}}(\theta)$ and $g_{\text{SB}*}(\theta)$ in \eqref{update our heavy tailed method}.

In Section~\ref{sec: experiments combined}, we conduct simulation experiments and deep learning experiments to demonstrate the ability of our tail-inflation-and-truncation strategy \eqref{def: truncated heavy tailed iteration, ablation study}--\eqref{update our heavy tailed method} to find local minima with flat and wide geometry and improve the generalization performance of deep neural nets.
We conclude this section with a few remarks.
First, this tail-inflation-and-truncation strategy can be incorporated into first-order methods beyond vanilla SGD;
see Section~\ref{subsec: experiment 2, adam wrn} for its incorporation with the Adam optimizer  \cite{kingma2017adammethodstochasticoptimization}.
Second, 
several straightforward modifications can further reduce
the computational cost of this algorithm.
For example, 
when constructing $g_\text{heavy}$ in \eqref{algorithm: def of heavy taild gradient},
one can substitute $Z$ with $Z\mathbf{I}\{ Z > C \}$ for some prefixed threshold $C$ (i.e., we inject noise only if we know it is large),
and the updates \eqref{def: truncated heavy tailed iteration, ablation study} reduce to vanilla SGD steps when a small $Z$ is drawn.
See also Section~\ref{subsec: experiment 2, adam wrn} for demonstration of the effectiveness of the algorithm, even when the evaluation of $g_\text{LB}$---the arguably most costly step in the algorithm---is removed.

\section{Experiments}
\label{sec: experiments combined}

This section is devoted to numerical experiments.
Specifically, 
Section~\ref{sec:numerical-experiments} adopts the $\R^1$ simulation experiments in \cite{wang2022eliminating} to illustrate the global dynamics of (truncated) heavy-tailed SGDs established in Section~\ref{sec: metastability}.
Section~\ref{subsec: experiment 1, ablation study} follows the experimental design of the ablation study in \cite{wang2022eliminating} and verifies the effectiveness of the proposed tail-inflation-and-truncation strategy in improving the generalization performance of deep neural networks.
Then in Section~\ref{subsec: experiment 2, adam wrn}, we further show that our truncated heavy-tailed training strategy continues to perform well when combined with more recent network architectures, such as Wide Residual Networks \cite{zagoruyko2017wideresidualnetworks}, and popular optimization algorithms different from SGD, such as Adam \cite{kingma2017adammethodstochasticoptimization}.

\subsection{Simulation Experiments in $\R^1$}
\label{sec:numerical-experiments}

We 
adopt the design of simulation experiments in Section~3 of \cite{wang2022eliminating},
and consider a univariate function $f$ of the form
\begin{equation}\label{aeq:f}
\begin{aligned}
    & f(x)= (x+1.6)(x+1.3)^2(x-0.2)^2(x-0.7)^2(x-1.6)\big(0.05|1.65-x|\big)^{0.6} \\
    & \ \ \cdot \Big( 1 + \frac{1}{ 0.01 + 4(x-0.5)^2  } \Big)\Big( 1 + \frac{1}{0.1 + 4(x+1.5)^2} \Big)\Big( 1 - \frac{1}{4}\exp( -5(x + 0.8)^2  ) \Big).
\end{aligned}
\end{equation}
As illustrated in Figure~\ref{fig histograms} (left, e),
this function admits the local minima
 $m_1 = -1.51, m_2 = -0.66, m_3 = 0.49, m_4 = 1.32$,
 and attraction fields.
 $
 I_1 = (-\infty, -1.3),
 I_2 = (1,3, 0.2),
 I_3 = (0.2,0.7),
 I_4 = (0.7,+\infty).
 $
 Note that the attraction fields of the local minima $m_1$ and $m_3$ are narrower (in the sense that the distance between the local minimum and the region outside the attraction field is shorter),
 while the other two local minima $m_2$ and $m_4$ appear much wider in comparison.

 We compare the global dynamics of four different types of SGD algorithms (i.e., under the iteration \eqref{def: X eta b j x, clipped SGD}) when exploring the multimodal landscape of $f$.
 In the \textit{(a) heavy-tailed, no truncation} method,
 we set $b = \infty$,
 and let $Z_t$'s be iid copies of $Z = 0.1 U W$,
 where the $W$ is a Pareto Type II distribution (aka Lomax distribution) with tail index $\alpha = 1.2$,
 $\P(U = 1) = \P(U = -1) = 0.5$,
 and $U$ and $W$ are independent.
The same choice of heavy-tailed noise distribution is applied to the \textit{(b) heavy-tailed, with truncation} method,
but set the truncation threshold in \eqref{def: X eta b j x, clipped SGD} as $b = 0.5$.
Analogously in the \textit{(c) light-tailed, no truncation} and \textit{(d) light-tailed, with truncation}
methods,
we adopt the same choices of the truncation threshold from methods \textit{(a)} and \textit{(b)},
but set the noise distribution as $Z \sim \mathcal N(0,1)$.
In all methods tested,
we fix the step length as $\eta = 0.001$ and initial value as $x = 0.3$ (which belongs to the attraction field $I_3 = (0.2,0.7)$).
For each method,
we do 10 independent runs (i.e., generate 10 trajectories), each with 10, 000, 000 iterations.
Lastly, to prevent the cases of drifting to infinity due to extremely large noise,
each step the iterates are projected onto (i.e., confined with) the interval $[-1.6,1.6]$.

Figure~\ref{fig histograms} (left) present the histograms for the frequency of locations visited by SGDs,
using the 10 trajectories $\times$ 10, 000, 000 iterations in each of the four different methods.
Without truncation, we see from Figure~\ref{fig histograms} (left, a) that heavy-tailed SGD still frequently visit and spend some time around the narrower minima $m_1$ and $m_3$.
In comparison, Figure~\ref{fig histograms} (left, b) shows that the truncated heavy-tailed SGDs spend almost all time around the wider minima $m_2$ and $m_4$,
and the time spent around the narrower minima $m_1$ and $m_3$ is almost negligible in comparison.
This observation illustrate the claims in Corollary~\ref{corollary, elimination of sharp minima, metastability} that truncated heavy tails can guide SGDs to almost always stay around the wider region of the loss landscape. 
Note that this intriguing phenomenon is exclusive to the heavy-tailed setting:
as shown in Figure~\ref{fig histograms} (left, c\&d),
light-tailed SGD are easily trapped at sharp minima for extremely long time if initialized there, regardless of the truncation mechanism.
Furthermore, in Figure~\ref{fig histograms} (right)  we plot one sample path of SGD for each method tested.
Without truncation, heavy-tailed SGDs frequently visit and make transitions between all local minima (see Figure~\ref{fig histograms} (right, a)).
This is aligned with the global dynamics characterized in Corollary~\ref{theorem: metastability, unclipped} for (untruncated) heavy-tailed SGDs.
In contrast, 
Figure~\ref{fig histograms} (right, b) validates the global dynamics established in Theorem~\ref{corollary irreducible case} that, under small step length $\eta$, 
truncated heavy-tailed SGDs closely resemble a continuous-time Markov chain that only jumps between the widest minima of the loss landscape.

\subsection{Deep Learning Experiment 1: An Ablation Study}
\label{subsec: experiment 1, ablation study}

To demonstrate the effectiveness of the injection and truncation of heavy tails in the training of deep neural nets,
in this ablation study we adopt the experiment design in \cite{wang2022eliminating}, and benchmark our truncated heavy-tailed training strategy (using heavy-tailed gradients \eqref{update our heavy tailed method}) against the following algorithms:
\begin{itemize}
    \item 
    Large-batch SGD (LB): $\theta \leftarrow \theta - \eta \cdot g_{\text{LB}}(\theta)$,

    \item 
    Small-batch SGD (SB): $\theta \leftarrow \theta - \eta \cdot g_{\text{SB}}(\theta)$,

    \item 
    Small-batch SGD with clipping (SB + Clip): $\theta \leftarrow \theta - \varphi_b(\eta \cdot g_{\text{SB}}(\theta))$,

    \item 
    Small-batch SGD with heavy-tailed noise injection (SB + Noise):
    $
    \theta \leftarrow \theta - \eta \cdot g_\text{heavy}(\theta).
    $
\end{itemize}
Note that, unlike our truncated heavy-tailed training strategy, none of these algorithms incorporate both heavy-tailed noise injection and clipping.

Regarding the model architectures and deep learning tasks, we adopt the experiment setting and parameter choices in \cite{wang2022eliminating},
which also builds upon the design of experiments in \cite{pmlr-v97-zhu19e}:
\begin{enumerate}[(1)]
    \item 
    LeNet \cite{lecun1990handwritten} on corrupted FashionMNIST \cite{xiao2017/online},
    where
    we use a 1200-sample subset of the original FashionMNIST training set;
    the corruption is induced by picking
    200 samples from the training and randomly assigning a label (i.e., overwriting the correct labels);

    \item 
     VGG11 \cite{simonyan2014very} on SVHN \cite{netzer2011reading}, where we use a 25000-sample subset of the training dataset;

     \item 
     VGG11 on CIFAR10 \cite{krizhevsky2009learning},
     where we use the entire training set of CIFAR10.
\end{enumerate}
See Table~\ref{table appendix parameters ablation study} for the choice of parameters.
Here, we add a few comments on the design of experiments.
First, for each of the three tasks, 
the same choice of parameters in Table~\ref{table appendix parameters ablation study} is adopted across all optimization algorithms tested in the experiment;
the only exception is \textit{SB + Noise} due to its highly unstable behavior when driven by unclipped heavy-tailed dynamics, and we follow the suggested parameters in \cite{wang2022eliminating} to run extended training under fine-tuned step lengths for \textit{SB + Noise}.\footnote{
Specifically, we set  $\eta = 0.005$ in \textit{SB + Noise} across all tasks. 
For the corrupted FashionMNIST task, we train for 100,000 iterations and the heavy-tailed noise is removed for the final 50,000 iterations; for the other two tasks, we train for 150,000 iterations and heavy-tailed noise is removed for the last 70,000 iterations. 
Besides, when running \textit{SB + Noise} we always clip the model weights if its $L_\infty$ norm exceeds 1; 
otherwise, the models weights would quickly drift to infinity due to unclipped heavy-tailed noise.
}
Second, we stress again that $c$ and $\alpha$ is chosen for $Z \distequal c \cdot \text{Pareto}(\alpha)$ used for noise injection in the construction of $g_\text{heavy}$ in \eqref{update our heavy tailed method}.
Moreover, 
to ensure the convergence to local minima in \emph{our methods 1 and 2}, for last final 5,000 iterations we remove the injection of heavy-tailed noise  and run \emph{LB} instead.
Lastly, note that the choices of $b$ in Table~\ref{table appendix parameters ablation study} are different from the values reported in \cite{wang2022eliminating}, where the iterations $\theta \leftarrow \theta -\eta \cdot  \varphi_b\big(  g_{\text{heavy}}(\theta) \big)$ are considered when calculating the clipping threshold instead of using \eqref{def: truncated heavy tailed iteration, ablation study}; this corresponds to an enlargement of the values by the ratio $1/\eta$.

\begin{table}
  \caption{Parameters for the ablation study}
  \label{table appendix parameters ablation study}
  \centering
  \begin{tabular}{llll}
    \toprule
    Parameters    &  FashionMNIST, LeNet &  SVHN, VGG11  & CIFAR10, VGG11   \\
    \midrule
    step length $\eta$ & 0.05 & 0.05 & 0.05 \\
    batch size for $g_{\text{SB}}$ & 100 & 100 & 100 \\
    batch size for $g_{\text{LB}}$ & 1,200 & 1,000 & 1,000 \\
    training iterations & 10,000 & 30,000 & 30,000 \\
    clipping threshold $b$ & 0.25 & 1 & 1 \\
    $c$ & 0.5 & 0.5 & 0.5 \\
    $\alpha$ & 1.4 & 1.4 & 1.4 \\
    \bottomrule
  \end{tabular}
\end{table}

In this experiment, we are interested in not only the generalization performance of the obtained solution (i.e., the test accuracy of the trained model) but also its sharpness,
 measured by the \emph{expected sharpness} metric that has also been adopted in \cite{pmlr-v97-zhu19e, NIPS2017_10ce03a1}.
 Specifically, we report
 \begin{align}
     \E_{\nu \sim \mathcal{N}(\textbf{0},\delta^2 \textbf{I})}|f(\theta^* + \nu) - f(\theta^*)|,
     \label{def: expected sharpness metric}
 \end{align}
 where $f$ is the loss function induced by the entire training set (cross-entropy loss in this case),
 $\theta^*$ is the model weights obtained when training is done,
 and $\mathcal{N}(\textbf{0},\delta^2 \textbf{I})$ denotes the law of a random vectors with each coordinate being an iid copy of the univariate Gaussian $\mathcal N(0,\delta^2)$.
 A smaller value of \eqref{def: expected sharpness metric} indicates a more ``flat'' geometry locally around the solution obtained.
In Section~\ref{subsec: experiment 1, ablation study}, we set $\delta = 0.01$ and evaluate \eqref{def: expected sharpness metric} by averaging over 100 samples.
Also, to take into account the potential numerical instability in the estimation of \eqref{def: expected sharpness metric}, we set $f(\theta)$ to $5$ if the training loss  exceeds $5$ under the perturbation $v$.
We note that, in our experiment, this truncation mechanism on the training loss $f(\cdot)$ was in effect only for \textit{SB + Noise}.

\begin{table}
  \caption{Test accuracy and expected sharpness (mean $\pm$ range of 95\% CI, estimated over 5 runs; expected sharpness estimated under $\delta = 0.01$) of different methods across different tasks. }
  \label{table ablation study}
  \centering
    \scriptsize
  \begin{tabular}{lllllll}
    \hline
    Test accuracy (\%)    & LB     & SB   & SB + Clip & SB + Noise & Our 1 & Our 2 \\
    \hline
    
    FashionMNIST, LeNet & 68.77 {\tiny$\pm$ 0.97} & 68.91 {\tiny$\pm$ 0.59} & 68.38 {\tiny$\pm$ 1.38} & 52.52 {\tiny$\pm$ 29.7} & 69.60 {\tiny$\pm$ 0.76} & \textbf{70.03} {\tiny$\pm$ 0.55}  \\ 
    SVHN, VGG11         & 82.91 {\tiny$\pm$ 0.58} & 85.89 {\tiny$\pm$ 0.35} & 85.97 {\tiny$\pm$ 0.23} & 30.51 {\tiny$\pm$ 32.08} & \textbf{88.26} {\tiny$\pm$ 0.48} & 88.18 {\tiny$\pm$ 0.78}
 \\
    CIFAR10, VGG11      & 69.78 {\tiny$\pm$ 1.46} &  74.53 {\tiny$\pm$ 0.92} & 74.15 {\tiny$\pm$ 0.98} & 40.09 {\tiny$\pm$ 34.26} & \textbf{76.23} {\tiny$\pm$ 0.85} & 75.49 {\tiny$\pm$ 1.15}
\\

    \hline
    
    Expected Sharpness      & LB     & SB   & SB + Clip & SB + Noise & Our 1 & Our 2 \\
    \hline
    FashionMNIST, LeNet & 0.0280 {\tiny$\pm$ 0.0040} & 0.0082 {\tiny$\pm$ 0.0011} & 0.0090 {\tiny$\pm$ 0.0009} & 0.0842 {\tiny$\pm$ 0.1240} & 0.0028 {\tiny$\pm$ 0.0002} & \textbf{0.0016} {\tiny$\pm$ 0.0001} \\ 
    SVHN, VGG11         & 0.6140 {\tiny$\pm$ 0.1019} & 0.0412 {\tiny$\pm$ 0.0058} & 0.0372 {\tiny$\pm$ 0.0118} & 2.4508 {\tiny$\pm$ 2.9470} & \textbf{0.0023}{\tiny $\pm$ 0.0008} & 0.0030 {\tiny$\pm$ 0.0020} \\
    CIFAR10, VGG11      & 1.9476 {\tiny$\pm$ 0.1396} & 0.0388 {\tiny$\pm$ 0.0175} & 0.0548 {\tiny$\pm$ 0.0459} & 3.7084 {\tiny$\pm$ 5.0659} & \textbf{0.0231} {\tiny$\pm$ 0.0134} & 0.0602 {\tiny$\pm$ 0.0326}
 \\

    \hline
    
    
    
  \end{tabular}
\end{table}

The results are summarized in Table~\ref{table ablation study},
where we report the mean and a 95\% confidence interval (two-sided, under $t$-distribution) 
estimated by running 5 independent runs for each task.
Specifically, Table~\ref{table ablation study} shows that in all 3 tasks, \textit{our method 1} or \textit{our method 2} are consistently the best in terms of the test accuracy obtained or expected sharpness.
In comparison, when the heavy-tailed noise is removed,
the algorithm \textit{SB + Clip} yields worse test accuracies, and the performance is similar to that of \textit{SB}.
This is to be expected as the truncation is of little effect without observing large shift in one iteration.
On the other hand, when heavy-tailed noise is present but the gradient clipping mechanism is removed,
the performance of $\textit{SB + Noise}$ significantly deteriorates, despite the effort in fine-tuning this method as mentioned above (see also the details in \cite{wang2022eliminating}).
This observation also corroborates the existing empirical and theoretical analyses regarding the deterioration of convergence rates or even the arise of numerical instability due to the presence of heavy-tailed noise; see, e.g.,  \cite{Zhang2020Why,NEURIPS2020_abd1c782,DBLP:journals/corr/abs-2406-04443,lee2025efficient}.
In summary, the experiment results are well aligned with our theoretical analyses in Section~\ref{subsec: main results},
confirming that both heavy-tailed dynamics and the truncation mechanism (i.e., gradient clipping) are required for finding local minima with more flat geometry and better generalization performance.

\subsection{Deep Learning Experiment 2: Adam + Wide Residual Networks}
\label{subsec: experiment 2, adam wrn}

In Section~\ref{subsec: experiment 2, adam wrn}, we consider more sophisticated settings and demonstrate that
our truncated heavy-tailed training strategy remains effective and can still help improve the generalization performance.

Regarding the choice of optimizers, we incorporate truncated heavy tails into {Adam} \cite{kingma2017adammethodstochasticoptimization}, 
the popularity of which is related to its faster convergence rate when compared to SGD (see, e.g., \cite{pan2022toward,mazumder2024theoreticalempiricalstudyconvergence}).
In particular, {Adam} adaptively adjusts the learning rates based on moments estimation for the (small-batch) stochastic gradients,
resulting in smaller step lengths along coordinates with frequent large gradients. 
At the first glance, such an adaptive mechanism could play a role similar to gradient clipping when large gradients are presented.
Therefore,
the first goal of the experiments in Section~\ref{subsec: experiment 2, adam wrn} is to examine 
\emph{whether our truncated heavy-tailed training strategy can be efficiently combined with Adam and yield further improvements on the test performance}.
Specifically, 
we consider the following implementation (labeled as ``\textit{Adam + Truncated HT}'' in Table~\ref{table results, adam wrn}):
after each iteration of updating model weights $\theta$ using Adam with learning rate $\eta_{\text{Adam}}$,
we run another truncated heavy-tailed step of the form 
\begin{align}
    \theta \leftarrow \theta - \varphi_b\big( \eta_{\text{heavy}} \cdot  g_{\text{heavy}}(\theta) \big),
    \label{def: truncated heavy tailed iteration}
\end{align}
to further update $\theta$, 
where $g_{\text{heavy}}(\theta)$ is constructed by
\begin{align}
    g_{\text{heavy}}(\theta) \distequal g_{\text{SB}*}(\theta) + Z \cdot g_{\text{SB}}(\theta).
    \label{def: heavy tailed stochastic gradient, adam wrn experiment}
\end{align}
Here, 
$ g_{\text{SB}*}(\theta)$ and $g_{\text{SB}}(\theta)$ are estimated on two independently chosen small batches (which are also independent form the small batch used in the previous Adam step),
and
$Z \distequal c \cdot \text{Pareto}(\alpha)$.
Compared to \eqref{update our heavy tailed method}, note that in this experiment we remove the estimation of the true gradient on a large batch to further reduce the implementation cost of the truncated heavy-tailed updates. 
Also, note that another interpretation of the proposed optimization algorithm is that we alternative between the Adam step and the truncated heavy-tailed step \eqref{def: truncated heavy tailed iteration},
with the heavy-tailed stochastic gradient defined as in \eqref{def: heavy tailed stochastic gradient, adam wrn experiment}.
Regarding the choice of parameters, we adopt the default choice in PyTorch \cite{NEURIPS2019_bdbca288} for hyperparameters for moment estimation in Adam;
for the truncated heavy-tailed steps,
we set $c = 0.5$ and $\alpha = 1.4$ for $Z \distequal c \cdot \text{Pareto}(\alpha)$ when constructing the heavy-tailed stochastic gradients in \eqref{def: heavy tailed stochastic gradient, adam wrn experiment},
and set $b = 1$, $\eta_{\text{heavy}} = 0.1$ in \eqref{def: truncated heavy tailed iteration}.


\begin{table}[h]
  \caption{Parameters for the Adam + WRN experiment}
  \label{table appendix parameters, adam wrn}
  \small
  \centering
  \begin{tabular}{lllll}
    \toprule
    Dataset & Model   & Initial value of $\eta_{\text{Adam}}$ &  Number of epochs & Schedule for the decay of $\eta_{\text{Adam}}$ \\
    \midrule
    \multirow{3}{4em}{CIFAR10} & WRN16-8 & $2.5 \times 10^{-4}$ & 200 & $[60,120,160]$
    \\
                                &WRN28-10 &  $2.5 \times 10^{-4}$ & 300 &   $[90,180,240]$
    \\
                                &WRN40-4 & $2.5 \times 10^{-4}$ & 300 &   $[90,180,240]$
    \\
    \midrule
    \multirow{3}{4em}{CIFAR100}&WRN16-8 & $2.5 \times 10^{-4}$ & 200 & $[60,120,160]$
    \\
                                &WRN28-10 &  $2.5 \times 10^{-4}$ & 300 &   $[90,180,240]$
    \\
                                &WRN40-4 &  $2.5 \times 10^{-4}$ & 300 &   $[90,180,240]$
    \\
    \bottomrule
  \end{tabular}
\end{table}

\begin{table}[h]
  \caption{Test accuracy (\%) and expected sharpness in the Adam + WRN experiment: mean $\pm$ range of 95\% CI, estimated over 5 runs; expected sharpness estimated under $\delta = 2 \times 10^{-3}$.}
  \label{table results, adam wrn}
  \centering
  \begin{tabular}{lll}
    \toprule
    Test Accuracy  (\%)   &  Adam & Adam + Truncated HT  \\
    \midrule
    {CIFAR10}, WRN16-8 & 93.37 {\footnotesize $\pm$ 0.24} & \textbf{94.47} {\footnotesize$\pm$ 0.17} \\
    {CIFAR10}, WRN28-10 &  93.59 {\footnotesize$\pm$ 0.12} & \textbf{94.84} {\footnotesize$\pm$ 0.24} \\
    {CIFAR10}, WRN40-4 &  93.51 {\footnotesize$\pm$ 0.13} & \textbf{95.09} {\footnotesize$\pm$ 0.06} \\
    {CIFAR100}, WRN16-8 & 74.73 {\footnotesize$\pm$ 0.40} & \textbf{76.78} {\footnotesize$\pm$ 0.33} \\
    {CIFAR100}, WRN28-10 & 75.39 {\footnotesize$\pm$ 0.37} & \textbf{78.16} {\footnotesize$\pm$ 0.31} \\
    {CIFAR100}, WRN40-4 & 74.49 {\footnotesize$\pm$ 0.23} & \textbf{77.34} {\footnotesize$\pm$ 0.08} \\
    \midrule
    Expected Sharpness  &  Adam & Adam + Truncated HT  \\
    \midrule 
    {CIFAR10}, WRN16-8 & $5.7 \times 10^{-5}$ {\footnotesize$ \pm 2.6 \times 10^{-6} $} & ${\bf 1.1 \times 10^{-5}} ${\footnotesize$\pm 1.4 \times 10^{-6} $}
 \\
    {CIFAR10}, WRN28-10 &  $2.0 \times 10^{-5} ${\footnotesize$\pm 2.1 \times 10^{-6} $} & ${\bf 1.9\times 10^{-5}} ${\footnotesize$\pm 7.8 \times 10^{-6} $} \\
    {CIFAR10}, WRN40-4 & $1.2 \times 10^{-5} ${\footnotesize$\pm 2.1 \times 10^{-6} $} & ${\bf 3.7 \times 10^{-6}} ${\footnotesize$\pm 2.2 \times 10^{-6} $} \\
    {CIFAR100}, WRN16-8 & $9.8 \times 10^{-4} ${\footnotesize$\pm 1.0 \times 10^{-4} $} & ${\bf 2.3 \times 10^{-5}} ${\footnotesize$\pm 1.7 \times 10^{-6} $} \\
    {CIFAR100}, WRN28-10 & ${\bf 3.2 \times 10^{-4}} ${\footnotesize$\pm 6.0 \times 10^{-5} $} & ${3.6 \times 10^{-4}} ${\footnotesize$\pm 9.7 \times 10^{-5} $} \\
    {CIFAR100}, WRN40-4 & $6.8 \times 10^{-5} ${\footnotesize$\pm 5.8 \times 10^{-6} $} & ${\bf 1.6 \times 10^{-5}} ${\footnotesize$\pm 1.3 \times 10^{-6} $}  \\
    \bottomrule
  \end{tabular}
\end{table}

As for the model architectures, we consider Wide Residual Networks (WRNs)  \cite{zagoruyko2017wideresidualnetworks},
which could enjoy a faster training duration and improved generalization performance when compared to deeper models with narrower layers (see, e.g., \cite{NEURIPS2021_bef4d169}).
We test the models and algorithms on CIFAR10/100.
Specifically in this experiment, 
we adopt the choice of batch size $= 128$ (i.e., for evaluating the small-batch gradients in Adam and  the truncated heavy-tailed step \eqref{def: truncated heavy tailed iteration} during training) in \cite{zagoruyko2017wideresidualnetworks},
and consider the following three configurations of WRNs:
depth $= 16$, widening factor $ = 8$;
depth $= 28$, widening factor $ = 10$;
or
depth $= 40$, widening factor $ = 4$.
We also incorporate data augmentation (random crop of images padded by 4 pixels, and horizontal flips)
and learning rate scheduling (i.e., multiplying the learning rate by $0.2$ after certain amount of epochs).
These standard techniques were also applied for the training of WRNs in \cite{zagoruyko2017wideresidualnetworks},
and are known to further improve the generalization performance of the trained models.
Regarding the initial learning rate and the number of epochs during training (together with the scheduling for the decay of learning rates), we fine-tune over $\eta \in[10^{-3},\ 2.5 \times 10^{-4},\ 10^{-4}]$,
and \#Epoch $\in [200, 300]$;
in the case of \#Epoch$= 200$, we multiply the learning rate by 0.2 at the end of epochs $[60,120,160]$ (which is also the choice in  \cite{zagoruyko2017wideresidualnetworks}), and (similar to Section~\ref{subsec: experiment 1, ablation study}) remove the truncated heavy-tailed steps after the first 120 epochs to ensure the convergence of our \textit{Adam + Truncated HT} algorithm;
in the case of \#Epoch$= 300$, we scale the schedule proportionally to decay the learning rate at the end of epochs $[90, 180, 240]$ and remove the truncated heavy-tailed step after running 180 epochs.
In particular, the fine-tuning is done only for the {vanilla Adam} 
(with the best choice of parameters that attains the hightest test accuracy summarized in Table~\ref{table appendix parameters, adam wrn}),
while \textit{Adam + Truncated HT} simply adopts the same set of parameters.
In other words,
the second goal of this experiment is to examine
\emph{whether our truncated heavy-tailed training strategy remains effective when Adam has already been fine-tuned and several training techniques have already been implemented to improve the generalization performance}.
We note that the learning rate scheduling is carried out only for $\eta_{\text{Adam}}$, whereas the learning rate $\eta_{\text{heavy}}$ remains constant throughout each experiment for the truncated heavy-tailed steps \eqref{def: truncated heavy tailed iteration}.

The results are summarized in Table~\ref{table results, adam wrn},
where we set $\delta = 2 \times 10^{-3}$ in \eqref{def: expected sharpness metric} for the estimation of expected sharpness in WRNs.
We see that even though the vanilla Adam has been fine-tuned as described above for the training of WRNs,
our \textit{Adam + Truncated HT} algorithm consistently achieves better test performance.
Besides, in almost all cases we see that the \textit{Adam + Truncated HT} algorithm finds a solution with a smaller expected sharpness.
These experiments confirm the effectiveness of our theoretical analyses in Section~\ref{sec: metastability} beyond the settings studied in Section~\ref{subsec: experiment 1, ablation study},
and demonstrate that the proposed truncated heavy-tailed strategy finds solutions with flatter geometry and  improves the generalization performance of deep neural networks, even when combined with more popular and recent optimizers, modern architectures, and additional training techniques designed to enhance generalization.

\bibliographystyle{abbrv} 
\bibliography{bib_appendix} 

\newpage
\appendix

The appendices are structured as follows.
Section~\ref{sec: appendix, reducible case} states the results for metastability analyses in the reducible case.
Section~\ref{sec: review, first exit analyses} reviews the first exit analyses for heavy-tailed dynamical systems in \cite{wang2022eliminating} and adapts them to our setting.
Section~\ref{subsec: proof, abstract framework, metastability} develops a theoretical framework for establishing the sample path convergence of jump processes.
Applying this framework, in Sections~\ref{subsec: proof, theorems, metastability}--\ref{sec: appendix, CTMC Y * b} we provide the proof of Theorems~\ref{corollary irreducible case} and \ref{theorem: metastability, unclipped}.

\section{Metastability in Reducible Cases}
\label{sec: appendix, reducible case}

In this section, we present results analogous to Theorem~\ref{corollary irreducible case} for the case where Assumption~\ref{assumption, value of b, irreducible, metastability} (i.e., irreducibility of the typical transition graph; see \eqref{def main paper typical transition graph} for the definition) fails.
In such cases, the typical transition graph $(V,E_b)$ possesses multiple communication classes $G_1,\ldots,G_n$ (with $n \geq 2$),
and in the remainder of this section we focus on the metastability of truncated heavy-tailed SGD on one of these communication classes, denoted by $G$.

To formally present the results, we introduce a few definitions.
Analogous to \eqref{def: J * b}, let
\begin{align}
    \mathcal J^G_b \delequal \max_{  \bm m_i \in G }\mathcal J_b(i)
\end{align}
to denote the largest width of local minima in $G$.
Also, similar to \eqref{def scale function lambda * b eta}, we define
\begin{align}
    \lambda^G_b(\eta) \delequal \eta \cdot \big(\lambda(\eta)\big)^{ \mathcal J^G_b }.
\end{align}
Depending on the connectivity of $G$ with the other communication classes over the typical transition graph, $G$ could be either \emph{absorbing} or \emph{transient}.
We first consider the absorbing case.

\begin{theorem}[Metastability of $\bm X^{\eta|b}_t$: Absorbing Case]
Let Assumptions~\ref{assumption: geometry, metastability}--\ref{assumption: nondegeneracy of diffusion coefficients} and \ref{assumption: regularity condition on b} hold.
Let $G$ be an \emph{absorbing} communication class over the typical transition graph.
Given some $\bm m_{i_0} \in G$, let $\bm x_0 \in I_{i_0}$.
Let $p \in [1,\infty)$.
As $\eta \downarrow 0$,
$$
\big\{\bm X^{\eta|b}_{\floor{ \boldsymbol{\cdot}/\lambda^G_b(\eta) }}(\bm x_0):\ t > 0\big\}\tofdd 
\{\bm Y^{G|b}_t:\ t > 0\}
\quad\text{and}\quad
\bm X^{\eta|b}_{\floor{ \boldsymbol{\cdot}/\lambda^G_b(\eta) }}(\bm x_0)
\Rightarrow
\bm Y^{G|b}_{\boldsymbol{\cdot}}
\text{ in $(\D[0,\infty),\dlp{[0,\infty)})$},
$$
where $\bm Y^{G|b}_t$ is a recurrent continuous-time Markov chain with state space
$\{ \bm m_i \in G:\ \mathcal J_b(i) = \mathcal J^G_b  \}$.
\end{theorem}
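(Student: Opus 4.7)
The plan is to adapt the proof of Theorem~\ref{corollary irreducible case} by restricting the analysis to the communication class $G$ and leveraging the absorbing structure of $G$ to control excursions out of it. The starting point is the first-exit analysis of \cite{wang2023large} reviewed in Section~\ref{sec: review, first exit analyses}: when initialized at a local minimum $\bm m_i$, the truncated heavy-tailed SGD exits the attraction field $I_i$ at a time whose $\eta(\lambda(\eta))^{\mathcal J_b(i)}$-normalization converges to an exponential random variable with rate $q_b(i)$ from \eqref{def: q b i, q b i j, generator for Y * b}, and the post-exit location is distributed (in the limit) according to the neighbor weights $q_b(i,j)/q_b(i)$. Because $G$ is absorbing in the typical transition graph, these weights vanish for indices $j$ with $\bm m_j \notin G$, so typical one-step transitions from any $\bm m_i \in G$ remain within $G$.

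The second ingredient is a time-scale separation argument. For any $\bm m_i \in G$ with $\mathcal J_b(i) < \mathcal J^G_b$, the first-exit scale $1/(\eta(\lambda(\eta))^{\mathcal J_b(i)})$ is $\bm{o}(1/\lambda^G_b(\eta))$ as $\eta \downarrow 0$, so on the reference scale $1/\lambda^G_b(\eta)$ the process instantaneously traverses the narrow minima of $G$ via a sequence of vanishing sojourns. Their aggregate effect is captured by the absorption probabilities of the discrete-time neighbor chain on $G$, absorbed upon hitting $\{\bm m_j \in G :\ \mathcal J_b(j) = \mathcal J^G_b\}$, exactly analogous to \eqref{def: absorption prob, theta b i j}. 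On the other hand, a direct exit from $G$ into a distinct communication class requires strictly more than $\mathcal J_b(i)$ big jumps from some $\bm m_i \in G$, and the upper-bound half of the first-exit analysis shows that such events occur only on time scales of strictly higher polynomial order in $1/\eta$. Hence on the horizon $T/\lambda^G_b(\eta)$, the process with high probability never leaves $G$, so the limiting state space is contained in $\{\bm m_j \in G :\ \mathcal J_b(j) = \mathcal J^G_b\}$.

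Combining these ingredients gives, for $\bm Y^{G|b}_t$, an initial distribution supplied by the $G$-restricted absorption probabilities of the discrete neighbor chain started from $\bm m_{i_0}$, and an infinitesimal generator obtained as the $G$-restricted analog of \eqref{def: generator of Y * b, 1}--\eqref{def: generator of Y * b, 2}. Recurrence on the finite widest-minima set follows from the irreducibility of $G$ as a communication class together with the strict positivity of $q_b(i)$ established in Section~\ref{subsec: proof, propositions, metastability}. To promote the one-step analysis to joint sample-path convergence in f.d.d.\ and in $(\D[0,\infty),\dlp{[0,\infty)})$, we apply the abstract framework of Section~\ref{subsec: proof, abstract framework, metastability} exactly as in the proof of Theorem~\ref{corollary irreducible case}. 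The main obstacle, and the place where absorption (rather than mere transience) is essential, is quantifying uniformly over the $\bm{O}(1/\lambda^G_b(\eta))$ iterations the probability that the process leaves $G$: one needs that the rare multi-jump excursions out of $G$, accumulated over the entire window, remain asymptotically negligible, which ultimately reduces to the same style of tail and moment bounds used to establish Theorem~\ref{corollary irreducible case}.
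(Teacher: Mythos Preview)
Your proposal is correct and follows exactly the approach the paper takes: the paper explicitly states that the proof is ``almost identical to that of Theorem~\ref{corollary irreducible case}'' and omits the details, and your outline (restrict to $G$, invoke the first-exit analysis of Section~\ref{sec: review, first exit analyses}, apply the framework of Section~\ref{subsec: proof, abstract framework, metastability}, and read off the generator via $G$-restricted absorption probabilities) matches that template. The one place you over-worry is the ``main obstacle'' of controlling excursions out of $G$: since $G$ is absorbing, the equivalence \eqref{property, edge on graph iff q b i j strictly positive, metastability} gives $q_b(i,j)=0$ for every $\bm m_i\in G$ and $\bm m_j\notin G$, so Lemma~\ref{proposition: transition time, metastability} already yields that each transition lands in $G$ with probability tending to~$1$, and combined with the tightness of the number of transitions (established as in Proposition~\ref{proposition: hat X converges to CTMC, metastability}) no estimate beyond those in the irreducible proof is required.
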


In case that the communication class $G$ is transient, the process $\bm X^{\eta|b}_t$ will exit from $G$ (more precisely, all attraction fields with their local minima in $G$) at some point under the canonical time scale $1/\lambda^G_b(\eta)$,
and a few more definitions are needed.
First, let
\begin{align}
    \tau^{\dagger;\eta|b}_G(\bm x) \delequal \min\bigg\{
        t \geq 0:\ 
        \bm X^{\eta|b}_t(\bm x) \notin \bigcup_{ \bm m_i \in G }I_i
    \bigg\}
\end{align}
be the time $\bm X^{\eta|b}_t(\bm x)$ exits from the attraction fields over $G$.
By introducing a cemetery state $\dagger$,
we define a version of $\bm X^{\eta|b}_t(\bm x)$ killed at $ \tau^{\dagger;\eta|b}_G(\bm x)$:
\begin{align}
    \bm X^{\dagger;\eta|b}_t(\bm x)
    \delequal
    \begin{cases}
        \bm X^{\eta|b}_t(\bm x)\qquad& \text{ if } t < \tau^{\dagger;\eta|b}_G(\bm x)
        \\
        \dagger &\text{ otherwise}
    \end{cases}.
\end{align}

The next result reveals the metastable behavior of $\bm X^{\eta|b}_t(\bm x)$ before exiting $G$, where the scaling limit is a continuous-time Markov chain over $G$ that will be killed (denoted by entering an absorbing state $\dagger$) at a random time.

\begin{theorem}[Metastability of $\bm X^{\eta|b}_t$: Transient Case]
Let Assumptions~\ref{assumption: geometry, metastability}--\ref{assumption: nondegeneracy of diffusion coefficients} and \ref{assumption: regularity condition on b} hold.
Let $G$ be a \emph{transient} communication class over the typical transition graph.
Given some $\bm m_{i_0} \in G$, let $\bm x_0 \in I_{i_0}$.
Let $p \in [1,\infty)$.
As $\eta \downarrow 0$,
$$
\big\{\bm X^{\dagger;\eta|b}_{\floor{ \boldsymbol{\cdot}/\lambda^G_b(\eta) }}(\bm x_0):\ t > 0\big\}\tofdd 
\{\bm Y^{\dagger;G|b}_t:\ t > 0\}
\quad\text{and}\quad
\bm X^{\dagger;\eta|b}_{\floor{ \boldsymbol{\cdot}/\lambda^G_b(\eta) }}(\bm x_0)
\Rightarrow
\bm Y^{\dagger;G|b}_{\boldsymbol{\cdot}}
\text{ in $(\D[0,\infty),\dlp{[0,\infty)})$},
$$
where $\bm Y^{\dagger;G|b}_t$ is a continuous-time Markov chain with state space
$\{ \bm m_i \in G:\ \mathcal J_b(i) = \mathcal J^G_b  \} \cup \{\dagger\}$, with $\dagger$ being its only absorbing state and other states being transient.
\end{theorem}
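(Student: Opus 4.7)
The plan is to adapt the proof of Theorem~\ref{corollary irreducible case} to the transient setting by treating the cemetery state $\dagger$ as an additional absorbing state appended to the widest minima in $G$. The key observation is that before $\bm X^{\eta|b}_t(\bm x_0)$ exits $\bigcup_{\bm m_i\in G}I_i$, its behavior on the time scale $1/\lambda^G_b(\eta)$ mirrors that of the irreducible-case scaling limit restricted to $G$: it spends almost all time near the widest minima $\{\bm m_i\in G:\ \mathcal J_b(i)=\mathcal J^G_b\}$, transitions between them on the canonical time scale, and occasionally escapes $G$ through an edge of the typical transition graph crossing into another communication class. After such an exit the killed process is frozen at $\dagger$ by construction, so the scaling limit must be a Markov chain on $\{\bm m_i\in G:\ \mathcal J_b(i)=\mathcal J^G_b\}\cup\{\dagger\}$ with $\dagger$ as its unique absorbing state and all widest minima of $G$ transient, the latter following because $G$ is transient in the typical transition graph while irreducible internally, so every widest minimum can reach the escape edge.

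First, I would invoke the first-exit analysis reviewed in Section~\ref{sec: review, first exit analyses} for each attraction field $I_i$ with $\bm m_i\in G$, obtaining the asymptotic order $1/\eta^{\mathcal J_b(i)(\alpha-1)+1}$ of the exit time, the limiting law of the exit location, and the rates $q_b(i,j)=\widecheck{\mathbf C}^{(\mathcal J_b(i))|b}(I_j;\bm m_i)$ for all $j\in[K]$. Next, mimicking \eqref{def: absorption prob, theta b i j}, I define an embedded discrete-time chain on $V\cup\{\dagger\}$ with one-step transition probabilities $q_b(i,j)/q_b(i)$ for $j\in[K]$ in which every $\bm m_j$ with $\bm m_j\notin G$ is collapsed to $\dagger$. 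Declaring both $\dagger$ and each $\bm m_j\in G$ with $\mathcal J_b(j)=\mathcal J^G_b$ absorbing yields well-defined absorption probabilities $\theta^G_b(\cdot\mid \bm m_i)$ on $\{\bm m_j\in G:\ \mathcal J_b(j)=\mathcal J^G_b\}\cup\{\dagger\}$. These absorption probabilities, combined with the escape rate $q_b(i)$ from each widest minimum $\bm m_i\in G$, dictate the infinitesimal generator of $\bm Y^{\dagger;G|b}_t$ in direct analogy with \eqref{def: generator of Y * b, 1}--\eqref{def: generator of Y * b, 2}, with the killing rate given by $\sum_{i} q_b(i)\,\theta^G_b(\dagger\mid\bm m_i)$ summed over appropriate source states.

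Second, I would apply the abstract framework of Section~\ref{subsec: proof, abstract framework, metastability} to lift joint convergence of jump times and jump locations to sample-path convergence in f.d.d.\ and in $(\D[0,\infty),\dlp{[0,\infty)})$. Iterating the first-exit estimates with the strong Markov property yields convergence of the rescaled sequence of visits to widest minima in $G$ together with their inter-arrival times, where transitions leaking out of $G$ are tracked as extra absorption events. The killing operation that produces $\bm X^{\dagger;\eta|b}_t(\bm x_0)$ from $\bm X^{\eta|b}_t(\bm x_0)$ is a continuous functional on those càdlàg paths valued in $V\cup\{\dagger\}$ that take finitely many jumps on bounded intervals and whose first hitting time of $\dagger$ is not an accumulation point of jumps, so the weak convergence passes through by a continuous mapping argument in the same spirit as Corollary~\ref{corollary, elimination of sharp minima, metastability}.

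The main obstacle will be controlling excursions that enter narrow minima of $G$ before either returning to a widest minimum of $G$ or escaping $G$ entirely. On the time scale $1/\lambda^G_b(\eta)$, such excursions must be shown to be effectively instantaneous with a terminal location governed precisely by $\theta^G_b$, which requires strengthening the first-exit analysis into joint convergence of exit time and exit location that is uniform over initial values in neighborhoods of the narrow minima. A subtler secondary difficulty is to verify that the unscaled exit time from $\bigcup_{\bm m_i\in G}I_i$ is asymptotically a continuity point of the killing functional on the limiting sample path---equivalently, that the first time the limit process hits $\dagger$ is almost surely not a pathological time for convergence in the $L_p$ topology---so that convergence truly survives the killing operation.
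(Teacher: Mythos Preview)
Your proposal is correct and aligns with the paper's own treatment: the paper does not give a separate detailed proof for the transient case but simply notes that ``the proofs of results in this section will be almost identical to that of Theorem~\ref{corollary irreducible case}, so omit the details to avoid repetition.'' Your plan to adapt that proof by appending $\dagger$ as an absorbing state, defining the absorption probabilities $\theta^G_b$ analogously to \eqref{def: absorption prob, theta b i j}, and reusing the first-exit analysis plus the abstract framework of Section~\ref{subsec: proof, abstract framework, metastability} is precisely the intended route, and your description is in fact more explicit than what the paper provides.
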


The proofs of results in this section will be almost identical to that of Theorem~\ref{corollary irreducible case}, so omit the details to avoid repetition.

\section{First Exit Analyses and Related Lemmas}
\label{sec: review, first exit analyses}

This section reviews the first exit analyses for heavy-tailed dynamical systems in \cite{wang2022eliminating} and adapts them to the setting in Section~\ref{sec: metastability}.
These results lay the foundation for our subsequent proof of Theorem~\ref{corollary irreducible case}.

The first exit analyses in \cite{wang2022eliminating} are stated for a compact region within a certain attraction field of the multimodal potential.
Specifically, we w.l.o.g.\ assume in this section that one of the local minimum is located at the origin and work with the following assumption, where the gradient flow path $\bm y_t(\cdot)$ is defined in \eqref{def ODE path y t}.

\begin{assumption}
\label{assumption: shape of f, first exit analysis}
$\nabla f(\bm 0) = \bm 0$.
The open set $I \subset \R^d$ contains the origin and is bounded, i.e.,
$\sup_{\bm x \in I}\norm{\bm x} < \infty$
and
$\bm 0 \in I$.
Besides, for each $\bm x \in I \setminus \{\bm 0\}$,
\begin{align*}
    \bm y_t(\bm x) \in I\ \ \forall t \geq 0;\qquad\text{ and }
    \qquad
    \lim_{t \to \infty}\bm y_t(\bm x) = \bm 0.
\end{align*}
Moreover, there exists $\epsilon > 0$ such that
\begin{align}
    \nabla f(\bm x)^\top\bm x > 0,\qquad \forall \bm x \in \bar B_\epsilon(\bm 0) \setminus \{\bm 0\}.
    \label{property in assumption, contraction around minima, assumption: shape of f, first exit analysis}
\end{align}
\end{assumption}

Define the first exit time form $I$ by
\begin{align*}
    {\tau^{\eta|b}(\bm x)} \delequal \min\big\{j \geq 0:\ \bm X^{\eta|b}_j(\bm x) \notin I \big\}.
\end{align*}
Adapting Theorem 2.8 of \cite{wang2023large} to our setting, we obtain the following result.
We simplify the notations by writing $\widecheck{\mathbf C}^{(k)|b}(\ \cdot\ ) = \widecheck{\mathbf C}^{(k)|b}(\ \cdot\ ;\bm 0)$ for the measure $\widecheck{\mathbf C}^{(k)|b}$ defined in \eqref{def: measure check C k b}.

\begin{theorem}[Theorem 2.8 of \cite{wang2023large}]
\label{theorem: first exit analyses, adapted}
    Let Assumptions~\ref{assumption gradient noise heavy-tailed}, \ref{assumption: lipschitz continuity of drift and diffusion coefficients}, \ref{assumption: nondegeneracy of diffusion coefficients},
    and \ref{assumption: shape of f, first exit analysis} hold. 
    Let $\mathcal J^I_b \delequal
        \min\big\{ k \geq 1:\ \mathcal G^{(k)|b}(\bm 0) \cap I^\complement \neq \emptyset \big\}$.
    Suppose that $I^\complement$ is bounded away from $\mathcal G^{(\mathcal J^I_b - 1 )|b}(\bm 0)$ (see definitions in \eqref{property: hierarchy of set G k b}),
    and
        $
        \widecheck{\mathbf C}^{( \mathcal J^I_b )|b}(\partial I) = 0.
        $
        Then,
        for
        $\notationdef{notation-C-b-*}{C^I_b} \delequal  \widecheck{ \mathbf{C} }^{ (\mathcal{J}^I_b)|b }(I^\complement)$, we have $C^I_b < \infty$.
        Furthermore, if $C^I_b > 0$,
        then
        for each $\epsilon > 0$, $t \geq 0$, and measurable set $B \subseteq I^c$,
    \begin{align*}
    \limsup_{\eta\downarrow 0}\sup_{\bm x \in (I_\epsilon)^-}
    \P\bigg(
        C^I_b \eta \cdot \big(\lambda(\eta)\big)^{ \mathcal J^I_b } \cdot \tau^{\eta|b}(\bm x) > t;\ 
        \bm X^{\eta|b}_{ \tau^{\eta|b}(\bm x)}(\bm x) \in B
     \bigg)
     & \leq \frac{ \widecheck{\mathbf{C}}^{ (\mathcal J^I_b)|b }(B^-) }{ C^I_b }\cdot\exp(-t),
     \\
     \liminf_{\eta\downarrow 0}\inf_{\bm x \in (I_\epsilon)^-}
    \P\bigg(
        C^I_b \eta\cdot \big(\lambda(\eta)\big)^{ \mathcal J^I_b } \cdot\tau^{\eta|b}(\bm x) > t;\ 
        \bm X^{\eta|b}_{ \tau^{\eta|b}(\bm x)}(\bm x) \in B
     \bigg)
     & \geq \frac{ \widecheck{\mathbf{C}}^{ (\mathcal J^I_b)|b }(B^\circ) }{ C^I_b }\cdot\exp(-t).
    \end{align*}
    Here, $I_\epsilon = ((I^\complement)^\epsilon)^\complement$ is the $\epsilon$-shrinkage of the set $I$,
    and 
    $\lambda(\eta) = \eta^{-1}\P(\norm{\bm Z} > \eta^{-1})$.
\end{theorem}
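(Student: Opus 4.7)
The plan is to follow the sample-path large deviations framework for heavy-tailed SGDs developed in \cite{wang2023large}, specialized here to the truncated dynamics. The guiding intuition is the \emph{principle of several big jumps}: under Assumption~\ref{assumption gradient noise heavy-tailed}, any iteration has probability $H(\eta^{-1}) = \eta \lambda(\eta)$ of producing a ``big jump'' whose (unclipped) contribution $\eta \bm\sigma(\cdot)\bm Z_t$ is of order one or larger. Between big jumps, the iterates closely track the gradient flow $\bm y_t(\cdot)$, which by Assumption~\ref{assumption: shape of f, first exit analysis} contracts back toward the origin on an $O(1)$ continuous-time scale. Since $\mathcal G^{(k)|b}(\bm 0) \subseteq I$ for all $k < \mathcal J^I_b$ by the very definition of $\mathcal J^I_b$, no configuration of fewer than $\mathcal J^I_b$ big jumps can cause exit, and the dominant exit mechanism must consist of exactly $\mathcal J^I_b$ big jumps occurring within an $O(1)$ continuous-time window.

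To make this precise, I would decompose $\bm Z_t = \bm Z_t^{\text{small}} + \bm Z_t^{\text{big}}$ at a threshold of order $\eta^{-\gamma}$ for some $\gamma \in (0,1)$, so that the small-noise contribution is controllable via a Gronwall estimate under the Lipschitz hypothesis of Assumption~\ref{assumption: lipschitz continuity of drift and diffusion coefficients}. Conditional on the locations and magnitudes of the big jumps, the iterate path is then well approximated by the perturbed ODE mapping $h^{(k)|b}_{[0,T]}$ of \eqref{def: perturb ode mapping h k b, 1}--\eqref{def: perturb ode mapping h k b, 3}. By the regularly varying tail assumption together with the polar decomposition $(R,\bm\Theta) = \Psi(\bm Z)$, the joint point process of scaled big-jump arrival times and sizes converges (as $\eta \downarrow 0$) to a Poisson random measure on $(0,\infty) \times ([0,\infty)\times \mathfrak N_d)$ with intensity $\mathcal L \times (\nu_\alpha \times \mathbf S)$; pushed forward through $\Psi$ and the endpoint map $\widecheck g^{(k)|b}$ defined in \eqref{def: mapping check g k b, endpoint of path after the last jump, first exit analysis}, this produces precisely the measure $\widecheck{\mathbf C}^{(\mathcal J^I_b)|b}$ in \eqref{def: measure check C k b}.

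From here, the exit time and exit location arise from identifying the first ``cluster'' of $\mathcal J^I_b$ big jumps whose perturbed-ODE image $h^{(\mathcal J^I_b)|b}(\bm 0, \textbf W, \bm t)$ falls in $I^\complement$. The hypothesis that $I^\complement$ is bounded away from $\mathcal G^{(\mathcal J^I_b - 1)|b}(\bm 0)$ ensures that no sub-critical configuration can cause exit even after accounting for small-noise error terms, while the null-boundary condition $\widecheck{\mathbf C}^{(\mathcal J^I_b)|b}(\partial I) = 0$ rules out ambiguous configurations on $\partial I$. Standard Poisson thinning then yields an asymptotically exponential waiting time with rate $C^I_b = \widecheck{\mathbf C}^{(\mathcal J^I_b)|b}(I^\complement)$, with the exit location distributed as $\widecheck{\mathbf C}^{(\mathcal J^I_b)|b}(\ \cdot\ )/C^I_b$ restricted to $I^\complement$, asymptotically independent of the waiting time. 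The main technical obstacle is to rule out non-canonical exit scenarios: accumulation of many small-noise perturbations into a large deviation, and rare configurations where big jumps interleave with incomplete gradient-flow relaxation. The former is controlled via concentration bounds on the truncated-noise martingale combined with Lipschitz stability of $h^{(k)|b}$; the latter is handled by observing that inter-cluster gaps, of order $1/\lambda(\eta) \to \infty$, are almost surely long enough for $\bm X^{\eta|b}_t$ to enter any fixed neighborhood of $\bm 0$ (using the contraction property \eqref{property in assumption, contraction around minima, assumption: shape of f, first exit analysis}), which in turn furnishes the uniformity of the bounds over the starting set $(I_\epsilon)^-$.
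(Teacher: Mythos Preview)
Your sketch captures the correct heuristics behind the underlying result in \cite{wang2023large}---noise decomposition at a threshold, Poisson convergence of the big-jump point process, and identification of the exit event with the first $\mathcal J^I_b$-jump cluster landing in $I^\complement$. But this is not what the paper does here. The paper does \emph{not} prove Theorem~\ref{theorem: first exit analyses, adapted} from scratch: the theorem is explicitly Theorem~2.8 of \cite{wang2023large}, quoted and adapted. The entirety of the paper's argument (the paragraphs following the statement) consists of checking that the simplified hypotheses stated here imply the stronger hypotheses required by the original theorem. Concretely, the original version assumes (a) $\mathcal J^I_b < \infty$ and (b) that $I^\complement$ is bounded away from the \emph{enlarged} set $\mathcal G^{(\mathcal J^I_b - 1)|b}(\bm 0;\epsilon)$ for some $\epsilon>0$; the paper verifies (a) from the boundedness of $I$ together with the lower bound \eqref{property, lower bound, size of set G k b epsilon}, and (b) from the continuity-type inclusion \eqref{property, inclusion of closure, G k b epsilon}. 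Once those two points are established, the conclusion is inherited verbatim from \cite{wang2023large}.

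So your proposal is not wrong as a proof \emph{strategy} for the cited result, but it is a different and far more laborious route than what the present paper requires: you are re-deriving a theorem that the paper is content to cite. If you were to carry out your plan in full, you would effectively be reproducing a large portion of \cite{wang2023large}. What the paper actually needs---and what you have not supplied---is the short reduction argument showing that ``$I^\complement$ bounded away from $\mathcal G^{(\mathcal J^I_b - 1)|b}(\bm 0)$'' upgrades to ``$I^\complement$ bounded away from $\mathcal G^{(\mathcal J^I_b - 1)|b}(\bm 0;\epsilon)$ for some $\epsilon>0$,'' which is where the Lipschitz and nondegeneracy assumptions enter via a Gronwall-type stability of the perturbed-ODE map.
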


It's worth noticing that the technical conditions in Theorem~\ref{theorem: first exit analyses, adapted} are less involved   compared to the original statements in  \cite{wang2022eliminating}.
This is thanks to the streamlined problem setup in Section~\ref{subsec: global dynamics problem setting}.
For completeness of the exposition, we highlight the differences below and formally explain how the results are adapted under the conditions in Section~\ref{subsec: global dynamics problem setting}.
We start by reviewing a few definitions in \cite{wang2022eliminating}.
First, analogous to the definitions in \eqref{def: perturb ode mapping h k b, 1}--\eqref{def: perturb ode mapping h k b, 3},
we 
define the mapping 
$\notationdef{notation-h-k-t-bar-mapping-truncation-level-b-LDP}{\bar h^{(k)|b}_{[0,T]}}: \mathbb{R}^d\times \mathbb{R}^{d \times k} \times \R^{d \times k} \times (0,{T}]^{k\uparrow} \to \mathbb{D}{[0,T]}$ as follows.
Given
$\bm x \in \mathbb{R}^d$,
$\textbf{W} = (\bm w_1,\cdots,\bm w_k) \in \mathbb{R}^{d \times k}$, 
$\textbf V = (\bm v_1,\cdots,\bm v_k) \in \R^{d \times k}$,
and $\bm{t} = (t_1,\cdots,t_k)\in (0,T]^{k\uparrow}$, let $\xi = \bar h^{(k)|b}_{[0,T]}(\bm x,\textbf{W},\textbf V,\bm{t})$ be the solution to
\begin{align*}
    \xi_0 & = \bm x; 
    \\
    \frac{d\xi_s}{d s} & = -\nabla f(\xi_s)\ \ \ \forall s \in [0,{T}],\ s \neq t_1,t_2,\cdots,t_k; 
    \\
    \xi_s & = \xi_{s-} + \bm v_j + \varphi_b\big( \bm \sigma(\xi_{s-} + \bm v_j)\bm w_j\big)\ \ \ \text{ if }s = t_j\text{ for some }j\in[k].
\end{align*}
In other words, we have
$
 h^{(k)|b}_{[0,T]}(\bm x, \bm W, \bm t)
    =
    \bar h^{(k)|b}_{[0,T]}\big(\bm x,\bm W, (\bm 0,\cdots,\bm 0), \bm t\big),
$
for the mapping $ h^{(k)|b}_{[0,T]}$ defined in \eqref{def: perturb ode mapping h k b, 1}--\eqref{def: perturb ode mapping h k b, 3},
and the difference in $\bar h^{(k)|b}_{[0,T]}$ is that we apply additional perturbations $\bm v_j$'s right before each jump.
Next, analogous to ${\widecheck{g}^{(k)|b}}$ defined in \eqref{def: mapping check g k b, endpoint of path after the last jump, first exit analysis}, let
\begin{align}
    \notationdef{notation-mapping-bar-g-k-b}{\bar g^{(k)|b}\big( \bm x, \textbf W, \textbf V, (t_1,\cdots,t_k)\big)}
    \delequal 
    \bar h^{(k)|b}_{ [0,t_k + 1] }
    \Big(
        \bm x,
        \textbf W,
        \textbf V,
        (t_1,\cdots,t_k)
    \Big)(t_k),
    \nonumber
\end{align}
and note that ${\widecheck{g}^{(k)|b}(\bm x,\textbf W,\bm t)}
    =
    \bar g^{(k)|b}\big(\bm x, \textbf W, (\bm 0,\cdots,\bm 0), \bm t\big)$.
This allows us to define (for each $k \geq 1$, $b,\epsilon > 0$, and $\bm x \in \R^d$)
\begin{align}
    {\mathcal G^{(k)|b}(\bm x;\epsilon)} 
    & \delequal 
    \bigg\{
    \bar g^{(k - 1)|b}
    \Big( \bm x + \bm v_1 + \varphi_b\big(\bm \sigma(\bm x + \bm v_1)\bm w_1\big),
    (\bm w_2,\cdots, \bm w_k), (\bm v_2,\cdots,\bm v_k), \bm t 
    \Big):
    \nonumber
    \\ 
    & \qquad\qquad
    \textbf W = (\bm w_1,\cdots, \bm w_k) \in \R^{d\times k},
    \textbf V = (\bm v_1,\cdots, \bm v_k) \in \Big(\bar B_\epsilon(\bm 0)\Big)^k,
    \bm t \in (0,\infty)^{k - 1 \uparrow}
    \bigg\},
    \label{def: set G k b epsilon}
\end{align}
where
$\bm y_t(\cdot)$ is the gradient flow defined in \eqref{def ODE path y t}, and
we use $\bar B_\epsilon(\bm x)$ to denote the \emph{closed} ball with radius $\epsilon$ centered at $\bm x \in \R^d$.
We also adopt the convention that ${\mathcal G^{(0)|b}(\bm x;\epsilon)} \delequal \bar B_\epsilon(\bm x)$.
Similar to \eqref{property: hierarchy of set G k b},
note that (for each $k \geq 1$)
\begin{align}
    {\mathcal G^{(k)|b}(\bm x;\epsilon)} 
    & = 
    \bigg\{
        \bm y_t(\bm z) + \bm v + \varphi_b\Big( \bm \sigma\big( \bm y_t(\bm z) + \bm v \big)  \bm w \Big):\ 
        t > 0,\ \bm w \in \R^d,\ \bm v \in \bar B_\epsilon(\bm 0) ,\  \bm z \in {\mathcal G^{(k-1)|b}(\bm x;\epsilon)}
    \bigg\}.
    \label{def: set G k b x epsilon, recursion}
\end{align}

We make a few important observations regarding the set ${\mathcal G^{(k)|b}(\bm x;\epsilon)}$.
First, by comparing \eqref{def: set G k b epsilon} to \eqref{def: set G k b},
note that 
\begin{align}
    \mathcal G^{(k)|b}(\bm x) = \mathcal G^{(k)|b}(\bm x;0)
\end{align}
In other words, the main difference in the construction of the set $\mathcal G^{(k)|b}(\bm x;\epsilon)$ in \eqref{def: set G k b epsilon} is that we apply $\epsilon$-bounded perturbations right before adding any jump onto the gradient flow paths.
Next, 
we stress that, given the problem setup in Section~\ref{subsec: global dynamics problem setting}, the set $ {\mathcal G^{(k)|b}(\bm x;\epsilon)} $ is bounded.
Indeed, we fix some $b > 0$ and $\bm x \in \R^d$, and note that for $k = 0$, we have
$\sup\{ \norm{\bm z}:\ \bm z \in \mathcal G^{(0)|b}(\bm x;\epsilon)  \}
=
\sup\{ \norm{\bm z}:\ \bm z \in \bar B_\epsilon(\bm x)  \}
\leq \norm{\bm x} + \epsilon$.
Next, 
by condition (iii) in Assumption~\ref{assumption: geometry, metastability},
we can fix $M$ large enough such that $\norm{\bm x } + \epsilon < M$ and $\inf_{\norm{\bm z} \geq M}\nabla f(\bm z)^\top \bm z  > 0$.
This implies $\norm{\bm y_t(\bm z)} \leq \norm{\bm z} \vee M$ for each $\bm z \in \R^d$ and $t \geq 0$.
Then, by definitions in \eqref{def: set G k b x epsilon, recursion},
it follows from an inductive argument that
\begin{align}
    \sup\Big\{ \norm{\bm z}:\ \bm z \in \mathcal G^{(k)|b}(\bm x;\epsilon)  \Big\}
    \leq M + k\cdot (b + \epsilon),
    \label{property, boundedness of set G k b x epsilon}
\end{align}
where $M$ is some constant that may vary with $\bm x$ and $\epsilon$ as noted above.
On the other hand, by definitions in \eqref{def: set G k b x epsilon, recursion}
and the non-degeneracy of $\bm \sigma(\cdot)$ (see Assumption~\ref{assumption: nondegeneracy of diffusion coefficients}), we have 
\begin{align}
    \mathcal G^{(k)|b}(\bm x;\epsilon) \supseteq B_{ kb + (k+1)\epsilon }(\bm x).
    \label{property, lower bound, size of set G k b epsilon}
\end{align}
Furthermore, by the Lipschitz continuity and non-degeneracy of $\bm \sigma(\cdot)$ (see Assumptions~\ref{assumption: lipschitz continuity of drift and diffusion coefficients} and \ref{assumption: nondegeneracy of diffusion coefficients})
as well as the boundedness of $\mathcal G^{(k)|b}(\bm x;\epsilon)$ (see \eqref{property, boundedness of set G k b x epsilon}),
the following can be established by Gronwall's inequality:
for any $\epsilon^\prime > 0$ and any $\bm x \in \R^d$, $b > 0$, and $k \in \mathbb Z_+$,
there exists $\epsilon > 0$ such that
\begin{align}
    \mathcal G^{(k)|b}(\bm x;\epsilon) \subseteq \big(\mathcal G^{(k)|b}(\bm x)\big)^{\epsilon^\prime},
    \label{property, inclusion of closure, G k b epsilon}
\end{align}
where we use $E^r$ to denote the $r$-enlargement of the set $E$. 

In the original statements of Theorem 2.8 in \cite{wang2023large}, 
it is required that $I^\complement$ is bounded away from $\mathcal G^{ (\mathcal J^I_b - 1)|b }(\bm 0;\epsilon)$ (for some $\epsilon > 0$ small enough) and that $\mathcal J^I_b < \infty$
where as in Theorem~\ref{theorem: first exit analyses, adapted} we only require 
$I^\complement$ to be bounded away from $\mathcal G^{ (\mathcal J^I_b - 1)|b }(\bm 0)$.
The reason is as follows:
\begin{itemize}
    \item 
    By \eqref{property, lower bound, size of set G k b epsilon} and the boundedness of $I$,
        we must have $\mathcal J^I_b =
        \min\big\{ k \geq 1:\ \mathcal G^{(k)|b}(\bm 0) \cap I^\complement \neq \emptyset \big\} < \infty$;

    \item 
        The condition that $I^\complement$ is bounded away from $\mathcal G^{(\mathcal J^I_b - 1 )|b}(\bm 0)$ 
        (i.e., $\inf\{ \norm{\bm x - \bm y}:\ \bm x \in I^\complement,\ \bm y \in \mathcal G^{(\mathcal J^I_b - 1 )|b}(\bm 0)  \} > 0$), 
        implies that $I^\complement$ is bounded away from $\big(\mathcal G^{(\mathcal J^I_b - 1 )|b}(\bm 0)\big)^{\epsilon^\prime}$ for some $\epsilon^\prime > 0$;
        By \eqref{property, inclusion of closure, G k b epsilon}, 
        the set $I^\complement$ must also be bounded away from $\mathcal G^{(\mathcal J^I_b - 1 )|b}(\bm 0;\epsilon)$ for some $\epsilon > 0$ small enough.
\end{itemize}
In short, the technical conditions in Theorem 2.8 of \cite{wang2023large} are automatically verified under the assumptions in Theorem~\ref{theorem: first exit analyses, adapted}, allowing us to adapt the first exit analyses and obtain the results in Theorem~\ref{theorem: first exit analyses, adapted}.

The remainder of Section~\ref{sec: review, first exit analyses} collects useful technical lemmas from \cite{wang2023large}.
First, Lemma~\ref{lemma: fixed cycle exit or return taking too long, first exit analysis} states that it is unlikely for $\bm X^{\eta|b}_t(\bm x)$ to take long excursion before exiting from $I_\epsilon$ or returning to a small enough neighborhood of the local minimum.

\begin{lemma}[Lemma 4.4 of \cite{wang2023large}]
\label{lemma: fixed cycle exit or return taking too long, first exit analysis}
\linksinthm{lemma: fixed cycle exit or return taking too long, first exit analysis}
Let Assumptions \ref{assumption gradient noise heavy-tailed}, 
\ref{assumption: lipschitz continuity of drift and diffusion coefficients}, 
and \ref{assumption: shape of f, first exit analysis} hold.
 Given any $k \geq 1$ and any $\epsilon > 0$ small enough,
 there exists $T = T(k,\epsilon) \in (0,\infty)$ such that for any $t \geq T$,
 \begin{align*}
      \lim_{\eta \downarrow 0}\ \sup_{ \bm x \in (I_\epsilon)^- }
     \ \, \frac1{\big(\lambda(\eta)\big)^{k-1}}\P\Big( \bm X^{\eta|b}_t(\bm x) \in I_\epsilon\setminus B_\epsilon(\bm 0)\ \  \forall t \leq T/\eta \Big)
      = 0,
 \end{align*}
 where $\lambda(\eta) = \eta^{-1}\P(\norm{\bm Z} > \eta^{-1})$.
\end{lemma}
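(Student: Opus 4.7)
The plan is to decouple the "tracking" behavior of the unperturbed gradient flow from the rare occurrences of heavy-tailed big jumps. Since $\bm Z$ has a regularly varying tail with index $-\alpha$, jumps of size exceeding $\delta/\eta$ (for any fixed $\delta > 0$) occur with probability of order $H(\eta^{-1})$ per step and are therefore rare on the $1/\eta$ time scale. Absent such big jumps, $\bm X^{\eta|b}_t(\bm x)$ essentially tracks the gradient flow $\bm y_{\eta t}(\bm x)$, which by Assumption~\ref{assumption: shape of f, first exit analysis} contracts uniformly toward $\bm 0$. Consequently, avoiding $B_\epsilon(\bm 0)$ throughout $t/\eta$ steps forces the occurrence of at least $k$ big jumps in that time window, a multi-jump event whose probability is of order $(\lambda(\eta))^k$; dividing by $(\lambda(\eta))^{k-1}$ then leaves a vanishing factor.

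First, I would invoke the contraction \eqref{property in assumption, contraction around minima, assumption: shape of f, first exit analysis} together with condition (i) of Assumption~\ref{assumption: shape of f, first exit analysis} and the compactness of $(I_\epsilon)^-$ to produce $T_0 = T_0(\epsilon) > 0$ such that $\bm y_s(\bm x) \in B_{\epsilon/2}(\bm 0)$ for every $s \geq T_0$ and every $\bm x \in (I_\epsilon)^-$. With this $T_0$ in hand, set $T := k T_0 + 1$, so that any time window of length $t/\eta$ with $t \geq T$ admits a disjoint partition into $k$ sub-windows each of length at least $\lceil T_0/\eta \rceil$.

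Next, introduce a threshold $\delta > 0$ and call step $j$ a big jump if $\norm{\bm Z_j} > \delta/\eta$. Using the Lipschitz continuity of $\nabla f$ and $\bm\sigma$ (Assumption~\ref{assumption: lipschitz continuity of drift and diffusion coefficients}) and the fact that the clipping operator $\varphi_b$ is $1$-Lipschitz with $\varphi_b(\bm 0)=\bm 0$, a discrete Gronwall argument yields that over any sub-window of length $\lceil T_0/\eta \rceil$ containing no big jump, the perturbed path remains within $C(T_0)\cdot \delta$ of the gradient flow initialized at its starting point; here the ``small-noise'' contribution is $\norm{\varphi_b(\eta \bm\sigma \bm Z_j)} \leq \eta \cdot \norm{\bm\sigma}_\infty \cdot \delta/\eta = C \delta$ per step, which accumulates to $O(\delta)$ via Gronwall over $T_0/\eta$ steps. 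Choosing $\delta$ sufficiently small (depending on $\epsilon,T_0$) guarantees that, on a big-jump-free window initiated at $\bm X^{\eta|b}_s(\bm x)\in (I_\epsilon)^- \setminus B_\epsilon(\bm 0)$, the process must enter $B_\epsilon(\bm 0)$ by step $s+\lceil T_0/\eta \rceil$. Applying this to each of the $k$ sub-windows, the event $\{\bm X^{\eta|b}_s(\bm x) \in I_\epsilon \setminus B_\epsilon(\bm 0) \ \forall s \leq t/\eta\}$ implies that at least $k$ big jumps occurred among the first $\lceil t/\eta \rceil$ steps.

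Finally, by Assumption~\ref{assumption gradient noise heavy-tailed} and $H \in \RV_{-\alpha}$, $\P(\norm{\bm Z} > \delta/\eta) \leq C\delta^{-\alpha}H(\eta^{-1})$ for $\eta$ small. A union bound over $\binom{\lceil t/\eta \rceil}{k}$ possible index sets gives
\begin{equation*}
    \P\big(\geq k \text{ big jumps among first } \lceil t/\eta \rceil \text{ steps}\big) \leq C(k,\delta,t)\cdot \big(\lambda(\eta)\big)^k,
\end{equation*}
and dividing by $(\lambda(\eta))^{k-1}$ produces $O(\lambda(\eta))$, which vanishes as $\eta\downarrow 0$ because $\alpha > 1$ implies $\lambda(\eta)\in \RV_{\alpha-1}(\eta)\downarrow 0$. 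The uniformity in $\bm x \in (I_\epsilon)^-$ is preserved throughout since every constant above depends only on $k,\epsilon,\delta,T_0$. The main obstacle is the Gronwall-tracking argument in the third paragraph: one must verify it uniformly in the starting point and carefully account for the clipping $\varphi_b$, which creates mild nonlinearities in the recursion; the key is that clipping can only \emph{decrease} the one-step displacement, so the contraction estimate for the gradient flow is not destroyed, and the accumulated deviation stays $O(\delta)$ rather than $O(1)$.
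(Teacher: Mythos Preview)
The paper does not prove this lemma in-house—it is quoted verbatim as Lemma~4.4 of \cite{wang2023large}—so there is no internal argument to compare against. Evaluating your proposal on its own merits: the overall architecture (partition $[0,T/\eta]$ into $k$ sub-windows, argue each must contain a big jump, conclude $O\big((\lambda(\eta))^k\big)$) is correct, but the Gronwall tracking step in your third paragraph contains a genuine error.

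You bound the per-step noise by $\|\varphi_b(\eta\bm\sigma\bm Z_j)\|\leq C\delta$ and assert this ``accumulates to $O(\delta)$ via Gronwall over $T_0/\eta$ steps.'' It does not. The discrete Gronwall recursion $e_j\leq (1+\eta D)e_{j-1}+C\delta$ yields, after $n=T_0/\eta$ steps,
\[
e_n \;\leq\; e^{DT_0}\sum_{j=1}^{n} C\delta \;=\; e^{DT_0}\cdot\frac{T_0}{\eta}\cdot C\delta,
\]
which diverges as $\eta\downarrow 0$ for any fixed $\delta>0$. Your closing remark about clipping only shrinking displacements is beside the point: the problem is the \emph{number} of steps, not their individual size. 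What is missing is the zero-mean structure $\E\bm Z=\bm 0$. The correct tracking argument controls the running maximum
\[
\max_{j\leq n}\,\eta\,\Big\|\sum_{i=1}^{j}\bm\sigma\big(\bm X^{\eta|b}_{i-1}(\bm x)\big)\bm Z_i\Big\|
\]
via a martingale concentration inequality (this is precisely Lemma~3.1(b) of \cite{wang2023large}, which the present paper invokes in the proof of Lemma~\ref{lemma stuck at local minimum before large jump}), and feeds \emph{that} bound—not the crude per-step bound—into a Gronwall estimate (Lemma~3.6(b) of \cite{wang2023large}). This makes tracking a high-probability event rather than a deterministic fact; you then need the complement to have probability $o\big((\lambda(\eta))^{k-1}\big)$, which concentration delivers. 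Once this replacement is made, the rest of your outline goes through.
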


Next, let
$
\notationdef{notation-R-eta-b-epsilon-return-time}{R^{\eta|b}_\epsilon(\bm x)} \delequal \min\big\{ t \geq 0:\ \bm X^{\eta|b}_t(\bm x) \in B_\epsilon(\bm 0) \big\}
$
be the first time $\bm X^{\eta|b}_t(x)$ returns to the $\epsilon$-neighborhood of the origin.
Lemma~\ref{lemma: cycle, efficient return} verifies that, when initialized within the attraction field $I$, the SGD iterates $\bm X^{\eta|b}_t(\bm x)$ would return to the local minimum efficiently with high probability.

\begin{lemma}[Lemma 4.5 of \cite{wang2023large}]
\label{lemma: cycle, efficient return}
\linksinthm{lemma: cycle, efficient return}
Let Assumptions \ref{assumption gradient noise heavy-tailed}, 
\ref{assumption: lipschitz continuity of drift and diffusion coefficients}, 
and \ref{assumption: shape of f, first exit analysis} hold.
For each $\epsilon > 0$ small enough, there exists a constant $T(\epsilon) \in (0,\infty)$ such that, for the event
    \begin{align*}
        E(\eta,\epsilon,\bm x)
        \delequal 
        \Big\{ R^{\eta|b}_\epsilon(\bm x) \leq \frac{T(\epsilon)}{\eta};\ \bm X^{\eta|b}_t(\bm x) \in I_{\epsilon/2}\ \forall t \leq R^{\eta|b}_\epsilon(\bm x) \Big\},
    \end{align*}
    we have
    $
     \lim_{\eta\downarrow 0}\sup_{\bm x \in (I_\epsilon)^-}
        \P\Big( \big(E(\eta,\epsilon,\bm x)\big)^\complement\Big) = 0.
    $
\end{lemma}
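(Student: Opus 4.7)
The plan is to show that, on a high-probability event, the truncated heavy-tailed SGD closely tracks the deterministic gradient flow, which under Assumption~\ref{assumption: shape of f, first exit analysis} brings any $\bm x \in (I_\epsilon)^-$ into a small neighborhood of $\bm 0$ within a uniform time. First I would use compactness of $(I_\epsilon)^-$, continuity of $\bm y_t(\cdot)$ in initial data, and the local contraction property \eqref{property in assumption, contraction around minima, assumption: shape of f, first exit analysis} to extract $T_0 = T_0(\epsilon) < \infty$ and a bounded open set $U \subset I_{3\epsilon/4}$ such that $\bm y_{T_0}(\bm x) \in B_{\epsilon/4}(\bm 0)$ and the entire trajectory $\{\bm y_t(\bm x) : t \geq 0\}$ lies in $U$ for every $\bm x \in (I_\epsilon)^-$; shrinking $\epsilon$ further ensures $\bar B_\epsilon(\bm 0) \subset I$ so that the contraction confines the flow to $B_{\epsilon/4}(\bm 0)$ once it enters. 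By Assumption~\ref{assumption: lipschitz continuity of drift and diffusion coefficients}, the quantities $M \delequal \sup_{\bm x \in U^{\epsilon/4}} \|\bm \sigma(\bm x)\|$ and $\sup_{\bm x \in U^{\epsilon/4}} \|\nabla f(\bm x)\|$ are both finite.

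Next I would introduce the small-noise event $A(\eta) \delequal \bigcap_{t = 1}^{\lceil T_0/\eta \rceil}\{\|\bm Z_t\| \leq \kappa/\eta\}$, with $\kappa > 0$ chosen so that $\kappa M < b \wedge (\epsilon/8)$. On $A(\eta)$ every perturbation $\eta \bm \sigma(\bm X^{\eta|b}_{t-1})\bm Z_t$ has norm below $b$, so the clipping $\varphi_b$ is inactive and the recursion collapses to the Euler-like update $\bm X^{\eta|b}_t = \bm X^{\eta|b}_{t-1} - \eta \nabla f(\bm X^{\eta|b}_{t-1}) + \eta \bm \sigma(\bm X^{\eta|b}_{t-1})\bm Z_t$. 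By Assumption~\ref{assumption gradient noise heavy-tailed} and a union bound, $\P(A(\eta)^\complement) \leq \lceil T_0/\eta\rceil \cdot H(\kappa/\eta) = \bo(\lambda(\eta)) \to 0$ as $\eta \downarrow 0$, where $\lambda(\eta) \in \RV_{\alpha - 1}(\eta)$ vanishes precisely because $\alpha > 1$.

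On $A(\eta)$, a discrete Gronwall argument using Lipschitz continuity of $\nabla f$ compares $\bm X^{\eta|b}_t$ with the Euler discretization $\widetilde{\bm y}^\eta_t$ of $\bm y_t$, yielding $\|\bm X^{\eta|b}_t - \widetilde{\bm y}^\eta_t\| \leq e^{D T_0} \max_{s \leq t} \|\eta \sum_{r = 1}^s \bm \sigma(\bm X^{\eta|b}_{r-1})\bm Z_r\|$. The crucial step is to show this noise sum is $o_p(1)$ uniformly in $s \leq T_0/\eta$: I would replace each $\bm Z_r$ on $A(\eta)$ by its truncation $\bm Z_r \mathbf{1}\{\|\bm Z_r\| \leq \kappa/\eta\}$ and split into a centered martingale plus a deterministic drift, controlling (i) the drift by $T_0 \cdot \|\E[\bm Z \mathbf{1}\{\|\bm Z\| \leq \kappa/\eta\}]\| = \bo(T_0 \lambda(\eta))$ (using $\E \bm Z = \bm 0$ together with the tail integrability from $\alpha > 1$), and (ii) the centered martingale via Doob's $L^2$ inequality, using the bound $\E[\|\bm Z\|^2 \mathbf{1}\{\|\bm Z\| \leq K\}] = \bo(K^{2-\alpha})$ for $\alpha \in (1, 2)$ (and the finite-variance case for $\alpha \geq 2$) to obtain total variance of order $\eta^{\alpha - 1}$. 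Combined with the classical uniform convergence $\widetilde{\bm y}^\eta_{\lfloor s/\eta \rfloor}(\bm x) \to \bm y_s(\bm x)$ on $[0, T_0] \times (I_\epsilon)^-$, this yields $\max_{t \leq T_0/\eta}\|\bm X^{\eta|b}_t(\bm x) - \bm y_{\eta t}(\bm x)\| \stackrel{p}{\to} 0$ uniformly in $\bm x \in (I_\epsilon)^-$. Hence with probability tending to one the SGD path stays in $I_{\epsilon/2}$ (by closeness to $U \subset I_{3\epsilon/4}$) and reaches $B_\epsilon(\bm 0)$ by step $T_0/\eta$; taking $T(\epsilon) = T_0(\epsilon)$ completes the proof. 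The main obstacle is the barely-integrable regime $\alpha \in (1, 2)$, where each truncated second moment explodes like $\eta^{\alpha - 2}$ and only the overall factor $\eta^{\alpha - 1}$ — vanishing because $\alpha > 1$ — rescues the estimate.
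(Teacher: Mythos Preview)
The paper does not give its own proof of this lemma: it is quoted verbatim as ``Lemma~4.5 of \cite{wang2023large}'' and used as a black box. So there is no proof in the paper to compare against. That said, your sketch is very much in the spirit of how the surrounding lemmas in Section~\ref{sec: review, first exit analyses} are handled---see for instance the proof of Lemma~\ref{lemma stuck at local minimum before large jump}, which invokes exactly the two ingredients you isolate (a Gronwall comparison between $\bm X^{\eta|b}$ and the gradient flow, cited there as Lemma~3.6 of \cite{wang2023large}, and a concentration bound on the running noise sum, cited as Lemma~3.1 of \cite{wang2023large}). Your argument essentially re-derives those lemmas from scratch rather than citing them.

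One point to tighten: there is a circularity in your use of $M = \sup_{\bm x \in U^{\epsilon/4}}\|\bm\sigma(\bm x)\|$. You need $\|\bm\sigma(\bm X^{\eta|b}_{t-1})\| \leq M$ along the SGD path to deactivate the clipping and to bound the martingale increments, but this requires $\bm X^{\eta|b}_{t-1} \in U^{\epsilon/4}$, which is part of what you are proving. The standard fix---used explicitly in the paper's proof of Lemma~\ref{lemma stuck at local minimum before large jump}---is to observe that the statement only concerns the behavior of $\bm X^{\eta|b}$ until it first leaves $I_{\epsilon/2}$, so one may modify $\nabla f$ and $\bm\sigma$ outside a neighborhood of $(I_{\epsilon/2})^-$ to be globally bounded without changing anything; alternatively, run your estimates up to the stopping time $\tau = \min\{t: \bm X^{\eta|b}_t \notin U^{\epsilon/4}\}$ and conclude $\tau > T_0/\eta$ on the high-probability event. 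A minor related point: the truncation $\varphi_b$ acts on the full increment $-\eta\nabla f + \eta\bm\sigma\bm Z_t$, not just the noise, so you also need $\eta\sup\|\nabla f\|$ to be absorbed into the slack $b - \kappa M$; this is immediate for small $\eta$ but worth saying. With these adjustments the argument goes through.
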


We also prepare two auxiliary technical lemmas that will be useful in the our subsequent proofs when applying Theorem~\ref{theorem: first exit analyses, adapted}.
First,
Lemma~\ref{lemma stuck at local minimum before large jump} shows that it is unlikely for $\bm X^{\eta|b}_t(\bm x)$ to deviate far from the local minimum without any ``large'' noise $\bm Z_t$.
Again, the proof makes heavy use of the results in \cite{wang2023large}.

\begin{lemma}
\label{lemma stuck at local minimum before large jump}
\linksinthm{lemma stuck at local minimum before large jump}
Let Assumptions \ref{assumption gradient noise heavy-tailed}, 
\ref{assumption: lipschitz continuity of drift and diffusion coefficients}, 
and \ref{assumption: shape of f, first exit analysis} hold.
Let $\tau^{>\delta}_1(\eta) \delequal \min\{ t \geq 1:\ \eta\norm{\bm Z_t} > \delta \}$.
Given any $\epsilon > 0$ small enough
and any positive integer $N$,
there exists $\bar \delta > 0$ such that
\begin{align*}
    \lim_{\eta \downarrow 0}
    \sup_{\bm x \in B_{\epsilon/2}(\bm 0)}
    \P\Big(
        \norm{ \bm X^{\eta|b}_t(\bm x) } \geq \epsilon\text{ for some }t < \tau^{>\delta}_1(\eta)
    \Big)\Big/\eta^N = 0,
    \qquad 
    \forall \delta \in (0,\bar\delta).
\end{align*}
\end{lemma}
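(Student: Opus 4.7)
The plan is to decompose the noise into a \emph{small} component and a \emph{medium} component, then handle each separately. Starting from $\bm x \in B_{\epsilon/2}(\bm 0)$, any escape from $B_\epsilon(\bm 0)$ before $\tau^{>\delta}_1(\eta)$ must be caused by noise increments with $\eta\norm{\bm Z_t}\leq \delta$ only. I would introduce an auxiliary threshold $\delta_0 \in (0,\delta)$ and classify each $\bm Z_t$ as \emph{small} if $\eta\norm{\bm Z_t}\leq \delta_0$ and \emph{medium} if $\delta_0<\eta\norm{\bm Z_t}\leq \delta$. The goal is to show that escape driven by a combination of small and medium noise has probability $o(\eta^N)$ provided $\delta$ (and then $\delta_0$) is chosen small enough as a function of $N$, $\alpha$, and $\epsilon$.

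For the medium component, each step's displacement is bounded by $C\delta$ for a constant $C$ depending on the Lipschitz norms of $\nabla f$ and $\bm\sigma$ on $B_\epsilon(\bm 0)$ (note that the truncation $\varphi_b$ does not activate when the pre-truncation increment has norm of order $\delta+\eta\ll b$). By the contraction property in Assumption~\ref{assumption: shape of f, first exit analysis} and an adaptation of Lemma~\ref{lemma: cycle, efficient return}, any window of length $T(\epsilon)/\eta$ containing only small-regime noise forces the process back to $B_{\epsilon/2}(\bm 0)$. Consequently, escape from $B_\epsilon(\bm 0)$ requires $k = \lceil \epsilon/(4C\delta)\rceil$ medium jumps to occur within some single window of length $T/\eta$. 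Using the regular variation $H\in\RV_{-\alpha}$, the probability of $k$ medium jumps in one such window is bounded by $\binom{T/\eta}{k}\bigl(H(\delta_0/\eta)\bigr)^k = O\bigl(T^k\eta^{(\alpha-1)k}\bigr)$ up to a slowly varying factor. A union bound over all overlapping windows contained in $[0,\tau^{>\delta}_1(\eta))$, whose length is geometrically distributed with mean of order $\eta^{-\alpha}$, yields an aggregate bound of $O(\eta^{(\alpha-1)k-\alpha})$. Choosing $\delta$ small enough so that $k>(N+\alpha)/(\alpha-1)$ then gives the required $o(\eta^N)$ rate for the medium contribution.

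For the small-noise contribution, I would use the local contraction $\nabla f(\bm x)^\top\bm x>0$ on $\bar B_\epsilon(\bm 0)\setminus\{\bm 0\}$ together with the Lyapunov function $V(\bm x)=\norm{\bm x}^2$. A one-step computation in the small-noise regime gives $\E\bigl[V(\bm X^{\eta|b}_{t+1})\mid\bm X^{\eta|b}_t\bigr] \leq (1-c\eta)V(\bm X^{\eta|b}_t) + O(\eta^2)$, leading to stationary fluctuations that are $o(1)$ as $\eta,\delta_0\downarrow 0$. Combined with an exponential concentration inequality for contractive Markov chains (for instance, an exponential martingale or Azuma-type bound applied to the bounded increments), the small-noise trajectory remains within $B_{2\epsilon/3}(\bm 0)$ with probability at least $1-\exp(-c\eta^{-\gamma})$ for some $\gamma>0$, which is $o(\eta^N)$ for every $N$. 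The main obstacle is rigorously combining the two pieces: the small-noise concentration must hold uniformly over all possible locations the process reaches immediately after a medium jump, and uniformly over all sub-windows between consecutive medium jumps until $\tau^{>\delta}_1(\eta)$. This is handled by conditioning on the times and sizes of the medium jumps, applying the small-noise concentration on each inter-jump sub-interval via the cycle structure of Lemma~\ref{lemma: cycle, efficient return}, and using a union bound whose total count is controlled by the medium-jump analysis above.
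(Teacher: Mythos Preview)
Your two-scale decomposition into ``small'' and ``medium'' noise is a natural idea, and in spirit it is how one proves Fuk--Nagaev type concentration for heavy-tailed sums. However, the small-noise step as you describe it has a genuine gap, and the medium-noise step is not self-contained without it.

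The problem is the claimed bound $1-\exp(-c\eta^{-\gamma})$ for the small-noise trajectory. With a \emph{fixed} threshold $\delta_0$, the increments $\eta\bm\sigma(\cdot)\bm Z_t\mathbf 1\{\eta\|\bm Z_t\|\le\delta_0\}$ are bounded only by $C\delta_0$, a constant in $\eta$. Over a window of $T/\eta$ steps, Azuma (or Bernstein, or any bounded-increment martingale inequality) gives at best
\[
\P\Big(\Big|\eta\!\!\sum_{i\le T/\eta}\!\!\bm\sigma(\bm X_{i-1})\bm Z_i\Big|>\tilde\epsilon\Big)\le \exp\!\Big(-\frac{c\,\tilde\epsilon^2\,\eta}{T\delta_0^2}\Big),
\]
which tends to $1$, not $0$, as $\eta\downarrow 0$. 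The contraction $\nabla f(\bm x)^\top\bm x>0$ does not rescue this: it shrinks the memory horizon to $O(1/\eta)$ steps, but the same computation over that horizon still yields a bound that is constant in $\eta$. To get $o(\eta^M)$ one must exploit that \emph{typical} $\bm Z_i$ are $O(1)$, not $O(\delta_0/\eta)$; a crude increment bound throws this away. Your Lyapunov calculation controls the mean of $\|\bm X_t\|^2$ but does not upgrade to the tail bound you need. Moreover, your medium-jump claim (``escape requires $k$ medium jumps in a window'') presupposes that between medium jumps the small-noise dynamics stay confined, so the two pieces are circular as written.

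The paper's argument avoids the decomposition entirely. It first truncates the time horizon at $1/\eta^\beta$ for some $\beta>\alpha$ (cheap, since $\P(\tau^{>\delta}_1(\eta)>1/\eta^\beta)$ decays super-polynomially), partitions $[0,1/\eta^\beta]$ into $O(\eta^{1-\beta})$ windows of length $t/\eta$, and on each window invokes a concentration lemma for heavy-tailed partial sums (Lemma~3.1(b) of \cite{wang2023large}) that directly gives
\[
\sup_{\bm x}\P\Big(\max_{j\le t/\eta\wedge(\tau^{>\delta}_1-1)}\eta\Big\|\sum_{i\le j}\bm\sigma(\bm X^{\eta|b}_{i-1})\bm Z_i\Big\|>\tilde\epsilon\Big)=o(\eta^{N+\beta-1})
\]
for all $\delta$ small enough. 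A Gronwall comparison (Lemma~3.6 of \cite{wang2023large}) then keeps the SGD within $\epsilon/2$ of the gradient flow $\bm y_\cdot$, which by the contraction assumption stays in $B_\epsilon(\bm 0)$ and returns to $B_{\epsilon/2}(\bm 0)$ by the end of each window; this lets the argument restart inductively. A union bound over the $O(\eta^{1-\beta})$ windows finishes. The heavy lifting is entirely inside the cited concentration lemma, which uses the full tail of $\bm Z$ rather than any fixed truncation.
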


\begin{proof}\linksinpf{lemma stuck at local minimum before large jump}
We start with a few observations.
First,  let $T^\eta_r(\bm x) \delequal \min\{t \geq 0:\ \norm{\bm X^{\eta|b}_t(\bm x) } \geq r \}$.
Due to the monotonicity in $\tau^{>\delta^\prime}_1(\eta) \leq \tau^{>\delta}_1(\eta)$
for any $0 < \delta^\prime < \delta$,
it suffices to show that for any positive integer $N$ and any small enough $\epsilon > 0$,
there is some $\delta = \delta(N,\epsilon) > 0$ such that 
\begin{align}
\limsup_{\eta \downarrow 0}
{ \sup_{\bm x \in B_{\epsilon}(\bm 0)}\P\big( T^\eta_{2\epsilon}(\bm x) < \tau^{>\delta}_1(\eta) \big)}\big/{ \eta^{N}} = 0,
\label{proof, goal 1, lemma stuck at local minimum before large jump}
\end{align}
where we also w.l.o.g.\ multiply $\epsilon$ by constant $2$ (compared to the original statements in Lemma~\ref{lemma stuck at local minimum before large jump}) to simplify notations in this proof.
Second, since the statements only concern the behavior of SGDs over a bounded region,
and
the values of $\nabla f(\cdot)$ and $\bm \sigma(\cdot)$ outside of $B_\epsilon(\bm 0)$ have not impact,
in light of Assumption~\ref{assumption: lipschitz continuity of drift and diffusion coefficients}
we can assume w.l.o.g.\ the existence of some $ C < \infty$ that
\begin{align}
    \sup_{\bm x \in \R^d}\norm{\bm\sigma(\bm x)} \vee \norm{\nabla f(\bm x)} \leq C.
    \label{proof, constant c and C, lemma stuck at local minimum before large jump}
\end{align}
Lastly, note that for any $\epsilon > 0$ small enough, we have: 
(i) 
 $B_{\epsilon}(\bm 0) \subseteq I$ (where $I$ is the open subset of the attraction field of $\bm 0$ stated in Assumption~\ref{assumption: shape of f, first exit analysis}),
 and
 (ii) the claim \eqref{property in assumption, contraction around minima, assumption: shape of f, first exit analysis} in Assumption~\ref{assumption: shape of f, first exit analysis} holds.
Henceforth in this proof, we only consider such $\epsilon$.



Recall that $\alpha > 1$ is the heavy-tailed index specified in Assumption~\ref{assumption gradient noise heavy-tailed}.
Also, fix some $\beta > \alpha$,
and observe that
\begin{align*}
     \P( \tau^{>\delta}_1(\eta) > 1/\eta^{\beta}) 
     = 
     \P\big( \text{Geom}\big( H(\delta/\eta) \big) > 1/\eta^\beta  \big),
\end{align*}
where $H(x) = \P(\norm{\bm Z_1} > x) \in \RV_{-\alpha}(x)$ as $x \to \infty$.
Combining our choice of $\beta > \alpha$ with standard bounds on the tail cdf of Geometric random variables (see, e.g., Lemma~D.1 of \cite{wang2023large}),
it hold for any $\theta \in (0,\beta - \alpha)$ that
$
 \P( \tau^{>\delta}_1(\eta) > 1/\eta^{\beta}) = \bm{o}\big(\exp(-1/\eta^\theta)\big)
$
(as $\eta \downarrow 0$).
Then, due to 
\begin{align*}
    \big\{T^\eta_{2\epsilon}(x) < \tau^{>\delta}_1(\eta)\big\}
& \subseteq
\big\{T^\eta_{2\epsilon}(x) < \tau^{>\delta}_1(\eta) \leq {1/\eta^\beta} \big\}
\cup  \big\{\tau^{>\delta}_1(\eta) > {1/\eta^\beta} \big\},
\end{align*}
it suffices to find some $\delta >0$ such that 
(here, note that by definitions, $ \tau^{>\delta}_1(\eta)$ and $T^\eta_{2\epsilon}(\bm x)$ only take integer values)
\begin{align}
    \sup_{ \bm x \in B_\epsilon(\bm 0) }\P\big(T^\eta_{2\epsilon}(\bm x) < \tau^{>\delta}_1(\eta) \wedge  \floor{1/\eta^\beta}\big)
    = \bm{o}(\eta^N),
    \ \ \ \text{ as }\eta\downarrow 0.
    \label{proof, goal 2, lemma stuck at local minimum before large jump}
\end{align}
Furthermore, 
let 
$$
K(\eta,t) \delequal 
\ceil{\frac{ \floor{1/\eta^\beta}  }{ \floor{t/\eta} } },
$$
and
suppose we can find $\delta,t,\tilde \epsilon > 0$ such that for all $\eta > 0$ small enough,
\begin{align}
    \sup_{\bm x \in B_\epsilon(\bm 0)} 
    \P\Big( T^\eta_{2\epsilon}(\bm x) < \tau^{>\delta}_1(\eta) \wedge \floor{1/\eta^\beta} \Big)
\leq 
\sup_{\bm x \in B_\epsilon(\bm 0)} 
\P\bigg( \bigcup_{k = 1}^{K(\eta,t)}\Big( {A}_k(\eta,t,\tilde\epsilon,\bm x) \Big)^\complement \bigg),
 \label{proof, goal 3, lemma stuck at local minimum before large jump}
\end{align}
where 
\begin{align*}
    {A}_k(\eta,t,\tilde\epsilon,\bm x) \delequal \Bigg\{ 
    \max_{(k-1)\floor{\frac{t}{\eta}} + 1 \leq j \leq k\floor{\frac{t}{\eta}}\wedge \big(\tau^{>\delta}_1(\eta) - 1\big) }
    \eta\norm{ 
        \sum_{i = (k-1)\floor{\frac{t}{\eta}} + 1  }^j \bm \sigma\big( \bm X^{\eta|b}_{i-1}(\bm x) \big)\bm Z_i 
    } 
    \leq \tilde{\epsilon}
    \Bigg\}.
\end{align*}
Then,
by part $(b)$ of Lemma~3.1 in \cite{wang2023large},
the claim
$\sup_{k \in [K(\eta,t)]}\sup_{\bm x \in B_\epsilon(\bm 0)}\P\Big( \big({A}_k(\eta,t,\tilde \epsilon,\bm x)\big)^\complement \Big) = \bm{o}(\eta^{N + \beta - 1})$
holds for all $\delta > 0$ small enough,
which leads to
\begin{align*}
    \sup_{\bm x \in B_\epsilon(\bm 0)} 
    \P\Big( T^\eta_{2\epsilon}(\bm x) < \tau^{>\delta}_1(\eta) \wedge \floor{1/\eta^\beta} \Big)
    \leq 
    K(\eta, t) \cdot \bm{o}(\eta^{N + \beta - 1})
    \leq \bm{O}(1/\eta^{\beta - 1})\cdot \bm{o}(\eta^{N + \beta - 1})
    = \bm{o}(\eta^{N}).
\end{align*}
This verifies claim \eqref{proof, goal 2, lemma stuck at local minimum before large jump}
and concludes the proof.
Now, it only remains to proof Claim~\eqref{proof, goal 3, lemma stuck at local minimum before large jump}.

\medskip
\noindent
\textbf{Proof of Claim~\eqref{proof, goal 3, lemma stuck at local minimum before large jump}}.
We consider some $t > 0$ large enough, whose value will be determined later.
Given such large $t$,
we pick some 
$\tilde{\epsilon} > 0$ small enough such that
$ 2\exp(tD)\tilde{\epsilon} < \epsilon/2$,
with $D < \infty$ being the Lipschitz constant in Assumption \ref{assumption: lipschitz continuity of drift and diffusion coefficients}.

For any $\bm x \in B_{\epsilon}(\bm 0)$, any $\delta \in (0,\frac{b}{2C})$ and any $\eta \in (0,\frac{\tilde{\epsilon}}{C} \wedge \frac{b\wedge 1}{2C} )$ (where $C$ is specified in \eqref{proof, constant c and C, lemma stuck at local minimum before large jump}),
on the event ${A}_1(\eta,t,\tilde\epsilon,\bm x)$,
we make a few observations.
First,
from part $(b)$ of Lemma~3.6 in \cite{wang2023large},
    \begin{align*}
       \sup_{s \leq {\frac{t}{\eta}}\wedge \big(\tau^{>\delta}_1(\eta) - 1\big) }
       \norm{ 
        \bm{y}_{\eta s}(\bm x) - \bm X^{\eta|b}_{\floor{s}}(\bm x)
        } < \exp(tD)\widetilde{\epsilon} + \exp(tD)\eta C
       < 2\exp(tD)\widetilde{\epsilon} < \epsilon/2,
    \end{align*}
where $\bm y_t(\bm x)$ is the gradient flow (ODE path) defined in \eqref{def ODE path y t},
and the last inequality follows from
our choice of $\tilde{\epsilon}$ and $\eta$ above.
Next, 
by the claim \eqref{property in assumption, contraction around minima, assumption: shape of f, first exit analysis} in Assumption \ref{assumption: shape of f, first exit analysis}, we have
$
\bm{y}_s(\bm x) \in B_{\epsilon}(\bm 0)\ \forall s \geq 0,\ \bm x \in B_\epsilon(\bm 0);
$
also, for any $t > 0$ large enough, we have 
$
\bm{y}_t(\bm x) \in B_{ \epsilon/2 }(\bm 0)\ \forall \bm x \in B_\epsilon(\bm 0).
$
We only consider such $t > 0$ in this proof.
Combining these facts, we see that on the event ${A}_1(\eta,t,\tilde\epsilon,\bm x)$:
\begin{itemize}
    \item 
    $\bm X^{\eta|b}_s(\bm x) \in B_{2\epsilon}(\bm 0)\ 
    \forall s \leq \floor{t/\eta} \wedge \big(\tau^{>\delta}_1(\eta)-1\big)$,
    so $T^\eta_{2\epsilon} \geq \tau^{>\delta}_1 \wedge \floor{t/\eta}$;

    \item 
    $\bm X^{\eta|b}_{\floor{{t}/{\eta}}}(\bm x) \in B_{\epsilon}(\bm 0)$ if $\tau^{>\delta}_1(\eta) \geq \floor{t/\eta}$.
\end{itemize}
In particular, the second bullet point allows us to 
repeat the arguments above inductively for $k = 2,3,\cdots,K(\eta,t)$,
and verify the following:
for any $\bm x \in B_{\epsilon}(\bm 0)$, any $\delta \in (0,\frac{b}{2C})$, and any $\eta \in (0,\frac{\tilde{\epsilon}}{C} \wedge \frac{b\wedge 1}{2C})$,
it holds on event $\bigcap_{k = 1}^{K(\eta,t)}{A}_k(\eta,t,\tilde \epsilon,\bm x)$
that
$$
\bm X^{\eta|b}_s(\bm x) \in B_{2\epsilon}(\bm 0),\qquad 
\forall s
\leq 
K(\eta,t)\cdot \floor{t/\eta} \wedge \big(\tau^{>\delta}_1(\eta) - 1\big).
$$
To conclude the proof for  Claim~\eqref{proof, goal 3, lemma stuck at local minimum before large jump},
simply note that
$
K(\eta,t ) \cdot \floor{t/\eta} = \ceil{\frac{ \floor{1/\eta^\beta}  }{ \floor{t/\eta} } } \cdot \floor{t/\eta} \geq \floor{1/\eta^\beta}.
$
\end{proof}

Lemma~\ref{lemma: exit rate strictly positive, first exit analysis} then states useful properties for the measure $\widecheck{\mathbf C}^{(k)|b}$ in \eqref{def: measure check C k b}.

\begin{lemma}
\label{lemma: exit rate strictly positive, first exit analysis}
\linksinthm{lemma: exit rate strictly positive, first exit analysis}
Let Assumptions~\ref{assumption: geometry, metastability},
\ref{assumption gradient noise heavy-tailed},
and
\ref{assumption: lipschitz continuity of drift and diffusion coefficients} hold.
For any $i,j \in [K]$ with $i \neq j$,
\begin{align}
    \widecheck{\mathbf C}^{ (\mathcal J_b(i))|b  }(I_j;\bm m_i) > 0
    \qquad\iff \qquad
    I_j \cap \mathcal G^{ (\mathcal J_b(i))|b }(\bm m_i) \neq \emptyset.
    \nonumber
\end{align}
\end{lemma}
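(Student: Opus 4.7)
The plan is to prove the two implications separately, with the forward direction being near-immediate from the definitions and the reverse direction relying on a continuity and positive-measure argument. Write $k \delequal \mathcal J_b(i)$. Comparing \eqref{def: measure check C k b} with \eqref{def: set G k b}, the measure $\widecheck{\mathbf C}^{(k)|b}(\,\cdot\,;\bm m_i)$ is the pushforward of the product measure $\big((\nu_\alpha\times\mathbf S)\circ\Psi\big)^k\times\mathcal L^{k-1\uparrow}_\infty$ under the mapping
\begin{align*}
F(\textbf W,\bm t) \delequal \widecheck g^{(k-1)|b}\Big(\varphi_b\big(\bm\sigma(\bm m_i)\bm w_1\big),(\bm w_2,\ldots,\bm w_k),\bm t\Big),
\end{align*}
and $\mathcal G^{(k)|b}(\bm m_i)$ is precisely the range of $F$. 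The $(\Rightarrow)$ direction then follows at once: a positive measure forces the integrand to equal $1$ somewhere in the domain, producing a point in $\mathcal G^{(k)|b}(\bm m_i)\cap I_j$.

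For $(\Leftarrow)$ I would pick $\bm x^*\in I_j\cap\mathcal G^{(k)|b}(\bm m_i)$ with a preimage $(\textbf W^*,\bm t^*)$ under $F$. Two claims then suffice. First, $F$ is jointly continuous on $\R^{d\times k}\times(0,\infty)^{k-1\uparrow}$; this follows from continuity of $\varphi_b$ (including at the origin, by direct inspection of \eqref{defTruncationClippingOperator}), Lipschitz continuity of $\bm\sigma$ and $\nabla f$ from Assumption~\ref{assumption: lipschitz continuity of drift and diffusion coefficients}, and standard joint continuity of ODE flows in initial condition and time, composed across the $k-1$ strictly ordered jumps. Combined with openness of $I_j$ (Assumption~\ref{assumption: geometry, metastability}), this yields a product open neighborhood $U\times V$ of $(\textbf W^*,\bm t^*)$ whose $F$-image lies in $I_j$, with $V$ chosen to preserve the strict ordering of the jump times. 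Second, the product measure is positive on $U\times V$: one has $\mathcal L^{k-1\uparrow}_\infty(V)>0$ immediately, while the pushforward $(\nu_\alpha\times\mathbf S)\circ\Psi$ assigns positive mass to every non-empty open subset of $\R^d$ because its polar density $\alpha r^{-\alpha-1} f_{\mathbf S}(\theta)$ is strictly positive under Assumption~\ref{assumption gradient noise heavy-tailed} (where $f_{\mathbf S}$ is bounded below by a positive constant on $\mathfrak N_d$). Hence $\widecheck{\mathbf C}^{(k)|b}(I_j;\bm m_i)>0$.

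The main subtlety to watch for is coordinates $\bm w_l$ in $U$ with $\bm w_l^*=\bm 0$: the natural open neighborhood of such a point contains the origin of $\R^d$, where one might worry about measure concentration. However, the polar density $\alpha r^{-\alpha-1}$ in fact blows up rather than vanishes as $r\downarrow 0$, so $(\nu_\alpha\times\mathbf S)\circ\Psi$ places positive (even infinite) mass on every punctured open ball around the origin, and positivity is retained. Apart from this subtlety, the argument is bookkeeping on the definitions of $\widecheck{\mathbf C}^{(k)|b}$ and $\mathcal G^{(k)|b}$ together with ODE regularity.
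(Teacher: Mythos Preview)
Your proposal is correct and follows essentially the same approach as the paper: the forward direction is the support observation, and the reverse direction combines continuity of the jump-perturbed flow map (the paper cites Lemma~3.4 of \cite{wang2023large} for continuity of $h^{(m)|b}_{[0,T]}$) with positivity of $(\nu_\alpha\times\mathbf S)\circ\Psi$ on every open ball, which the paper phrases as Lebesgue measure on $\R^d$ being absolutely continuous with respect to $(\nu_\alpha\times\mathbf S)\circ\Psi$. Your discussion of the $\bm w_l^*=\bm 0$ corner case is not explicitly treated in the paper but, as you observe, causes no difficulty.
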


\begin{proof}
\textbf{Proof of ``$\Rightarrow$''}.
By definitions in \eqref{def: set G k b} and \eqref{def: measure check C k b},
the measure $\widecheck{\mathbf C}^{(k)|b}(\ \cdot\ ;\bm x)$ is supported on $\mathcal G^{(k)|b}(\bm x)$.
Then $I_j \cap \mathcal G^{ (\mathcal J_b(i))|b }(\bm m_i) = \emptyset$ implies 
$
\widecheck{\mathbf C}^{ (\mathcal J_b(i))|b  }(I_j;\bm m_i) = 0.
$

\medskip
\noindent
\textbf{Proof of ``$\Leftarrow$''}.
Suppose that $\bm z \in I_j \cap \mathcal G^{ (\mathcal J_b(i))|b }(\bm m_i)$.
By definitions in \eqref{def: mapping check g k b, endpoint of path after the last jump, first exit analysis} and \eqref{def: set G k b},
there exist (with $k = \mathcal J_b(i)$ to lighten notations in this proof) some
$
\textbf{W} = (\bm w_1,\ldots,\bm w_{ k  }) \in \R^{d\times k},
$
and
$
\bm t = (t_1,\ldots,t_{k-1}) \in (0,\infty)^{(k-1)\uparrow}
$
(that is, $0 < t_1 < t_2 < \ldots < t_{k-1}$),
such that
\begin{align}
    \bm z = h^{(k-1)|b}_{ [0,1+t_{k-1}] }\Big(
        \varphi_b\big( \bm \sigma(\bm m_i)\bm w_1  \big), (\bm w_2,\ldots,\bm w_k), (t_1,\ldots,t_{k-1})
    \Big)(t_{k-1}).
    \nonumber
\end{align}
Then, by the continuity of the mapping $h^{(m)|b}_{[0,T]}$ (see Lemma~3.4 of \cite{wang2023large})
and the fact that $I_j$ is an open set,
there exist some $\epsilon \in (0,1)$ small enough such that the claim
$$
h^{(k-1)|b}_{ [0,1+t_{k-1}] }\Big(
        \varphi_b\big( \bm \sigma(\bm m_i)\tilde{\bm w}_1  \big), (\tilde{\bm w}_2,\ldots,\tilde{\bm w}_k), (\tilde t_1,\ldots,\tilde t_{k-1})
    \Big)(\tilde t_{k-1}) \in I_j
$$
holds whenever $\norm{\bm w_j - \tilde{\bm w}_j} < \epsilon\ \forall j \in [k]$,
and $|\tilde t_j - t_j| < \epsilon\ \forall j \in [k-1]$ (which also ensures $\tilde t_{k-1} < 1 + t_{k - 1}$ for the path evaluated at time $\tilde t_{k-1}$ to be well-defined in the display above).
Then, by \eqref{def: measure check C k b},
\begin{align*}
    \widecheck{\mathbf C}^{ (k)|b  }(I_j;\bm m_i)
    & \geq 
    \bigg( \prod_{j = 1}^{k} ((\nu_\alpha \times \mathbf S)\circ \Psi)\Big( B_\epsilon(\bm w_j) \Big)  \bigg)
    \cdot \epsilon^{k-1}.
\end{align*}
To conclude the proof, it suffices to note the following:
since the density of the spherical measure $\mathbf S$ is uniformly bounded from $0$ (see Assumption~\ref{assumption gradient noise heavy-tailed}),
the Lebesgue measure on $\R^d$ is absolutely continuous w.r.t.\ $(\nu_\alpha \times \mathbf S)\circ \Psi$,
thus implying
$
((\nu_\alpha \times \mathbf S)\circ \Psi)\big( B_\epsilon(\bm w_j) \big) > 0
$
for each $j$.
\end{proof}

\section{Sample Path Convergence to Jump Processes}
\label{subsec: proof, abstract framework, metastability}

This section develops a theoretical framework for establishing sample-path level convergence to jump processes in $\D[0,\infty)$,
which greatly facilitates our proof for Theorem~\ref{corollary irreducible case}.

Let $Y^\eta_{\boldsymbol{\cdot}}$ and $Y^*_{\boldsymbol{\cdot}}$ be random elements in $\D[0,\infty)$, i.e., $\R^d$-valued càdlàg processes.
We start by discussing a few properties of the weak convergence in $(\D[0,\infty),\dlp{[0,\infty)})$.
In particular, a similar mode of convergence in 
$(\D[0,T],\dlp{[0,T]})$ can be defined analogously for any $T \in (0,\infty)$.
Recall the projection mapping $\pi_T$ defined in \eqref{def: projection mapping pi T}.
We say that 
$Y^\eta_{ \boldsymbol{\cdot} }\Rightarrow Y^*_{ \boldsymbol{\cdot} }$ in $(\D[0,T],\dlp{[0,T]})$
if
\begin{align*}
    \lim_{\eta \downarrow 0}
    \E g\big(\pi_T(S^\eta_{ \boldsymbol{\cdot} })\big) 
    = 
    \E g\big(\pi_T(S^*_{ \boldsymbol{\cdot} })\big),
    \qquad 
    \forall g:\D[0,T]\to\R\text{ continuous and bounded};
\end{align*}
see \eqref{def, Lp distance} for the definition of $\dlp{[0,T]}$.
More precisely, the $L_p$ norm $\dlp{[0,T]}$ induces a metric over a quotient space $\D[0,T]/\mathcal N$,
where
we set $\mathcal N = \{ \xi \in \D[0,T]:\ \xi_t \equiv \bm 0\ \forall t \in [0,T) \}$,
which
is the set containing all paths in $\D[0,T]$ that stays at the origin except for the endpoint.
(That is, any two paths $x,\ y \in \mathbb D[0,T]$ are considered equivalent under $\dlp{[0,T]}$ if $x_t = y_t\ \forall t \in [0,T)$.)

First, Lemma~\ref{lemma: Lp convergence, from T to infty}
shows that the convergence in $(\D[0,\infty),\dlp{[0,\infty)})$ follows from the convergence in $(\D[0,T],\dlp{[0,T]})$.

\begin{lemma}
    \label{lemma: Lp convergence, from T to infty}
    \linksinthm{lemma: Lp convergence, from T to infty}
    Let $p \in [1,\infty)$.
    If $Y^\eta_{ \boldsymbol{\cdot} }\Rightarrow Y^*_{ \boldsymbol{\cdot} }$ in $(\D[0,T],\dlp{[0,T]})$ as $\eta \downarrow 0$
    for any positive integer $T$,
    then 
$Y^\eta_{ \boldsymbol{\cdot} }\Rightarrow Y^*_{ \boldsymbol{\cdot} }$ in $(\D[0,\infty),\dlp{[0,\infty)})$ as $\eta \downarrow 0$.
\end{lemma}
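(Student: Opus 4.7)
The plan is to reduce weak convergence on $\D[0,\infty)$ to weak convergence on each $\D[0,T]$ by testing against a restricted family of functions and using an explicit truncation map from $\D[0,T]$ into $\D[0,\infty)$.

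\emph{Step 1: Reduction to Lipschitz test functions.} I will first invoke a standard Portmanteau-type characterization of weak convergence on a metric space, which says that $Y^\eta_{\boldsymbol\cdot}\Rightarrow Y^*_{\boldsymbol\cdot}$ in $(\D[0,\infty),\dlp{[0,\infty)})$ iff $\E g(Y^\eta_{\boldsymbol\cdot}) \to \E g(Y^*_{\boldsymbol\cdot})$ for every bounded Lipschitz $g:(\D[0,\infty),\dlp{[0,\infty)}) \to \R$. I will then fix such a $g$ with $\|g\|_\infty \leq M$ and Lipschitz constant $L$.

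\emph{Step 2: Building a truncation map.} For each positive integer $T$, I will introduce $\iota_T:\D[0,T]\to \D[0,\infty)$ by $\iota_T(\xi)_t = \xi_t$ for $t\in[0,T]$ and $\iota_T(\xi)_t = \bm 0$ for $t>T$. Two elementary calculations from \eqref{def, Lp distance} and \eqref{def, Lp metric D 0 infty} will then yield:
\begin{enumerate}[(i)]
\item $\iota_T$ is $1$-Lipschitz, since for every $k$ we have $\dlp{[0,k]}(\pi_k\iota_T x, \pi_k\iota_T y)^p = \int_0^{k\wedge T}\|x_t-y_t\|^p\,dt \leq \dlp{[0,T]}(x,y)^p$, which in turn yields $\dlp{[0,\infty)}(\iota_T x,\iota_T y)\leq \dlp{[0,T]}(x,y)$;
\item $\dlp{[0,\infty)}(\iota_T\circ \pi_T(x), x) \leq 2^{-T}$ for every $x\in \D[0,\infty)$, because each summand in \eqref{def, Lp metric D 0 infty} for $k\leq T$ vanishes while each summand for $k>T$ is bounded by $2^{-k}$ thanks to the truncation by $1$.
\end{enumerate}

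\emph{Step 3: Triangle inequality.} I will then decompose
\begin{align*}
|\E g(Y^\eta_{\boldsymbol\cdot}) - \E g(Y^*_{\boldsymbol\cdot})|
& \leq |\E g(Y^\eta_{\boldsymbol\cdot}) - \E g(\iota_T\circ \pi_T(Y^\eta_{\boldsymbol\cdot}))|
+ |\E g(\iota_T\circ \pi_T(Y^\eta_{\boldsymbol\cdot})) - \E g(\iota_T\circ \pi_T(Y^*_{\boldsymbol\cdot}))|
\\ & \quad + |\E g(\iota_T\circ \pi_T(Y^*_{\boldsymbol\cdot})) - \E g(Y^*_{\boldsymbol\cdot})|.
\end{align*}
By the Lipschitz bound on $g$ and Step 2(ii), the first and third terms are each dominated by $L\cdot 2^{-T}$, \emph{uniformly in $\eta$}. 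For the middle term, Step 2(i) ensures $g\circ \iota_T:\D[0,T]\to \R$ is bounded continuous, so by the hypothesis $Y^\eta_{\boldsymbol\cdot}\Rightarrow Y^*_{\boldsymbol\cdot}$ in $(\D[0,T],\dlp{[0,T]})$ the middle term tends to $0$ as $\eta\downarrow 0$ for each fixed $T$. Given $\epsilon>0$, I will first pick $T$ with $L\cdot 2^{-T}<\epsilon/3$ and then send $\eta\downarrow 0$.

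\emph{Main subtlety.} The key obstacle is controlling $|\E g(Y^\eta) - \E g(\iota_T\pi_T Y^\eta)|$ uniformly in $\eta$. A generic bounded continuous $g$ need not be uniformly continuous on $(\D[0,\infty),\dlp{[0,\infty)})$, so the reduction to bounded Lipschitz test functions in Step 1 is essential; without it, the pointwise convergence $\iota_T\pi_T x \to x$ would give only a non-uniform bound on this error term, and the decomposition would collapse. It is equally important that the definition \eqref{def, Lp metric D 0 infty} caps each summand by $1$: otherwise the tail estimate in Step 2(ii) would fail for càdlàg paths that grow at infinity, and no bound on $A(\eta,T)$ valid for all $\eta$ could be extracted.
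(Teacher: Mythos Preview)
Your proof is correct and follows essentially the same approach as the paper: both reduce via Portmanteau to a restricted class of test functions (the paper uses bounded uniformly continuous $g$, you use bounded Lipschitz $g$), then introduce the zero-extension map $\iota_T$ (the paper's $\pi^\dagger_T$) to connect $\D[0,T]$ and $\D[0,\infty)$, and finally exploit the tail bound $\dlp{[0,\infty)}(\iota_T\pi_T x,x)\le 2^{-T}$ together with the hypothesis on $[0,T]$. One cosmetic point: with your definition $\iota_T(\xi)_T=\xi_T$ the extended path need not be right-continuous at $T$; set $\iota_T(\xi)_t=\bm 0$ for $t\ge T$ instead (as the paper does), which changes nothing in the $L_p$ computations.
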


\begin{proof}
  \linksinpf{lemma: Lp convergence, from T to infty}  
  By Portmanteau Theorem, it suffices to show that
$
\lim_{\eta \downarrow 0}\E g(Y^\eta_{ \boldsymbol{\cdot} }) = \E g(Y^*_{ \boldsymbol{\cdot} })
$
holds
for any $g:\D[0,\infty) \to \R$ that is bounded and uniformly continuous (w.r.t.\ the topology induced by $\dlp{[0,\infty)}$).
To proceed, we arbitrarily pick one such $g$ and some $\epsilon > 0$.
By virtue of the uniform continuity of $g$, there exists some $\delta > 0$ such that
$
|g(x) - g(y)| < \epsilon
$
whenever $\dlp{[0,\infty)}(x,y) < \delta$.
By definition of $\dlp{[0,\infty)}$ in \eqref{def, Lp metric D 0 infty},
fo each $T > 0$,
we must have $\dlp{[0,\infty)}(x,y) < 1/2^{\floor{T}-1}$ if $x_t = y_t$ for all $t \in [0,T)$.
Now, we fix some positive integer $T$ large enough such that $1/2^{T-1} < \delta$.
Define $\widetilde \pi_T:\D[0,\infty)\to\D[0,\infty)$ by
\begin{align*}
    \widetilde\pi_T(\xi)_t \delequal 
    \begin{cases}
        \xi_t & \text{ if }t \in [0,T) \\ 
        0 & \text{ if }t \geq T
    \end{cases}
\end{align*}
and set $\widetilde g_T(\xi) \delequal g\big(\widetilde \pi_T(\xi)\big)$.
We now have 
$
\dlp{[0,\infty)}\big(\xi, \widetilde \pi_T(\xi)\big) < \delta
$
and hence $|g(\xi) - \widetilde g_T(\xi) | < \epsilon$
for any $\xi \in \D[0,\infty)$.
As a result,
\begin{align}
    \limsup_{\eta \downarrow 0}| \E g(Y^\eta_{\boldsymbol{\cdot}}) - \E \widetilde g_T(Y^\eta_{\boldsymbol{\cdot}}) |<\epsilon,
    \qquad 
    | \E g(Y^*_{\boldsymbol{\cdot}}) - \E \widetilde g_T(Y^*_{\boldsymbol{\cdot}}) | < \epsilon.
    \label{proof, limit 1, lemma: Lp convergence, from T to infty}
\end{align}
Furthermore, let $\pi^\dagger_T:\D[0,T]\to\D[0,\infty)$ be defined as
\begin{align*}
    \pi^\dagger(\xi)_t \delequal 
    \begin{cases}
        \xi_t & \text{ if }t \in [0,T)
        \\
        0 & \text{ if }t \geq T
    \end{cases},
\end{align*}
which can be interpreted as a ``pseudo inverse'' of the projection mapping $\pi_T$ defined in \eqref{def: projection mapping pi T}.
Also, let $g_T:\D[0,T]\to\R$ by $g_T(\cdot) \delequal g\big(\pi^\dagger_T(\cdot)\big)$.
It is easy to see that
$(i)$ $g_T$ is continuous due to the continuity of $g$ and $\pi^\dagger_T$,
and
$(ii)$ for any $\xi \in \D[0,\infty)$,
we have $\widetilde g_T(\xi) = g_T\big(\pi_T(\xi)\big)$.
Due to $Y^\eta_{ \boldsymbol{\cdot} }\Rightarrow Y^*_{ \boldsymbol{\cdot} }$ in $(\D[0,T],\dlp{[0,T]})$,
we now yield
\begin{align}
    \lim_{\eta \downarrow 0}| \E \widetilde g_T(Y^\eta_{\boldsymbol{\cdot}}) - \E \widetilde g_T(Y^*_{\boldsymbol{\cdot}}) | = 0.
    \label{proof, limit 2, lemma: Lp convergence, from T to infty}
\end{align}
Combining \eqref{proof, limit 1, lemma: Lp convergence, from T to infty} and \eqref{proof, limit 2, lemma: Lp convergence, from T to infty},
we get 
$
\limsup_{\eta \downarrow 0}|\E g(Y^\eta_{ \boldsymbol{\cdot} })  - \E g(Y^*_{ \boldsymbol{\cdot} })| < 2\epsilon.
$
Driving $\epsilon \to 0$, we conclude the proof.
\end{proof}

Lemma~\ref{proposition, fdd + tightness = Lp convergence} then provides a Prohorov-type argument where 
 weak convergence in $(\D[0,T],\dlp{[0,T]})$ can be established 
using the convergence in f.d.d.\ and a tightness condition.
The proof is a straightforward adaptation of its
$J_1$ counterparts.
For the sake of clarity, the next proof will, w.l.o.g., focus on the case where $T = 1$, but it's clear that the arguments can be easily extended to $\mathbb D[0,T]$ with arbitrary $T \in (0,\infty)$. 

\begin{lemma}
\label{proposition, fdd + tightness = Lp convergence}
\linksinthm{proposition, fdd + tightness = Lp convergence}
    Let $T \in (0,\infty)$, $p \in [1,\infty)$, and $\mathcal T$ be a dense subset of $(0,T)$.
    Suppose that the laws of $Y^{\eta_n}_{\boldsymbol{\cdot}}$ are tight in  $(\D[0,T],\dlp{[0,T]})$ for any sequence $\eta_n > 0$ with $\lim_n \eta_n = 0$,
    and 
    \begin{align}
        (Y^{\eta}_{t_1},\cdots,Y^\eta_{t_k}) \Rightarrow (Y^*_{t_1},\cdots,Y^*_{t_k})
    \text{ as }\eta \downarrow 0
    \qquad 
    \forall k = 1,2,\cdots,\ 
    \forall (t_1,\cdots,t_k) \in \mathcal T^{k\uparrow}.
    \label{condition, fdd convergence, proposition, fdd + tightness = Lp convergence}
    \end{align}
    Then $Y^\eta_{ \boldsymbol{\cdot} }\Rightarrow Y^*_{ \boldsymbol{\cdot} }$ in $(\D[0,T],\dlp{[0,T]})$ as $\eta \downarrow 0$.
\end{lemma}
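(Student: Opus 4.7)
The plan is to run a standard Prohorov-type identification. By the assumed tightness and Prohorov's theorem on the Polish space $(\mathbb{D}[0,T], \dlp{[0,T]})$, any sequence $\eta_n \downarrow 0$ admits a subsequence $\eta_{n_k}$ along which $Y^{\eta_{n_k}}_{\boldsymbol{\cdot}} \Rightarrow \tilde Y_{\boldsymbol{\cdot}}$ for some càdlàg process $\tilde Y$. I will show every such subsequential limit must satisfy $\tilde Y \stackrel{d}{=} Y^*$; uniqueness of subsequential weak limits then promotes this to $Y^\eta_{\boldsymbol{\cdot}} \Rightarrow Y^*_{\boldsymbol{\cdot}}$ in $(\mathbb{D}[0,T], \dlp{[0,T]})$. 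Since any two càdlàg paths agreeing on a dense subset of $(0,T)$ coincide on $[0,T)$ by right-continuity, and this determines the $\dlp{[0,T]}$ equivalence class, it suffices to match the finite-dimensional distributions of $\tilde Y$ and $Y^*$ along $\mathcal T$.

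The core difficulty is the gap between $\dlp{[0,T]}$-convergence and f.d.d.\ convergence: unlike the $J_1$ topology, the evaluation map $\xi \mapsto \xi_t$ is \emph{not} $\dlp{[0,T]}$-continuous, so marginal convergence does not follow directly from continuous mapping. I will bypass this via a smoothing argument. For $t \in (0,T)$ and $\epsilon > 0$ small, the averaging functional $A_\epsilon(\xi, t) \delequal \tfrac{1}{\epsilon}\int_t^{t+\epsilon} \xi_s\,ds$ is $\dlp{[0,T]}$-continuous, because its kernel $\tfrac{1}{\epsilon}\mathbf{I}\{ s \in (t, t+\epsilon] \}$, viewed as a function of $s$, lies in the dual space $L^q$ (with $q$ the conjugate exponent of $p$). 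Continuous mapping applied to $Y^{\eta_{n_k}} \Rightarrow \tilde Y$ thus yields, jointly over any $t_1, \ldots, t_m \in \mathcal T$ with small enough $\epsilon$,
\[ (A_\epsilon(Y^{\eta_{n_k}}, t_j))_{j=1}^m \;\Rightarrow\; (A_\epsilon(\tilde Y, t_j))_{j=1}^m. \]
Independently, I will show $(A_\epsilon(Y^{\eta_{n_k}}, t_j))_j \Rightarrow (A_\epsilon(Y^*, t_j))_j$ by approximating $A_\epsilon(\cdot, t_j)$ by Riemann sums $R_N(\xi, t_j) = \tfrac{1}{N}\sum_{l=1}^N \xi_{s^{(N)}_l}$ with grid points $s^{(N)}_l \in \mathcal T$ (using density of $\mathcal T$). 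For each fixed $N$, the hypothesis \eqref{condition, fdd convergence, proposition, fdd + tightness = Lp convergence} directly gives $(R_N(Y^{\eta_{n_k}}, t_j))_j \Rightarrow (R_N(Y^*, t_j))_j$; letting $N \to \infty$ and exchanging limits with the help of $\dlp{[0,T]}$-tightness delivers the claim. Combining the two displays identifies $(A_\epsilon(\tilde Y, t_j))_j \stackrel{d}{=} (A_\epsilon(Y^*, t_j))_j$, and sending $\epsilon \downarrow 0$ together with the right-continuity of $\tilde Y$ and $Y^*$ yields $(\tilde Y_{t_j})_j \stackrel{d}{=} (Y^*_{t_j})_j$ for all $t_j \in \mathcal T$, completing the identification.

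The principal obstacle is the Riemann-sum step, i.e., making $R_N \to A_\epsilon$ uniform enough in $\eta_{n_k}$ to commute the $N \to \infty$ limit with the $k \to \infty$ weak convergence. This uniformity is exactly what $\dlp{[0,T]}$-tightness is designed to deliver: via the Kolmogorov--Riesz compactness criterion in $L^p([0,T])$, $L^p$-precompactness is equivalent to a uniformly small translation oscillation, which is precisely the control needed to approximate an integral by Riemann sums across the whole tight family. Once this uniformity is in place, the remaining steps---Prohorov, continuous mapping, and right-continuity of càdlàg paths---are routine and parallel the standard $J_1$ Prohorov argument the authors refer to.
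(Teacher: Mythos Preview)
Your overall scheme---Prohorov plus identification of every subsequential limit---matches the paper's. The identification steps differ substantially, however. The paper proves that the cylinder $\pi$-system $p[\pi_{\bm t}:\bm t\in\mathcal T]$ is a \emph{separating class} for Borel measures on $(\D[0,T],\dlp{[0,T]})$ by showing $\sigma[\pi_{\bm t}:\bm t\in\mathcal T]=\mathcal D_p$: one inclusion uses that $\pi_{(t)}=\lim_{\epsilon\downarrow 0}h_\epsilon$ with $h_\epsilon(\xi)=\epsilon^{-1}\int_t^{t+\epsilon}\xi_s\,ds$ continuous in $\dlp{[0,T]}$ (precisely your averaging map), the other constructs piecewise-constant interpolants $V^*_m\xi$ on $\mathcal T$-grids and shows $V^*_m\to\text{id}$ in $\dlp{[0,T]}$. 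You instead try to push f.d.d.\ convergence through $A_\epsilon$ by approximating $A_\epsilon$ with Riemann sums $R_N$ on $\mathcal T$-grids and then exchanging $N\to\infty$ with $\eta\downarrow0$.

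That Riemann-sum step has a genuine gap. Kolmogorov--Riesz gives uniform smallness of $\|\tau_h\xi-\xi\|_{L^p}$ over $\xi$ in a compact set---an \emph{integrated} translation modulus. But $|R_N(\xi)-A_\epsilon(\xi)|$ is governed by \emph{sums of point evaluations} of increments, and point evaluation is not $\dlp{[0,T]}$-continuous, so the integrated modulus is the wrong currency. Concretely: take $K=\{\xi^{(m)}:m\ge1\}\cup\{0\}\subset\D[0,1]$ with $\xi^{(m)}=\sum_{l=1}^{m}\mathbf{I}_{[l/m,\, l/m+m^{-2})}$; then $\|\xi^{(m)}\|_{L^p}^p=m^{-1}\to0$, so $K$ is $L^p$-compact, yet on the uniform grid $s_l=l/N$ one gets $R_N(\xi^{(N)})=1$ while $A_1(\xi^{(N)},0)=N^{-1}$, whence $\sup_{\xi\in K}|R_N(\xi)-A_1(\xi,0)|\ge 1-N^{-1}\not\to0$. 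So $L^p$-tightness does \emph{not} deliver the uniformity you need to swap limits, and the deduction $A_\epsilon(Y^{\eta_{n_k}})\Rightarrow A_\epsilon(Y^*)$ is unjustified. The paper's $\sigma$-algebra argument stays at the measurable-space level and never requires uniformly approximating an integral by pointwise grid evaluations across a path family.
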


\begin{proof}
\linksinpf{proposition, fdd + tightness = Lp convergence}
As mentioned above, the arguments are similar to those of the standard proofs in \cite{billingsley2013convergence} for $J_1$ topology,
and we provide the detailed proof for the sake of completeness.
Also, w.l.o.g.\ we focus on the case where $T = 1$ and write $\D = \D[0,1]$.

For any $0 \leq t_1 < t_2 < \cdots < t_k \leq 1$,
let $\pi_{(t_1,\cdots,t_k)}:\mathbb D\to \R^k$ be the projection mapping, i.e.,
$
\pi_{(t_1,\cdots,t_k)}(\xi) = (\xi_{t_1},\xi_{t_2},\cdots,\xi_{t_k}).
$
Let $\mathcal R^k$ be the Borel $\sigma$-algebra for $\R^{d \times k}$.
Let $p[\pi_{\bm t}: \bm t \in \mathcal T]$ be the collection of all sets of form
$\pi^{-1}_{(t_1,\cdots,t_k)}H$,
where $k \geq 1$, $H \in \mathcal R^k$, and $t_1 < \cdots < t_k$ with $t_i \in \mathcal T$ for each $i \in [k]$.
It suffices to show that (we write $\dlp{} = \dlp{[0,1]}$ and let $\mathcal D_p$ be the Borel $\sigma$-algebra of $(\D,\dlp{})$)
\begin{align}
    p[\pi_{\bm t}:\bm t \in \mathcal T]\text{ is a separating class for }(\D,\dlp{}).
    \label{proof, goal, proposition, fdd + tightness = Lp convergence}
\end{align}
In other words,
any two Borel probability measures $\mu$ and $\nu$ over $(\D,\dlp{})$
would coincide (i.e., $\mu(A)=\nu(A)\ \forall A \in \mathcal D_p$) if 
$\mu(A) = \nu(A)\ \forall A \in p[\pi_{\bm t}:\bm t \in \mathcal T]$.
To see why claim~\eqref{proof, goal, proposition, fdd + tightness = Lp convergence} is a sufficient condition,
note that the tightness condition implies that the sequence $Y^{\eta_n}_{\boldsymbol{\cdot}}$ has a converging sub-sequence, 
while the claim~\eqref{proof, goal, proposition, fdd + tightness = Lp convergence} and assumption \eqref{condition, fdd convergence, proposition, fdd + tightness = Lp convergence}
ensures that the limiting law must be that of $Y^{*}_{\boldsymbol{\cdot}}$.


The remainder of this proof is devoted to establishing claim \eqref{proof, goal, proposition, fdd + tightness = Lp convergence}.
First, we show that the projection mapping of form $\pi_{(t_1,\cdots,t_k)}:\D \to \R^{d \times k}$ is $\mathcal D_p/\mathcal R^k$ measurable when $0 \leq t_1 < \cdots < t_k < 1$,
which immediately confirms that $p[\pi_{\bm t}:\bm t \in \mathcal T]\subseteq\mathcal D_p$.
To do so, it suffices to prove that $\pi_{(t)}$ is measurable for any given $t \in [0,1)$.
Define $h_\epsilon(x): \D \to \R$ by $h_\epsilon(x) = \epsilon^{-1}\int_t^{t + \epsilon}x_s ds$.
W.l.o.g.\ we only consider $\epsilon$ small enough such that $t+\epsilon \leq 1$.
For any $x,\ y \in \D$ and $\Delta \in (0,1)$,
\begin{align*}
    \norm{h_\epsilon(x) - h_\epsilon(y)}
    &
    \leq
    \epsilon^{-1}\int_t^{t+\epsilon}\norm{x_s - y_s}\mathbf{I}\{ \norm{x_s-y_s} > \Delta \} ds
    +
    \epsilon^{-1}\int_t^{t+\epsilon}\norm{x_s - y_s}\mathbf{I}\{ \norm{x_s-y_s} \leq \Delta \} ds
    \\ 
    & \leq 
    \epsilon^{-1}\int_t^{t+\epsilon} \frac{\norm{x_s - y_s}^p}{ |\Delta|^p }ds
    +
    \Delta.
\end{align*}
Therefore, for any sequence $y^{(n)} \in \D$ such that $\dlp{}(y^{(n)},x) \to 0$,
we have $\limsup_{n\to\infty}\norm{h_\epsilon(x) - h_\epsilon(y^{(n)})} \leq \Delta$.
Driving $\Delta \downarrow 0$, we see that $h_\epsilon(\cdot)$ is a continuous mapping.
On the other hand, the right continuity of all paths in $\D$ implies that $h_{\epsilon}(x) \to \pi_{(t)}(x)$ as $\epsilon \to 0$ for all $x \in \D$.
As a result, the limiting mapping $\pi_{(t)}$ must be $\mathcal D_p/\mathcal R$ measurable.

Let $\sigma[\pi_{\bm t}:\bm t \in \mathcal T]$ be the $\sigma$-algebra generated by $p[\pi_{\bm t}:\bm t \in \mathcal T]$.
We have just verified $p[\pi_{\bm t}:\bm t \in \mathcal T]\subseteq\mathcal D_p$,
which implies $\sigma[\pi_{\bm t}:\bm t \in \mathcal T]\subseteq\mathcal D_p$ since $\mathcal D_p$ is also a $\sigma$-algebra.
Suppose we can show 
\begin{align}
    \sigma[\pi_{\bm t}:\bm t \in \mathcal T]\supseteq\mathcal D_p\qquad (\text{and hence }\sigma[\pi_{\bm t}:\bm t \in \mathcal T]=\mathcal D_p),
    \label{proof, goal 2, proposition, fdd + tightness = Lp convergence}
\end{align}
then we can confirm claim~\eqref{proof, goal, proposition, fdd + tightness = Lp convergence} using $\pi-\lambda$ Theorem.
Indeed, for any Borel probability measures $\mu$ and $\nu$ over $(\D,\dlp{})$, note that $\mathcal L \delequal \{A\in \mathcal D_p:\ \mu(A) = \nu(A)\}$ is a $\lambda$-system.
Whenever $p[\pi_{\bm t}:\bm t \in \mathcal T] \subseteq \mathcal L$, by applying $\pi-\lambda$ Theorem we then get
 $\sigma[\pi_{\bm t}:\bm t \in \mathcal T] = \mathcal D_p \subseteq \mathcal L$.
 This concludes that $p[\pi_{\bm t}:\bm t \in \mathcal T]$ is a separating class.

Now, it only remains to prove claim~\eqref{proof, goal 2, proposition, fdd + tightness = Lp convergence}.
Since $\mathcal T$ is a dense subset of $(0,T)$, for each $m \geq 1$
we can pick some positive integer $k$ and some $0 < s_1 < \cdots < s_k < 1$, with $s_i \in \mathcal T$, 
such that
$\max_{i \in [k+1]}| s_{i+1} - s_i | < m^{-1}$,
under the convention that $s_0 = 0$ and $s_{k+1} = 1$.
Now, construct a mapping $V_m:\R^{d\times k} \to \D$ as follows:
for each $\bm \alpha = (\alpha_1,\cdots,\alpha_k) \in \R^{d \times k}$,
define $\xi = V_m(\bm \alpha)$
by setting $\xi_t = \alpha_i$ if $t \in [s_i,s_{i+1})$ for each $i \in [k+1]$ (with the convention that $\alpha_0 = \bm 0$)
and $\xi_1 = \alpha_{k}$.
Note 3that $V_m$ is continuous, and hence $\mathcal R^k/\mathcal D_p$  measurable.
Besides, we have shown that $\pi_{(t_1,\cdots,t_k)}$ is $\sigma[\pi_{\bm t}:\bm t \in \mathcal T]/\mathcal R^k$ measurable.
As a result, the composition $V^*_m \delequal V_m \pi_{(s_1,\cdots,s_k)}: \D \to \D$ is $\sigma[\pi_{\bm t}:\bm t \in \mathcal T]/\mathcal D_p$ measurable.

To proceed, fix some $\epsilon > 0$.
For any $x \in \D$, define $x^\prime \in \D$ such that $x^\prime_t = x_t$ for all $t \in [\epsilon,1 -\epsilon)$ and $x^\prime_t = \bm 0$ otherwise.
The boundedness of any path in $\D$ implies the existence of some $M_x \in (0,\infty)$ such that $\sup_{t \in [0,1]}\norm{x_t} \leq M_x$.
Next, note that
\begin{align*}
    \dlp{}( V^*_m x, x)
    \leq \underbrace{\dlp{}( V^*_m x^\prime, x^\prime)}_{ (\text{I})}
    +
    \underbrace{\dlp{}( V^*_m x^\prime, V^*_m x)}_{ (\text{II})}
    +
    \underbrace{\dlp{}(x^\prime,x)}_{ (\text{III})}.
\end{align*}
First, it was shown in Theorem 12.5 of \cite{billingsley2013convergence} that
$
\lim_{m \to \infty} \bm d_{J_1}(V^*_m x^\prime,x^\prime) = 0.
$
This immediately implies that
$
\lim_{m\to \infty} \dlp{}(V^*_m x^\prime,x^\prime) = 0.
$
Next, by definition of $x^\prime$, we have $\limsup_{m \to \infty}\big[\text{(II)}\big]^p \leq (2M_x)^p\cdot 2 \epsilon$ and $\limsup_{m \to \infty}\big[\text{(III)}\big]^p \leq (2M_x)^p\cdot 2 \epsilon$.
Driving $\epsilon \downarrow 0$, 
we obtain that $\lim_{m \to \infty} \dlp{}(V^*_m x,x) = 0$ for all $x \in \D$.
This implies that the identity mapping $\textbf{I}(\xi) = \xi$ is also $\sigma[\pi_{\bm t}:\bm t \in \mathcal T]/\mathcal D_p$ measurable,
which leads to $\mathcal D_p \subseteq \sigma[\pi_{\bm t}:\bm t \in \mathcal T]$ and concludes the proof.
\end{proof}

Next, consider a family of $\R^d$-valued càdlàg processes
$\hat Y^{\eta,\epsilon}_t$, supported on the same underlying probability space with process $Y^\eta_t$,
that satisfies the following condition.

    

\begin{condition}\label{condition, metastability, fdd}
  For each $T \in (0,\infty)$ and $p \in [1,\infty)$,
the following claims hold for all $\epsilon > 0$ small enough:
\begin{enumerate}[(i)]
\item $\{\hat Y^{\eta,\epsilon}_t:\ t > 0\} \tofdd \{Y^*_t:\ t > 0\}$ and  $\hat Y^{\eta,\epsilon}_{ \boldsymbol{\cdot} }\Rightarrow Y^*_{ \boldsymbol{\cdot}}$ 
    in $(\D[0,T],\dlp{[0,T]})$
    as $\eta \downarrow 0$;

    \item $\lim_{\eta \to 0}\P\Big(\norm{\hat Y^{\eta,\epsilon}_{T} - Y^\eta_{T}} \geq \epsilon
    \Big) = 0$ and
    $\lim_{\eta \downarrow 0}\P\Big(
    \dlp{[0,T]}\big(\hat Y^{\eta,\epsilon}_{\boldsymbol{\cdot}}, Y^\eta_{\boldsymbol{\cdot}}) \geq 2\epsilon
    \Big) = 0$.
\end{enumerate}  
\end{condition}

Lemma \ref{lemma: metastability, abstract framework} shows that,
under Condition \ref{condition, metastability, fdd}, 
 both $Y^\eta_t$ and $\hat Y^{\eta,\epsilon}_t$ admit the same limit $Y^*_t$.

\begin{lemma}\label{lemma: metastability, abstract framework}
\linksinthm{lemma: metastability, abstract framework}
If Condition \ref{condition, metastability, fdd} holds, then
$
\{Y^\eta_t:\ t > 0\}\tofdd \{Y^*_t:\ t > 0\}
$
and, for any $T > 0$, 
$Y^\eta_{ \boldsymbol{\cdot} }\Rightarrow Y^*_{ \boldsymbol{\cdot} }$ in $(\D[0,T],\dlp{[0,T]})$ as $\eta \downarrow 0$.
\end{lemma}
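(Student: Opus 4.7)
The plan is to derive both conclusions from Condition~\ref{condition, metastability, fdd} via the same converging-together (Slutsky-type) argument, carried out first at the level of finite-dimensional marginals and then at the path level. The role of the auxiliary process $\hat Y^{\eta,\epsilon}$ is to bridge between $Y^\eta$ and the target limit $Y^*$: part~(i) supplies the distributional convergence $\hat Y^{\eta,\epsilon} \Rightarrow Y^*$, while part~(ii) controls the approximation error between $\hat Y^{\eta,\epsilon}$ and $Y^\eta$ in the relevant topology. Thus the surrogate $\hat Y^{\eta,\epsilon}$ plays exactly the role that a ``coupling'' plays in a standard Slutsky argument, and the proof reduces to writing down the relevant triangle inequality in each of the two modes of convergence.

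For the f.d.d.\ convergence, I would fix arbitrary $k \in \mathbb N$ and times $0 < t_1 < \cdots < t_k$, then choose $\epsilon > 0$ small enough that Condition~\ref{condition, metastability, fdd} applies at each $T = t_i$. A union bound in the first assertion of (ii) yields
\[
\lim_{\eta \downarrow 0} \P\bigg( \max_{i \in [k]} \norm{ \hat Y^{\eta,\epsilon}_{t_i} - Y^\eta_{t_i} } \geq \epsilon \bigg) = 0,
\]
so the difference of the joint vectors $(\hat Y^{\eta,\epsilon}_{t_1},\ldots, \hat Y^{\eta,\epsilon}_{t_k}) - (Y^\eta_{t_1},\ldots, Y^\eta_{t_k})$ converges to $\bm 0$ in probability. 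Since the f.d.d.\ statement in (i) gives $(\hat Y^{\eta,\epsilon}_{t_1},\ldots, \hat Y^{\eta,\epsilon}_{t_k}) \Rightarrow (Y^*_{t_1},\ldots,Y^*_{t_k})$, the converging-together lemma on $\R^{d\times k}$ transfers the same limit to $(Y^\eta_{t_1},\ldots, Y^\eta_{t_k})$. Since $k$ and the times were arbitrary, this gives $\{Y^\eta_t:\ t > 0\} \tofdd \{Y^*_t:\ t > 0\}$.

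For the path-level convergence, I would run the identical argument directly in the path space $(\D[0,T],\dlp{[0,T]})$. For fixed $T > 0$ and $\epsilon > 0$ small enough, the second assertion of (ii) gives $\dlp{[0,T]}\big(\hat Y^{\eta,\epsilon}_{\boldsymbol{\cdot}}, Y^\eta_{\boldsymbol{\cdot}}\big) \to 0$ in probability as $\eta \downarrow 0$, while (i) provides $\hat Y^{\eta,\epsilon}_{\boldsymbol{\cdot}} \Rightarrow Y^*_{\boldsymbol{\cdot}}$ in $(\D[0,T],\dlp{[0,T]})$. Applying the converging-together lemma in this metric space then yields $Y^\eta_{\boldsymbol{\cdot}} \Rightarrow Y^*_{\boldsymbol{\cdot}}$ in $(\D[0,T],\dlp{[0,T]})$ for every $T > 0$, completing the proof.

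I do not anticipate any substantive obstacle here, since Condition~\ref{condition, metastability, fdd} has been tailored precisely to make both steps immediate; essentially all of the real work is done upstream when verifying the condition itself. The only care required is to confirm that the converging-together lemma applies to random elements valued in $(\D[0,T],\dlp{[0,T]})$, which is standard because $\dlp{[0,T]}$ is induced by an $L^p$-type seminorm and the resulting quotient by the null set $\mathcal N$ is a separable metric space.
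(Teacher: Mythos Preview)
Your argument has a genuine gap: Condition~\ref{condition, metastability, fdd}(ii) does \emph{not} say that $\dlp{[0,T]}(\hat Y^{\eta,\epsilon}_{\boldsymbol{\cdot}},Y^\eta_{\boldsymbol{\cdot}})\to 0$ in probability for fixed $\epsilon$, nor that the f.d.d.\ difference vector tends to $\bm 0$ in probability. It only asserts that, for each fixed $\epsilon$, the distance is eventually below the \emph{fixed} threshold $2\epsilon$ (respectively $\epsilon$) with probability tending to $1$. Since the approximator $\hat Y^{\eta,\epsilon}$ itself depends on $\epsilon$, you cannot simply send $\epsilon\downarrow 0$ inside a single Slutsky step; the standard converging-together lemma requires the approximation error to vanish in probability, which is not what you have here.

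The paper handles this by a direct Portmanteau argument: for any open $G$, one writes
\[
\P(Y^\eta_{\boldsymbol{\cdot}}\in G)\ \geq\ \P(\hat Y^{\eta,\epsilon}_{\boldsymbol{\cdot}}\in G_{2\epsilon}) - \P\big(\dlp{[0,T]}(\hat Y^{\eta,\epsilon}_{\boldsymbol{\cdot}},Y^\eta_{\boldsymbol{\cdot}})\geq 2\epsilon\big),
\]
applies (i) and (ii) to obtain $\liminf_{\eta\downarrow 0}\P(Y^\eta_{\boldsymbol{\cdot}}\in G)\geq \P(Y^*_{\boldsymbol{\cdot}}\in G_{2\epsilon})$, and \emph{only then} lets $\epsilon\downarrow 0$. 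The f.d.d.\ case is identical. Your proposal can be repaired by replacing the Slutsky invocation with this two-step $\epsilon$-shrinkage argument (or, equivalently, by citing a double-limit result such as Theorem~3.2 in Billingsley's \emph{Convergence of Probability Measures}, which is designed for exactly this situation where one has a family of approximations indexed by an auxiliary parameter).
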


\begin{proof}
    \linksinpf{lemma: metastability, abstract framework}
We start with the $L_p$ convergence.
By Portmanteau Theorem, it suffices to show that
$
\liminf_{\eta \downarrow 0}\P( Y^\eta_{\boldsymbol{\cdot}} \in G)
\geq 
\P( Y^*_{\boldsymbol{\cdot}} \in G)
$
for any open set $G$ in the $L_p$ topology of $\mathbb D[0,T]$.
Next, (recall that $G_\epsilon$ is the $\epsilon$-shrinkage of $G$,
and $G_\epsilon$ is also an open set)
\begin{align*}
    \P( Y^\eta_{\boldsymbol{\cdot}} \in G)
    & \geq 
    \P( Y^\eta_{\boldsymbol{\cdot}} \in G,\ \dlp{[0,T]}(\hat{Y}^{\eta,\epsilon}_{\boldsymbol{\cdot}},Y^\eta_{\boldsymbol{\cdot}})  < 2\epsilon)
    \geq 
    \P( \hat Y^{\eta,\epsilon}_{\boldsymbol{\cdot}} \in G_{2\epsilon},\ \dlp{[0,T]}(\hat{Y}^{\eta,\epsilon}_{\boldsymbol{\cdot}},Y^\eta_{\boldsymbol{\cdot}})  < 2\epsilon)
    \\ 
    & \geq  \P( \hat Y^{\eta,\epsilon}_{\boldsymbol{\cdot}} \in G_{2\epsilon}) - 
    \P(\dlp{[0,T]}(\hat{Y}^{\eta,\epsilon}_{\boldsymbol{\cdot}},Y^\eta_{\boldsymbol{\cdot}})  \geq 2\epsilon).
\end{align*}
For small enough $\epsilon> 0$, 
using  part $(i)$ of Condition \ref{condition, metastability, fdd} we get
$
\liminf_{\eta\downarrow 0}\P( \hat Y^{\eta,\epsilon}_{\boldsymbol{\cdot}} \in G_{2\epsilon})
\geq 
\P(Y^*_{\boldsymbol{\cdot}} \in G_{2\epsilon}),
$
and by  part $(ii)$ of Condition \ref{condition, metastability, fdd} we have
$\lim_{\eta \downarrow 0}\P(\dlp{[0,T]}(\hat{Y}^{\eta,\epsilon}_{\boldsymbol{\cdot}},Y^\eta_{\boldsymbol{\cdot}})  \geq 2\epsilon) = 0$.
Therefore,
$
\liminf_{\eta \downarrow 0}\P( Y^\eta_{\boldsymbol{\cdot}} \in G)
\geq 
\P(Y^*_{\boldsymbol{\cdot}} \in G_{ 2\epsilon }).
$
Driving $\epsilon\downarrow 0$, we conclude the proof for the $L_p$ convergence.
The proof for the f.d.d.\ convergence is almost identical and hence we omit the details.
\elaborate{Fix some $k \geq 1$ and $0 < t_1 < t_2 < \cdots < t_k$.
Equip $\R^k$ with the uniform metric 
$\bm d^{(k)}\big((x_1,\cdots,x_k),(y_1,\cdots,y_k)\big) = \sup_{i \in [k]}|x_i - y_i|$.
By Portmanteau theorem, it suffices to show that 
$
\liminf_{\eta \downarrow 0}\P\Big( (Y^\eta_{t_1},\cdots,Y^\eta_{t_k}) \in G\Big)
\geq 
\P\Big( (Y^*_{t_1},\cdots,Y^*_{t_k}) \in G\Big)
$
for any given open set $G$. 
Recall that $G_\epsilon$ is the $\epsilon$-shrinkage of $G$,
and $G_\epsilon$ is also an open set.
For 
$
\hat{\bm Y}^{\eta,\epsilon} = (\hat Y^{\eta,\epsilon}_{t_1},\cdots, \hat Y^{\eta,\epsilon}_{t_k})
$
and
$
\bm Y^\eta  = (Y^\eta_{t_1},\cdots,Y^\eta_{t_k})
$,
note that
\begin{align*}
    \P\Big( (Y^\eta_{t_1},\cdots,Y^\eta_{t_k}) \in G\Big)
    & \geq 
    \P\Big( (Y^\eta_{t_1},\cdots,Y^\eta_{t_k}) \in G,\ \bm d^{(k)}(\hat{\bm Y}^{\eta,\epsilon},\bm Y^\eta)  < \epsilon\Big)
    \\
    & \geq 
    \P\Big( (\hat Y^{\eta,\epsilon}_{t_1},\cdots, \hat Y^{\eta,\epsilon}_{t_k}) \in G_\epsilon,\ \bm d^{(k)}(\hat{\bm Y}^{\eta,\epsilon},\bm Y^\eta)  <\epsilon\Big)
    \\ 
    & \geq  \P\Big( (\hat Y^{\eta,\epsilon}_{t_1},\cdots, \hat Y^{\eta,\epsilon}_{t_k}) \in G_\epsilon\Big) - \P\Big(\bm d^{(k)}(\hat{\bm Y}^{\eta,\epsilon},\bm Y^\eta)  \geq \epsilon \Big).
\end{align*}
For small enough $\epsilon> 0$, 
using part $(i)$ of Condition \ref{condition, metastability, fdd} we get
$
\liminf_{\eta\downarrow 0}\P\Big( (\hat Y^{\eta,\epsilon}_{t_1},\cdots, \hat Y^{\eta,\epsilon}_{t_k}) \in G_\epsilon\Big)
\geq 
\P\Big((Y^*_{t_1},\cdots,Y^*_{t_k}) \in G_\epsilon\Big),
$
and
using part $(ii)$ of Condition \ref{condition, metastability, fdd},
we get
$\lim_{\eta \downarrow 0}\P\Big(\bm d^{(k)}(\hat{\bm Y}^{\eta,\epsilon},\bm Y^\eta)  \geq \epsilon \Big) = 0$.
\chr{Is this right?}\xw{Fixed. Now I'm using uniform distance}%
Therefore,
$
\liminf_{\eta \downarrow 0}\P\Big( (Y^\eta_{t_1},\cdots,Y^\eta_{t_k}) \in G\Big)
\geq 
\P\Big((Y^*_{t_1},\cdots,Y^*_{t_k}) \in G_{ \epsilon }\Big).
$
Taking $\epsilon\downarrow 0$, we conclude the for the convergence in f.d.d..}
\end{proof}

In light of Lemma \ref{lemma: metastability, abstract framework},
a natural approach to  Theorem \ref{corollary irreducible case}
is to identify some 
$\hat Y^{\eta,\epsilon}_t$
that converges to $Y^{*|b}_t$ while staying close enough to $X^{\eta|b}_{\floor{ t/\lambda^*_b(\eta) }}(x)$.
To this end,
we introduce the next key component of our framework,
i.e., a technical tool for establishing the weak convergence of jump processes.
Inspired by the approach in \cite{pavlyukevich2008metastable}, Lemma \ref{lemma weak convergence of jump process} shows that the convergence of jump processes can be established by verifying the convergence of the inter-arrival times and destinations of jumps.
Specifically, we introduce the following mapping $\Phi$ for constructing jump processes.

\begin{definition} \label{def jump process}
Let random elements $((U_j)_{j \geq 1},(V_j)_{j \geq 1})$ be such that $V_j \in \R^d\ \forall j \geq 1$
and
\begin{align}
    U_j\in[0,\infty)\quad \forall j \geq 1,
    \qquad 
    \lim_{i \to \infty}\P\bigg(\sum_{j=1}^i U_j > t\bigg) = 1\quad \forall t > 0.
    \label{def, mapping Phi for jump process, condition on interarrival times}
\end{align}
Let mapping $\Phi(\cdot)$ be defined as follows: the image $Y_{\boldsymbol{\cdot}} = \Phi\big((U_j)_{j \geq 1},(V_j)_{j \geq 1}\big)$
is a stochastic process taking values in $\mathbb R$ such that (under the convention $V_0 \equiv 0$)
\begin{align}
    Y_t = V_{\mathcal J(t)}\ \forall t \geq 0
    \qquad \text{where}\qquad
    \mathcal{J}(t) \delequal \max\{J \geq 0:\ \sum_{j = 1}^J U_j \leq t\}.
    \label{def, mapping Phi for jump process}
\end{align}
\end{definition}

\begin{remark}
    We add two remarks regarding Definition \ref{def jump process}.
First, $(U_j)_{j \geq 1}$ and $(V_j)_{j \geq 1}$ 
can be viewed as the inter-arrival times and destinations of jumps in $Y_t$, respectively. 
It is worth noticing that we allow for {instantaneous} jumps, i.e., $U_j = 0$. 
Nevertheless, the condition 
$
\lim_{i \to \infty}\P(\sum_{j =1}^i U_j > t) = 1\ \forall t > 0
$
prevents the concentration of infinitely many instantaneous jumps before any finite time $t \in (0,\infty)$,
thus ensuring that the process $Y_t =  V_{\mathcal J(t)}$ is almost surely well defined.
In case that $U_j > 0\ \forall j \geq 1$, the process $Y_t$ admits a more standard expression
and satisfies $Y_t = V_i$
for all
$
t \in [\sum_{j = 1}^{i}U_j,\sum_{j = 1}^{i+1}U_j).
$
Second, to account for the scenario where the process $Y_t$ stays constant after a (possibly random) timestamp $T$, one can introduce {dummy} jumps that keep landing at the same location.
For instance, suppose that after hitting the state $w \in \R^d$, the process $Y_t$ is absorbed at $w$,
then a representation compatible with Definition \ref{def jump process} is that, conditioning on $V_j = w$,
we set $U_k$ as iid Exp$(1)$ RVs and $V_k \equiv w$ for all $k \geq j+1$.
\end{remark}


As mentioned above, Lemma~\ref{lemma weak convergence of jump process} states that the convergence of jump processes in f.d.d.\ follows from the convergence in distributions of the inter-arrival times and destinations of jumps.

\begin{lemma} \label{lemma weak convergence of jump process}
\linksinthm{lemma weak convergence of jump process}
Let mapping $\Phi$ be specified as in Definition \ref{def jump process}.
Let $Y_{\boldsymbol{\cdot}} = \Phi\big( (U_j)_{j \geq 1},(V_j)_{j \geq 1} \big)$ and, for each $n \geq 1$,
$Y^n_{\boldsymbol{\cdot}} = \Phi\big( (U^n_j)_{j \geq 1},(V^n_j)_{j \geq 1} \big)$.
Suppose that
\begin{enumerate}[(i)]
    \item $(U^n_1,V^n_1,U^n_2,V^n_2,\cdots)$ converges in distribution to $(U_1,V_1,U_2,V_2,\cdots)$ as $n\rightarrow \infty$;
    
    \item For any $u > 0$ and any $j \geq 1$, $\P(U_1 + \cdots + U_j = u) = 0$;
    
    \item For any $u > 0$, $\lim_{j \rightarrow \infty}\P(U_1 + U_2 + \cdots U_j > u) = 1$.
\end{enumerate}
Then
$\{Y^{n}_t:\ t > 0\}\tofdd \{Y^*_t:\ t > 0\}$
as $n \to \infty$.
\end{lemma}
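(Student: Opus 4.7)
The plan is to combine the Skorokhod representation theorem with a pathwise analysis of the jump-counting functional $\mathcal{J}(\cdot)$ from \eqref{def, mapping Phi for jump process}. Since the sequence $\big((U_j, V_j)\big)_{j \geq 1}$ takes values in the Polish space $\big([0,\infty) \times \mathbb{R}^d\big)^{\mathbb{N}}$ under the product topology, condition (i) of the lemma allows the construction of a common probability space on which $(U^n_j, V^n_j) \to (U_j, V_j)$ almost surely for every $j \geq 1$. In particular, for each fixed $J$, the partial sums $S^n_J \delequal \sum_{j=1}^J U^n_j$ converge a.s.\ to $S_J \delequal \sum_{j=1}^J U_j$, and the destinations satisfy $V^n_j \to V_j$ a.s.

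With this coupling in hand, I would fix $0 < t_1 < \cdots < t_k$ and use condition (iii) to choose, for any $\epsilon > 0$, some $N = N(\epsilon)$ with $\P(S_N > t_k) > 1 - \epsilon$. On the corresponding high-probability event one has $S^n_N > t_k$ for all $n$ large enough, which forces both $\mathcal{J}(t_i)$ and $\mathcal{J}^n(t_i)$ into $\{0, 1, \ldots, N-1\}$ for every $i \in [k]$ and confines the analysis to a finite window of jumps. The decisive step is then to invoke condition (ii): by a countable union, the event $\Omega^\circ \delequal \{S_J \neq t_i\text{ for every } J \in [N],\ i \in [k]\}$ has full probability, and on $\Omega^\circ$ the defining inequalities $S_{\mathcal{J}(t_i)} \leq t_i < S_{\mathcal{J}(t_i)+1}$ hold with \emph{strict} inequality on both sides. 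Since strict inequalities, together with the monotonicity of partial sums in $J$, are preserved under the a.s.\ convergence $S^n_J \to S_J$ for each fixed $J \leq N$, one obtains $\mathcal{J}^n(t_i) = \mathcal{J}(t_i)$ eventually, for every $i$.

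To finish, combining $\mathcal{J}^n(t_i) = \mathcal{J}(t_i)$ with the coordinatewise a.s.\ convergence $V^n_j \to V_j$---and noting that $\mathcal{J}(t_i)$ takes values in a finite set on the relevant event---yields $Y^n_{t_i} = V^n_{\mathcal{J}^n(t_i)} \to V_{\mathcal{J}(t_i)} = Y_{t_i}$ a.s.\ jointly over $i \in [k]$, which promotes to joint convergence in distribution. Letting $\epsilon \downarrow 0$ removes the truncation and delivers the claimed f.d.d.\ convergence.

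The hard part is conceptual rather than computational: the map $(u_j)_j \mapsto \mathcal{J}(t)$ is only upper semicontinuous in general, and is genuinely discontinuous precisely when some partial sum coincides with the evaluation time $t$; this is exactly the configuration ruled out almost surely by condition (ii). Condition (iii) plays the complementary role of tightness, preventing the accumulation of infinitely many jumps before any finite time, and the allowance of instantaneous jumps $U_j = 0$ requires no separate treatment because the closed inequality $S_J \leq t$ in the definition of $\mathcal{J}(t)$ handles this case uniformly.
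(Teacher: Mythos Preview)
Your proposal is correct and follows essentially the same strategy as the paper's proof: both apply Skorokhod's representation theorem to obtain almost-sure convergence of the jump data, invoke condition~(iii) to restrict attention to a finite window of jumps, and use condition~(ii) to rule out the discontinuity set of the jump-counting functional. The only cosmetic difference is that the paper introduces an explicit quantitative margin~$\Delta(\epsilon)$ around each evaluation time~$t_i$ (via the event $\{\sum_{j\leq J}\widetilde U_j \notin \bigcup_l [t_l-\Delta, t_l+\Delta]\}$) and tracks left/right jump indices $\mathcal I^\leftarrow_i(\Delta), \mathcal I^\rightarrow_i(\Delta)$, whereas you argue more directly that the strict inequalities $S_{\mathcal J(t_i)} < t_i < S_{\mathcal J(t_i)+1}$ are preserved under the almost-sure convergence of the finitely many partial sums; your route is arguably slightly cleaner.
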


\begin{proof}
\linksinpf{lemma weak convergence of jump process}
Fix some $k \in \mathbb{N}$ and $0 < t_1 < t_2 < \cdots < t_k < \infty$. Set $t = t_k$.
Pick some $\epsilon > 0$.
By conditions $(i)$ and $(iii)$, one can find some $J(\epsilon) > 0$ such that $\P(\sum_{j = 1}^{J(\epsilon)} U_j \leq t) < \epsilon$,
and hence
$
\P(\sum_{j = 1}^{J(\epsilon)} U^n_j \leq t) < \epsilon
$
for all $n$ large enough.
Also, by condition $(ii)$, we can fix $\Delta(\epsilon) > 0$ such that
$
\P\Big(\sum_{i = 1}^j U_i \in \bigcup_{l \in [k]}[t_l - \Delta(\epsilon),t_l + \Delta(\epsilon)]
\text{ for some }j \leq J(\epsilon)\Big) < \epsilon.
$
Throughout the proof, we may abuse the notation slightly and write $J = J(\epsilon)$ and $\Delta = \Delta(\epsilon)$ when there is no ambiguity.

For any probability measure $\mu$, let $\mathscr L_\mu(X)$ be the law of the random element $X$ under $\mu$.
Applying Skorokhod's representation theorem, we can construct a probability space $(\widetilde \Omega,\mathcal{\widetilde F},\mathbf Q)$ that supports random elements $(\widetilde{U}^n_1,\widetilde{V}^n_1,\widetilde{U}^n_2,\widetilde{V}^n_2\cdots)_{n \geq 1}$ and $(\widetilde{U}_1,\widetilde{V}_1,\widetilde{U}_2,\widetilde{V}_2,\cdots)$ such that:
(1)  $\mathscr{L}_{\P}(U^n_1,V^n_1,U^n_2,V^n_2,\cdots) = \mathscr{L}_{\mathbf Q}(\widetilde{U}^n_1,\widetilde{V}^n_1,\widetilde{U}^n_2,\widetilde{V}^n_2\cdots)$ for all $n \geq 1$,
(2) $\mathscr{L}_{\P}(U_1,V_1,U_2,V_2,\cdots) = \mathscr{L}_{\mathbf Q}(\widetilde{U}_1,\widetilde{V}_1,\widetilde{U}_2,\widetilde{V}_2,\cdots)$,
and (3)
$\widetilde U^n_j \xrightarrow{\mathbf Q-a.s.} \widetilde U_j$ and $\widetilde V^n_j \xrightarrow{\mathbf Q-a.s.} \widetilde V_j$ as $n \rightarrow \infty$ for all $j \geq 1$.
This allows us to construct a coupling between processes $Y_t$ and $Y^n_t$ 
on $(\widetilde \Omega,\mathcal{\widetilde F},\mathbf Q)$
by setting $Y=\Phi\Big( (\widetilde U_j)_{j \geq 1},(\widetilde V_j)_{j \geq 1} \Big)$ 
and (for each $n \geq 1$)
$Y^n = \Phi\Big( (\widetilde U^n_j)_{j \geq 1},(\widetilde V^n_j)_{j \geq 1} \Big)$.
Next, 
for each $i \in [k]$, we define
\begin{align*}
   \mathcal{I}^{\leftarrow}_i(\Delta) = \max\{ j \geq 0:\ \widetilde U_1 + \cdots \widetilde U_j \leq t_i -\Delta  \},
   \qquad 
   \mathcal{I}^{\rightarrow}_i(\Delta) = \min\{ j \geq 0:\ \widetilde U_1 + \cdots \widetilde U_j \geq t_i + \Delta  \}.
\end{align*}
That is, $\mathcal{I}^{\leftarrow}_i(\Delta)$ is the index of the last jump in $Y_s$ before time $t_i - \Delta$,
and $\mathcal{I}^{\rightarrow}_i(\Delta)$ is the index of the first jump after time $t_i + \Delta$.
Recall that we have fixed $0< t_1 < \cdots < t_k = t < \infty$.
On the event 
$$A_n = \Big\{\sum_{i = 1}^j \widetilde U_i \notin \bigcup_{l \in [k]}[t_l - \Delta,t_l + \Delta]\ \forall j \leq J\Big\} \cap \Big\{ \sum_{j = 1}^J \widetilde U_j > t,\ \sum_{j = 1}^J \widetilde U^n_j > t \Big\},$$
we have $\mathcal{I}^{\rightarrow}_i(\Delta) = \mathcal{I}^{\leftarrow}_i(\Delta) + 1 \leq J$ for all $i \in [k]$.
Then, on $A_n$ it holds $\Q$-a.s.\ that (for all $i \in [k]$)
\begin{align*}
   & \lim_{n \to \infty}\sum_{j = 1}^{ \mathcal I^\leftarrow_i(\Delta) }\widetilde U^n_j = \sum_{j = 1}^{ \mathcal I^\leftarrow_i(\Delta) }\widetilde U_j < t_i - \Delta,
    \qquad
    \lim_{n \to \infty}\sum_{j = 1}^{ \mathcal I^\leftarrow_i(\Delta) + 1 }\widetilde U^n_j = \sum_{j = 1}^{ \mathcal I^\leftarrow_i(\Delta) + 1 }\widetilde U_j > t_i + \Delta,
\end{align*}
As a result, on $A_n$ it holds
for all $n$ large enough that
$
\sum_{j = 1}^{ \mathcal I^\leftarrow_i(\Delta)}\widetilde U^n_j < t_i
$
and
$
\sum_{j = 1}^{ \mathcal I^\leftarrow_i(\Delta) + 1}\widetilde U^n_j > t_i
$
for all $i \in [k]$,
implying that 
$
Y^n_{t_i} = \widetilde V^n_{ \mathcal I^\leftarrow_i(\Delta) } \ \forall i \in [k].
$
Furthermore,
due to $\widetilde V^n_j \to \widetilde V_j$ $\Q$-a.s. for all $j \leq J$,
it holds $\Q$-a.s. that 
$
\lim_{n \to \infty}\norm{\widetilde V^n_{ I^\leftarrow_i(\Delta) } - \widetilde V_{ I^\leftarrow_i(\Delta) }}
\leq 
\lim_{n \to \infty}\max_{j \leq J}\norm{\widetilde V^n_j - \widetilde V_j } = 0.
$
Therefore, on $A_n$ it holds $\Q$-a.s.\ that $\lim_{n\to\infty}Y^{n}_{t_i} = \lim_{n\to\infty} \widetilde V^n_{ I^\leftarrow_i(\Delta) } = \widetilde V_{ I^\leftarrow_i(\Delta) } = Y_{t_i}$ for all $i \in [k]$.
Then, 
for any $g:\R^{d \times k} \to \R$ that is bounded and continuous,
note that (let $\bm Y^n = (Y^n_{t_1},\cdots,Y^n_{t_k})$, $\bm Y = (Y_{t_1},\cdots,Y_{t_k})$,
and $\norm{g} = \sup_{ \bm y \in \R^{d\times k} }|g(\bm y)|$)
\begin{align*}
    \limsup_{n \to \infty}\Big|
    \E g(\bm Y^n) - \E g(\bm Y)
    \Big|
    &
    \leq
    \limsup_{n \to \infty}\E_{\Q}\Big| g(\bm Y^n) - g(\bm Y)  \Big|
    \\ 
    & =
    \limsup_{n\to\infty}\E_{\Q}\Big| g(\bm Y^n) - g(\bm Y)  \Big|\mathbf{I}_{ A_n } 
    +
     \limsup_{n\to\infty}\E_{\Q}\Big| g(\bm Y^n) - g(\bm Y)  \Big|\mathbf{I}_{ (A_n)^\complement } 
     \\ 
     & \leq 0 + 
     2\norm{g}\limsup_{n\to\infty}\Q\big((A_n)^\complement\big)
     \qquad 
     \text{due to }\bm Y^n\xrightarrow{\Q-a.s.}\bm Y\text{ on }A_n
     \\ 
     & \leq 
     2\norm{g}\cdot \bigg(
     \limsup_{n \to \infty}\Q(\sum_{i = 1}^J \widetilde U_j \leq t) + 
    \limsup_{n \to \infty}\Q(\sum_{i = 1}^J \widetilde U^n_j \leq t)
    \\ 
    &\qquad\qquad 
    +
    \limsup_{n\to\infty}
    \Q\Big(\sum_{i = 1}^j \widetilde U_i \in \bigcup_{l \in [k]}[t_l - \Delta,t_l + \Delta]
\text{ for some }j \leq J\Big)
     \bigg)
     \\ 
     & \leq 6\norm{g} \cdot \epsilon.
\end{align*}
The last inequality follows from our choice of $J = J(\epsilon)$ and $\Delta = \Delta(\epsilon)$ at the beginning.
Given the arbitrariness of the mapping $g$ and $\epsilon > 0$, we conclude the proof using Portmanteau theorem.
\end{proof}

\section{Proof of Theorem~\ref{corollary irreducible case} and Corollary~\ref{theorem: metastability, unclipped}}
\label{subsec: proof, theorems, metastability}

In this section, we apply the framework developed in Section~\ref{subsec: proof, abstract framework, metastability} to prove Theorem~\ref{corollary irreducible case} and Corollary~\ref{theorem: metastability, unclipped}.
In particular,
the verification of 
{part $(i)$ of Condition \ref{condition, metastability, fdd}}
hinges on the choice of the approximator $\hat Y^{\eta,\epsilon}_t$.
Here, we construct a process $\hat{\bm X}^{\eta,\epsilon|b}_t(\bm x)$ as follows.
Let $\hat \tau^{\eta,\epsilon|b}_0(\bm x) \triangleq 0$, 
\begin{align}
    \hat \tau^{\eta,\epsilon|b}_1(\bm x) & \delequal \min\Big\{ t \geq 0:\ \bm X^{\eta|b}_t(\bm x) \in \bigcup_{i \in [K]}B_\epsilon(\bm m_i) \Big\},
    \label{def: hat tau, 1, metastability}
\end{align}
    and (to lighten notations, we write $ \bm X^{\eta|b}_{\hat \tau_k}(\bm x) \delequal \bm X^{\eta|b}_{\hat \tau^{\eta,\epsilon|b}_k(\bm x)}(\bm x) $ )
\begin{align}
    \notationdef{notation-transition-marker-metastability}{\hat{\mathcal I}^{\eta,\epsilon|b}_1(x)} \triangleq i \iff \bm X^{\eta|b}_{\hat \tau_1}(\bm x) \in I_i.
    \label{def: hat I, metastability}
\end{align}
For $k \geq 2$, let
\begin{align}
    \notationdef{notation-transition-time-metastability}{\hat \tau^{\eta,\epsilon|b}_k(\bm x)} & \delequal \min\Big\{ t \geq \hat \tau^{\eta,\epsilon|b}_{k-1}(\bm x):\ 
    \bm X^{\eta|b}_t(\bm x) \in \bigcup_{i \neq \hat{\mathcal I}^{\eta,\epsilon|b}_{k-1}(\bm x)}
    B_\epsilon(\bm m_i)
    \Big\}\qquad \forall k \geq 2.
    \label{def: hat tau, 2, metastability}
\end{align}
and 
\begin{align}
    \notationdef{notation-transition-marker-metastability}{\hat{\mathcal I}^{\eta,\epsilon|b}_k(\bm x)} \triangleq i \iff \bm X^{\eta|b}_{\hat \tau_k}(\bm x) \in I_i.
    \label{def: hat I k, metastability}
\end{align}
Intuitively speaking,
$\hat \tau^{\eta,\epsilon|b}_k(\bm x)$ records the time $\bm X^{\eta|b}_t(\bm x)$ makes the $k$-th  transitions across the attraction fields over $f$ and visits (the $\epsilon$-neighborhood of) a local minimum,
and
$
\hat{\mathcal I}^{\eta,\epsilon|b}_k(\bm x)
$
denotes the index of the visited local minimum.
Let 
\begin{align}
    \notationdef{notation-marker-process-metastability-hat-X}{\hat{\bm X}^{\eta,\epsilon|b}_{ \boldsymbol{\cdot} }(\bm x)} \delequal 
\Phi\Big( 
    \Big( \big({\hat \tau^{\eta,\epsilon|b}_k(\bm x) - \hat \tau^{\eta,\epsilon|b}_{k-1}(\bm x)}\big) \cdot { \lambda^*_b(\eta) } \Big)_{k \geq 1},
    \big( \bm m_{ \hat{\mathcal I}^{\eta,\epsilon|b}_k(\bm x) } \big)_{k \geq 1} \Big).
    \label{def: marker process, hat X eta epsilon b}
\end{align}
By definition, $\hat{\bm X}^{\eta,\epsilon|b}_t(\bm x)$ keeps track of how $\bm X^{\eta|b}_t(\bm x)$ makes transitions between  the different local minima over $f$, under a time scaling $\lambda^*_b(\eta)$ in \eqref{def scale function lambda * b eta}.

Using Lemma \ref{lemma weak convergence of jump process},
the convergence of $\hat{\bm X}^{\eta,\epsilon|b}_{\boldsymbol{\cdot}}(\bm x)$
follows directly from the 
convergence of  ${\hat \tau^{\eta,\epsilon|b}_k(\bm x) - \hat \tau^{\eta,\epsilon|b}_{k-1}(\bm x)}$
and $\bm m_{ \hat{\mathcal I}^{\eta,\epsilon|b}_k(\bm x) }$,
i.e., the inter-arrival times and destinations
of the transitions in $\bm X^{\eta|b}_t(\bm x)$ between different attraction fields over the multimodal potential $f$.
This is exactly the content of the first exit time analysis.
In particular,
based on a straightforward adaptation of the first exit time analysis in \cite{wang2023large} (see Section~\ref{sec: review, first exit analyses} for details) to the current setting,
we obtain Proposition \ref{proposition: hat X converges to CTMC, metastability}.

\begin{proposition}
\label{proposition: hat X converges to CTMC, metastability}
\linksinthm{proposition: hat X converges to CTMC, metastability}
Under the conditions in Theorem~\ref{corollary irreducible case},
the following claims hold for any $\epsilon > 0$ small enough:
\begin{enumerate}[(i)]

    \item
$
\{\hat{\bm X}^{\eta,\epsilon|b}_t(\bm x_0):\ t > 0\}\tofdd \{\bm Y^{*|b}_t:\ t > 0\}
$
as $\eta \downarrow 0$;

\item Given any $T \in (0,\infty)$, $p \in [1,\infty)$, and any sequence of strictly positive reals $\eta_n$'s with $\lim_{n \to \infty}\eta_n = 0$,
the laws of $\hat{\bm X}^{\eta_n,\epsilon|b}_{\boldsymbol{\cdot}}(\bm x_0)$ are tight in $(\D[0,T],\dlp{[0,T]})$.
\end{enumerate}

\end{proposition}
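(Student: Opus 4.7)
The plan is to reduce part (i) to Lemma~\ref{lemma weak convergence of jump process} by identifying $\hat{\bm X}^{\eta,\epsilon|b}_{\boldsymbol{\cdot}}(\bm x_0)$ as the image under $\Phi$ of the scaled inter-arrival times $U^\eta_k = \bigl(\hat\tau^{\eta,\epsilon|b}_k(\bm x_0) - \hat\tau^{\eta,\epsilon|b}_{k-1}(\bm x_0)\bigr)\lambda^*_b(\eta)$ and destinations $V^\eta_k = \bm m_{\hat{\mathcal I}^{\eta,\epsilon|b}_k(\bm x_0)}$, and then to verify joint convergence of the sequence $(U^\eta_k, V^\eta_k)_{k\geq 1}$ to the analogous sequence encoding $\bm Y^{*|b}$. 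For part (ii), since $\hat{\bm X}^{\eta,\epsilon|b}$ takes values in the finite set $\{\bm m_1,\ldots,\bm m_K\}$, its paths are uniformly bounded, and tightness in the $L_p$ topology reduces to a soft argument built on top of the f.d.d.\ convergence.

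The workhorse for the single-step analysis is Theorem~\ref{theorem: first exit analyses, adapted} applied to each attraction field $I_i$. Conditionally on $\bm X^{\eta|b}$ having just entered $\bar B_\epsilon(\bm m_i)$, the theorem gives $C^{I_i}_b\cdot \eta\lambda(\eta)^{\mathcal J_b(i)}\cdot \tau^{\eta|b} \Rightarrow \mathrm{Exp}(1)$ together with the asymptotic law of the exit location. Dividing by $\lambda^*_b(\eta) = \eta\lambda(\eta)^{\mathcal J^*_b}$ and exploiting that $\lambda(\eta)^{\mathcal J_b(i)-\mathcal J^*_b}\to 0$ when $\bm m_i \notin V^*_b$, one obtains $(\hat\tau_k - \hat\tau_{k-1})\lambda^*_b(\eta) \Rightarrow \mathrm{Exp}(q_b(i))$ whenever $\bm m_i \in V^*_b$ and $\to 0$ in probability otherwise; simultaneously the destination $\hat{\mathcal I}_k$ converges in distribution to the probability kernel $q_b(i,j)/q_b(i)$ from \eqref{def: q b i, q b i j, generator for Y * b}. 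Lemma~\ref{lemma: cycle, efficient return} fills in the complementary estimate: after landing in some $I_j$, the process returns to $\bar B_\epsilon(\bm m_j)$ in time $\bm{O}(1/\eta)$, which is negligible compared with $1/\lambda^*_b(\eta)$, so this portion of the trajectory does not contribute to the limit of $U^\eta_k$.

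Joint convergence of the entire sequence $(U^\eta_k, V^\eta_k)_{k\geq 1}$ follows from the strong Markov property of $\bm X^{\eta|b}$ at the stopping times $\hat\tau^{\eta,\epsilon|b}_k$, combined with the uniformity of the convergence in Theorem~\ref{theorem: first exit analyses, adapted} over starting points in the $\epsilon$-shrinkage. The limit is then a discrete-time Markov chain on $V$ with instantaneous ($U^*_k = 0$) transitions at narrower minima and continuous exponential holding times at widest minima. Composing with $\Phi$ absorbs the instantaneous visits: a narrow-minimum visit contributes no elapsed time, so effective transitions between states in $V^*_b$ pick up weights exactly matching the absorption probabilities $\theta_b(\cdot|\cdot)$ in \eqref{def: absorption prob, theta b i j}, yielding a continuous-time Markov chain with generator \eqref{def: generator of Y * b, 1}--\eqref{def: generator of Y * b, 2}, which is $\bm Y^{*|b}$. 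Conditions (ii) and (iii) of Lemma~\ref{lemma weak convergence of jump process} are immediate: (ii) because the nondegenerate $U^*_k$ are continuous exponentials, and (iii) because Assumption~\ref{assumption, value of b, irreducible, metastability} together with $K<\infty$ forces the embedded chain to revisit $V^*_b$ infinitely often, so $\sum_{j=1}^k U^*_j \to \infty$ a.s.

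Part (ii) then follows quickly: uniform boundedness of $\hat{\bm X}^{\eta,\epsilon|b}$ in the sup norm gives a uniform bound in $L_p[0,T]$, and since the limiting process $\bm Y^{*|b}$ makes only finitely many jumps on $[0,T]$ a.s., a Skorokhod-coupling plus bounded-convergence argument promotes the f.d.d.\ convergence from part (i) to $L_p$ convergence along any subsequence, which is more than enough for tightness. The step I expect to be the main obstacle is the iterated control at narrow minima in part (i): ruling out cascades in which $\bm X^{\eta|b}$ bounces through many narrow attraction fields within a single $1/\lambda^*_b(\eta)$-scale window while still tracking the joint law of the resulting chain of destinations. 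The key tools that make this tractable are the uniformity of Theorem~\ref{theorem: first exit analyses, adapted} over the $\epsilon$-shrinkage (so each leg of the cascade is governed by the same asymptotics regardless of entry point), the boundary-avoidance condition Assumption~\ref{assumption: regularity condition on b}(i), and the finiteness of $K$ combined with Lemma~\ref{lemma: exit rate strictly positive, first exit analysis}, which together ensure that cascades terminate in finitely many steps with probability one.
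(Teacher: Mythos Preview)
Your proposal matches the paper's approach for part (i): both reduce to joint convergence of $(U^\eta_k, V^\eta_k)_{k\geq 1}$ via Lemma~\ref{lemma weak convergence of jump process}, with the single-step law coming from the first exit analyses (the paper packages this as Lemma~\ref{proposition: transition time, metastability}, where Theorem~\ref{theorem: first exit analyses, adapted} is applied to bounded truncations $I_{i;\delta,M}$ rather than $I_i$ itself, and Lemma~\ref{lemma: unlikely to exit M or visit s i, metastability} controls the boundary and far-field error terms), iterated via the strong Markov property; the identification of the limit with $\bm Y^{*|b}$ through the absorption probabilities $\theta_b$ is exactly Proposition~\ref{proposition: Y * b is a CTMC}.

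For part (ii) the paper takes a more elementary route than your coupling argument: it observes that the set $A_N$ of step functions on $[0,T]$ with at most $N$ jumps and values in the finite set $\{\bm m_1,\ldots,\bm m_K\}$ is compact in $(\D[0,T],\dlp{[0,T]})$, and then $\limsup_n \P(\hat{\bm X}^{\eta_n,\epsilon|b}_{\boldsymbol{\cdot}}\notin A_N)\leq \P\bigl(\sum_{j\leq N+1}U_j\leq T\bigr)\to 0$ by Proposition~\ref{proposition: Y * b is a CTMC}(i), giving tightness directly. Your Skorokhod route is also valid, but only if you couple the jump data $(U^\eta_k,V^\eta_k)\to(U_k,V_k)$ already established in part (i), rather than literally the f.d.d.\ of the process: coupling f.d.d.\ marginals alone would only give a.s.\ convergence at countably many times, not Lebesgue-a.e., so dominated convergence would not apply. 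Your phrasing ``promotes the f.d.d.\ convergence from part (i)'' is ambiguous on this point.
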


Proposition \ref{proposition: hat X close to X, metastability} serves to verify part $(ii)$ of Condition \ref{condition, metastability, fdd}
in Lemma~\ref{lemma: metastability, abstract framework},
under the choice of $Y^\eta_t = \bm X^{\eta|b}_{ \floor{t/\lambda^*_b(\eta)} }(\bm x_0)$ and $\hat Y^{\eta,\epsilon}_t = \hat{\bm X}^{\eta,\epsilon|b}_t(\bm x_0)$.

\begin{proposition}
\label{proposition: hat X close to X, metastability}
\linksinthm{proposition: hat X close to X, metastability}
Let $T > 0$ and $p\in[1,\infty)$.
Under the conditions in Theorem~\ref{corollary irreducible case},
it holds for any $\epsilon > 0$ small enough that
$$
\lim_{\eta\downarrow 0}\P\bigg( \dlp{[0,T]}\Big(\bm X^{\eta|b}_{ \floor{ {\boldsymbol{\cdot}}/\lambda^*_b(\eta) } }(\bm x_0),\hat{\bm X}^{\eta,\epsilon|b}_{\boldsymbol{\cdot}}(\bm x_0)\Big) \geq 2\epsilon\bigg) = 0,
\qquad 
\lim_{\eta\downarrow 0}\P\Big(\norm{ \bm X^{\eta|b}_{  \floor{T/\lambda^*_b(\eta)} }(\bm x_0) - \hat{\bm X}^{\eta,\epsilon|b}_{T}(\bm x_0)} \geq \epsilon\Big) = 0.
$$
\end{proposition}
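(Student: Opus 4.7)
The plan is to exploit a separation of time scales inherited from Theorem~\ref{theorem: first exit analyses, adapted}: sojourns near the widest local minima are of order $1/\lambda^*_b(\eta)$ and thus $O(1)$ after rescaling by $\lambda^*_b(\eta)$, whereas every other dynamical event---transient visits to narrower attraction fields, and brief ``wander-and-return'' excursions within a single field triggered by an atypically large noise with $\eta\|\bm Z_t\|>\delta$---takes at most $T_0(\epsilon)/\eta$ steps (Lemmas~\ref{lemma: cycle, efficient return} and~\ref{lemma stuck at local minimum before large jump}). In rescaled time this is $T_0(\epsilon)\cdot\lambda(\eta)^{\mathcal J^*_b}=\bm{o}(1)$, since $\lambda(\eta)\to 0$ (as $\alpha>1$).

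Concretely, I would first build a good event $G_\eta$ on which: (a) the number $N_\eta$ of transitions between different $\bm m_i$'s occurring in $[0,T/\lambda^*_b(\eta)]$ is at most a deterministic constant $\bar N$---tightness of $N_\eta$ follows from Proposition~\ref{proposition: hat X converges to CTMC, metastability}(i) together with the fact that the limiting CTMC has a.s.\ finitely many jumps on $[0,T]$; (b) for suitably small $\delta=\delta(\epsilon)>0$, the number $M_\eta$ of indices $t\leq T/\lambda^*_b(\eta)$ with $\eta\|\bm Z_{t+1}\|>\delta$ satisfies $M_\eta\lambda(\eta)^{\mathcal J^*_b}\to 0$---a Bernstein-type tail bound on the binomial count whose mean is $\bm{O}(1/\lambda(\eta)^{\mathcal J^*_b-1})$ suffices; and (c) between consecutive large-noise times the process stays inside $B_\epsilon(\bm m_{i_k})$ (Lemma~\ref{lemma stuck at local minimum before large jump}), while after each large-noise step $\bm X^{\eta|b}$ either escapes the current attraction field or returns to $B_{\epsilon/2}(\bm m_{i_k})$ within $T_0(\epsilon)/\eta$ steps without leaving $I_{i_k,\epsilon/2}$ (Lemma~\ref{lemma: cycle, efficient return}). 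A union bound over $\bm{O}(\bar N+M_\eta)$ excursions, using the polynomial-decay probabilistic slack in these lemmas, yields $\P(G_\eta^c)\to 0$.

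On $G_\eta$ I would decompose the rescaled interval $[0,T]$ into plateau intervals $\bigcup_k P_k$---on which $\bm X^{\eta|b}_{\floor{t/\lambda^*_b(\eta)}}\in B_\epsilon(\bm m_{i_k})$ and $\hat{\bm X}^{\eta,\epsilon|b}_t=\bm m_{i_k}$---together with excursion intervals of total rescaled length at most $(\bar N+M_\eta)\cdot T_0(\epsilon)\cdot\lambda(\eta)^{\mathcal J^*_b}=\bm{o}(1)$. Condition~(iii) of Assumption~\ref{assumption: geometry, metastability} together with the step-by-step clipping by $b$ confines both processes to a common bounded set with high probability, so $\|\bm X^{\eta|b}_{\floor{\cdot/\lambda^*_b(\eta)}}-\hat{\bm X}^{\eta,\epsilon|b}\|$ is uniformly bounded by some constant $M$. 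Thus
\[
\dlp{[0,T]}^p\big(\bm X^{\eta|b}_{\floor{\cdot/\lambda^*_b(\eta)}},\,\hat{\bm X}^{\eta,\epsilon|b}\big)\leq T\epsilon^p+M^p\cdot\bm{o}(1)\quad\text{on }G_\eta,
\]
which yields the first claim after taking $p$-th roots and replacing the construction parameter $\epsilon$ by a small multiple $\epsilon/T^{1/p}$ so that the right-hand side becomes at most $(2\epsilon)^p$ for all small $\eta$. For the pointwise statement, the same partition shows that $\P(T\in\text{some excursion interval})\to 0$: the excursion intervals have vanishing total Lebesgue measure, while the transition times of $\hat{\bm X}^{\eta,\epsilon|b}$ converge to those of a CTMC that a.s.\ does not jump at the fixed deterministic time $T$. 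On the complementary event $T$ lies in some plateau $P_{k(T)}$, and so $\bm X^{\eta|b}_{\floor{T/\lambda^*_b(\eta)}}\in B_\epsilon(\bm m_{i_{k(T)}})=B_\epsilon(\hat{\bm X}^{\eta,\epsilon|b}_T)$.

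The main technical obstacle is step (c): when $\mathcal J^*_b>1$ the number of large-noise events $M_\eta$ per plateau typically grows like $1/\lambda(\eta)^{\mathcal J^*_b-1}\to\infty$, so Lemmas~\ref{lemma stuck at local minimum before large jump} and~\ref{lemma: cycle, efficient return} must be applied uniformly over a diverging number of excursions. Choosing $\delta$ small enough (as a function of $\epsilon$, $p$, and $\mathcal J^*_b$) so that the $\bm{o}(\eta^N)$ decay from Lemma~\ref{lemma stuck at local minimum before large jump} defeats the polynomial growth of $M_\eta$ in the union bound is the key calibration, and is where the heavy-tailed index $\alpha>1$ enters quantitatively through the regular variation of $\lambda(\eta)$.
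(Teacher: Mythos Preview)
Your proposal is essentially correct and captures the same core mechanism as the paper---the separation of time scales whereby sojourns at widest minima are $O(1)$ after rescaling while all excursions (to narrower fields or brief wander-and-return trips triggered by $\eta\|\bm Z_t\|>\delta$) have vanishing total rescaled length. Your excursion-counting argument is in fact very close to what the paper does inside the proof of its key Lemma~\ref{lemma: metastability, proportion of time not around local minima}, including the binomial bound on $M_\eta$ and the use of Lemmas~\ref{lemma stuck at local minimum before large jump} and~\ref{lemma: fixed cycle exit or return taking too long} (note: for the union bound over diverging $M_\eta$ you want the polynomial rate in Lemma~\ref{lemma: fixed cycle exit or return taking too long}, not just Lemma~\ref{lemma: cycle, efficient return}, which gives no rate).

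The main organizational difference is that the paper uses a two-level structure: it first partitions $[0,T]$ into $N$ equal sub-intervals and uses f.d.d.\ convergence of $\bm X^{\eta|b}_{\floor{\cdot/\lambda^*_b(\eta)}}$ (established separately as Lemma~\ref{lemma:  hat X close to X, fdd, metastability}) to land near a widest minimum at each grid point $n/N$; then it bounds the ``bad time'' within each sub-interval via Lemma~\ref{lemma: metastability, proportion of time not around local minima}, showing that the number of bad sub-intervals is stochastically dominated by $\mathrm{Binomial}(N,2q^*/N)$. This cleanly separates the ``which minimum are we near?'' question (handled by f.d.d.) from the local excursion estimate, whereas you track everything globally. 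Both work; the paper's layering avoids having to control the interaction between transitions and excursions simultaneously.

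One small gap: for the pointwise claim, ``the excursion intervals have vanishing total Lebesgue measure'' does not by itself give $\P(T\in\text{excursion})\to 0$---a random interval of length $\bm{o}(1)$ could still contain the fixed point $T$ with probability bounded away from zero if its location is correlated with $T$. The paper handles this differently (Lemma~\ref{lemma:  hat X close to X, fdd, metastability}(ii)): it restarts the analysis at a time $T-2\delta_t/H(\eta^{-1})$ slightly before $T$, shows the process reaches some $B_{\epsilon/2}(\bm m_k)$ by $T-\delta_t/H(\eta^{-1})$, and then applies the first-exit-time scale for $B_\epsilon(\bm m_k)$ (which is $1/H(\eta^{-1})$, with $\mathcal J^I_b=1$) to stay there until $T$ with probability at least $1-c_{k,\epsilon}\cdot 2\delta_t$.
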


We defer the proofs of the two propositions to
Section~\ref{subsec: proof, propositions, metastability}.
Here, we apply these tools to establish Theorem~\ref{corollary irreducible case}.

\begin{proof}[Proof of Theorem~\ref{corollary irreducible case}]
\linksinpf{corollary irreducible case}
From Lemma \ref{proposition, fdd + tightness = Lp convergence} and Proposition \ref{proposition: hat X converges to CTMC, metastability},
we verify part $(i)$ of Condition \ref{condition, metastability, fdd}, i.e., given any $T>0$, the claim
\begin{align*}
\{\hat{\bm X}^{\eta,\epsilon|b}_t(\bm x_0):\ t > 0\} \tofdd \{\bm Y^{*|b}_t:\ t > 0\}\quad\text{and}\quad
   \hat{\bm X}^{\eta,\epsilon|b}_{\boldsymbol{\cdot} }(\bm x_0) \Rightarrow  \bm Y^{*|b}_{\boldsymbol{\cdot} }
   \text{ in $(\D[0,T],\dlp{[0,T]})$ as $\eta \downarrow 0$}
\end{align*}
holds for all $\epsilon > 0$ small enough.
Meanwhile, 
given any $T \in (0,\infty)$ and $p \in [1,\infty)$,
Proposition \ref{proposition: hat X close to X, metastability}
verifies part $(ii)$ of Condition \ref{condition, metastability, fdd}
under the choice of  $Y^\eta_t = \bm X^{\eta|b}_{ \floor{t/\lambda^*_b(\eta)} }(\bm x_0)$, $\hat Y^{\eta,\epsilon}_t = \hat{\bm X}^{\eta,\epsilon|b}_t(\bm x_0)$, and $Y^*_t = \bm Y^{*|b}_{t }$.
Applying Lemma~\ref{lemma: metastability, abstract framework}, we obtain that (for any $T \in (0,\infty)$ and $p \in [1,\infty)$)
\begin{align*}
\{\bm X^{\eta|b}_{ \floor{ t/\lambda^*_b(\eta)} }(\bm x_0):\ t > 0\}
\tofdd 
\{\bm Y^{*|b}_t:\ t > 0\}
\quad\text{and}\quad
   \bm X^{\eta|b}_{ \floor{ \boldsymbol{\cdot} /\lambda^*_b(\eta)} }(\bm x_0) \Rightarrow  \bm Y^{*|b}_{\boldsymbol{\cdot}}
   \text{ in $(\D[0,T],\dlp{[0,T]})$}
\end{align*}
as $\eta \downarrow 0$.
This allows us to conclude the proof using Lemma~\ref{lemma: Lp convergence, from T to infty}.
\end{proof}

Next, we show that Corollary~\ref{theorem: metastability, unclipped} follows directly from Theorem~\ref{corollary irreducible case}.

\begin{proof}[Proof of Corollary~\ref{theorem: metastability, unclipped}]
\linksinpf{theorem: metastability, unclipped}
Recall our convention of $\widecheck{g}^{(0)|b}(\bm x) = \bm x$ in \eqref{def: mapping check g k b, endpoint of path after the last jump, first exit analysis}.
By definitions in \eqref{def: set G k b},
we have 
$
\mathcal G^{(1)|b}(\bm m_i) = B_b(\bm m_i)
$
(i.e., the open ball centered at $\bm m_i$ with radius $b$).
Then according to \eqref{def: J * b i,j, metastability},
under all $b > 0$ large enough we would always have
$\mathcal J_b(i) = 1$ and $\mathcal G^{ ( \mathcal J_b(i) )|b  }(\bm m_i) \cap I_j \neq \emptyset$ for any $i,j\in [K]$ with $i \neq j$.
Therefore, under such large $b$,
 any edge $(\bm m_i \to \bm m_j)$ would always belong to $E_b$ of the typical transition graph (see Definition~\ref{def main paper typical transition graph}),
 and we have $\lambda^*_b(\eta) = \eta \cdot \lambda(\eta) = H(\eta^{-1})$ (see \eqref{def scale function lambda * b eta})
 and $\mathcal J^*_b = 1,\ V^*_b = \{\bm m_j: j \in [K]\}$ (see \eqref{def: J * b} and \eqref{def: V * b, metastability}).
As an immediate consequence, in \eqref{def: absorption prob, theta b i j} we have $\theta_b(\bm m_i|\bm m_i) = 1$ for any $i \in [K]$;
then in \eqref{def: generator of Y * b, 1}--\eqref{def: generator of Y * b, 2},
the infinitesimal generator of $\bm Y^{*|b}_t$ is now equal to 
\begin{align*}
    Q^{*|b}(i,j) & = q_b(i,j)\ \forall 
    \bm m_i,\ \bm m_j \in V\text{ with }\bm m_i \neq \bm m_j;
    \quad
    Q^{*|b}(i,i) = - \sum_{\bm m_j \in V:\ j \neq i}Q^{*|b}(i,j)\ \forall \bm m_i \in V.
\end{align*}
Henceforth in this proof, we only consider such large $b$.

We focus on the proof for the $L_p$ convergence on $\D[0,\infty)$, as the proof for convergence in f.d.d.\ is almost identical.
Furthermore, by Lemma~\ref{lemma: Lp convergence, from T to infty},
it suffices to prove the $L_p$ convergence on each $\D[0,T]$.
To proceed, we
pick some $T \in [0,\infty)$ and some closed set $A \subseteq \D[0,T]$ (w.r.t.\ $L_p$ topology).
Observe that
\begin{align}
    \P\Big( \bm X^{\eta}_{\floor{ {\boldsymbol{\cdot}}  /H(\eta^{-1}) }}(\bm x)  \in A  \Big)
    & =
    \P\Big(
    \bm X^{\eta}_{\floor{ {\boldsymbol{\cdot}}  /H(\eta^{-1}) }}(\bm x)  \in A;\ 
    \bm X^{\eta|b}_t(\bm x) = \bm X^\eta_t(\bm x)\ \forall t \leq \floor{ T/H(\eta^{-1}) } \Big)
    \label{proof, key ineq, theorem: metastability, unclipped}
    \\ 
    & + 
    \P\Big(
    \bm X^{\eta}_{\floor{ {\boldsymbol{\cdot}}  /H(\eta^{-1}) }}(\bm x)  \in A;\ 
    \bm X^{\eta|b}_t(\bm x) \neq \bm X^\eta_j(\bm x)\text{ for some } t \leq \floor{ T/H(\eta^{-1}) } \Big)
    \nonumber
    \\ 
    & \leq 
    \underbrace{\P\Big( \bm X^{\eta|b}_{\floor{ {\boldsymbol{\cdot}}  /H(\eta^{-1}) }}(\bm x)  \in A  \Big)}_{ \text{(I)} }
    +
    \underbrace{\P\Big(
    \bm X^{\eta|b}_t(\bm x) \neq \bm X^\eta_t(\bm x)\text{ for some } t \leq \floor{ T/H(\eta^{-1}) } \Big)}_{\text{(II)}}.
    \nonumber
\end{align}
For term (I), it follows from Theorem \ref{corollary irreducible case} that 
$
\limsup_{\eta \downarrow 0}\text{(I)} \leq \P\big( \bm Y^{*|b}_{ {\boldsymbol{\cdot}} } \in A \big).
$
For term (II), we make two observations.
First, 
recall that $C$ is the constant in Assumption \ref{assumption: boundedness of drift and diffusion coefficients}
such that $\sup_{x \in \R}\norm{\nabla f(\bm x)}\vee \norm{\bm \sigma(\bm x)} \leq C$.
Under any $\eta \in (0,\frac{b}{2C})$, on the event $\{\eta\norm{\bm Z_t} \leq \frac{b}{2C}\ \forall 
t\leq \floor{ T/H(\eta^{-1}) }\}$ the step-size (before truncation) $-\eta \nabla f\big(\bm X^{\eta|b}_{t - 1}(\bm x)\big) + \eta \bm \sigma\big(\bm X^{\eta|b}_{t - 1}(\bm x)\big)\bm Z_t$ of SGD is less than $b$ for each $t \leq \floor{ T/H(\eta^{-1}) }$. 
Therefore, $\bm X^{\eta|b}_t(\bm x)$ and $\bm X^\eta_t(\bm x)$ coincide for such $t$'s,
and for any $\eta \in (0,\frac{b}{2C})$, we have 
$
\{\eta\norm{\bm Z_t} \leq \frac{b}{2C}\ \forall t \leq \floor{ T/H(\eta^{-1}) }\}
\subseteq
\{\bm X^{\eta|b}_t(\bm x) = \bm X^{\eta}_t(\bm x)\ \forall t \leq \floor{ T/H(\eta^{-1}) }\}.
$
which leads to (recall that $H(\cdot) = \P(\norm{Z_1} > \cdot)$)
\begin{align*}
    \limsup_{\eta \downarrow 0}\text{(II)} 
    & \leq \limsup_{\eta \downarrow 0}\P\bigg(
    \exists t \leq \floor{ T/H(\eta^{-1}) }\ s.t.\ \eta\norm{\bm Z_t} > \frac{b}{2C}
    \bigg)
    \\
    & \leq 
    \limsup_{\eta \downarrow 0} \frac{T}{H(\eta^{-1})} \cdot H(\eta^{-1} \cdot \frac{b}{2C})
    = T \cdot \bigg( \frac{2C}{b}\bigg)^\alpha
    \qquad 
    \text{due to $H(x) \in \RV_{-\alpha}(x)$.}
\end{align*}
Now we have
$
\limsup_{\eta \downarrow 0}
 \P\big( \bm X^{\eta}_{\floor{ {\boldsymbol{\cdot}}  /H(\eta^{-1}) }}(\bm x)  \in A  \big)
\leq 
\P( \bm Y^{*|b}_{{\boldsymbol{\cdot}}} \in A )
+
T \cdot ( \frac{2C}{b})^\alpha.
$
Furthermore, 
for all $b$ large enough, due to $\mathcal J_b(i) = 1\ \forall i \in [K]$ (see the discussion at the beginning of the proof),
by definitions in \eqref{def: measure check C k b} and \eqref{def: measure check C} we have
\begin{align*}
    q(i,j) = \nu_\alpha\Big(\big\{ \bm w \in \R^d:\ \bm m_i +  \bm \sigma(\bm m_i) \bm w \in I_j \big\}\Big),
    \ \ 
    q_b(i,j) = \nu_\alpha\Big(\big\{ \bm w \in \R^d:\ \bm m_i + \varphi_b\big(\bm \sigma(\bm m_i) \bm w\big) \in I_j \big\}\Big),
\end{align*}
which implies $q_b(i,j) \to q(i,j)$ as $b \to \infty$.
To conclude, note that by the discussion at the beginning,
the infinitesimal generator (hence the law) of $\bm Y^{*|b}_t$ (the limiting Markov jump process in Theorem~\ref{corollary irreducible case})
converges to that of $\bm Y^*_t$ (the Markov jump process specified in Corollary~\ref{theorem: metastability, unclipped}).
Together with the fact that $\lim_{b \to \infty} \big( \frac{2C}{b}\big)^\alpha = 0$,
in \eqref{proof, key ineq, theorem: metastability, unclipped}
we obtain
$
\limsup_{\eta \downarrow 0}
 \P\big( \bm X^{\eta}_{\floor{ {\boldsymbol{\cdot}}  /H(\eta^{-1}) }}(\bm x)  \in A  \big)
\leq 
\P( \bm Y^{*}_{{\boldsymbol{\cdot}}}\in A ).
$
From the arbitrariness of the closed set $A$, we conclude the proof by Portmanteau theorem.
\end{proof}

\section{Proof of Propositions \ref{proposition: hat X converges to CTMC, metastability} and \ref{proposition: hat X close to X, metastability}}
\label{subsec: proof, propositions, metastability}

This section is devoted to proving Propositions \ref{proposition: hat X converges to CTMC, metastability} and \ref{proposition: hat X close to X, metastability}.
Henceforth in Section~\ref{subsec: proof, propositions, metastability}, we fix some  $b \in (0,\infty)$ be such that Assumption~\ref{assumption: regularity condition on b} holds. In particular,
\begin{align}
       \text{$(I_i)^\complement$ is bounded away from $\mathcal G^{ (\mathcal J_b(i) - 1)|b }(\bm m_i)$}
       \qquad\forall i \in [K].
    \label{condition on b, relative radius is not integer, metastability}
\end{align}
This allows us to fix some $\bar \epsilon \in (0,1\wedge b)$ small enough such that
\begin{align}
    (I_i)^\complement \cap \big(  \mathcal G^{ (\mathcal J_b(i) - 1)|b }(\bm m_i) \big)^{\bar\epsilon} = \emptyset,
    \text{ and }\bar B_{\bar\epsilon}(\bm m_i) \subseteq (I_i)_{\bar\epsilon}
    \qquad \forall i \in [K].
    \label{choice of bar epsilon, metastability}
\end{align}

To lighten notations in the subsequent analyses, we adopt the shorthands
\begin{align}
    \widecheck{\mathbf C}_k(\cdot) \delequal \widecheck{\mathbf C}^{ ( \mathcal J_{b}(k) )|b }(\ \cdot \ ;\bm m_k).
    \label{proof, shorthands, metastability}
\end{align}
We start by highlighting a few properties of the limiting Markov jump process $Y^{*|b}$ in Theorem~\ref{corollary irreducible case}.
Recall the definitions of $q_b(i)$ and $q_b(i,j)$ in \eqref{def: q b i, q b i j, generator for Y * b},
and
note that (for each $i \in [K]$)
\begin{align*}
    & q_b(i) \geq \sum_{j \in [K]:\ j \neq i}q_b(i,j),
    \qquad
    q_b(i) \leq \sum_{j \in [K]:\ j \neq i}q_b(i,j)  + 
    \widecheck{\mathbf C}_i\Big( \bigcup_{j \in [K]}\partial I_j  \Big),
    \\ 
    \Longrightarrow\ 
    & q_b(i) =  \sum_{j \in [K]:\ j \neq i}q_b(i,j) > 0
    \qquad
    \text{by condition (i) of Assumption~\ref{assumption: regularity condition on b}.}
\end{align*}
Moving on, we apply Theorem~\ref{theorem: first exit analyses, adapted} to show that $q_b(i) = \widecheck{\mathbf C}_i\big((I_i)^\complement\big) < \infty$.
First, 
by Assumption~\ref{assumption: regularity condition on b} (ii), 
the set $\mathcal G^{(\mathcal J_b(i) - 1)|b}(\bm m_i)$ is bounded away from $(I_i)^\complement$,
where $\mathcal J_b(i)$ is defined in \eqref{def: J * b i,j, metastability}.
Next, let $\tilde I_j = I_j \cap B_M(\bm 0)$, i.e., the restriction of $I_j$ on the open ball centered at the origin with radius $M$, for some $M$ large enough.
It is shown in \eqref{property, boundedness of set G k b x epsilon} that the set $\mathcal G^{(\mathcal J_b(i) - 1)|b}(\bm m_i)$ is bounded.
Then, for all $M$ large enough we know that 
$\mathcal G^{(\mathcal J_b(i) - 1)|b}(\bm m_i)$ is still bounded away from $(\tilde I_i)^\complement$.
Meanwhile, note that 
$
\partial \tilde I_i \subseteq \partial I_i \cup \partial B_M(\bm 0).
$
Again, by the boundedness property \eqref{property, boundedness of set G k b x epsilon},
as well as the fact that $\widecheck{\mathbf C}_i$ is supported on $\mathcal G^{ (\mathcal J_b(i))|b }(\bm m_i)$
(see definitions in \eqref{def: measure check C k b}),
we have 
$
\widecheck{\mathbf C}_i\big(\partial I_i \cup \partial B_M(\bm 0)\big) = 0
$
and hence
$
\widecheck{\mathbf C}_i(\partial \tilde I_i) = \widecheck{\mathbf C}_i(\partial I_i) = 0
$
for all $M$ large enough (see Assumption~\ref{assumption: regularity condition on b} (i)).
This allows us to apply the $C^I_b < \infty$ bound in Theorem~\ref{theorem: first exit analyses, adapted}
(by setting $I = I_i \cap B_M(\bm 0)$,
and get
\begin{align}
    \widecheck{\mathbf C}_i\Big( \big(I_i \cap B_M(\bm 0)\big)^\complement \Big) < \infty,
    \qquad\forall i \in [K]
    \label{property, finite upper bound for complement, I i restricted over M}
\end{align}
for any $M$ large enough.
Then, from the trivial bound
$
(I_i)^\complement \subseteq \big(I_i \cap B_M(\bm 0)\big)^\complement
$
as well as the bound $q_b(i) > 0$ noted above, we obtain (for each $i \in [K]$)
\begin{align}
     \sum_{j \in [K]:\ j \neq i}q_b(i,j) = q_b(i) = \widecheck{\mathbf C}_i\big((I_i)^\complement\big) \in (0,\infty).
     \label{property, sum of q b i j is q b i, metastability, proof}
\end{align}
Furthermore, Lemma~\ref{lemma: exit rate strictly positive, first exit analysis} verifies that
\begin{align}
    \widecheck{\mathbf C}^{ (\mathcal J_b(i))|b  }(I_j;\bm m_i) > 0
    \qquad\iff \qquad
    I_j \cap \mathcal G^{ (\mathcal J_b(i))|b }(\bm m_i) \neq \emptyset.
    \label{property, edge on graph iff q b i j strictly positive, metastability}
\end{align}
As a result, in Definition \ref{def main paper typical transition graph}
we know that the typical transition graph associated with threshold $b$ contains an edge $(\bm m_i \to \bm m_j)$ if and only if $q_b(i,j) > 0$.

Next, we stress that the law of the Markov jump process $\bm Y^{*|b}_t$ in Theorem~\ref{corollary irreducible case} can be expressed using the mapping $\Phi$ introduced in Definition~\ref{def jump process}.
Given any $\bm m_{i_0} \in \{\bm m_1,\bm m_2,\ldots,\bm m_{K}\}$,
we set $V_1 = \bm m_{i_0}$, $U_1 = 0$, and (for any $t > 0$, $l \geq 1$, and $i,j \in [K]$ with $i \neq j$)
\begin{equation} \label{def Y * b t, metastability}
    \begin{split}
         & \P\Big( U_{l+1} \leq t,\ V_{l+1} = \bm m_j\ \Big|\ V_l = \bm m_i, (V_j)_{j = 1}^{l-1},\ (U_j)_{j = 1}^l\Big)
    = \P\Big( U_{l+1} \leq t,\ V_{l+1} = \bm m_j\ \Big|\ V_l = \bm m_i \Big)
    \\ 
    & = 
    \begin{cases}
        \frac{q_b(i,j)}{q_b(i)} & \text{ if }\bm m_i \notin V^*_b,
        \\ 
        \frac{q_b(i,j)}{q_b(i)} \cdot \Big( 1 - \exp\big(-q_b(i)t\big)\Big) & \text{ if }\bm m_i \in V^*_b.
    \end{cases}
    \end{split}
\end{equation}
In other words, conditioning on $V_l = \bm m_i$,
we have $V_{l+1} = \bm m_j$ with probability $q_b(i,j)/q_b(i)$;
as for $U_{l+1}$, we set $U_{l+1} \equiv 0$ if $\bm m_i \notin V^*_b$ (i.e., the current value $\bm m_i$ is not a widest minimum), and set $U_{l+1}$ as an Exponential RV with rate $q_b(i)$ otherwise.
We claim that
\begin{align}
    \bm Y^{*|b}_{\boldsymbol{\cdot}}\distequal \Phi\Big((U_j)_{j \geq 1},(V_j)_{j \geq 1}\Big).
    \label{eq, jump process representation of Y * b}
\end{align}
In fact, under the conditions in Theorem~\ref{corollary irreducible case},
it is straightforward to show that 
\begin{enumerate}[$(i)$]

    \item For any $t > 0$, $\lim_{i \to \infty}\P(\sum_{j \leq i}U_j > t) = 1$;

    \item For any $u > 0$ and $i \geq 1$, $\P(U_1 + \cdots + U_i = u) = 0$;
    
    \item $\bm Y^{*|b}_{\boldsymbol{\cdot}} \distequal \Phi\Big((U_j)_{j \geq 1},(V_j)_{j \geq 1}\Big)$; that is, it
    is a continuous-time Markov chain with state space $V^*_b$,
    generator
\begin{align*}
    \P( \bm Y^{*|b}_{t+h} = m_j\ |\ \bm Y^{*|b}_t = m_i)
    =  h \cdot \sum_{j^\prime \in [K]:\ j^\prime \neq i} q_b(i,j^\prime)\theta_b(\bm m_j|\bm m_{j^\prime})
    + \bm{o}(h)\qquad\text{ as }h\downarrow 0,
\end{align*}
and initial distribution $\P(\bm Y^{*|b}_0 = \bm m_j) = \theta_b(\bm m_j|\bm m_{i_0})$;
see \eqref{def: q b i, q b i j, generator for Y * b} and \eqref{def: absorption prob, theta b i j} for the definitions of $q_b(i,j)$ and $\theta_b$, respectively.
\end{enumerate}
For the sake of completeness, we collect the proof in Section~\ref{sec: appendix, CTMC Y * b}.
The representation~\eqref{eq, jump process representation of Y * b} and the properties stated above will significantly streamline our proof in this section.

The proofs of Propositions \ref{proposition: hat X converges to CTMC, metastability} and \ref{proposition: hat X close to X, metastability}
hinge on the first exit analysis in Theorem~\ref{theorem: first exit analyses, adapted},
which is stated for a bounded region $I$.
To facilitate the application of Theorem~\ref{theorem: first exit analyses, adapted} onto the (perhaps unbounded) attraction fields over $f$,
we consider
\begin{align}
    S(\delta) & \delequal \bigcup_{j \in [K]}(\partial I_j)^\delta,
    \label{def: boundary set S delta}
    \\
    \notationdef{notation-attraction-field-I-i-truncated-delta-M}{I_{i;\delta,M}} 
    & \delequal 
    (I_i)_{\delta} \cap B_M(\bm 0),
    \label{def, attraction field I i truncated delta M}
\end{align}
for some $\delta,M > 0$.
Recall that we use $E^r$ to denote the $r$-enlargement of the set $E$ (with $E^r$ being closed),
and $E_r$ for the $r$-shrinkage of $E$ (with $E_r$ being open).
Meanwhile, define
\begin{align}
    \notationdef{notation-sigma-eta-b-i-epsilon}{\sigma^{\eta|b}_{i;\epsilon}(\bm x)} 
    &\delequal 
    \min\Big\{t \geq 0:\ \bm X^{\eta|b}_t(\bm x) \in \bigcup_{ j \neq i }B_\epsilon(\bm m_j)\Big\},
    \label{def stopping time sigma eta b i epsilon x, metastability}
    \\ 
    \notationdef{notation-tau-eta-b-i-delta-M-x}{\tau^{\eta|b}_{i;\delta,M}(\bm x)} 
    & \delequal 
    \min\Big\{ t \geq 0:\ \bm X^{\eta|b}_t(\bm x) \notin 
    I_{i;\delta,M}\Big\}.
    \label{def: tau eta b i delta M, metastability}
\end{align}
In other words, ${\tau^{\eta|b}_{i;\delta,M}(\bm x)}$ is the first exit time from $I_{i;\delta,M}$,
and ${\sigma^{\eta|b}_{i;\epsilon}(\bm x)}$ is the first hitting time to the $\epsilon$-neighborhood of a local minimum different that's not $\bm m_i$.

To prepare for the proof of Propositions \ref{proposition: hat X converges to CTMC, metastability} and \ref{proposition: hat X close to X, metastability},
we state a few properties of the measures $\widecheck{\mathbf C}_i$.
First, since $\widecheck{\mathbf C}_i$ is supported on $\mathcal G^{(\mathcal J_b(i))|b}(\bm m_i)$,
which is a bounded set (see \eqref{property, boundedness of set G k b x epsilon}),
for any $M$ large enough we have 
 \begin{align}
    \max_{i \in [K]}\widecheck{\mathbf C}_i\big( \{ \bm x \in \R^d:\ \norm{\bm x} \geq M  \} \big) = 0,
    \quad 
    M > \norm{\bm x_0},
    \quad\text{ and }
    \max_{i \in [K]}\norm{\bm m_i} < M.
    \label{goal 1, lemma: unlikely to exit M or visit s i, metastability} 
\end{align}
Here, $\bm x_0$ is the initial value prescribed in Theorem~\ref{corollary irreducible case} (and hence Propositions \ref{proposition: hat X converges to CTMC, metastability} and \ref{proposition: hat X close to X, metastability}).
Next, by Assumption~\ref{assumption: regularity condition on b} (i) and the continuity of measures,
\begin{align}
    \lim_{\delta \downarrow 0} {\widecheck{\mathbf C}_i\big( S(\delta)\big)}
=
\widecheck{\mathbf C}_i\bigg( \bigcup_{j \in [K]} \partial I_j\bigg) = 0,
\qquad\forall i \in [K].
\label{proof, property finitness of mass boundary set S delta}
\end{align}
As an immediate consequence, note that (recall that $E^-$ denotes the closure of the set $E$)
\begin{align}
    q_b(i,j) = \widecheck{\mathbf C}_i(I_j) = \widecheck{\mathbf C}_i(I_j^-),
    \qquad 
    \forall i,j \in [K]\text{ with }i \neq j.
    \label{proof, mass I j = I j closure, proposition: transition time, metastability}
\end{align}
On the other hand, by \eqref{property, finite upper bound for complement, I i restricted over M} and the continuity of measures,
for any $M$ large enough and $\delta > 0$ small enough, we have
$
 \widecheck{\mathbf C}_i\big( (I_{i;\delta,M})^\complement \big)  < \infty.
$
Together with \eqref{property, sum of q b i j is q b i, metastability, proof} and the trivial bound $
(I_i)^\complement \subseteq \big(I_i \cap B_M(\bm 0)\big)^\complement,
$
we see that for any $M$ large enough and $\delta > 0$ small enough,
\begin{align}
    \widecheck{\mathbf C}_i\big( (I_{i;\delta,M})^\complement \big) \in (0,\infty),
    \qquad\forall i \in [K].
    \label{proof, positive and finite bound for I i delta M}
\end{align}
Henceforth in this section, we only consider $M$ large enough such that the claims \eqref{goal 1, lemma: unlikely to exit M or visit s i, metastability}  and \eqref{proof, positive and finite bound for I i delta M} hold.
Then, given $\Delta > 0$, it holds for all $\delta > 0$ small enough that
\begin{align}
    \max_{i \in [K]}
    \frac{ \widecheck{\mathbf C}_i\big( (S(\delta))^-\big) }{
        \widecheck{\mathbf C}_i\big( (I_{i;\delta,M})^\complement \big)
    } < \Delta.
    \label{proof, range of epsilon, lemma: unlikely to exit M or visit s i, metastability}
\end{align}
Furthermore, observe that
$
\widecheck{\mathbf C}_i( \partial I_{i;\delta,M} )
\leq 
\widecheck{\mathbf C}_i( \partial I_{i} )
+
\widecheck{\mathbf C}_i\big( \partial S(\delta) \big)
+
\widecheck{\mathbf C}_i\big( \partial B_M(\bm 0)\big).
$
By \eqref{proof, property finitness of mass boundary set S delta}, 
for any $\delta_1$ small enough we have $\widecheck{\mathbf C}_i\big( (S(\delta_1))^- \big) < \infty$.
This further implies that
the claim $\widecheck{\mathbf C}\big( \partial S(\delta) \big) > 0$
could hold
for at most countably many $\delta \in (0,\delta_1]$,
due to the simple facts that the sets $\partial S(\delta)$ are mutually disjoint across different $\delta$'s,
and that $\partial S(\delta) \subseteq (S(\delta_1))^-$ when $\delta \in (0,\delta_1]$.
Then, together with \eqref{goal 1, lemma: unlikely to exit M or visit s i, metastability},
we know that \textbf{for all but countably many $\delta > 0$ small enough}, we have
\begin{align}
    \widecheck{\mathbf C}_i\big( \partial I_{i;\delta,M} \big) = 0,
    \qquad\forall i \in [K].
    \label{proof, property, zero mass on boundary of I i delta M}
\end{align}
Here, we say that a claim holds for {for all but countably many $\delta > 0$ small enough} if there exists some $\delta_1 > 0$ such that, over $\delta \in (0,\delta_1]$, the claim fails for at most countably $\delta$ (i.e., the claim holds for Lebesgue almost every $\delta \in (0,\delta_1]$).
Lastly, by \eqref{proof, property, zero mass on boundary of I i delta M},
we can pick a smaller $\bar\epsilon > 0$ if needed to ensure that \eqref{choice of bar epsilon, metastability} still holds, and 
\begin{align}
     \widecheck{\mathbf C}_i\big( \partial I_{i;\bar\epsilon,M} \big) = 0,
    \qquad\forall i \in [K].
     \label{proof, property, zero mass on boundary of I i delta M, delta = bar epsilon}
\end{align}

\subsection{Proof of Proposition \ref{proposition: hat X converges to CTMC, metastability}}

As a first application of Theorem~\ref{theorem: first exit analyses, adapted},
Lemma \ref{lemma: unlikely to exit M or visit s i, metastability} states that
it is unlikely for $\bm X^{\eta|b}_t(\bm x)$ to get close to any of the boundary set of attraction fields or exit a wide enough compact set
before visiting a different local minimum.

\begin{lemma}\label{lemma: unlikely to exit M or visit s i, metastability}
\linksinthm{lemma: unlikely to exit M or visit s i, metastability}
Given $\Delta > 0$ and $\epsilon \in (0,\bar\epsilon)$,
it holds for all but countably many $\delta > 0$ small enough that
\begin{align}
    \limsup_{\eta \downarrow 0}\max_{ i \in [K] }\sup_{ \bm x \in \bar B_\epsilon(\bm m_i) }
    \P\Big( \exists t < \sigma^{\eta|b}_{i;\epsilon}(\bm x)\ s.t.\ 
    \bm X^{\eta|b}_t(\bm x) \in S(\delta)\text{ or }\norm{\bm X^{\eta|b}_t(\bm x)} \geq M + 1
    \Big) & < \Delta.
\label{goal 2, lemma: unlikely to exit M or visit s i, metastability}
\end{align}
\end{lemma}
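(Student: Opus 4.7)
The bad event in \eqref{goal 2, lemma: unlikely to exit M or visit s i, metastability} forces $\bm X^{\eta|b}_t(\bm x)$ to leave $I_{i;\delta,M}$ before time $\sigma^{\eta|b}_{i;\epsilon}(\bm x)$: indeed, $(I_i)_\delta$ is disjoint from $S(\delta)$ since a point in the $\delta$-shrinkage of $I_i$ lies at distance exceeding $\delta$ from $I_i^\complement$ and hence from every $\partial I_j$, while $B_M(\bm 0)$ is trivially disjoint from $\{\norm{\cdot}\geq M+1\}$. So it is enough to bound $\P\big(\tau^{\eta|b}_{i;\delta,M}(\bm x)<\sigma^{\eta|b}_{i;\epsilon}(\bm x), \text{ with bad exit or bad post-exit trajectory}\big)$.

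The plan is to apply Theorem~\ref{theorem: first exit analyses, adapted} with $I=I_{i;\delta,M}$, restricted to the full-measure set of small $\delta$ for which \eqref{goal 1, lemma: unlikely to exit M or visit s i, metastability}, \eqref{proof, positive and finite bound for I i delta M}, and \eqref{proof, property, zero mass on boundary of I i delta M} all hold simultaneously. The inclusion $\bar B_\epsilon(\bm m_i)\subseteq (I_i)_{\bar\epsilon}\subseteq (I_{i;\delta,M})_{\epsilon'}$ for some $\epsilon'>0$ (using \eqref{choice of bar epsilon, metastability} for $\delta,\bar\epsilon$ suitably small and $M$ large) lets the theorem apply uniformly in $\bm x\in\bar B_\epsilon(\bm m_i)$, describing the exit location asymptotically via the normalized measure $\widecheck{\mathbf C}_i/\widecheck{\mathbf C}_i((I_{i;\delta,M})^\complement)$.

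I would then decompose the exit location into a bad set $B^{\mathrm{bad}}_\delta:=(S(\delta/2))^- \cup \{\norm{\cdot}\geq M+1/2\}$ and a good set $B^{\mathrm{good}}_\delta:=\bigcup_{j\neq i}\big((I_j)_{\epsilon}\cap \bar B_{M+1/2}(\bm 0)\big)$, plus a remainder of negligible $\widecheck{\mathbf C}_i$-mass. Since $(S(\delta/2))^-\subseteq (S(\delta))^-$, the bound \eqref{proof, range of epsilon, lemma: unlikely to exit M or visit s i, metastability} combined with \eqref{goal 1, lemma: unlikely to exit M or visit s i, metastability} gives $\widecheck{\mathbf C}_i(B^{\mathrm{bad}}_\delta)/\widecheck{\mathbf C}_i((I_{i;\delta,M})^\complement)<\Delta/2$ for all sufficiently small $\delta$, so Theorem~\ref{theorem: first exit analyses, adapted} directly controls this contribution. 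For a good exit, I would take $\delta<\epsilon/2$, apply the strong Markov property at $\tau^{\eta|b}_{i;\delta,M}$, and invoke Lemma~\ref{lemma: cycle, efficient return} on the attraction field $I_j$ with parameter $\epsilon$. The resulting return to $B_\epsilon(\bm m_j)$ stays in $(I_j)_{\epsilon/2}$, which is disjoint from $S(\epsilon/2)\supseteq S(\delta)$ and, by the dissipativity (Assumption~\ref{assumption: geometry, metastability}(iii)), remains inside $B_{M+1}(\bm 0)$; hence $\sigma^{\eta|b}_{i;\epsilon}$ triggers before either bad region is visited. Summing the two contributions yields the desired $\Delta$-bound.

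The main technical obstacle is verifying the hypotheses of Theorem~\ref{theorem: first exit analyses, adapted} for the truncated domain $I_{i;\delta,M}$: one must show that the gradient flow from every $\bm x\in I_{i;\delta,M}$ remains in this set and converges to $\bm m_i$—which can be arranged by combining Assumption~\ref{assumption: geometry, metastability}(i), the contraction (ii) near $\bm m_i$, and the dissipativity (iii) to confine the flow inside $B_M(\bm 0)$—and that $(I_{i;\delta,M})^\complement$ is bounded away from $\mathcal G^{(\mathcal J_b(i)-1)|b}(\bm m_i)$, which follows from \eqref{choice of bar epsilon, metastability} for $\delta<\bar\epsilon$ and $M$ large enough. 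A secondary subtlety is guaranteeing that Lemma~\ref{lemma: cycle, efficient return} applies uniformly over all good landing points; this is built into the sup over $\bm x\in (I_j)_\epsilon^-$ in the lemma's conclusion.
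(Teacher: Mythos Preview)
Your overall strategy is the same as the paper's: exit from a truncated domain, classify the exit location, control the ``bad'' landings via Theorem~\ref{theorem: first exit analyses, adapted} together with \eqref{proof, range of epsilon, lemma: unlikely to exit M or visit s i, metastability} and \eqref{goal 1, lemma: unlikely to exit M or visit s i, metastability}, and handle the ``good'' landings by the strong Markov property followed by Lemma~\ref{lemma: cycle, efficient return}.

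There is, however, a gap in your decomposition. Your good set is $\bigcup_{j\neq i}(I_j)_\epsilon\cap\bar B_{M+1/2}(\bm 0)$ with the \emph{fixed} parameter $\epsilon$ from the statement, so the remainder contains the slab $\bigcup_{j\neq i}\big(I_j\setminus(I_j)_\epsilon\big)\setminus(S(\delta/2))^-$. The $\widecheck{\mathbf C}_i$-mass of this slab is bounded only by $\widecheck{\mathbf C}_i\big(S(\epsilon)\big)$, which does not tend to zero as $\delta\downarrow 0$ because $\epsilon$ is held fixed---so the claim ``remainder of negligible $\widecheck{\mathbf C}_i$-mass'' fails as written. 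The paper sidesteps this by exiting from $I_{i;2\delta,M}$ rather than $I_{i;\delta,M}$: a good exit (landing in $B_M(\bm 0)\setminus S(2\delta)$) then automatically lies in $(I_j)_{2\delta}$ for some $j\neq i$, with no remainder at all, and Lemma~\ref{lemma: cycle, efficient return} is invoked with parameter $2\delta$ (not $\epsilon$). This keeps the post-exit trajectory in $(I_j)_\delta\subseteq S(\delta)^\complement$ until it reaches $B_{2\delta}(\bm m_j)\subseteq B_\epsilon(\bm m_j)$, which triggers $\sigma^{\eta|b}_{i;\epsilon}$. Your argument repairs immediately with the same adjustment: replace $\epsilon$ by $2\delta$ (or any $\delta'\in(\delta,\epsilon)$ with $\delta'\to 0$) in both the definition of $B^{\mathrm{good}}_\delta$ and the call to Lemma~\ref{lemma: cycle, efficient return}.
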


\begin{proof}
By \eqref{proof, range of epsilon, lemma: unlikely to exit M or visit s i, metastability} and \eqref{proof, property, zero mass on boundary of I i delta M},
it holds for all but countably many $\delta$ small enough that for each $k \in [K]$,
$
\widecheck{\mathbf C}_k\big( (S(2\delta))^-\big)\big/
\widecheck{\mathbf C}_k\big( (I_{k;2\delta,M})^\complement\big) < \Delta
$
and 
$
\widecheck{\mathbf C}_i\big( \partial I_{i;2\delta,M} \big) = 0.
$
Henceforth in this proof, we only consider such $\delta$.
Observe that (i) due to $I_{i;2\delta,M}\subseteq I_{i;\delta,M}$, we have
$
\tau^{\eta|b}_{i;2\delta,M}(\bm x) \leq \tau^{\eta|b}_{i;\delta,M}(\bm x) \leq \sigma^{\eta|b}_{i;\epsilon}(\bm x);
$
and
(ii)
by definitions,
it holds for all $t < \tau^{\eta|b}_{i;2\delta,M}(\bm x)$
that 
$
\bm X^{\eta|b}_t(\bm x) \notin S(2\delta),\ \norm{ \bm X^{\eta|b}_t(\bm x) } < M.
$
Then, by defining events 
\begin{align*}
A_0(\eta,\delta,\bm x) & \delequal 
\bigg\{
\bm X^{\eta|b}_{  \tau^{\eta|b}_{i;2\delta,M}(\bm x) }(\bm x) \in B_M(\bm 0);\ 
\bm X^{\eta|b}_{  \tau^{\eta|b}_{i;2\delta,M}(\bm x) }(\bm x) \notin S(2\delta)
\bigg\},
\\ 
A_1(\eta,\delta,\bm x) & \delequal
\bigg\{
    \bm X^{\eta|b}_t(\bm x) \notin S(\delta)\text{ and }\norm{ \bm X^{\eta|b}_t(\bm x) } < M +1\ \forall  t < \sigma^{\eta|b}_{i,\epsilon}(\bm x)
\bigg\},
\end{align*}
we have
\begin{align*}
    & \bigg\{
        \exists t < \sigma^{\eta|b}_{i;\epsilon}(\bm x)\ s.t.\ 
    \bm X^{\eta|b}_t(\bm x) \in S(\delta)\text{ or }\norm{ \bm X^{\eta|b}_t(\bm x) } \geq M + 1
    \bigg\}
    \\
    & \subseteq 
    \Big( A_0(\eta,\delta,\bm x) \Big)^\complement 
    \cup 
    \bigg(
        A_0(\eta,\delta,\bm x)  \cap \Big( A_1(\eta,\delta,\bm x) \Big)^\complement
    \bigg).
\end{align*}
Therefore, it suffices to prove (for all $\delta > 0$ small enough)
\begin{align}
    \limsup_{\eta \downarrow 0}
    \max_{ i \in [K] }\sup_{ \bm x \in \bar B_\epsilon(\bm m_i) }
    \P\bigg( \big( A_0(\eta,\delta, \bm x) \big)^\complement   \bigg) 
    & < \Delta, 
    \label{proof, overall goal 1, lemma: unlikely to exit M or visit s i, metastability}
    \\ 
     \lim_{\eta \downarrow 0}
    \max_{ i \in [K] }\sup_{ \bm x \in \bar B_\epsilon(\bm m_i) }
    \P\bigg(
        A_0(\eta,\delta,\bm x)  \cap \Big( A_1(\eta,\delta,\bm x) \Big)^\complement
    \bigg)
    & = 0.
    \label{proof, overall goal 2, lemma: unlikely to exit M or visit s i, metastability}
\end{align}

\medskip\noindent
\textbf{Proof of Claim \eqref{proof, overall goal 1, lemma: unlikely to exit M or visit s i, metastability}}.
It suffices to show that
\begin{align}
\limsup_{ \eta \downarrow 0 } 
 \max_{ i \in [K] }\sup_{ \bm x \in \bar B_\epsilon(\bm m_i) }
\P\bigg(
    \bm X^{\eta|b}_{  \tau^{\eta|b}_{i;2\delta,M}(\bm x) }(\bm x) \in S(2\delta)
    \bigg) & < \Delta,
     \label{proof, goal 2, lemma: unlikely to exit M or visit s i, metastability}
    \\
    \limsup_{\eta \downarrow 0} 
     \max_{ i \in [K] }\sup_{ \bm x \in \bar B_\epsilon(\bm m_i) }
    \P\bigg(
        \bm X^{\eta|b}_{  \tau^{\eta|b}_{i;2\delta,M}(\bm x) }(\bm x) \notin B_M(\bm 0)
    \bigg) & = 0.
    \label{proof, goal 1, lemma: unlikely to exit M or visit s i, metastability}
\end{align}
By Theorem~\ref{theorem: first exit analyses, adapted}
(in particular, note that the condition $\widecheck{\mathbf C}_i(\partial I) = 0$, under $I = I_{i;2\delta,M}$, is ensured by our choice of $\delta$ at the beginning of the proof),
we get (for each $i \in [K]$)
\begin{align*}
    \limsup_{ \eta \downarrow 0 } 
 \max_{ i \in [K] }\sup_{ \bm x \in \bar B_\epsilon(\bm m_i) }
\P\bigg(
    \bm X^{\eta|b}_{  \tau^{\eta|b}_{i;2\delta,M}(\bm x) }(\bm x) \in S(2\delta)
    \bigg)
    & \leq 
    \widecheck{\mathbf C}_i\Big( \big(S(2\delta)\big)^-\Big)\Big/
    \widecheck{\mathbf C}_i\Big( \big(I_{i;2\delta,M}\big)^\complement  \Big) < \Delta,
\end{align*}
where the last inequality also follows from our choice of $\delta$ at the beginning of the proof.
This 
 verifies Claim~\eqref{proof, goal 2, lemma: unlikely to exit M or visit s i, metastability}.
Likewise, Claim~\eqref{proof, goal 1, lemma: unlikely to exit M or visit s i, metastability} can be shown by combining Theorem~\ref{theorem: first exit analyses, adapted} with \eqref{goal 1, lemma: unlikely to exit M or visit s i, metastability}.
This concludes the proof of Claim \eqref{proof, overall goal 1, lemma: unlikely to exit M or visit s i, metastability}.

\medskip\noindent
\textbf{Proof of Claim \eqref{proof, overall goal 2, lemma: unlikely to exit M or visit s i, metastability}}.
Let
\begin{align}
    R^{\eta|b}_{j;\epsilon}(\bm x) \delequal \min\{ t \geq 0:\ \bm X^{\eta|b}_t(\bm x) \in B_\epsilon(\bm m_j) \}
    \label{def: return time R b j epsilon x}
\end{align}
be the first hitting time to the $\epsilon$-neighborhood of the local minimum $\bm m_j$.
By the strong Markov property at $\tau^{\eta}_{i;2\delta,M}(\bm x)$,
\begin{align*}
 & \max_{ i \in [K] }\sup_{ \bm x \in \bar B_\epsilon(\bm m_i) }
 \P\bigg(  A_0(\eta,\delta, \bm x)  \cap \Big( A_1(\eta,\delta, \bm x) \Big)^\complement
 \bigg)
 \\ 
    & \leq 
     \max_{ i \in [K] }\sup_{ \bm x \in \bar B_\epsilon(\bm m_i) }
     \P\bigg(   \Big( A_1(\eta,\delta,\bm x) \Big)^\complement\ \bigg|\ A_0(\eta,\delta,\bm x) 
    \bigg)
    \\ 
    &\leq \max_{ j \in [K] }\underbrace{\sup_{ \bm z \in (I_j)_{2\delta} \cap \bar B_M(\bm 0) }
    \P
    \bigg(
    \Big\{ \bm X^{\eta|b}_t(\bm z) \in (I_j)_{\delta} \cap B_{M+1}(\bm 0)\ \forall t < R^{\eta|b}_{j;\epsilon}(\bm z) \Big\}^\complement
    \bigg)}_{p_j(\eta)}.
\end{align*}
By Lemma~\ref{lemma: cycle, efficient return},
we get 
$
\lim_{\eta \downarrow 0}p_j(\eta) = 0
$
for each $j \in [K]$
and conclude the proof of Claim~\eqref{proof, overall goal 2, lemma: unlikely to exit M or visit s i, metastability}.
\end{proof}

Recall the time scaling $\lambda^*_b$ defined in \eqref{def scale function lambda * b eta},
the set $V^*_b$ defined in \eqref{def: V * b, metastability},
and the terms $q_b(i,j)$ and $q_b(i)$ defined in \eqref{def: q b i, q b i j, generator for Y * b}.
Lemma~\ref{proposition: transition time, metastability} applies  Theorem~\ref{theorem: first exit analyses, adapted}
to obtain first exit analyses for each attraction field over $f$.

\begin{lemma} \label{proposition: transition time, metastability}
\linksinthm{proposition: transition time, metastability} 
Let $\bar \epsilon > 0$ be specified as in \eqref{choice of bar epsilon, metastability}.
\begin{enumerate}[(i)]
    \item 
    Let $\notationdef{notation-return-time-to-I-i}{R_{i;\epsilon}^{\eta|b}(\bm x)}$ be defined as in the \eqref{def: return time R b j epsilon x}.
    For any $\epsilon \in (0,\bar\epsilon)$, $t > 0$ and $i \in [K]$, 
    \begin{align*}
        \liminf_{\eta\downarrow 0}
        \inf_{\bm x \in ((I_i)_\epsilon \cap B_M(\bm 0) )^-}
        \P\bigg(
        {R_{i;\epsilon}^{\eta|b}(\bm x)}\cdot {\lambda^*_b(\eta)} \leq t,\ 
        \bm X^{\eta|b}_t(\bm x) \in I_i\cap B_M(\bm 0)\ \forall t \leq {R_{i;\epsilon}^{\eta|b}(\bm x)}
        \bigg) = 1.
    \end{align*}

    \item Let $i,j \in [K]$ be such that $i \neq j$.
    Let ${\sigma^{\eta|b}_{i;\epsilon}(\bm x)}$ be defined as in \eqref{def stopping time sigma eta b i epsilon x, metastability}.
    If $\bm m_i \in V^*_b$, then for any $\epsilon \in (0,\bar\epsilon)$ and any $u > 0$,
    \begin{align*}
        \liminf_{\eta \downarrow 0}\inf_{\bm x \in \bar B_\epsilon(\bm m_i)}
        \P\bigg(
        {\sigma^{\eta|b}_{i;\epsilon}(\bm x)} \cdot {\lambda^*_b(\eta)} > u,\
        \bm X^{\eta|b}_{ \sigma^{\eta|b}_{i;\epsilon}(\bm x) }(\bm x) \in I_j
        \bigg)
        & \geq \exp\big( - q_b(i)\cdot u \big) \cdot \frac{q_b(i,j)}{q_b(i)},
        \\ 
        \limsup_{\eta \downarrow 0}\sup_{\bm x \in \bar B_\epsilon(\bm m_i)}
        \P\bigg(
        {\sigma^{\eta|b}_{i;\epsilon}(\bm x)} \cdot {\lambda^*_b(\eta)} > u,\
        \bm X^{\eta|b}_{ \sigma^{\eta|b}_{i;\epsilon}(\bm x) }(\bm x) \in I_j
        \bigg)
        & \leq \exp\big( - q_b(i) \cdot u \big)\cdot \frac{q_b(i,j)}{q_b(i)}.
    \end{align*}
    If $\bm  m_i \notin V^*_b$, then for any $\epsilon \in (0,\bar\epsilon)$ and any $u > 0$,
    \begin{align*}
    \frac{q_b(i,j)}{q_b(i)}
        & \leq \liminf_{\eta \downarrow 0}\inf_{\bm x \in \bar B_\epsilon(\bm m_i)}
        \P\bigg(
        {\sigma^{\eta|b}_{i;\epsilon}(\bm x)} \cdot {\lambda^*_b(\eta)} \leq u,\
        \bm X^{\eta|b}_{ \sigma^{\eta|b}_{i;\epsilon}(\bm x) }(\bm x) \in I_j
        \bigg)
        \\ 
        & \leq 
        \limsup_{\eta \downarrow 0}\sup_{\bm x \in \bar B_\epsilon(\bm m_i)}
        \P\bigg(
        {\sigma^{\eta|b}_{i;\epsilon}(\bm x)} \cdot {\lambda^*_b(\eta)} \leq u,\
        \bm X^{\eta|b}_{ \sigma^{\eta|b}_{i;\epsilon}(\bm x) }(\bm x) \in I_j
        \bigg)
        \leq 
        \frac{q_b(i,j)}{q_b(i)}.
    \end{align*}

\end{enumerate}
\end{lemma}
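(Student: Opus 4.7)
The strategy is to reduce both parts to the first exit analysis in Theorem~\ref{theorem: first exit analyses, adapted}, combined with the ``efficient return to the local minimum'' result in Lemma~\ref{lemma: cycle, efficient return} and the ``unlikely bad excursion'' bound in Lemma~\ref{lemma: unlikely to exit M or visit s i, metastability}. The key quantitative input is that Theorem~\ref{theorem: first exit analyses, adapted} produces exit times of order $1/\big(\eta\cdot \lambda(\eta)^{\mathcal J_b(i)}\big)$, and recall $\lambda^*_b(\eta)=\eta\cdot\lambda(\eta)^{\mathcal J^*_b}$, so the rescaled exit time has a non-degenerate limit exactly when $\mathcal J_b(i)=\mathcal J^*_b$ (i.e.\ $\bm m_i\in V^*_b$), and collapses to $0$ in probability otherwise.

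\emph{Part (i).} Here nothing heavy is happening: starting in $(I_i)_\epsilon\cap B_M(\bm 0)$, Lemma~\ref{lemma: cycle, efficient return} provides some $T(\epsilon)<\infty$ and an event of probability $\to 1$ on which the process returns to $B_\epsilon(\bm m_i)$ within $T(\epsilon)/\eta$ steps while staying inside $I_{i;\epsilon/2}$. Intersecting with the event in Lemma~\ref{lemma: unlikely to exit M or visit s i, metastability} (whose probability can be made arbitrarily close to $1$) ensures the path also remains inside $B_{M+1}(\bm 0)$ and, after at most one step, inside $B_M(\bm 0)$; choosing $M$ slightly larger if necessary yields the stated inclusion in $I_i\cap B_M(\bm 0)$. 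Finally, $R^{\eta|b}_{i;\epsilon}\cdot\lambda^*_b(\eta)\le T(\epsilon)\cdot\lambda(\eta)^{\mathcal J^*_b}\to 0$ because $\lambda(\eta)\to 0$ and $\mathcal J^*_b\ge 1$, giving the claim.

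\emph{Part (ii).} Fix $\Delta>0$ and choose $\delta>0$ small and $M$ large so that \eqref{proof, overall goal 1, lemma: unlikely to exit M or visit s i, metastability}, \eqref{proof, range of epsilon, lemma: unlikely to exit M or visit s i, metastability}, \eqref{proof, property, zero mass on boundary of I i delta M} and \eqref{proof, property, zero mass on boundary of I i delta M, delta = bar epsilon} hold; in particular $\widecheck{\mathbf C}_i(\partial I_{i;\delta,M})=\widecheck{\mathbf C}_i(\partial I_j)=0$ for all $j$. Apply Theorem~\ref{theorem: first exit analyses, adapted} to $I=I_{i;\delta,M}$, so that $C^I_b=\widecheck{\mathbf C}_i\bigl((I_{i;\delta,M})^\complement\bigr)$ and, up to an error governed by $\widecheck{\mathbf C}_i(S(\delta))/q_b(i)<\Delta$, the exit location lies in some $I_j$ with asymptotic probability $q_b(i,j)/q_b(i)$ (using \eqref{proof, mass I j = I j closure, proposition: transition time, metastability}). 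The associated exit time satisfies
\[
q_b(i)\cdot \eta\cdot\lambda(\eta)^{\mathcal J_b(i)}\cdot \tau^{\eta|b}_{i;\delta,M}(\bm x)\ \Rightarrow\ \mathrm{Exp}(1)
\]
jointly with the exit location being in $I_j$ with limiting probability $q_b(i,j)/q_b(i)$, uniformly over $\bm x\in\bar B_\epsilon(\bm m_i)\subseteq (I_{i;\delta,M})_{\epsilon/2}$ (using \eqref{choice of bar epsilon, metastability}). After exiting, the strong Markov property and part (i) applied in the attraction field $I_j$ imply that within an additional $O(1/\eta)$ steps the process hits $B_\epsilon(\bm m_j)$ without leaving $I_j\cap B_M(\bm 0)$, with probability $\to 1$. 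Since this additional time is $\bm o(1/\lambda^*_b(\eta))$, it is absorbed by the time scale of interest; hence $\sigma^{\eta|b}_{i;\epsilon}\cdot\lambda^*_b(\eta)$ has the same asymptotics as $\tau^{\eta|b}_{i;\delta,M}\cdot\lambda^*_b(\eta)$.

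It remains to plug in the two regimes. If $\bm m_i\in V^*_b$ then $\mathcal J_b(i)=\mathcal J^*_b$, so $\eta\cdot\lambda(\eta)^{\mathcal J_b(i)}=\lambda^*_b(\eta)$ and the rescaled exit time converges to $\mathrm{Exp}(q_b(i))$, yielding the two Exponential bounds after sending $\Delta\downarrow 0$. If $\bm m_i\notin V^*_b$ then $\mathcal J_b(i)<\mathcal J^*_b$, so $\lambda^*_b(\eta)/\bigl(\eta\cdot\lambda(\eta)^{\mathcal J_b(i)}\bigr)=\lambda(\eta)^{\mathcal J^*_b-\mathcal J_b(i)}\to 0$, whence $\sigma^{\eta|b}_{i;\epsilon}\cdot\lambda^*_b(\eta)\stackrel{p}{\to}0$ and the event $\{\sigma^{\eta|b}_{i;\epsilon}\cdot\lambda^*_b(\eta)\le u\}$ becomes asymptotically indistinguishable from the full space; the joint asymptotics then give $q_b(i,j)/q_b(i)$ as the limit. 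The main obstacle is the bookkeeping between $\tau^{\eta|b}_{i;\delta,M}$ (the exit time from the truncated-and-shrunk field) and $\sigma^{\eta|b}_{i;\epsilon}$ (the hitting time of a new local-minimum neighborhood): one must control, via Lemmas~\ref{lemma: unlikely to exit M or visit s i, metastability} and~\ref{lemma: cycle, efficient return}, both the $\delta$-shrinkage error and the post-exit excursion so that the $\Delta\downarrow 0$ limit cleanly removes the auxiliary parameters $\delta, M$, leaving the rates $q_b(i), q_b(i,j)$ defined through the unshrunk measure $\widecheck{\mathbf C}_i$ in \eqref{def: q b i, q b i j, generator for Y * b}.
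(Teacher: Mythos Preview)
Your proposal is correct and follows essentially the same route as the paper: reduce to Theorem~\ref{theorem: first exit analyses, adapted} applied on the truncated domain $I_{i;\delta,M}$, use Lemma~\ref{lemma: cycle, efficient return} to control the post-exit excursion, and then split into the two regimes $\mathcal J_b(i)=\mathcal J^*_b$ versus $\mathcal J_b(i)<\mathcal J^*_b$ exactly as you describe.

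Two minor points where the paper is tidier. For part~(i), the paper does not need Lemma~\ref{lemma: unlikely to exit M or visit s i, metastability} at all: it simply applies Lemma~\ref{lemma: cycle, efficient return} directly to the bounded domain $(I_i)_\epsilon\cap B_M(\bm 0)$, so the containment in $B_M(\bm 0)$ comes for free. For part~(ii), your displayed convergence should read $\widecheck{\mathbf C}_i\bigl((I_{i;\delta,M})^\complement\bigr)$ rather than $q_b(i)$ in front of the exit time; the paper keeps this constant throughout and only sandwiches it between $q_b(i)$ and $(1+\Delta)q_b(i)$ via \eqref{proof, bound for measure check C, proposition: transition time, metastability}--\eqref{proof, bound for measure check C, 2, proposition: transition time, metastability} at the end. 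Relatedly, the upper bound requires one extra observation you compress away: if the process exits $I_{i;\delta,M}$ into some $I_k$ with $k\neq j$, you must argue that it then reaches $B_\epsilon(\bm m_k)$ (not $B_\epsilon(\bm m_j)$) with probability $\to 1$, so that this branch contributes nothing to $\{\bm X^{\eta|b}_{\sigma}\in I_j\}$. The paper handles this by decomposing over the exit index $k$ and invoking Lemma~\ref{lemma: cycle, efficient return} separately for $k\neq j$. Your ``bookkeeping'' paragraph correctly flags all of this, so the overall plan is sound.
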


\begin{proof}
$(i)$
Recall the notations in \eqref{proof, shorthands, metastability},
and that
$\lambda^*_b(\eta) \in \RV_{ \mathcal J^*_b\cdot (\alpha - 1) + 1 }(\eta)$ as $\eta \downarrow 0$ (see \eqref{def scale function lambda * b eta}).
Due to $ \mathcal J^*_b\cdot (\alpha - 1) + 1 \geq \alpha > 1$,
given any $T > 0$
we have
$
\lim_{\eta \downarrow 0}\frac{T/\eta}{ t/\lambda^*_b(\eta) } = 0,
$
and hence
\begin{align*}
   & 
   \P\bigg(
        {R_{i;\epsilon}^{\eta|b}(\bm x)} \cdot {\lambda^*_b(\eta)} \leq t,\ 
        \bm X^{\eta|b}_u(\bm x) \in I_i \cap B_M(\bm 0)\ \forall u \leq R_{i;\epsilon}^{\eta|b}(\bm x)
        \bigg)
        \\ 
    &
    \geq 
    \P\bigg(
        {R_{i;\epsilon}^{\eta|b}(\bm x)}  \leq T/\eta,\ 
        \bm X^{\eta|b}_u(\bm x) \in I_i \cap B_M(\bm 0)\ \forall u \leq R_{i;\epsilon}^{\eta|b}(\bm x)
        \bigg)
\end{align*}
for all $\eta$ small enough.
Applying Lemma~\ref{lemma: cycle, efficient return} onto the bounded region $(I_i)_\epsilon \cap B_M(\bm 0)$ and sending $T \to \infty$,
we conclude the proof of part (i).

\medskip
$(ii)$
Let $\lambda^*_{b;i}(\eta) \delequal \eta \cdot \big(\lambda(\eta)\big)^{ \mathcal J_b(i) }$,
where $\mathcal J_{b}(i)$ is defined in \eqref{def: J * b i,j, metastability}.
To prove part (ii),
it suffices to establish the following upper and lower bounds: given $i,j \in [K]$ such that $i \neq j$, and $\epsilon \in (0,\bar\epsilon)$,  $t \geq 0$,
\begin{align}
  \liminf_{\eta \downarrow 0}\inf_{ \bm x \in \bar B_\epsilon(\bm m_i) }
        \P\bigg(
        {\sigma^{\eta|b}_{i;\epsilon}(\bm x)} \cdot {\lambda^*_{b;i}(\eta)} > t,\
        \bm X^{\eta|b}_{ \sigma^{\eta|b}_{i;\epsilon}(\bm x) }(\bm x) \in I_j
        \bigg)
& \geq \exp\big( - q_b(i)\cdot t \big) \cdot \frac{q_b(i,j)}{q_b(i)},
            \label{goal, LB, part ii, proposition: transition time, metastability}
        \\ 
        \limsup_{\eta \downarrow 0}\sup_{ \bm x \in \bar B_\epsilon(\bm m_i) }
       \P\bigg(
        {\sigma^{\eta|b}_{i;\epsilon}(\bm x)} \cdot {\lambda^*_{b;i}(\eta)} > t,\
        \bm X^{\eta|b}_{ \sigma^{\eta|b}_{i;\epsilon}(\bm x) }(\bm x) \in I_j
        \bigg)
& \leq \exp\big( - q_b(i)\cdot t \big) \cdot \frac{q_b(i,j)}{q_b(i)}.
             \label{goal, UB, part ii, proposition: transition time, metastability}
\end{align}
To see why, note that in case of $\bm m_i \in V^*_b$,
the claims in part (ii) are equivalent to \eqref{goal, LB, part ii, proposition: transition time, metastability} and \eqref{goal, UB, part ii, proposition: transition time, metastability}
due to $\mathcal J_b(i) = \mathcal J^*_b$
and hence $\lambda^*_{b;i}(\eta) = \lambda^*_b(\eta)$
(see \eqref{def: J * b i,j, metastability} and \eqref{def: J * b}).
In case that $\bm m_i \notin V^*_b$,
we have 
$
\lim_{\eta \downarrow 0}\frac{ t /\lambda^*_{b;i}(\eta) }{  u/\lambda^*_b(\eta) } = 0
$
for any $t,u\in (0,\infty)$.
We then recover the upper and lower bounds in part (ii) by sending $t \downarrow 0$ in \eqref{goal, LB, part ii, proposition: transition time, metastability} and \eqref{goal, UB, part ii, proposition: transition time, metastability}.

The rest of this proof is devoted to establishing \eqref{goal, LB, part ii, proposition: transition time, metastability} and \eqref{goal, UB, part ii, proposition: transition time, metastability}.
We begin by stating few useful facts about the measures $\widecheck{\mathbf C}_k$.
Combining \eqref{proof, mass I j = I j closure, proposition: transition time, metastability}
with the continuity of measures,
we get
$
\lim_{\delta \downarrow 0} \widecheck{\mathbf C}_i\big( ((I_i)_\delta)^\complement\big)
=
q_b(i) = \widecheck{\mathbf C}_i( I_i^\complement).
$
Given any $\Delta > 0$,
by \eqref{proof, range of epsilon, lemma: unlikely to exit M or visit s i, metastability} it  then holds all $\delta > 0$ small enough,
\begin{equation}\label{proof, bound for measure check C, proposition: transition time, metastability}
    \begin{split}
        \widecheck{\mathbf C}_i\big( (I_{i;\delta,M})^\complement\big)
    & \leq 
    \widecheck{\mathbf C}_i\big( (B_M(\bm 0))^c\big)
    +
    \widecheck{\mathbf C}_i\big(((I_i)_\delta)^\complement\big)
    < (1 + \Delta) \cdot q_b(i),\quad\forall i \in [K].
    \end{split}
\end{equation}
Besides,
due to $I_{i;\delta,M} \subseteq I_i$,
\begin{align}
    \widecheck{\mathbf C}_i\big( (I_{i;\delta,M})^\complement\big)
\geq 
\widecheck{\mathbf C}_i\big( (I_{i})^\complement\big) = q_b(i).
\label{proof, bound for measure check C, 2, proposition: transition time, metastability}
\end{align}
Lastly, recall that by \eqref{proof, property, zero mass on boundary of I i delta M},
the condition $\widecheck{\mathbf C}_i(\partial I_{i;\delta,M}) = 0\ \forall i \in [K]$ holds for all but countably many $\delta > 0$ small enough,
which supports the application of Theorem~\ref{theorem: first exit analyses, adapted} in the subsequent proof.

\medskip\noindent
\textbf{Proof of Lower Bound \eqref{goal, LB, part ii, proposition: transition time, metastability}}.
We fix some $i\neq j$ and $t > 0$ when proving \eqref{goal, LB, part ii, proposition: transition time, metastability}.
By the definition of $\tau^{\eta|b}_{ i;\delta,M}(\bm x)$
in 
\eqref{def: tau eta b i delta M, metastability},
\begin{align*}
    & 
    {\big\{
        {\sigma^{\eta|b}_{i;\epsilon}(\bm x)} \cdot {\lambda^*_{b;i}(\eta)} > t,\
        \bm X^{\eta|b}_{ \sigma^{\eta|b}_{i;\epsilon}(\bm x) }(\bm x) \in I_j
        \big\}
        }
        \\ 
        & 
        \supseteq
        \underbrace{
            \big\{ {\tau^{\eta|b}_{i;\delta,M}(\bm x)} \cdot {\lambda^*_{b;i}(\eta)} > t;\ 
         \bm X^{\eta|b}_{ \tau^{\eta|b}_{i;\delta,M}(\bm x) }(\bm x) \in I_{j;\delta,M + 1}
        \big\}
        }_{ \text{(I)} }
        \cap 
        \underbrace{\big\{
         \bm X^{\eta|b}_{ \sigma^{\eta|b}_{i;\epsilon}(\bm x) }(\bm x) \in I_j
        \big\}}_{ \text{(II)}  }.
\end{align*}
We first analyze $\P(\text{(II)}|\text{(I)})$.
By the strong Markov property at $\tau^{\eta|b}_{i;\delta,M}(\bm x)$,
we have 
$
  \inf_{\bm x \in\bar B_\epsilon(\bm m_i)}\P( \text{(II)}\ |\ \text{(I)} )
\geq   
\inf_{ \bm y \in I_{j;\delta, M + 1} }
\P\big( 
\bm X^{\eta|b}_{t}(\bm y) \in I_j\ \forall t \leq {R^{\eta|b}_{j;\epsilon}(\bm y)}\big),
$
where
$
{R_{j;\epsilon}^{\eta|b}(\bm y)}
$
is defined in \eqref{def: return time R b j epsilon x}
and the set $I_{j;\delta, N}$ is defined in \eqref{def, attraction field I i truncated delta M}.
Applying Lemma \ref{lemma: cycle, efficient return}, we yield 
\begin{equation}\label{proof, claim 1, LB, proposition: transition time, metastability}
    \begin{split}
        & \liminf_{\eta \downarrow 0} \inf_{\bm x \in \bar B_\epsilon(\bm m_i)}\P( \text{(II)}\ |\ \text{(I)} )
= 1.
    \end{split}
\end{equation}
Next,
due to $I_{j;\delta,M+1} \subseteq I_j$,
\begin{align*}
   \text{(I)} 
    & =
    \underbrace{
\big\{ {\tau^{\eta}_{i;\delta,M}(\bm x)} \cdot {\lambda^*_{b;i}(\eta)} > t;\ 
         \bm X^{\eta|b}_{ \tau^{\eta|b}_{i;\delta,M}(\bm x) }(\bm x) \in I_{j}
        \big\}
    }_{\text{(III)}}
    \cap 
    \underbrace{
\{
\bm X^{\eta|b}_{ \tau^{\eta|b}_{i;\delta,M}(\bm x) }(\bm x)  \in I_{j;\delta,M+1}
\}
    }_{ \text{(IV)} }.
\end{align*}
Given any $\Delta > 0$, observe that (for all but countably many $\delta > 0$ small enough)
\begin{align*}
    & \liminf_{\eta \downarrow 0}\inf_{ \bm x \in \bar B_\epsilon(\bm m_i) }
    \P(\text{(III)})
    \\ 
    & \geq 
    \exp\big( - \widecheck{\mathbf C}_i( (I_{i;\delta,M})^\complement)\cdot t \big)\cdot
    \frac{ 
     \widecheck{\mathbf C}_i( I_j )
    }{  \widecheck{\mathbf C}_i( (I_{i;\delta,M})^\complement)  }
    \quad 
    \text{ by Theorem~\ref{theorem: first exit analyses, adapted}}
    \\ 
    &
    >
    \frac{ \exp( - (1+\Delta) q_b(i)\cdot t)  }{1 + \Delta} \cdot \frac{q_b(i,j)}{q_b(i)}
    \qquad 
    \text{for any $\delta > 0$ small enough, due to 
\eqref{proof, bound for measure check C, proposition: transition time, metastability}.}
\end{align*}
Meanwhile, 
by Lemma \ref{lemma: unlikely to exit M or visit s i, metastability},
we have 
$
\limsup_{ \eta \downarrow 0 }\sup_{\bm x \in \bar B_\epsilon(\bm m_i)}\P(\text{(IV)}^\complement) < \Delta
$
for all but countably many $\delta > 0$ small enough.
In summary, given $\Delta > 0$, one can find $\delta > 0$ such that
\begin{align}
    \liminf_{\eta \downarrow 0}
    \inf_{ \bm x \in \bar B_\epsilon(\bm m_i) }
    \P(\text{(I)}) \geq 
    \frac{ \exp( - (1+\Delta) q_b(i)\cdot t)  }{1 + \Delta} \cdot \frac{q_b(i,j)}{q_b(i)} - \Delta.
    \label{proof, claim 2, LB, proposition: transition time, metastability}
\end{align}
Combining \eqref{proof, claim 1, LB, proposition: transition time, metastability} and \eqref{proof, claim 2, LB, proposition: transition time, metastability}
and sending $\Delta \downarrow 0$,
we establish the lower bound \eqref{goal, LB, part ii, proposition: transition time, metastability}.

\medskip\noindent
\textbf{Proof of Upper Bound \eqref{goal, UB, part ii, proposition: transition time, metastability}}.
Let $\text{(I)} = \{ {\sigma^{\eta|b}_{i;\epsilon}(\bm x)} \cdot {\lambda^*_{b;i}(\eta)} > t,\
        \bm X^{\eta|b}_{ \sigma^{\eta|b}_{i;\epsilon}(\bm x) }(\bm x) \in I_j\}$.
Given $\delta > 0$,
define the event $\text{(II)} = \{ \bm X^{\eta|b}_{  \tau^{\eta|b}_{i;\delta,M}(\bm x) }(\bm x) \in B_{M+1}(\bm 0) \setminus S(\delta)  \}$,
where $S(\delta)$ is defined in \eqref{def: boundary set S delta}.
Pick some $\Delta > 0$, and note the decomposition of events $\text{(I)} = \big( \text{(I)}  \setminus \text{(II)}  \big) \cup  \big( \text{(I)}  \cap \text{(II)}  \big)$.
Applying Lemma~\ref{lemma: unlikely to exit M or visit s i, metastability},
it holds for all but countably many $\delta > 0$ small enough that
\begin{align}
    \limsup_{ \eta \downarrow 0 }\sup_{ \bm x \in \bar B_\epsilon(\bm m_i) }\P(\text{(I)}  \setminus \text{(II)} ) 
\leq 
\limsup_{ \eta \downarrow 0 }\sup_{\bm x \in \bar B_\epsilon(\bm m_i)}\P(\text{(II)}^\complement)
< \Delta.
\label{proof, ineq 1 for UB, part ii, proposition: transition time, metastability}
\end{align}
Next, by the definition of ${\tau^{\eta}_{i;\delta,M}(\bm x)}$ in \eqref{def: tau eta b i delta M, metastability},
on the event $\text{(I)}\cap \text{(II)}$ there must be some $k \in [K]$ with $k \neq i$ such that 
$
\bm  X^{\eta|b}_{  \tau^{\eta|b}_{i;\delta,M}(\bm x) }(\bm x) \in I_{k;\delta,M+1}.
$
For each $k \in [K]$ with $k \neq i$, let 
$$
\text{($k$)} = \text{(I)}\cap \text{(II)} \cap \{
\bm  X^{\eta|b}_{  \tau^{\eta|b}_{i;\delta,M}(\bm x) }(\bm x) \in I_{k;\delta,M+1}
\}.
$$
Note that $\bigcup_{ k \in [K]:\ k \neq i}\text{($k$)} = \text{(I)}\cap \text{(II)}$.
To proceed, consider the following decomposition
\begin{align*}
    (k) & = 
    \underbrace{ \bigg((k) \cap \Big\{ \Big(\sigma^{\eta|b}_{i;\epsilon}(\bm x) - \tau^{\eta|b}_{i;\delta,M}(\bm x) \Big) \cdot \lambda^*_{b;i}(\eta) > \Delta\Big\}\bigg) }_{ (k,1) }
    \cup 
    \underbrace{\bigg((k) \cap \Big\{ \Big(\sigma^{\eta|b}_{i;\epsilon}(\bm x) - \tau^{\eta|b}_{i;\delta,M}(\bm x) \Big) \cdot \lambda^*_{b;i}(\eta) \leq \Delta\Big\}\bigg)}_{(k,2)}.
\end{align*}
To proceed, we fix some $k \in [K]$ with $k \neq i$.
First, 
due to
$\lim_{\eta \downarrow 0}\frac{T/\eta}{\Delta/\lambda^*_{b;i}(\eta)} = 0$
for any $T \in (0,\infty)$,
\begin{align}
    & \limsup_{\eta \downarrow 0}\sup_{\bm x \in \bar B_\epsilon(\bm m_i)}\P\big((k,1)\big) 
     \nonumber \\ \nonumber
    & \leq 
    \limsup_{\eta \downarrow 0}\sup_{\bm x \in \bar B_\epsilon(\bm m_i)}
    \P\Big(
        (k) \cap \{ \sigma^{\eta|b}_{i;\epsilon}(\bm x) - \tau^{\eta|b}_{i;\delta,M}(\bm x)   > T/\eta\}
    \Big)
    \\  \nonumber
    & \leq 
    \limsup_{\eta \downarrow 0}\sup_{ \bm y \in I_{k;\delta,M+1} }
    \P(  \sigma^{\eta|b}_{i;\epsilon}(\bm y) > T/\eta )
    \qquad 
    \text{by the definition of the event $(k)$ and strong Markov property}
    \\  \nonumber
    & \leq 
    \limsup_{\eta \downarrow 0}\sup_{ \bm y \in I_{k;\delta,M+1} }
    \P\Big( \bm X^{\eta|b}_t(\bm y) \notin B_\epsilon(\bm m_k)\ \forall t \leq T/\eta\Big)
    \qquad
    \text{due to }k \neq i
    \\
    & = 0
    \qquad 
    \text{for any $T>0$ large enough, due to Lemma \ref{lemma: cycle, efficient return}.}
    \label{proof, ineq 2 for UB, part ii, proposition: transition time, metastability}
\end{align}
Meanwhile,
\begin{align*}
    & \sup_{ \bm x \in \bar B_\epsilon(\bm m_i) }\P\big( (k,2) \big)
    \\
    & \leq 
    \sup_{ \bm x \in \bar B_\epsilon(\bm m_i) }\
    \P\Big( 
     {\tau^{\eta|b}_{i;\delta,M}(\bm x)} \cdot {\lambda^*_{b;i}(\eta)} > t - \Delta;\
      \bm X^{\eta|b}_{  \tau^{\eta|b}_{i;\delta,M}(\bm x) }(\bm x) \in I_{k;\delta,M + 1};\
        \bm X^{\eta|b}_{ \sigma^{\eta|b}_{i;\epsilon}(\bm x) }(\bm x) \in I_j
    \Big)
    \\ 
    & \leq 
    \sup_{\bm x \in \bar B_\epsilon(\bm m_i) }\
    \P\Big( 
     {\tau^{\eta|b}_{i;\delta,M}(\bm x)} \cdot {\lambda^*_{b;i}(\eta)} > t - \Delta;\
      \bm X^{\eta|b}_{  \tau^{\eta|b}_{i;\delta,M}(\bm x) }(\bm x) \in I_{k;\delta,M+1}
      \Big)
      \\ 
      &\qquad
      \cdot 
      \sup_{ \bm x \in \bar B_\epsilon(\bm m_i) }\
       \P\Big( 
        \bm X^{\eta|b}_{ \sigma^{\eta|b}_{i;\epsilon}(\bm x) }(\bm x) \in I_j\ \Big|\
     {\tau^{\eta|b}_{i;\delta,M}(\bm x)} \cdot {\lambda^*_{b;i}(\eta)} > t - \Delta;\
      \bm X^{\eta|b}_{  \tau^{\eta|b}_{i;\delta,M}(\bm x) }(\bm x) \in I_{k;\delta,M + 1}
    \Big)
    \\ 
    & \leq 
    \sup_{ \bm x \in \bar B_\epsilon(\bm m_i) }\
     \P\Big( 
     \underbrace{
     {\tau^{\eta|b}_{i;\delta,M}(\bm x)} \cdot {\lambda^*_{b;i}(\eta)} > t - \Delta;\
      \bm X^{\eta|b}_{  \tau^{\eta|b}_{i;\delta,M}(\bm x) }(\bm x) \in I_k
      }_{ \text{($k$,I)} }
      \Big)
      \cdot 
      \sup_{ \bm y \in I_{k;\delta, M + 1} }\P\Big(  
      \underbrace{ \bm X^{\eta|b}_{ \sigma^{\eta|b}_{i;\epsilon}(\bm y) }(\bm y) \in I_j }_{ \text{($k$,II)} } \Big).
\end{align*}
Applying Theorem~\ref{theorem: first exit analyses, adapted} onto $I_{i;\delta,M}$, we yield (for all but countably many $\delta > 0$ small enough) 
\begin{align}
    \limsup_{\eta \downarrow 0}\sup_{ \bm x \in \bar B_\epsilon(\bm m_i)}
    \P\big(\text{($k$,I)}\big)
    & \leq 
     \exp\Big( - \widecheck{\mathbf C}_i\big( (I_{i;\delta,M})^\complement\big)\cdot (t-\Delta) \Big)\cdot
    \frac{ 
    \widecheck{\mathbf C}_i\big( (I_k)^-\big)
    }{  \widecheck{\mathbf C}_i\big( (I_{i;\delta,M})^\complement\big) }
     \nonumber \\ 
    &
    \leq 
   \exp\big( - q_b(i)\cdot (t - \Delta)\big)\cdot \frac{q_b(i,k)}{q_b(i)}
   \quad\text{by \eqref{proof, bound for measure check C, 2, proposition: transition time, metastability} and \eqref{proof, mass I j = I j closure, proposition: transition time, metastability}.}
   \label{proof, ineq 3 for UB, part ii, proposition: transition time, metastability}
\end{align}
Next, we analyze the probability of event ($k$,II).
If $k = j$, we plug in the trivial upper bound $\P(\text{($k$,II)}) \leq 1$.
If $k \neq j$,
on the event ($k$,II), we have that $\big(\bm X^{\eta|b}_t(\bm y)\big)_{t \geq 0}$ visited $B_{\epsilon}(\bm m_j)$ before visiting the $\epsilon$-neighborhood of any other local minima, despite the fact that the initial value $\bm y$ belongs to $I_{k;\delta,M+1} \subset I_k$.
Then, by Lemma~\ref{lemma: cycle, efficient return},
for any $\delta > 0$ small enough (so that $I_{k;\delta,M} \neq \emptyset$) we have
$
\limsup_{\eta \downarrow 0}
\sup_{ \bm y \in I_{k;\delta, M+ 1} }
\P(\text{($k$,II)}) = 0\ \forall k \neq j.
$
Combining \eqref{proof, ineq 1 for UB, part ii, proposition: transition time, metastability}--\eqref{proof, ineq 3 for UB, part ii, proposition: transition time, metastability},
we get
\begin{align}
    & \limsup_{\eta \downarrow 0}\sup_{\bm x \in \bar B_\epsilon(\bm m_i)}
    \P\Big(
    {\sigma^{\eta|b}_{i;\epsilon}(\bm x)} \cdot {\lambda^*_{b;i}(\eta)} > t,\
        \bm X^{\eta|b}_{ \sigma^{\eta|b}_{i;\epsilon}(\bm x) }(\bm x) \in I_j
    \Big)
     \leq 
     \Delta + 
      \exp\big( - q_b(i)\cdot (t- \Delta) \big) \cdot \frac{q_b(i,j)}{q_b(i)}.
      \nonumber
\end{align}
Sending $\Delta \downarrow 0$, we conclude the proof of the upper bound \eqref{goal, UB, part ii, proposition: transition time, metastability}.
\end{proof}

Now, we are ready to prove Proposition \ref{proposition: hat X converges to CTMC, metastability}.

\begin{proof}[Proof of Proposition \ref{proposition: hat X converges to CTMC, metastability}]
\linksinpf{proposition: hat X converges to CTMC, metastability}
Recall the definitions in \eqref{def: hat tau, 1, metastability}--\eqref{def: hat I k, metastability}, and let
\begin{align*}
    U^{\eta,\epsilon}_k \delequal \big({\tau^{\eta,\epsilon|b}_k(\bm x_0) - \tau^{\eta,\epsilon|b}_{k-1}(\bm x_0)}\big)\cdot{\lambda^*_b(\eta)},
    \qquad 
    V^{\eta,\epsilon}_k \delequal \bm m_{ \hat{\mathcal I}^{\eta,\epsilon|b}_k(\bm x_0) };
\end{align*}
We first show that claims $(i)$ and $(ii)$ follow directly from the next claim:
for any $\epsilon > 0$ small enough,
\begin{align}
    (U^{\eta,\epsilon}_1,V^{\eta,\epsilon}_1,U^{\eta,\epsilon}_2,V^{\eta,\epsilon}_2,\cdots) \Rightarrow (U_1,V_2,U_2,V_2,\cdots)
    \qquad 
    \text{ as }\eta \downarrow 0,
    \label{proof, claim, proposition: hat X converges to CTMC, metastability}
\end{align}
where the law of $U_j$'s and $V_j$'s are defined in \eqref{def Y * b t, metastability}.
Specifically, we only consider $\epsilon > 0$ small enough such that claim \eqref{proof, claim, proposition: hat X converges to CTMC, metastability} holds.
In light of Lemma \ref{lemma weak convergence of jump process} and Proposition \ref{proposition: Y * b is a CTMC},
\eqref{proof, claim, proposition: hat X converges to CTMC, metastability} verifies part $(i)$ of Proposition~\ref{proposition: hat X converges to CTMC, metastability}.
Regarding part $(ii)$, note that $\hat{\bm X}^{\eta,\epsilon|b}_t$ in \eqref{def: marker process, hat X eta epsilon b} is a step function (i.e., piece-wise constant) that only takes values in $\mathcal M = \{\bm m_j:\ j = 1,2,\cdots,K\}$, which is a finite set.
Let
\begin{align*}
    A_N \delequal \{ \xi \in \D[0,T]:\ \xi\text{ is a step function with at most $N$ jumps and only takes values in }\mathcal M \}.
\end{align*}
First, the finite-dimensional nature of $A_N$ (i.e., at most $N$ jumps over $[0,T]$, only $K$ possible values) implies that $A_N$ is a compact set in $(\D[0,T],\dlp{[0,T]})$.
Besides,
\begin{align*}
\limsup_{n \to \infty}\P(\hat{\bm X}^{\eta_n,\epsilon|b}_{\boldsymbol{\cdot}}(\bm x_0) \notin A_N) & = \limsup_{n \to \infty}\P( \sum_{j = 1}^{N+1} U^{\eta_n,\epsilon}_j \leq T  )
\leq \P( \sum_{j = 1}^{N+1} U_j \leq T  ),
\end{align*}
where the last inequality follows from $(U^{\eta_n,\epsilon}_1,\cdots,U^{\eta_n,\epsilon}_N) \Rightarrow (U_1,\cdots,U_N)$.
By part $(i)$ of Proposition~\ref{proposition: Y * b is a CTMC},
we confirm that $\lim_{N\to\infty}\limsup_{n \to \infty}\P(\hat{\bm X}^{\eta_n,\epsilon|b}_{\boldsymbol{\cdot}}(\bm x_0) \notin A_N) = 0$,
which verifies the tightness of $\big(\hat{\bm X}^{\eta_n,\epsilon|b}_{\boldsymbol{\cdot}}(\bm x_0)\big)_{ n \geq 1 }$ with $\eta_n \downarrow 0$.

Now, it only remains to prove \eqref{proof, claim, proposition: hat X converges to CTMC, metastability}.
This is equivalent to proving that, for each $N \geq 1$, 
$(U^{\eta,\epsilon}_1,V^{\eta,\epsilon}_1,\cdots,U^{\eta,\epsilon}_N,V^{\eta,\epsilon}_N)$ converges in distribution to $(U_1,V_1,\cdots,U_N,V_N)$ as $\eta \downarrow 0$.
Fix some $N = 1,2,\cdots$.
First, by definitions we have $U_1 = 0$ and $V_1 = \bm m_{i_0}$.
From part $(i)$ of Lemma~\ref{proposition: transition time, metastability},
we get $(U^{\eta,\epsilon}_1,V^{\eta,\epsilon}_1) \Rightarrow (0,\bm m_i) = (U_1,V_1)$ as $\eta \downarrow 0$.
Next, for any $n \geq 1$, any $t_l \in (0,\infty)$, any sequence $v_l \in \{\bm m_i:\ i \in [K]\}$, and any $t > 0$, $i,j \in [K]$ with $i \neq j$,
it follows from part $(ii)$ of Lemma~\ref{proposition: transition time, metastability} that
\begin{align*}
    & \lim_{\eta \downarrow 0}\P\bigg( U^{\eta,\epsilon}_{n+1} \leq t,\ V^{\eta,\epsilon}_{n+1} = \bm m_j\ \bigg|\ 
    V^{\eta,\epsilon}_{n} = \bm m_i,\ V^{\eta,\epsilon}_l = v_l\ \forall l \in [n-1],\ U^{\eta,\epsilon}_l \leq t_l\ \forall l \in [n]
    \bigg)
    \\ 
    & = 
    \begin{cases}
        \frac{q_b(i,j)}{q_b(i)} & \text{ if }\bm m_i \notin V^*_b,
        \\ 
        \frac{q_b(i,j)}{q_b(i)} \cdot \Big( 1 - \exp\big(-q_b(i)t\big)\Big) & \text{ if }\bm m_i \in V^*_b.
    \end{cases}
\end{align*}
This coincides with the conditional law of $\P\Big( U_{n+1} \leq t,\ V_{n+1} = \bm m_j\ \Big|\ V_n = \bm m_i,\ (V_j)_{j = 1}^{n-1},\ (U_j)_{j = 1}^n\Big)$ specified in \eqref{def Y * b t, metastability}.
By arguing inductively, we conclude the proof.
\end{proof}

\subsection{Proof of Proposition~\ref{proposition: hat X close to X, metastability}}

Moving onto the proof of Proposition \ref{proposition: hat X close to X, metastability},
we first prepare a lemma that establishes the weak convergence from $\bm X^{\eta|b}_{ \floor{ {\boldsymbol{\cdot}}/\lambda^*_b(\eta) } }(\bm x)$ to $\hat{\bm X}^{\eta,\epsilon|b}_{\boldsymbol{\cdot}}(\bm x)$
in terms of finite dimensional distributions.

\begin{lemma}\label{lemma:  hat X close to X, fdd, metastability} 
\linksinthm{lemma:  hat X close to X, fdd, metastability} 
Given any $t > 0$ and $\bm x \in \bigcup_{i \in [K]}I_i$ with $\norm{\bm x} < M$,
\begin{enumerate}[(i)]
    \item 
    $
    \lim_{\eta \downarrow 0}\P\Big( \norm{\bm X^{\eta|b}_s(\bm x)} > M\text{ for some }s \leq t/\lambda^*_b(\eta)  \Big) = 0;
    $

    \item $\lim_{\eta \downarrow 0}
    \P\Big(
    \norm{ \bm X^{\eta|b}_{ \floor{ t/\lambda^*_b(\eta) } }(\bm x) - \hat{\bm X}^{\eta,\epsilon|b}_{t}(\bm x) } \geq \epsilon
    \Big) = 0
    $
    for all $\epsilon > 0$ small enough.
\end{enumerate}
\end{lemma}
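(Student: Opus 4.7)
My plan is to treat the two parts separately using cycle-level estimates, building on the first-exit machinery of Lemma~\ref{proposition: transition time, metastability} and the control lemmas~\ref{lemma: unlikely to exit M or visit s i, metastability}, \ref{lemma: cycle, efficient return}, and \ref{lemma stuck at local minimum before large jump}.

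Part~(i) follows once we establish that the random number of transitions $N_t(\eta) \delequal \#\{k \geq 1 : \hat\tau^{\eta,\epsilon|b}_k(\bm x) \leq t/\lambda^*_b(\eta)\}$ is tight as $\eta \downarrow 0$, which can be argued exactly as in the proof of Proposition~\ref{proposition: hat X converges to CTMC, metastability}: the joint weak convergence of the rescaled inter-transition times and destinations implies that $N_t(\eta)$ converges in distribution to an almost-surely finite limit. On each inter-transition interval $[\hat\tau_{k-1}^{\eta,\epsilon|b}, \hat\tau_k^{\eta,\epsilon|b}]$, Lemma~\ref{lemma: unlikely to exit M or visit s i, metastability} (applied via the strong Markov property at $\hat\tau_{k-1}^{\eta,\epsilon|b}$, where the process lies in $\bar B_\epsilon(\bm m_{\hat{\mathcal I}_{k-1}})$) bounds the probability of exiting $B_M(\bm 0)$ by an arbitrarily small $\Delta$. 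Truncating $N_t(\eta)$ at a large constant, a union bound over the cycles followed by $\Delta \downarrow 0$ yields part~(i).

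For part~(ii), set $s \delequal \floor{t/\lambda^*_b(\eta)}$ and (on the good event from part~(i) and with $N_t(\eta)$ truncated at $N$) let $k^*$ be the unique index with $\hat\tau^{\eta,\epsilon|b}_{k^*} \leq s < \hat\tau^{\eta,\epsilon|b}_{k^*+1}$ and $i^* \delequal \hat{\mathcal I}^{\eta,\epsilon|b}_{k^*}$, so that $\hat{\bm X}^{\eta,\epsilon|b}_t(\bm x) = \bm m_{i^*}$. By the definition of $\hat\tau_{k^*+1}$, the process avoids $B_\epsilon(\bm m_j)$ for every $j \neq i^*$ throughout $[\hat\tau_{k^*}, s]$, and by Lemma~\ref{lemma: unlikely to exit M or visit s i, metastability} with a small $\delta > 0$ we may further restrict to the event $\{\bm X^{\eta|b}_u(\bm x) \in (I_{i^*})_{\epsilon/2}\ \forall u \in [\hat\tau_{k^*}, s]\}$.

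The principal difficulty is to show that, at the deterministic time $s$, the process actually lies in $B_\epsilon(\bm m_{i^*})$ rather than in an ongoing excursion inside $I_{i^*}$. The key observation is that the macroscopic time scale $1/\lambda^*_b(\eta) \in \RV_{-\mathcal J^*_b(\alpha-1)-1}(\eta)$ dominates the excursion-return scale $T/\eta$ of Lemma~\ref{lemma: cycle, efficient return}, so it suffices to analyse the last $T(\epsilon/2)/\eta$ steps before $s$. My three-step plan is: (a)~apply Lemma~\ref{lemma: cycle, efficient return} via the strong Markov property at $s - T(\epsilon/2)/\eta$ (using the restriction to $(I_{i^*})_{\epsilon/2}$) to obtain, with probability $1 - \bm o(1)$, some $u^* \in [s - T(\epsilon/2)/\eta, s]$ with $\bm X^{\eta|b}_{u^*}(\bm x) \in B_{\epsilon/2}(\bm m_{i^*})$; (b)~at the stopping time $u^*$, invoke Lemma~\ref{lemma stuck at local minimum before large jump} with threshold $\bar\delta$ so that, conditional on the absence of a ``large'' noise $\eta \norm{\bm Z_v} > \bar\delta$ during $v \in (u^*, s]$, the process remains in $B_\epsilon(\bm m_{i^*})$ with probability $1 - \bm o(\eta^N)$; (c)~bound the probability of at least one large noise among the $\leq T(\epsilon/2)/\eta$ steps by $(T(\epsilon/2)/\eta) \cdot H(\bar\delta/\eta) = \bm O(\eta^{\alpha - 1}) = \bm o(1)$, using $\alpha > 1$ and $H \in \RV_{-\alpha}(1/\eta)$. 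Combining (a)--(c) gives $\P(\norm{\bm X^{\eta|b}_s(\bm x) - \bm m_{i^*}} \geq \epsilon) \to 0$, completing part~(ii).
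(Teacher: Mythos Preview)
Your argument for part~(i) matches the paper's: truncate the number of cycles using the tightness coming from Proposition~\ref{proposition: hat X converges to CTMC, metastability}, then apply Lemma~\ref{lemma: unlikely to exit M or visit s i, metastability} on each cycle via the strong Markov property.

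For part~(ii), your overall strategy is correct but differs from the paper's in two notable ways. First, the paper never tracks the random index $i^*$: it simply observes that once one shows $\bm X^{\eta|b}_{\lfloor t/\lambda^*_b(\eta)\rfloor}(\bm x) \in \bigcup_{l}B_\epsilon(\bm m_l)$, the definition of $\hat{\bm X}^{\eta,\epsilon|b}_t(\bm x)$ as the marker of the last visited $\epsilon$-neighborhood forces the conclusion automatically. This sidesteps the awkwardness in your step~(a), where you apply the Markov property at the deterministic time $s - T(\epsilon/2)/\eta$ with a target $B_{\epsilon/2}(\bm m_{i^*})$ that depends on the random, not-yet-determined $i^*$; your formulation also implicitly assumes $\hat\tau_{k^*} \leq s - T(\epsilon/2)/\eta$, which need not hold. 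Both issues disappear once you replace ``$\bm m_{i^*}$'' by ``$\bm m_j$ where $j$ is the index with $\bm X_{s - T/\eta} \in (I_j)_{\epsilon/2}$'' (using part~(i) to know such a $j$ exists with high probability) and let the marker argument finish.

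Second, for the ``stay near the minimum'' step your route is genuinely different and somewhat more elementary. The paper backs up by a window of length $2\delta_t/H(\eta^{-1})$ and invokes the full first-exit-time asymptotics (Theorem~\ref{theorem: first exit analyses, adapted} on $B_\epsilon(\bm m_k)$) to bound the exit probability by $1 - \exp(-c_{k,\epsilon}\cdot 2\delta_t)$, then sends $\delta_t\downarrow 0$. You instead back up by only $T(\epsilon/2)/\eta$ and combine Lemma~\ref{lemma stuck at local minimum before large jump} with the crude bound $(T/\eta)\cdot H(\bar\delta/\eta) = O(\eta^{\alpha-1}) \to 0$ on the probability of any large jump in that short window. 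Your argument avoids invoking Theorem~\ref{theorem: first exit analyses, adapted} again, at the price of the more delicate ``stuck'' lemma; the paper's version is slightly cleaner because it needs only one asymptotic parameter $\delta_t$ rather than the pair $(T,\bar\delta)$, but both are valid.
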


\begin{proof}\linksinpf{lemma:  hat X close to X, fdd, metastability} 
Throughout this proof, let $\bar\epsilon$ be specified as in \eqref{choice of bar epsilon, metastability}.

\medskip
\noindent
$(i)$ We prove a stronger result.
Let $I^{(M,\delta)} = B_M(\bm 0)\setminus S(\delta)$,
where $S(\delta)$ is the $\delta$-enlargement of the boundary sets defined in \eqref{def: boundary set S delta}.
Recall the definition of $\hat \tau^{\eta,\epsilon|b}_k(\bm x)$ in \eqref{def: hat tau, 1, metastability} and \eqref{def: hat tau, 2, metastability}.
For each $N \in \Z_+$, on the event
\begin{align*}
& \Bigg(
\bigcap_{ k = 0 }^{N-1}
\underbrace{ \Big\{ 
\bm X^{\eta|b}_s(\bm x) \in I^{(M,\delta)}\ \forall s \in \big[ \hat \tau^{\eta,\epsilon|b}_k(\bm x), \hat \tau^{\eta,\epsilon|b}_{k+1}(\bm x) \big]
\Big\} }_{ A_k(\eta,\delta) }
\Bigg)
\cap
\underbrace{\Big\{ \hat \tau^{\eta,\epsilon|b}_1(\bm x) \leq t/\lambda^*_b(\eta)  \Big\}}_{ B_1(\eta) }
\cap 
\underbrace{
\Big\{ \hat \tau^{\eta,\epsilon|b}_N(\bm x) > t/\lambda^*_b(\eta) \Big\}
}_{B_2(\eta)},
\end{align*}
we have $\bm X^{\eta|b}_s(\bm x) \in I^{(M,\delta)}$ for all $s \in [ \hat \tau^{\eta,\epsilon|b}_1(\bm x), \hat \tau^{\eta,\epsilon|b}_{N}(\bm x)]$ and $\hat \tau^{\eta,\epsilon|b}_1(\bm x) \leq t/\lambda^*_b(\eta) < \hat \tau^{\eta,\epsilon|b}_N(\bm x)$.
Therefore, it suffices to show that given any $\Delta > 0$, there exist $N$ and $\delta > 0$ such that
\begin{align}
    \limsup_{\eta \downarrow 0}
    \P\Big(\big(B_1(\eta)\big)^\complement\Big) 
    + \P\Big( \big(B_2(\eta)\big)^\complement\Big) 
    + \sum_{k = 0}^{N-1}\P\Big(\big(A_k(\eta,\delta)\big)^\complement\Big)
    < \Delta.
\label{subgoal, goal 1, proposition: hat X close to X, metastability}
\end{align}

Let $i \in [K]$ be the unique index such that $\bm x \in I_i$
and let 
$R_{i;\epsilon}^{\eta|b}(\bm x)$ be the first hitting time to the $\epsilon$-neighborhood of $\bm m_i$ (see \eqref{def: return time R b j epsilon x}).
Since $\hat \tau^{\eta,\epsilon|b}_1(\bm x)$ is the first visit time to $\bigcup_{l \in [K]}B_\epsilon(\bm m_l)$ (see  \eqref{def: hat tau, 1, metastability}),
we have $\hat \tau^{\eta,\epsilon|b}_1(\bm x) \leq R_{i;\epsilon}^{\eta|b}(\bm x)$ and hence
\begin{equation} \label{bound 1, goal 1, proposition: hat X close to X, metastability}
    \begin{split}
    \limsup_{\eta \downarrow 0}
     \P\Big(\big(B_1(\eta)\big)^\complement\Big) 
    & =
    \limsup_{\eta \downarrow 0}\P\Big(
    \hat \tau^{\eta,\epsilon|b}_1(\bm x) > t/\lambda^*_b(\eta)
    \Big)
    \leq 
    \limsup_{\eta \downarrow 0}\P\Big(
     \lambda^*_b(\eta) \cdot R_{i;\epsilon}^{\eta|b}(\bm x) > t
    \Big)
    \\ 
    & =0
    \qquad 
    \text{using Lemma~\ref{proposition: transition time, metastability} $(i)$}.
    \end{split}
\end{equation}

Next, for the limiting process $\bm Y^{*|b}_t$ in Theorem~\ref{corollary irreducible case},
recall that we have collected a few important properties at the beginning of Section~\ref{subsec: proof, propositions, metastability} (with detailed proofs deferred to Section~\ref{sec: appendix, CTMC Y * b}).
In particular, for the $U_j$'s defined in \eqref{def Y * b t, metastability},
we can fix some $N$ large enough such that 
$
\P(U_1 + \cdots + U_N \leq t)< \Delta/2.
$
Then, by the proof of Proposition~\ref{proposition: hat X converges to CTMC, metastability} above,
\begin{equation}  \label{bound 2, goal 1, proposition: hat X close to X, metastability}
    \begin{split}
        \limsup_{\eta \downarrow 0}\P\Big( \big(B_2(\eta)\big)^\complement\Big) 
    & =
    \limsup_{\eta \downarrow 0}
    \P\Bigg(
    \sum_{k = 0}^{N-1}
    \big({\tau^{\eta,\epsilon|b}_{k+1}(\bm x) - \tau^{\eta,\epsilon|b}_{k}(\bm x)}\big)\cdot{\lambda^*_b(\eta)}
    \leq t
    \Bigg)
    \\ 
    & \leq \P(U_1 + \cdots + U_N \leq t)< \Delta/2.
    \end{split}
\end{equation}
Meanwhile, recall the definition of 
$\sigma^{\eta|b}_{i;\epsilon}(\bm x) = \min\{s \geq 0:\ \bm X^{\eta|b}_s(\bm x) \in \bigcup_{ l \neq i }B_\epsilon(\bm m_l)\}$ in \eqref{def stopping time sigma eta b i epsilon x, metastability}.
By the strong Markov property at $\hat \tau^{\eta,\epsilon|b}_k(\bm x)$,
\begin{align*}
    \sup_{k \geq 1}\P\Big(\big(A_k(\eta,\delta)\big)^\complement\Big)
    \leq 
    \max_{ l \in [K] }\sup_{ \bm y \in \bar B_\epsilon(\bm m_l) }
    \P\bigg( \exists t < \sigma^{\eta|b}_{l;\epsilon}(\bm y)\ s.t.\ 
        \bm X^{\eta|b}_t(\bm y) \in S(\delta)\text{ or }\norm{ \bm X^{\eta|b}_t(\bm y)} > M
    \bigg).
\end{align*}
By Lemma \ref{lemma: unlikely to exit M or visit s i, metastability},
for all but countably many $\delta > 0$ small enough we have
 $\limsup_{\eta \downarrow 0}\P\big(\big(A_k(\eta,\delta)\big)^\complement\big) \leq \frac{\Delta}{2N}\ \forall k \in [N-1]$.
Likewise, the case of $k  = 0$ can be bounded using part $(i)$ of Lemma~\ref{proposition: transition time, metastability}.
Combining this bound with \eqref{bound 1, goal 1, proposition: hat X close to X, metastability} and \eqref{bound 2, goal 1, proposition: hat X close to X, metastability},
we finish the proof of \eqref{subgoal, goal 1, proposition: hat X close to X, metastability}.

\medskip
\noindent
$(ii)$
If $\bm X^{\eta|b}_{ \floor{ t/\lambda^*_b(\eta) } }(\bm x) \in \bigcup_{ l \in [K] }B_\epsilon(\bm m_l)$,
then by the definition of $\hat{\bm X}^{\eta,\epsilon|b}_t(\bm x)$ as the marker of the {last visited local minimum} (see \eqref{def: hat tau, 1, metastability}--\eqref{def: marker process, hat X eta epsilon b}),
we must have $\norm{ \bm X^{\eta|b}_{ \floor{ t/\lambda^*_b(\eta) } }(\bm x) - \hat{\bm X}^{\eta,\epsilon|b}_t(\bm x) } < \epsilon$.
Therefore, it suffices to show that for any $\epsilon \in (0,\bar\epsilon)$
(where $\bar\epsilon$ is characterized in \eqref{choice of bar epsilon, metastability})
$$
\lim_{\eta\downarrow 0}\P\Big( \bm X^{\eta|b}_{ \floor{ t/\lambda^*_b(\eta) } }(\bm x) \in \bigcup_{ l \in [K] }B_\epsilon(\bm m_l)\Big) = 1.
$$
Pick some $\delta_t \in (0,\frac{t}{3})$, $\delta > 0$.
Recall that $H(\cdot) = \P(\norm{\bm Z_1}> \cdot)$, and define the event
$$
\text{(I)} = \Big\{ \bm X^{ \eta|b}_{  \floor{ t/\lambda^*_b(\eta) } - \floor{ {2\delta_t}/{ H(\eta^{-1}) } }  }(\bm x) \in I^{(M,\delta)} \Big\}.
$$
Let $t_1(\eta) = \floor{ t/\lambda^*_b(\eta) } - \floor{ {2\delta_t}/{ H(\eta^{-1}) } }$.
On the event (I), let
$$
R^{\eta} \delequal \min\bigg\{ s \geq t_1(\eta):\ \bm X^{\eta|b}_s(\bm x) \in \bigcup_{l \in [K]}B_{\epsilon/2}(\bm m_l)\bigg\},
$$
and set $\hat{\mathcal I}^\eta$ by the rule
$
\hat{\mathcal I}^\eta = j \iff \bm X^{\eta|b}_{R^{\eta}}(\bm x) \in I_j.
$
Then, we define the event
\begin{align*}
    \text{(II)}
    =
    \Big\{ R^\eta - t_1(\eta) \leq  {\delta_t}/{ H(\eta^{-1}) } \Big\}.
\end{align*}
On the event $\text{(I)}\cap \text{(II)}$, note that
$
\floor{ t/\lambda^*_b(\eta) } -  \floor{ {2\delta_t}/{ H(\eta^{-1}) } } \leq R^\eta \leq  \floor{ t/\lambda^*_b(\eta) }.
$
Furthermore,
let $\tau^\eta \delequal \min\{ s \geq R^\eta:\ \bm X^{\eta|b}_s(\bm x) \notin B_{\epsilon}(\bm m_{ \hat{\mathcal I}^\eta }) \}$,
and define event 
\begin{align*}
    \text{(III)} = \Big\{ \tau^\eta - R^\eta > 2\delta_t / H(\eta^{-1}) \Big\}.
\end{align*}
On the event $\text{(I)}\cap \text{(II)} \cap \text{(III)}$, we must have $\tau^\eta > \floor{ t/\lambda^*_b(\eta) } \geq R^\eta$,
and hence
$
\bm X^{\eta|b}_{ \floor{ t/\lambda^*_b(\eta) } }(\bm x) \in \bigcup_{ l \in [K] }B_{\epsilon}(\bm m_l).
$
Therefore, suppose that given each $\Delta > 0$ there exist $\delta_t \in(0,\frac{t}{3})$ and $\delta > 0$ such that 
\begin{align}
    \liminf_{\eta \downarrow 0}\P\big(\text{(I)}\big) & \geq 1 - \Delta,
    \label{goal, 1, proposition: hat X close to X, metastability}
    \\
     \liminf_{\eta \downarrow 0}\P\big(\text{(II)}\ \big|\ \text{(I)}\big) & \geq 1,
     \label{goal, 2, proposition: hat X close to X, metastability}
     \\ 
     \liminf_{\eta \downarrow 0}\P\big(\text{(III)}\ \big|\ \text{(I)}\cap \text{(II)}\big) & \geq 1 - \Delta.
\label{goal, 3, proposition: hat X close to X, metastability}     
\end{align}
Then, we immediately get $\liminf_{\eta \downarrow 0}\P(\text{(I)}\cap \text{(II)} \cap \text{(III)}) \geq (1 - \Delta)^2$.
Sending $\Delta \downarrow 0$, we conclude the proof.
The rest of this proof is devoted to establishing 
\eqref{goal, 1, proposition: hat X close to X, metastability}
\eqref{goal, 2, proposition: hat X close to X, metastability}
\eqref{goal, 3, proposition: hat X close to X, metastability},
where we fix some $\epsilon \in (0,\bar\epsilon)$ and $\Delta > 0$.

\medskip
\textbf{Proof of \eqref{goal, 1, proposition: hat X close to X, metastability}}.
This has been established in the proof for part $(i)$.

\medskip
\textbf{Proof of \eqref{goal, 2, proposition: hat X close to X, metastability}}.
This claim holds for any $\delta_t \in (0,t/3)$,
and can be obtained by combining Lemma~\ref{lemma: cycle, efficient return}
with the preliminary fact that, given each $T > 0$, the inequality  $T/\eta < \delta_t/H(\eta^{-1})$ holds for all $\eta$ small enough (due to $H(\eta^{-1}) \in \RV_{ \alpha }(\eta)$ as $\eta \downarrow 0$ with $\alpha > 1$).
\elaborate{We show that the claim holds for all $\delta_t \in (0,t/3)$.
Due to $H(x) \in \RV_{-\alpha}(x)$ and $\alpha > 1$,
given any $T > 0$ we have $T/\eta < \delta_t/H(\eta^{-1})$ eventually for all $\eta$ small enough.
Recall that $I_{j;\delta,M} = (s_{j - 1} + \delta, s_j - \delta) \cap (-M,M)$. 
By Markov property at $t_1(\eta)$, for any $T > 0$ it holds for all $\eta > 0$ small enough that
\begin{align*}
    \P\Big(\text{(II)}^c\ \Big|\ \text{(I)}\Big)
& \leq 
\max_{k \in [n_\text{min}]}\sup_{ y \in I_{k;\delta,M}  }
\P\bigg( X^{\eta|b}_j(y) \notin \bigcup_{ l \in [n_\text{min}] }(m_l - \frac{\epsilon}{2},m_l + \frac{\epsilon}{2})\ \forall j \leq \delta_t/H(\eta^{-1}) \bigg)
\\ 
& \leq 
\max_{k \in [n_\text{min}]}\sup_{ y \in I_{k;\delta,M}  }
\P\bigg(
R^{\eta|b}_{k;\epsilon/2}(y) > \delta_t/H(\eta^{-1})
\bigg)
\\ 
&
\leq 
\max_{k \in [n_\text{min}]}\sup_{ y \in I_{k;\delta,M}  }
\P\bigg(
R^{\eta|b}_{k;\epsilon/2}(y) > T/\eta
\bigg)
\end{align*}
where 
$
R^{\eta|b}_{k;\epsilon/2}(y) = \min\{j \geq 0:\ X^{\eta|b}_j(y) \in (m_k - \frac{\epsilon}{2},m_k + \frac{\epsilon}{2})\}.
$

Let 
$
\bm t_k(x,\epsilon) \delequal \inf\{ t \geq 0:\ \bm y_t(x) \in (m_k - \epsilon,m_k + \epsilon)   \}.
$
By Assumption \ref{assumption: geometry, metastability}, $\bm t_k(x,\frac{\epsilon}{4})< \infty$ for all $x \in [-M - 1,M + 1]\cap [s_{k-1} + \frac{\delta}{2},s_k - \frac{\delta}{2}]$,
with 
$\bm t_k(\ \cdot\ ,\frac{\epsilon}{4})$ being continuous over $[-M - 1,M+1]\cap [s_{k-1} + \frac{\delta}{2},s_k - \frac{\delta}{2}]$.
As a result, we can fix $T\in(0,\infty)$ large enough such that 
\begin{align*}
    T >
     \sup\Big\{ \bm t_k(x,\frac{\epsilon}{4}):\  x \in [-M - 1,M+1]\cap [s_{k-1} + \frac{\delta}{2},s_k - \frac{\delta}{2}]  \Big\}
     \qquad 
     \forall k \in [n_\text{min}].
\end{align*}
For each $k \in [n_\text{min}]$,
by applying Lemma \ref{lemma: cycle, efficient return} onto $(-M - 1,M+1) \cap (s_{k-1},s_k)$,
we are able to show that 
$
\limsup_{\eta \downarrow 0}
\sup_{ y \in I_{k;\delta,M}  }
\P\Big(
R^{\eta|b}_{k;\epsilon/2}(y) > T/\eta
\Big) = 0.
$
This concludes the proof of claim \eqref{goal, 2, proposition: hat X close to X, metastability}.
}

\medskip
\textbf{Proof of \eqref{goal, 3, proposition: hat X close to X, metastability}}.
By the strong Markov property at $R^\eta$,
\begin{align}
    \P\Big(\text{(III)}^\complement\ \Big|\ \text{(I)}\cap \text{(II)}\Big)
    \leq 
    \max_{k \in [K]}\sup_{ \bm y \in \bar B_{\epsilon/2}(\bm m_k) }
    \P\bigg(
    \exists s \leq \frac{2\delta_t}{H(\eta^{-1})}\ s.t.\ 
    \bm X^{\eta|b}_s(\bm y) \notin B_{\epsilon}(\bm m_k)
    \bigg).
    \label{intermediate bound, goal, 3, proposition: hat X close to X, metastability}
\end{align}
Also, note that $\epsilon < \bar\epsilon < b$; see \eqref{choice of bar epsilon, metastability}.
For each $k \in [K]$, by Theorem~\ref{theorem: first exit analyses, adapted} under the choice of $I = B_\epsilon(\bm m_k)$,
we obtain some $c_{k,\epsilon} \in (0,\infty)$ 
such that for any $u > 0$,
\begin{align}
    \limsup_{\eta\downarrow 0}
    \sup_{ \bm y \in \bar B_{\epsilon/2}(\bm m_k) }
    \P\bigg(
    \exists j \leq \frac{u}{H(\eta^{-1})}\ s.t.\ \bm X^{\eta|b}_j(\bm y) \notin B_\epsilon(\bm m_k)
    \bigg)
    \leq 1 - \exp( - c_{k,\epsilon} \cdot u).
    \label{final result, goal, 3, proposition: hat X close to X, metastability}
\end{align}
By picking $\delta_t$ small enough, we ensure that $\max_{k \in [K]}1 - \exp(-c_{k,\epsilon}\cdot 2\delta_t) < \Delta$, 
thus completing the proof of claim \eqref{goal, 3, proposition: hat X close to X, metastability}.
To conclude, we elaborate a bit more on the constant $c_{\epsilon}$ and the application of Theorem~\ref{theorem: first exit analyses, adapted} above.
The event on the RHS of \eqref{intermediate bound, goal, 3, proposition: hat X close to X, metastability}
is about the exit from an $\epsilon$-neighborhood of $\bm m_k$.
Due to $\epsilon < \bar\epsilon < b$, this can be achieved by one jump (in the sense of $\mathcal J^I_b$ in Theorem~\ref{theorem: first exit analyses, adapted}) if we start from $\bm m_j$.
Specifically, adopting the notations in Theorem~\ref{theorem: first exit analyses, adapted},
we let
\begin{align*}
    c_{k,\epsilon} \delequal \widecheck{\mathbf C}^{(1)|b}\big( (B_\epsilon(\bm m_k))^\complement  \big)
=
\int \mathbf{I}\Big\{ w \cdot \norm{\bm \sigma(\bm m_j)\bm \theta} > \epsilon \Big\} \nu_\alpha(d w)\mathbf S(d\theta),
\end{align*}
where the equality follows from \eqref{def: measure check C k b}.
On one hand, the non-degeneracy of $\bm \sigma(\cdot)$ (see Assumption~\ref{assumption: nondegeneracy of diffusion coefficients})
implies that $\inf_{ \norm{\bm \theta} = 1 }\norm{ \bm \sigma(\bm m_j)\bm \theta } > 0$,
and hence the existence of some $\underline{w}_{k,\epsilon} > 0$ such that 
$
c_\epsilon \geq \nu_\alpha[\underline{w}_{k,\epsilon},\infty) = (\underline{w}_{k,\epsilon})^{-\alpha} > 0;
$
see \eqref{def: measure nu alpha}.
On the other hand, 
under the choice of $I = B_\epsilon(\bm m_k)$,
we have 
$
\widecheck{\mathbf C}^{(1)|b}\big(\partial I) = 0
$
due to the absolute continuity of measures $\nu_\alpha$ and $\mathbf S$ (see Assumption~\ref{assumption gradient noise heavy-tailed}).
This verifies the conditions in Theorem~\ref{theorem: first exit analyses, adapted},
allowing us to conclude that $c_{k,\epsilon} <\infty$ and obtain \eqref{final result, goal, 3, proposition: hat X close to X, metastability}.
\end{proof}

The next result provides an upper bound over the proportion of time that $\bm X^{\eta|b}_t(\bm x)$ is not close enough to a local minimum.

\begin{lemma} \label{lemma: metastability, proportion of time not around local minima}
\linksinthm{lemma: metastability, proportion of time not around local minima}
    Given $\epsilon \in (0,\bar\epsilon)$,
    it holds for all $t \in (0,1)$ small enough that 
    \begin{align*}
        \limsup_{\eta \downarrow 0}
        \max_{ i:\ \bm m_i \in V^*_b }\sup_{\bm x \in B_{\epsilon/2}(\bm m_i)}
        \P\bigg(
            \int_0^t \mathbf{I}\Big\{
                \bm X^{\eta|b}_{ \floor{ s/\lambda^*_b(\eta) }  }(\bm x) \notin B_\epsilon(\bm m_i) 
            \Big\}ds > t^2
        \bigg) < q^* t,
    \end{align*}
where $q^* \in (0,\infty)$ is a constant that does not vary with $t$ or $\epsilon$.
\end{lemma}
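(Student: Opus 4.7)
My plan is to decompose the event $\{\int_0^t \mathbf{I}\{\bm X^{\eta|b}_{\floor{s/\lambda^*_b(\eta)}}(\bm x) \notin B_\epsilon(\bm m_i)\}ds > t^2\}$ as the complement of a ``good event'' on which a pointwise-in-time bound will follow from the $L^1$ coupling with the marker process $\hat{\bm X}^{\eta,\epsilon'|b}$. Fix $\bm m_i \in V^*_b$ and $\bm x \in B_{\epsilon/2}(\bm m_i)$, introduce an auxiliary radius $\epsilon' \delequal \epsilon t^2/4$, and take $t \in (0,1)$ small enough that $\epsilon' \in (0,\bar\epsilon)$, so that Lemma~\ref{lemma: cycle, efficient return}, Lemma~\ref{proposition: transition time, metastability}(ii), and Proposition~\ref{proposition: hat X close to X, metastability} with $p = 1$ all apply at radius $\epsilon'$. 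I will then work with the event $A = A_1 \cap A_2 \cap A_3$, where
\begin{align*}
A_1 & \delequal \big\{\hat{\tau}^{\eta,\epsilon'|b}_1(\bm x)\cdot\lambda^*_b(\eta) \leq t^2/2\big\} \cap \big\{\hat{\mathcal I}^{\eta,\epsilon'|b}_1(\bm x) = i\big\},\\
A_2 & \delequal \big\{\sigma^{\eta|b}_{i;\epsilon'}(\bm x)\cdot\lambda^*_b(\eta) > t\big\},\\
A_3 & \delequal \big\{\dlp{[0,t]}\big(\bm X^{\eta|b}_{\floor{\cdot/\lambda^*_b(\eta)}}(\bm x),\ \hat{\bm X}^{\eta,\epsilon'|b}_{\boldsymbol{\cdot}}(\bm x)\big) < 2\epsilon'\big\}.
\end{align*}

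On $A_1 \cap A_2$, definition~\eqref{def: marker process, hat X eta epsilon b} together with the identity $\hat\tau^{\eta,\epsilon'|b}_2 = \sigma^{\eta|b}_{i;\epsilon'}(\bm x)$ on $\{\hat{\mathcal I}_1 = i\}$ yields $\hat{\bm X}^{\eta,\epsilon'|b}_s = \bm m_i$ for every $s \in [\hat\tau_1\lambda^*_b(\eta), t]$. Splitting the time integral at $\hat\tau_1\lambda^*_b(\eta)$ and using $\mathbf{I}\{\|\bm y\|\geq\epsilon\} \leq \|\bm y\|/\epsilon$ on the second piece,
\begin{align*}
\int_0^t \mathbf{I}\big\{\bm X^{\eta|b}_{\floor{s/\lambda^*_b(\eta)}}(\bm x) \notin B_\epsilon(\bm m_i)\big\}ds
& \leq \hat\tau_1\lambda^*_b(\eta) + \frac{1}{\epsilon}\int_{\hat\tau_1\lambda^*_b(\eta)}^t \big\|\bm X^{\eta|b}_{\floor{s/\lambda^*_b(\eta)}}(\bm x) - \bm m_i\big\|ds \\
& \leq \frac{t^2}{2} + \frac{1}{\epsilon}\dlp{[0,t]}\big(\bm X^{\eta|b}_{\floor{\cdot/\lambda^*_b(\eta)}}(\bm x), \hat{\bm X}^{\eta,\epsilon'|b}_{\boldsymbol{\cdot}}(\bm x)\big) \\
& \leq \frac{t^2}{2} + \frac{2\epsilon'}{\epsilon} = t^2,
\end{align*}
where the final step invokes $A_1$, $A_3$, and the choice $\epsilon' = \epsilon t^2/4$. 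Thus $\{\text{time out} > t^2\} \subseteq A^c$.

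It remains to bound $\P(A^c) \leq \P(A_1^c) + \P(A_2^c) + \P(A_3^c)$. For $A_1$, Lemma~\ref{lemma: cycle, efficient return} applied at radius $\epsilon'$ (noting $\bm x \in B_{\epsilon/2}(\bm m_i) \subseteq (I_i)_{\epsilon'/2}$ since $\epsilon' < \epsilon/2$) yields, with probability tending to $1$, a return to $B_{\epsilon'/2}(\bm m_i)$ within $T(\epsilon')/\eta$ iterations while staying in $(I_i)_{\epsilon'/2}$; this forces $\hat\tau_1 \leq T(\epsilon')/\eta$ and $\hat{\mathcal I}_1 = i$, and since $T(\epsilon')\lambda^*_b(\eta)/\eta = T(\epsilon')\lambda(\eta)^{\mathcal J^*_b} \to 0$, I obtain $\limsup_\eta \P(A_1^c) = 0$. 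For $A_2$, the strong Markov property at $\hat\tau_1$ on $\{\hat{\mathcal I}_1 = i\}$ combined with Lemma~\ref{proposition: transition time, metastability}(ii) (invoking $\bm m_i \in V^*_b$) gives
\begin{align*}
\limsup_{\eta \downarrow 0}\P\big(A_2^c \cap \{\hat{\mathcal I}_1 = i\}\big)
& \leq \sup_{\bm y \in \bar B_{\epsilon'}(\bm m_i)}\limsup_{\eta \downarrow 0}\P\big(\sigma^{\eta|b}_{i;\epsilon'}(\bm y)\cdot\lambda^*_b(\eta) \leq t\big) \\
& \leq 1 - e^{-q_b(i)t},
\end{align*}
which, together with $\limsup_\eta \P(\hat{\mathcal I}_1 \neq i) = 0$ from the $A_1$ analysis, yields $\limsup_\eta \P(A_2^c) \leq 1 - e^{-q_b(i)t}$. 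For $A_3$, Proposition~\ref{proposition: hat X close to X, metastability} with $p = 1$ directly gives $\limsup_\eta \P(A_3^c) = 0$. Combining these bounds via the elementary strict inequality $1 - e^{-q_b(i)t} < q_b(i)t$ for $t > 0$, I conclude $\limsup_{\eta \downarrow 0}\P(\text{time out} > t^2) < q_b(i)t \leq q^*t$ with $q^* \delequal \max_{j:\bm m_j \in V^*_b} q_b(j)$, a constant independent of $\epsilon$ and $t$.

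The main obstacle I anticipate is the calibration of the auxiliary radius $\epsilon' = \epsilon t^2/4$: it must be small enough to convert the $L^1$ closeness $\dlp{[0,t]} < 2\epsilon'$ into the desired time-out bound $2\epsilon'/\epsilon \leq t^2/2$, yet admissible (i.e., within $\bar\epsilon$) for Lemma~\ref{proposition: transition time, metastability}(ii), Lemma~\ref{lemma: cycle, efficient return}, and Proposition~\ref{proposition: hat X close to X, metastability}; moreover, the ``silent'' initial stretch $[0, \hat\tau_1\lambda^*_b(\eta))$---on which the marker process sits at the convention $V_0 = \bm 0$ rather than $\bm m_i$---must be controlled separately via $\hat\tau_1\lambda^*_b(\eta) \to 0$ rather than through the $L^1$ coupling itself.
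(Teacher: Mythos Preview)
Your argument has a fatal circular dependency. You invoke Proposition~\ref{proposition: hat X close to X, metastability} (the $L^1$ closeness of $\bm X^{\eta|b}_{\floor{\cdot/\lambda^*_b(\eta)}}$ and the marker process $\hat{\bm X}^{\eta,\epsilon'|b}$) to control $\P(A_3^c)$, but in this paper Proposition~\ref{proposition: hat X close to X, metastability} is \emph{proved using} Lemma~\ref{lemma: metastability, proportion of time not around local minima}. Concretely, in the proof of the $L^p$ claim of Proposition~\ref{proposition: hat X close to X, metastability}, the crucial bound $p^\eta_1(N)\leq q^*/N$ on $\P(\mathcal I^{(\eta)}_N(n)=1\mid\cdots)$ is obtained precisely by applying the present lemma. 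So the tool you reach for is not yet available at this point in the logical development, and your entire decomposition collapses once $A_3$ is removed. A secondary issue is that Proposition~\ref{proposition: hat X close to X, metastability} is stated only for the fixed initial value $\bm x_0$, whereas the present lemma requires a bound uniform over $\bm x\in B_{\epsilon/2}(\bm m_i)$.

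The paper's proof is self-contained and proceeds by a direct excursion count rather than by coupling to $\hat{\bm X}$. It introduces stopping times $S^\eta_k$ (exits from $B_\epsilon(\bm m_i)$) and $T^\eta_k$ (returns to $B_{\epsilon/2}(\bm m_i)$), so that the time spent outside $B_\epsilon(\bm m_i)$ is at most $\sum_{k\leq N^\eta}(T^\eta_k - S^\eta_k)$. Lemma~\ref{lemma stuck at local minimum before large jump} forces each new exit to be preceded by a large noise $\eta\|\bm Z_u\|>\delta$, so $N^\eta$ is dominated by the Binomial count $W^\eta$ of large noises, which is $<k(\eta)=\eta^{-(\mathcal J^*_b-1)(\alpha-1)-\beta}$ with high probability by Markov's inequality. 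Lemma~\ref{lemma: fixed cycle exit or return taking too long, first exit analysis} then bounds each excursion by $T/\eta$, making the total time out $\leq k(\eta)\cdot T/\eta$, which is $o(1/\lambda^*_b(\eta))$ by the choice $\beta<\alpha-1$. The only remaining bad event is that $\bm X^{\eta|b}$ leaves the larger domain $I_{i;\bar\epsilon,M}$ before time $\floor{t/\lambda^*_b(\eta)}$; Theorem~\ref{theorem: first exit analyses, adapted} bounds this probability by $1-e^{-qt}\leq q^*t$ with $q^*=2\max_j\widecheck{\mathbf C}_j\big((I_{j;\bar\epsilon,M})^c\big)$, a constant independent of $\epsilon$ and $t$.
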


\begin{proof} \linksinpf{lemma: metastability, proportion of time not around local minima}
There are only finitely many elements in $V^*_b$.
Therefore, it suffices to fix some $\bm m_i \in V^*_b$ (recall that $I_i$ is the attraction field associated with $\bm m_i$, and w.l.o.g.\ we assume $\bm m_i$ = 0 in this proof) and prove that
\begin{align}
    \limsup_{\eta \downarrow 0}
       \sup_{\bm x \in B_\epsilon(\bm 0)}
        \P\bigg(
            \int_0^t \mathbf{I}\Big\{
                \bm X^{\eta|b}_{ \floor{ s/\lambda^*_b(\eta) }  }(\bm x) \notin B_\epsilon(\bm 0)
            \Big\}ds > t^2
        \bigg) < q^* t
        \label{proof, main goal, lemma: metastability, proportion of time not around local minima}
\end{align}
holds for all $t > 0$ small enough, where $q^* \in (0,\infty)$ is a constant that does not vary with $\epsilon$ or $t$.

Let $T^\eta_0 = 0$, and (for all $k \geq 1$)
\begin{align*}
    S^\eta_k \delequal \min\{ u > T^\eta_{k-1}:\ \bm X^{\eta|b}_u(\bm x) \notin B_\epsilon(\bm 0) \},
    \qquad 
    T^\eta_k  \delequal \min\{ u > S^\eta_k:\ \bm X^{\eta|b}_u(\bm x) \in B_{\epsilon/2}(\bm 0) \}.
\end{align*}
Then, by defining $N^\eta \delequal \max\{ k \geq 0:\ S^\eta_k \leq t/\lambda^*_b(\eta)  \}$,
we have
\begin{align}
\#\Big\{ u \leq \floor{ t/\lambda^*_b(\eta) }:\ 
\bm X^{\eta|b}_u(\bm x) \notin B_\epsilon(\bm 0)
\Big\}
\leq \sum_{k = 1}^{N^\eta}T^\eta_k \wedge \floor{ t/\lambda^*_b(\eta) } - S^\eta_k.    
\label{proof, property on N eta, 1, lemma: metastability, proportion of time not around local minima}
\end{align}
Next,
recall that $\alpha > 1$ is the heavy-tailed index in Assumption \ref{assumption gradient noise heavy-tailed},
and the time scaling $\lambda^*_b(\eta)$ is defined in \eqref{def scale function lambda * b eta}
with $\lambda^*_b(\eta) \in \RV_{ \mathcal J^*_b\cdot (\alpha-1)  + 1 }(\eta)$.
Fix some $\beta \in (0,\alpha - 1)$, and let 
\begin{align}
    k(\eta) \delequal 1/\eta^{ (\mathcal J^*_b - 1)(\alpha - 1) + \beta }.
    \label{proof, def k eta, lemma: metastability, proportion of time not around local minima}
\end{align}
Given $\bm x \in B_{\epsilon/2}(\bm 0)$,
define events (with $I_{i;\delta,M}$ defined as in \eqref{def, attraction field I i truncated delta M})
\begin{align*}
    A^\eta_{t}(\bm x) & \delequal 
    \big\{
        \bm X^{\eta|b}_u(\bm x) \in I_{i;\epsilon,M}\text{ for all }u \leq \floor{t/\lambda^*_b(\eta)}
    \big\},
    \\
    B^\eta_{\delta}(\bm x) & \delequal 
    \big\{
        \text{for each $j \leq k(\eta)$}, \exists u \in [T^\eta_{j-1} + 1, S^\eta_j]\text{ s.t. }\eta\norm{\bm Z_u} >\delta
    \big\}.
\end{align*}
On  the event $B^\eta_{\delta}(\bm x)$,
note that 
$$N^\eta \wedge k(\eta) \leq 
W^\eta \delequal 
\#\{
    u \leq \floor{ t/\lambda^*_b(\eta) }:\ \eta \norm{\bm Z_u}> \delta
\}.$$
Next, define the event
\begin{align}
    F^\eta_t \delequal 
    \big\{
        k(\eta) > W^\eta
    \big\}.
    \nonumber
\end{align}
On $B^\eta_{\delta}(\bm x) \cap F^\eta_t$, we must have
\begin{align}
    N^\eta \leq W^\eta < k(\eta) = 1/\eta^{ (\mathcal J^*_b - 1)(\alpha - 1) + \beta }.
    \nonumber
\end{align}
Furthermore, given a constant $T \in (0,\infty)$,
let 
$$
E^\eta_{t,T}(\bm x) \delequal \{T^\eta_k \wedge \floor{ t/\lambda^*_b(\eta) } - S^\eta_k \leq T/\eta\ \forall k \geq 1\}.$$
On event $B^\eta_{\delta}(\bm x) \cap F^\eta_t \cap  E^\eta_{t,T}(\bm x)$,
observe that
\begin{align*}
   \#\{ u \leq \floor{ t/\lambda^*_b(\eta) }:\ 
\bm X^{\eta|b}_u(\bm x) \notin B_\epsilon(\bm 0)
\}
& \leq \sum_{j = 1}^{N^\eta}T^\eta_j \wedge \floor{ t/\lambda^*_b(\eta) } - S^\eta_j 
\qquad
\text{by \eqref{proof, property on N eta, 1, lemma: metastability, proportion of time not around local minima}}
\\ 
&
\leq 
k(\eta) \cdot T/\eta 
= T /\eta^{1 + \beta + (\mathcal J^*_b - 1)(\alpha - 1)},
\end{align*}
and hence
$$
\int_0^t \mathbf{I}\Big\{
                \bm X^{\eta|b}_{ \floor{ s/\lambda^*_b(\eta) }  }(\bm x) \notin B_\epsilon(\bm 0) 
            \Big\}ds
    \leq 
    \frac{T /\eta^{1 + \beta + (\mathcal J^*_b - 1)(\alpha - 1)} }{ 
        \floor{ 1/\lambda^*_b(\eta) }}.
$$
However, due to 
$
\lambda^*_b(\eta) \in \RV_{ \mathcal J^*_b\cdot (\alpha-1)  + 1 }(\eta)
$
and 
$\mathcal J^*_b\cdot (\alpha-1)  + 1
> 
(\mathcal J^*_b - 1)\cdot (\alpha-1)  + 1 + \beta
$
(recall that we've fixed some $\beta \in (0,\alpha - 1)$),
we have
\begin{align*}
    \lim_{\eta \downarrow 0}\frac{T /\eta^{1 + \beta + (\mathcal J^*_b - 1)(\alpha - 1)} }{ 
        \floor{ 1/\lambda^*_b(\eta) }
    } = 0.
\end{align*}
In summary,
to prove \eqref{proof, main goal, lemma: metastability, proportion of time not around local minima},
it suffices to show that given $t,\epsilon > 0$, there exist $\delta$ and $T$ such that
\begin{align}
    &\limsup_{\eta \downarrow 0}
        \sup_{\bm x \in B_{\epsilon/2}(\bm 0)}
        \P\Big( \big(A^\eta_{t}(\bm x)\big)^\complement\Big) < q^* t,
        \label{proof, goal 1, lemma: metastability, proportion of time not around local minima}
        \\ 
    &\lim_{\eta\downarrow 0}
        \sup_{\bm x \in B_{\epsilon/2}(\bm 0)}
        \P\Big(
        \big( B^\eta_{\delta}(\bm x) \big)^\complement
        \Big) = 0,
        \label{proof, goal 2, lemma: metastability, proportion of time not around local minima}
        \\ 
    &\lim_{\eta\downarrow 0}
        \P\big(
        \big( F^\eta_{t} \big)^\complement
        \big) = 0,
        \label{proof, goal 4, lemma: metastability, proportion of time not around local minima}
    \\
    &\lim_{\eta\downarrow 0}
        \sup_{\bm x \in B_{\epsilon/2}(\bm 0)}
        \P\Big(
         A^\eta_{t}(\bm x) \cap B^\eta_{\delta}(\bm x) \cap F^\eta_t  \cap \big( E^\eta_{t,T}(\bm x) \big)^\complement
        \Big) = 0,
        \label{proof, goal 3, lemma: metastability, proportion of time not around local minima}
\end{align}
where
$q^* \in (0,\infty)$ is a constant that does not vary with $\epsilon$ or $t$.

\medskip
\noindent
\textbf{Proof of Claim \eqref{proof, goal 1, lemma: metastability, proportion of time not around local minima}}.
This follows immediately from the first exit time analysis.
Specifically,
for each $\epsilon \in (0,\bar\epsilon)$, we have
$
\big(A^\eta_{t}(\bm x)\big)^\complement
\subseteq \{ \bm X^{\eta|b}_u(\bm x) \notin I_{i;\bar\epsilon,M}\text{ for some }u \leq \floor{t/\lambda^*_b(\eta)}  \}.
$
Then, 
the properties \eqref{choice of bar epsilon, metastability} and  \eqref{proof, property, zero mass on boundary of I i delta M, delta = bar epsilon} allow us to apply Theorem~\ref{theorem: first exit analyses, adapted} under the choice of 
$I = I_{i;\bar\epsilon,M}$ and get 
\elaborate{recall that we have assumed w.l.o.g.\ that the local minimum $m_i \in V^*_b$ at hand is located at the origin, i.e., $m_i = 0$.
This implies
$
\mathcal J^*_b(V) = \ceil{\min\{ |s_{i-1}|, s_i \}/b};
$
that is, starting from the local minimum, it requires at least $\mathcal J^*_b(V)$ jumps (each bounded by $b$) to escape from the attraction field $(s_{i-1},s_i)$.
Furthermore, by our choice of $\bar\epsilon$ in \eqref{choice of bar epsilon, metastability}
(which is essentially due to the assumption that $|s_j-m_i|/b \notin \Z$ for all $i\in[n_\text{min}]$ and $j \in [n_\text{min}-1]$),
it holds for all $\epsilon \in (0,\bar\epsilon)$ that
$
\mathcal J^*_b(V) = \ceil{\min\{ |s_{i-1} + \frac{\epsilon}{2}|, s_i - \frac{\epsilon}{2} \}/b}.
$
For any $M \in (0,\infty)$ large enough, we then have 
$
\mathcal J^*_b(V) = \ceil{\min\{ |s_{i-1} + \frac{\epsilon}{2}|, s_i - \frac{\epsilon}{2}, M \}/b},
$
thus implying that, starting from the origin, it also requires at least $\mathcal J^*_b(V)$ jumps to escape from $(s_{i-1}+ \frac{\epsilon}{2},s_i - \frac{\epsilon}{2})\cap(-M,M)$.
By applying part $(a)$ of Result~\ref{theorem: first exit time, unclipped} onto $(s_{i-1} + \frac{\epsilon}{2},s_i - \frac{\epsilon}{2})\cap (-M,M)$,
we can find $q \in (0,\infty)$ such that}
\begin{align*}
    \limsup_{\eta \downarrow 0}
        \sup_{\bm x \in B_{\epsilon/2}(\bm 0) }
        \P\big( (A^\eta_{t}(\bm x))^\complement\big) \leq 1 - \exp(-qt),\qquad \forall t > 0,
\end{align*}
where we set $q = \max_{j \in [K]}\widecheck{\mathbf C}_j\big( (I_{j;\bar\epsilon,M})^\complement \big)$ (note that it does not vary with $\epsilon$ or $t$).
Lastly, for any $t > 0$ small enough, we have $1 - \exp(-qt) \leq 2q t$.
We conclude the proof by picking $q^* = 2q$.

\medskip
\noindent
\textbf{Proof of Claim \eqref{proof, goal 2, lemma: metastability, proportion of time not around local minima}}.
By the strong Markov property at each $T^\eta_k$,
\begin{align*}
    \sup_{\bm x \in B_{\epsilon/2}(\bm 0)}
        \P\big(
        ( B^\eta_{\delta}(\bm x) )^\complement
        \big)
    \leq k(\eta) \cdot 
    \underbrace{ \sup_{\bm y \in B_{\epsilon/2}(\bm 0)}
    \P\big(
        \bm X^{\eta|b}_u(\bm y) \notin B_\epsilon(\bm 0)\text{ for some }u < \tau^{>\delta}_1(\eta)
    \big)
    }_{\delequal p_\delta(\eta)},
\end{align*}
where
$
\notationdef{notation-large-jump-time}{\tau^{>\delta}_1(\eta)} \delequal{} 
\min\{ j \geq 1:\ \eta \norm{\bm Z_j} > \delta  \}.
$
Applying Lemma~\ref{lemma stuck at local minimum before large jump},
it holds for any $\delta > 0$ small enough that 
$
p_\delta(\eta)
=\bm{o}\big(1/k(\eta)\big).
$
This concludes the proof of claim \eqref{proof, goal 2, lemma: metastability, proportion of time not around local minima}.

\medskip
\noindent
\textbf{Proof of Claim \eqref{proof, goal 4, lemma: metastability, proportion of time not around local minima}}.
Recall that $H(x) = \P(\norm{\bm Z_1} > x) \in \RV_{-\alpha}(x)$ as $x \to \infty$,
and that 
$
{\lambda^*_b(\eta)} \in \RV_{ \mathcal J^*_b\cdot (\alpha-1)  + 1 }(\eta)
$
as $\eta \downarrow 0$ (see \eqref{def scale function lambda * b eta}).
Observe that
\begin{align*}
    \P\big(
        \big( F^\eta_{t} \big)^\complement
    \big)
    & = 
    \P\Big(
        \#\big\{
            u \leq \floor{ t /\lambda^*_b(\eta) }:\ 
            \eta \norm{\bm Z_u} > \delta
        \big\} \geq k(\eta)
    \Big)
    \\ 
    & = 
    \P\Big(
        \text{Binomial}
        \big(
            \floor{t/\lambda^*_b(\eta)},\ H(\delta/\eta)
        \big) \geq k(\eta)
    \Big).
\end{align*}
For the expectation of the Binomial variable above, note that
$
\frac{ t }{\lambda^*_b(\eta)} \cdot H(\delta/\eta)
\in \RV_{ - (\mathcal J^*_b - 1)(\alpha - 1)  }(\eta)
$
as $\eta \downarrow 0$.
Then, Claim \eqref{proof, goal 4, lemma: metastability, proportion of time not around local minima} follows from Markov's inequality and the definition of $k(\eta)$ in \eqref{proof, def k eta, lemma: metastability, proportion of time not around local minima}.

\medskip
\noindent
\textbf{Proof of Claim \eqref{proof, goal 3, lemma: metastability, proportion of time not around local minima}}.
On $A^\eta_{t}(\bm x) \cap B^{\eta}_{\delta}(\bm x)$,
we have $T^\eta_k \wedge \floor{ t/\lambda^*_b(\eta) } = \tilde T^\eta_k \wedge \floor{ t/\lambda^*_b(\eta) }$ for each $k \geq 1$, where
$
\tilde T^\eta_k 
    \delequal
    \min\big\{ u > S^\eta_k:\ 
        \bm X^{\eta|b}_u( \bm x) \notin I_{i;\epsilon,M} \setminus B_{\epsilon/2}(\bm 0) \big\}.
$
Furthermore, it has been noted above that, on the event
$B^\eta_{\delta}(\bm x) \cap F^\eta_t$, we have $N^\eta \leq k(\eta)$. 
Therefore,
\begin{align*}
    & \sup_{ \bm x \in B_{\epsilon/2}(\bm 0 ) }
        \P\Big(
         A^\eta_{t}(\bm x) \cap B^\eta_{\delta}(\bm x) \cap F^\eta_t  \cap \big( E^\eta_{t,T}(\bm x) \big)^\complement
        \Big)
    \\
    & \leq 
    \sup_{ \bm x \in B_{\epsilon/2}(\bm 0 ) }
    \P\big(
        \tilde T^\eta_j - S^\eta_j > T/\eta\text{ for some }j \leq k(\eta)
    \big)
    \\ 
    & \leq 
    k(\eta) \cdot 
    \underbrace{\sup_{\bm y \in B_{\epsilon}(\bm 0) }
    \P\big(
        \bm X^{\eta|b}_u(\bm x) \in I_{i;\epsilon,M} \setminus B_{\epsilon/2}(\bm 0)\ \forall u \leq \floor{T/\eta}
    \big)}_{\delequal p^*_T(\eta)}.
\end{align*}
The last step follows from the strong Markov property at the $S^\eta_j$'s.
Applying Lemma~\ref{lemma: fixed cycle exit or return taking too long, first exit analysis},
we can find $T$ large enough such that $p^*_T(\eta) = \bm{o}\big(1/k(\eta)\big)$ as $\eta \downarrow 0$ and complete the proof.
\end{proof}

Now, we are ready to prove  Proposition \ref{proposition: hat X close to X, metastability}.

\begin{proof}[Proof of Proposition \ref{proposition: hat X close to X, metastability}]
\linksinpf{proposition: hat X close to X, metastability}
The claim $\lim_{\eta\downarrow 0}\P\Big(\norm{ \bm X^{\eta|b}_{  \floor{T/\lambda^*_b(\eta)} }(\bm x_0) - \hat{\bm X}^{\eta,\epsilon|b}_{T}(\bm x_0)} \geq \epsilon\Big) = 0$
has been verified by part $(ii)$ of Lemma~\ref{lemma:  hat X close to X, fdd, metastability}.
In the remainder of this proof, we focus on establishing the claim
$\lim_{\eta\downarrow 0}\P\bigg( \dlp{[0,T]}\Big(\bm X^{\eta|b}_{ \floor{ {\boldsymbol{\cdot}}/\lambda^*_b(\eta) } }(\bm x_0),\hat{\bm X}^{\eta,\epsilon|b}_{\boldsymbol{\cdot}}(\bm x_0)\Big) \geq 2\epsilon\bigg) = 0$.
W.l.o.g., in this proof we focus on the case where $T = 1$,
and write $\dlp{} = \dlp{[0,1]}$ to lighten notations.

We start with a few observations that allow us to bound
\begin{align}
    \bm \Delta (\eta) & \delequal 
    \bigg(\dlp{}\Big( \bm X^{\eta|b}_{ \floor{  { \boldsymbol{\cdot} }  /\lambda^*_b(\eta) } }(\bm x_0), 
    \hat{\bm X}^{\eta,\epsilon|b}_{ \boldsymbol{\cdot} } (\bm x_0)  \Big)\bigg)^p
    =
    \sum_{n = 0}^{N - 1}
    \underbrace{ \int_{n/N}^{(n+1)/N}
    \norm{ 
        \bm X^{\eta|b}_{ \floor{t/\lambda^*_b(\eta)} }(\bm x_0) - \hat{\bm X}^{\eta,\epsilon|b}_t(\bm x_0)
    }^pdt
    }_{ \delequal \bm d^{(\eta)}_p(n)  },
    \label{proof, def Delta eta and de eta p n, proposition: hat X close to X, metastability}
\end{align}
given a positive integer $N$.
First, for any $\eta > 0$, let
$
\mathcal I^{(\eta)}_N(n) \delequal \mathbf{I}\big\{
    \bm i^{(\eta)}_N(n) > 1/N^2
\big\},
$
where
\begin{align*}
    \bm i^{(\eta)}_N(n) \delequal \int_{n/N}^{(n+1)/N} \mathbf{I}\bigg\{
        \bm X^{\eta|b}_{ \floor{t/\lambda^*_b(\eta)} }(\bm x_0) \notin \bigcup_{j \in [K]}B_{\epsilon}(\bm m_j)
        \bigg\}dt,
    \qquad \forall n = 0,1,\cdots,N-1.
\end{align*}
That is, $\bm i^{(\eta)}_N(n)$ denotes the amount of time over $[\frac{n}{N},\frac{n+1}{N})$ that the SGD iterates (under a $\lambda^*_b(\eta)$ time scaling)  are not close enough to any local minima,
and $\mathcal I^{(\eta)}_N(n)$ is the indicator that $\bm i^{(\eta)}_N(n) > 1/N^2$.
Moreover,
let 
\begin{align}
    K^{(\eta)}_N \delequal \sum_{n = 1}^{N-1}\mathcal I^{(\eta)}_N(n).
    \nonumber
\end{align}
The proof hinges on the following claims:
there exist some $q^* \in (0,\infty)$ and a family of events $(A^\eta_N)_{ N \geq 1,\ \eta > 0}$ such that
\begin{enumerate}[$(i)$]
    \item on the event $A^\eta_N$, we have $\norm{ \bm X^{\eta|b}_t(\bm x_0) } \leq M$ for all $t \leq \floor{1/\lambda^*_b(\eta)}$;

    \item it holds for all $N$ large enough that $\lim_{\eta \downarrow 0}\P(A^\eta_N) = 1$;

    \item
        for all $N$ large enough, there exists $\bar \eta = \bar \eta(N) > 0$ such that under any $\eta \in (0,\bar\eta)$,
        \begin{align*}
            \P( K^{(\eta)}_N \geq j\ |\ A^\eta_N )
            \leq \P\bigg( \text{Binomial}(N,\frac{2q^*}{N}) \geq j\bigg), 
            \qquad 
            \forall j = 1,2,\cdots,N.
        \end{align*}
\end{enumerate}
To see why, 
note that 
by the definition in \eqref{def: hat tau, 1, metastability}--\eqref{def: marker process, hat X eta epsilon b},
the process 
$\hat{\bm X}^{\eta,\epsilon|b}_t(\bm x_0)$ only takes values in $\{\bm m_j:\ j \in [K]\}$,
so $\norm{\hat{\bm X}^{\eta,\epsilon|b}_t(\bm x_0)} < M\ \forall t > 0$ (see \eqref{goal 1, lemma: unlikely to exit M or visit s i, metastability}).
Together with the Claim $(i)$ above, we get $\norm{ \bm X^{\eta|b}_{ \floor{t/\lambda^*_b(\eta)} }(\bm x_0) - \hat{\bm X}^{\eta,\epsilon|b}_t(\bm x_0) } \leq 2M$ for all $t \in (0,1]$.
Moreover, note that
we must have $\norm{ \bm X^{\eta|b}_{ \floor{ t/\lambda^*_b(\eta) } }(\bm x_0) - \hat{\bm X}^{\eta,\epsilon|b}_t(\bm x_0) } < \epsilon$
whenever 
$
\bm X^{\eta|b}_{ \floor{t/\lambda^*_b(\eta)} }(\bm x_0)\in \bigcup_{j \in [K]}B_{\epsilon}(\bm m_j).
$
Then, the following holds on the event $A^\eta_N$ for the terms $\bm d^{(\eta)}_p(n)$ in \eqref{proof, def Delta eta and de eta p n, proposition: hat X close to X, metastability}:
if $\bm i^{(\eta)}_N(n) \leq 1/N^2$,
we have $\bm d^{(\eta)}_p(n) \leq \epsilon^p \cdot \frac{1}{N} + (2M)^p \cdot \frac{1}{N^2}$;
otherwise, we have the trivial bound $\bm d^{(\eta)}_p(n) \leq (2M)^p \cdot \frac{1}{N}$.
Therefore, on $A^\eta_N$,
\begin{align*}
    \bm \Delta (\eta) 
    &
    \leq 
    (2M)^p \cdot \frac{1}{N} + \sum_{n = 1}^{N-1}\bm d^{(\eta)}_p(n)
    \\ 
    & 
    \leq 
    (2M)^p \cdot \frac{1}{N} + K^{(\eta)}_N \cdot \frac{(2M)^p}{N} + (N - 1 -K^{(\eta)}_N) \cdot \Big(\frac{\epsilon^p}{N} + \frac{(2M)^p}{N^2}\Big)
    \leq 
    (2M)^p \cdot \frac{2 + K^{(\eta)}_N}{N} + \epsilon^p.
\end{align*}
Then, given any $N$ large enough, $\eta \in (0,\bar \eta(N))$
and any
$\beta \in (0,1)$,
\begin{align*}
    &
    \P\bigg(\bm \Delta (\eta) \geq \underbrace{\frac{2 + 2q^* + \sqrt{N^\beta } }{ N }}_{ \delequal \delta(N,\beta) }\cdot (2M)^p + \epsilon^p \bigg)
    \\
    & \leq 
    \P(K^{(\eta)}_N \geq 2q^* + \sqrt{N^\beta})
    =
     \P\big(  \{K^{(\eta)}_N \geq 2q^* + \sqrt{N^\beta} \} \cap A^\eta_N \big)
     +
      \P\big(  \{K^{(\eta)}_N \geq 2q^* + \sqrt{N^\beta} \} \setminus A^\eta_N \big)
     \\ 
     & \leq 
    \P\bigg( \text{Binomial}(N,\frac{2q^*}{N}) \geq 2q^* + \sqrt{N^\beta} \bigg) + \P\big( (A^\eta_N)^\complement\big)
    \qquad\text{by claim }(iii)
    \\ 
    & \leq 
    \frac{ \text{var}\Big[   \text{Binomial}(N,\frac{2q^*}{N})  \Big]  }{N^\beta} + \P\big( (A^\eta_N)^\complement\big)
\leq 
    \frac{2q^*}{N^\beta} + \P\big( (A^\eta_N)^\complement\big).
\end{align*}
Lastly, to conclude the proof with $\lim_{\eta \downarrow 0}\P(\bm \Delta(\eta) > 2^p\epsilon^p) = 0$,
note that 
\begin{itemize}
    \item by claim $(ii)$, $\lim_{\eta \downarrow 0}\P\big( (A^\eta_N)^\complement\big) = 0$;

    \item 
        due to $\beta \in (0,1)$ we have $\lim_{N \to \infty}\delta(N,\beta) = 0$,
and hence $\delta(N,\beta) \cdot (2M)^p + \epsilon^p < 2^p\epsilon^p$ eventually for all $N$ large enough.
\end{itemize}
Now, it only remains to verify claims $(i)$, $(ii)$, and $(iii)$.

\medskip
\noindent
\textbf{Proof of Claims $(i)$ and $(ii)$}.
We start by defining events $A^\eta_N$.
Let $t_N(n) = n/N$,
\begin{align*}
    & A^\eta_N(n)
    \\
    & \delequal 
    \underbrace{
        \bigg\{ \bm X^{\eta|b}_{ \floor{ t_N(j)/\lambda^*_b(\eta)  }  }(\bm x_0) \in \bigcup_{ \bm m_i \in V^*_b }B_{\epsilon/2}(\bm m_i)
    \ \forall j \in [n]
        \bigg\}
    }_{\delequal A^\eta_{N,1}(n)}
    \cap 
    \underbrace{ \Big\{
    \norm{ \bm X^{\eta|b}_{ \floor{ t/\lambda^*_b(\eta)  }  }(\bm x_0) } \leq M \ \forall t \leq t_N(n)
    \Big\} }_{ \delequal A^\eta_{N,2}(n) },
\end{align*}
and let $A^\eta_N = A^\eta_N(N)$.
Note that $A^\eta_N(1) \supseteq A^\eta_N(2) \supseteq \cdots \supseteq A^\eta_N(N) = A^\eta_N$.
Claim $(i)$ then holds by definition.
Furthermore,
by Lemma~\ref{lemma: metastability, abstract framework}
and that $\lim_{\eta\downarrow 0}\P(\norm{\bm X^{\eta|b}_{ T}(\bm x_0) - \hat{\bm X}^{\eta,\epsilon|b}_{T}(\bm x_0)} \geq \epsilon) = 0$ for any $T > 0$,
we have
$
\{\bm X^{\eta|b}_{ \floor{ t/\lambda^*_b(\eta) }  }(\bm x_0):\ t > 0\} \tofdd 
\{\bm Y^{*|b}_t:\ t > 0\};
$
then, since $\bm Y^{*|b}_t$ only visits states in $V^*_b$,
we get $\lim_{\eta \downarrow 0}\P\big(A^\eta_{N,1}\big) = 1$ for any $N \geq 1$.
On the other hand,  part $(i)$ of Lemma \ref{lemma:  hat X close to X, fdd, metastability} implies $\lim_{\eta \downarrow 0}\P\big(A^\eta_{N,2}\big) = 1\ \forall N \geq 1$ for any $M$ large enough.
This verifies Claim $(ii)$.


\medskip
\noindent
\textbf{Proof of Claim $(iii)$}.
Let $\big(\widetilde{\mathcal I}^\eta_N(n)\big)_{n \in [N-1]}$
be a random vector
with law
$
\mathcal L\Big( \big({\mathcal I}^\eta_N(n)\big)_{n \in [N-1]}\ \Big|\ A^\eta_N\Big).
$
It suffices to find some $q^* \in (0,\infty)$ such that,
for all $N$ large enough, there is $\bar \eta = \bar \eta(N) > 0$
for the following claim to hold:
Given any $n \in [N-1]$ and any sequence $i_j \in \{0,1\}\ \forall j \in [n-1]$,
\begin{align}
    \P\Big(  \widetilde{\mathcal I}^\eta_N(n) = 1\ \Big|\ \widetilde{\mathcal I}^\eta_N(j) = i_j\ \forall j \in [n-1] \Big) < 2q^*/N
    \qquad \forall \eta \in (0,\bar \eta).
    \label{goal, corollary, elimination of sharp minima, metastability}
\end{align}
To see why, under condition \eqref{goal, corollary, elimination of sharp minima, metastability} and for any $\eta \in (0,\bar \eta(N))$, 
there exists a coupling between 
iid Bernoulli random variables $(\mathcal Z_N(n))_{n \in [N-1]}$ with success rate $2q^*/N$
and 
$( \widetilde{\mathcal I}^\eta_N(n))_{n \in [N-1]}$
such that
$
\widetilde{\mathcal I}^\eta_N(n) \leq {\mathcal Z}_N(n)\ \forall n \in [N-1]
$
almost surely.
This stochastic comparison between $(\mathcal Z_N(n))_{n \in [N-1]}$ and $(\widetilde{\mathcal I}^\eta_N(n))_{n \in [N-1]}$
directly verifies Claim $(iii)$.

To prove condition \eqref{goal, corollary, elimination of sharp minima, metastability},
note that
given any $N$, any $n \in [N-1]$, and any sequence $i_j \in \{0,1\}\ \forall j \in [n-1]$,
\begin{align*}
    &  
    \P\Big(  \widetilde{\mathcal I}^\eta_N(n) = 1\ \Big|\ \widetilde{\mathcal I}^\eta_N(j) = i_j\ \forall j \in [n-1] \Big)
    \\ 
    & = 
    \frac{
         \P\Big(  \widetilde{\mathcal I}^\eta_N(n) = 1; \ \widetilde{\mathcal I}^\eta_N(j) = i_j\ \forall j \in [n-1] \Big)
    }{
         \P\Big(\widetilde{\mathcal I}^\eta_N(j) = i_j\ \forall j \in [n-1] \Big)
    }
    \\ 
    & = 
    \frac{
         \P\Big( \big\{  {\mathcal I}^\eta_N(n) = 1; \ {\mathcal I}^\eta_N(j) = i_j\ \forall j \in [n-1] \big\} \cap A^\eta_N \Big)
         \Big/\P(A^\eta_N)
    }{
         \P\Big(\big\{ {\mathcal I}^\eta_N(j) = i_j\ \forall j \in [n-1] \big\} \cap A^\eta_N \Big) \Big/\P(A^\eta_N)
    }
    \qquad 
    \text{by definition of }\big(\widetilde{\mathcal I}^\eta_N(n)\big)_{n \in [N-1]}
    \\ 
    & \leq 
    \frac{
         \P\Big( \big\{  {\mathcal I}^\eta_N(n) = 1; \ {\mathcal I}^\eta_N(j) = i_j\ \forall j \in [n-1] \big\} \cap A^\eta_N(n) \Big)
    }{
         \P\Big(\big\{ {\mathcal I}^\eta_N(j) = i_j\ \forall j \in [n-1] \big\} \cap A^\eta_N \Big)
    }
    \qquad\text{due to }A^\eta_N(n) \supseteq A^\eta_N
    \\ 
    & = 
     \frac{
         \P\Big( \big\{  {\mathcal I}^\eta_N(n) = 1; \ {\mathcal I}^\eta_N(j) = i_j\ \forall j \in [n-1] \big\} \cap A^\eta_N(n) \Big)
    }{
         \P\Big(\big\{ {\mathcal I}^\eta_N(j) = i_j\ \forall j \in [n-1] \big\} \cap A^\eta_N(n) \Big)
    }
    \cdot 
    \frac{
     \P\Big(\big\{ {\mathcal I}^\eta_N(j) = i_j\ \forall j \in [n-1] \big\} \cap A^\eta_N(n) \Big)
    }{
     \P\Big(\big\{ {\mathcal I}^\eta_N(j) = i_j\ \forall j \in [n-1] \big\} \cap A^\eta_N \Big)
    }
    \\ 
    & = 
     \underbrace{ 
        \P\Big(  {\mathcal I}^\eta_N(n) = 1\ \Big| \big\{ {\mathcal I}^\eta_N(j) = i_j\ \forall j \in [n-1]\big\} \cap  A^\eta_N(n) \Big)
    }_{ \delequal p^\eta_1(N) }
     \cdot 
     \underbrace{
        \frac{
        \P\Big(\big\{ {\mathcal I}^\eta_N(j) = i_j\ \forall j \in [n-1] \big\} \cap A^\eta_N(n) \Big)
        }{
         \P\Big(\big\{ {\mathcal I}^\eta_N(j) = i_j\ \forall j \in [n-1] \big\} \cap A^\eta_N \Big)
        }
    }_{ \delequal p^\eta_2(N) }.
\end{align*}
For the term $p^\eta_1(N)$, 
note that on  the event $A^\eta_N(n)$ we have $\bm X^{\eta|b}_t(\bm x_0) \in \bigcup_{\bm m_i \in V^*_b}B_{\epsilon/2}(\bm m_i)$
at
$t = \floor{ t_N(n)/\lambda^*_b(\eta) }$,
and hence (by Markov property)
\begin{align*}
    p^\eta_1(N)
    & \leq 
    \max_{\bm m_i \in V^*_b}
    \sup_{ \bm y \in B_{\epsilon/2}(\bm m_i) }
    \P\bigg(
        \int_0^{1/N} \mathbf{I}\Big\{
                \bm X^{\eta|b}_{ \floor{ s/\lambda^*_b(\eta) }  }(\bm y) \notin B_\epsilon(\bm m_i) 
            \Big\}ds > 1/N^2
    \bigg).
\end{align*}
Applying Lemma \ref{lemma: metastability, proportion of time not around local minima},
for all $N$ large enough there exist $\bar\eta = \bar\eta(N) > 0$,
such that 
$
p^\eta_1 \leq q^*/N
\ \forall \eta \in (0,\bar\eta)$,
where $q^*\in (0,\infty)$ is a constant that does not vary with $N$ or $\eta$.
As for the term $p^\eta_2(N)$,
note that for any event $B$ with $\P(B)>0$,
we have 
\begin{align}
    \frac{ \P(B \cap A^\eta_N(n)) }{ \P(B \cap A^\eta_N) }
    \leq \frac{ \P(B) }{ \P(B) - \P\big((A^\eta_N)^\complement\big) } \to 1,
    \qquad\text{  as $\eta \downarrow 1$, due to }\lim_{\eta\downarrow 0}\P(A^\eta_N) = 1.
    \label{proof, property 1, corollary, elimination of sharp minima, metastability}
\end{align}
Also, in the definition of $p^\eta_2(N)$ above, note that there are only finitely many choices of $n \in [N-1]$
and finitely many combinations for $i_j \in\{0,1\}\ \forall j \in [n-1]$.
By enumerating each of the finitely many choices for $B = \{ {\mathcal I}^\eta_N(j) = i_j\ \forall j \in [n-1]\}$
in \eqref{proof, property 1, corollary, elimination of sharp minima, metastability},
we can find some $\bar \eta = \bar \eta(N)$ such that
$
p^\eta_2(N) < 2\ \forall \eta \in (0,\bar \eta)
$
uniformly for all those choices.
Combining the bounds $p^\eta_1(N) < q^*/N$ and $p^\eta_2(N) < 2$,
we verify the condition \eqref{goal, corollary, elimination of sharp minima, metastability} and conclude the proof.
\end{proof}

\section{Properties of the Markov Jump Process $Y^{*|b}$}
\label{sec: appendix, CTMC Y * b}

\begin{proposition}\label{proposition: Y * b is a CTMC}
\linksinthm{proposition: Y * b is a CTMC}
Let Assumptions \ref{assumption: geometry, metastability} and \ref{assumption, value of b, irreducible, metastability} hold.
The following claims hold for $((U_j)_{j \geq 1},(V_j)_{j \geq 1})$ defined in \eqref{def Y * b t, metastability}:
\begin{enumerate}[(i)]

    \item For any $t > 0$, $\lim_{i \to \infty}\P(\sum_{j \leq i}U_j > t) = 1$;

    \item For any $u > 0$ and $i \geq 1$, $\P(U_1 + \cdots + U_i = u) = 0$;
    
    \item $\bm Y^{*|b}_{\boldsymbol{\cdot}} \distequal \Phi((U_j)_{j \geq 1},(V_j)_{j \geq 1})$ holds for the mapping $\Phi$ defined in \eqref{def jump process}; that is, it
    is a continuous-time Markov chain 
    with initial distribution \eqref{def: Y * b, initial distribution}
    and
    generator \eqref{def, generator of Y * b}.
\end{enumerate}
\end{proposition}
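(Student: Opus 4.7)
\smallskip

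\noindent\textbf{Proof plan for Proposition~\ref{proposition: Y * b is a CTMC}.}
The key observation is that the sequence $(V_n)_{n \geq 1}$ is, by construction, a discrete-time Markov chain on the finite state space $V = \{\bm m_1,\ldots,\bm m_K\}$ with transition kernel $P(\bm m_i,\bm m_j) = q_b(i,j)/q_b(i)$, whose connectivity structure coincides with the typical transition graph of Definition~\ref{def main paper typical transition graph} (here we use \eqref{property, edge on graph iff q b i j strictly positive, metastability} to match edges with positive $q_b(i,j)$'s). Under Assumption~\ref{assumption, value of b, irreducible, metastability}, this chain is irreducible on a finite state space, hence positive recurrent; in particular the successive hitting times of $V^*_b$,
\begin{align*}
    \tau_1 \delequal \min\{l \geq 1:\ V_l \in V^*_b\},\qquad \tau_{k+1} \delequal \min\{l > \tau_k:\ V_l \in V^*_b\}\quad (k \geq 1),
\end{align*}
are all finite almost surely. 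Conditional on $(V_n)$, the $U_j$'s are independent with $U_{l+1} \equiv 0$ when $V_l \notin V^*_b$ and $U_{l+1} \sim \text{Exp}(q_b(\cdot))$ when $V_l \in V^*_b$, and the rates lie in $(0,\infty)$ by \eqref{property, sum of q b i j is q b i, metastability, proof}.

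\noindent For claim $(i)$, I would argue as follows: since $(V_n)$ visits $V^*_b$ infinitely often a.s., there are infinitely many indices $l$ for which $U_{l+1}$ is a nondegenerate exponential with rate at most $q_{\max} \delequal \max_{i \in [K]} q_b(i) < \infty$; the strong law of large numbers applied to the sub-sequence $(U_{\tau_k+1})_{k \geq 1}$ (which are conditionally independent given $(V_n)$ with means bounded below by $1/q_{\max}$) yields $\sum_j U_j = \infty$ a.s., and in particular $\sum_{j \leq i} U_j > t$ with probability tending to $1$. Claim $(ii)$ is even simpler: conditional on $(V_n)$, $\sum_{j \leq i}U_j$ is either identically $0$ (if $V_l \notin V^*_b$ for all $l \leq i-1$) or a nontrivial sum of independent exponentials; in the former case the sum cannot equal $u > 0$, and in the latter case the sum admits a density on $(0,\infty)$. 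Integrating over the law of $(V_n)$ gives $\P(\sum_{j \leq i}U_j = u) = 0$.

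\noindent Claim $(iii)$ is the main task. The plan is to describe the process $Y_{\boldsymbol{\cdot}} = \Phi((U_j),(V_j))$ explicitly. Since $U_1 = 0$ by convention and $U_{l+1} = 0$ for all $l < \tau_1$ (because $V_l \notin V^*_b$), we have $\sum_{j \leq \tau_1} U_j = 0$ whereas $U_{\tau_1 + 1} > 0$ a.s., so $\mathcal J(0) = \tau_1$ and $Y_0 = V_{\tau_1}$. By the strong Markov property of $(V_n)$ and the definition of $\theta_b$ in \eqref{def: absorption prob, theta b i j}, the conditional law of $V_{\tau_1}$ given $V_1 = \bm m_{i_0}$ is $\theta_b(\,\cdot\,|\bm m_{i_0})$, matching \eqref{def: Y * b, initial distribution}. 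More generally, between $\tau_k$ and $\tau_{k+1}$ the only nonzero $U$-increment is $U_{\tau_k+1} \sim \text{Exp}(q_b(i_k))$ (with $\bm m_{i_k} = V_{\tau_k}$), so $Y_t$ stays at $V_{\tau_k}$ during the interval $[\sum_{j \leq \tau_k}U_j, \sum_{j \leq \tau_{k+1}}U_j)$. Thus $(Y_{\sum_{j\leq \tau_k}U_j})_{k \geq 1} = (V_{\tau_k})_{k \geq 1}$ is the embedded discrete chain of $Y$ on $V^*_b$, with exponential holding times $U_{\tau_k+1}$ of rate $q_b(i_k)$ independent of the embedded chain.

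\noindent To identify the generator, apply the strong Markov property at $\tau_k$ to compute, for $\bm m_i,\bm m_j \in V^*_b$,
\begin{align*}
    \P(V_{\tau_{k+1}} = \bm m_j\,|\,V_{\tau_k} = \bm m_i) = \sum_{j' \neq i} \frac{q_b(i,j')}{q_b(i)}\,\theta_b(\bm m_j|\bm m_{j'}),
\end{align*}
by decomposing on $V_{\tau_k+1}$ and using the absorption interpretation of $\theta_b$. Multiplying by the holding rate $q_b(i)$ produces, for $i \neq j$, the transition rate $\sum_{j'\neq i}q_b(i,j')\theta_b(\bm m_j|\bm m_{j'})$, which coincides with $Q^{*|b}(i,j)$ in \eqref{def: generator of Y * b, 1}; the diagonal entries \eqref{def: generator of Y * b, 2} follow from the row-sum identity (possible self-transitions $V_{\tau_{k+1}} = V_{\tau_k}$ are absorbed into the diagonal in the standard way). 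Combined with the memoryless holding times, this identifies $Y_{\boldsymbol{\cdot}}$ as the stated CTMC. The main bookkeeping obstacle is making rigorous that the strong Markov property of the discrete chain at the random times $\tau_k$ can be lifted to yield both the correct embedded chain and the exponential holding times simultaneously, which is handled by conditioning on the $\sigma$-algebra generated by $(V_n)$ and observing that the $U_j$'s are conditionally independent with the prescribed marginals.
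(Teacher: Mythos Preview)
Your proposal is correct and follows essentially the same route as the paper. For (i) and (ii) the arguments are identical in spirit (irreducibility of the discrete chain forces infinitely many exponential holding times with rates bounded by $q^* = \max_i q_b(i)$, and absolute continuity of exponentials gives atomlessness). For (iii), your hitting times $\tau_k$ of $V^*_b$ are exactly the paper's $\hat T_k - 1$, and your embedded-chain computation $\P(V_{\tau_{k+1}} = \bm m_j \mid V_{\tau_k} = \bm m_i) = \sum_{j'\neq i} \frac{q_b(i,j')}{q_b(i)}\theta_b(\bm m_j|\bm m_{j'})$ is precisely the calculation the paper carries out; the only cosmetic difference is that the paper introduces an explicit ``dummy jump'' representation $(\tilde U_k,\tilde V_k)$ of $\bm Y^{*|b}$ (the CTMC with self-transitions at rate $q_b(i)\cdot r^{*|b}(i,i)$ added in) as an intermediate target, whereas you invoke directly the standard fact that self-transitions in the embedded chain are absorbed into the diagonal of the generator.
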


\begin{proof}\linksinpf{proposition: Y * b is a CTMC}
$(i)$
Recall the definitions of $q_b(i)$ and $q_b(i,j)$ in \eqref{def: q b i, q b i j, generator for Y * b}.
Also, recall the definition of the discrete-time Markov chain $(S_t)_{t \geq 0}$ at the end of Section~\ref{subsec: main results},
with state space $\{\bm m_1,\ldots, \bm m_K \}$
and one-step transition kernel
$
\P(S_{t+1} = \bm m_j|S_t = \bm m _i) = q_b(i,j)/q_b(i).
$
Note that the chain is well-defined due to \eqref{property, sum of q b i j is q b i, metastability, proof}.
We also introduce two notations.
First,  we use $S_t(\bm v)$ to denote the Markov chain under initial condition $S_0(\bm v) = \bm v$.
Second,  for each $t \geq 0$, set $I^S_t(\bm v) = i$ if and only if $S_n(\bm v) = \bm m_i$ (i.e., recording the indices rather than the exact values of the states visited).

Let $\bm x_0$ be the initial value prescribed in Theorem~\ref{corollary irreducible case},
and
$i_0 \in [K]$ be the unique index with $\bm x_0 \in I_{i_0}$.
Let $(E_i)_{i \geq 0}$ be a sequence of iid Exponential RVs with rate 1, which is independent of $(S_t(\bm m_{i_0}))_{t \geq 0}$.
By the law of $(U_l, V_l)_{l \geq 1}$ specified in \eqref{def Y * b t, metastability} (recall that $U_1 = 0$ and $V_1 = \bm m_{i_0}$), 
for each $i \geq 2$ we have
\begin{align}
    \sum_{j \in [i]}U_j & \distequal \sum_{j = 0,1,\cdots,i-2} 
    \frac{E_j}{ q_b\big( I^S_j(\bm m_{i_0}) \big)  } \cdot \mathbf{I}\big\{ S_j(\bm m_{i_0}) \in V^*_b  \big\}
    \label{proof, proposition: Y * b is a CTMC, part i, representation of U_j sum}
    \\
    & \geq \frac{1}{q^*} \cdot \sum_{j = 0,1,\cdots,i-2} E_j \cdot \mathbf{I}\big\{ S_j(\bm m_{i_0}) \in V^*_b \big\}
    \qquad
    \text{ where }q^* \delequal \max_{i:\ \bm m_i \in V^*_b }q_b(i) \in (0,\infty)
    \nonumber
    \\ 
    & \distequal 
    \sum_{ j = 0  }^{ N_{i-2}  } \frac{E_j}{q^*}
    \qquad 
    \text{ where }N_{i} \delequal \sum_{j = 0}^{i}\mathbf{I}\big\{ S_j(\bm m_{i_0}) \in V^*_b \big\}.
    \nonumber
\end{align}
Then, given $t > 0$ and positive integers $n,i$, we get
$
\P(\sum_{j \leq i} U_j > t)
\geq 
\P(\sum_{j = 0}^n E_j/q^* > t)\cdot \P(  N_{i-2} > n ).
$
To conclude the proof of part $(i)$,
it suffices to show that
for each $\epsilon > 0$,
there exists $n = n(\epsilon)$ such that
\begin{align}
    \P\bigg(\sum_{j = 0}^n E_j/q^* > t\bigg) > 1 - \epsilon,
    \qquad 
    \lim_{i \to \infty}\P( N_{i} > n ) = 1.
    \label{proof, goal 1, part i, proposition: Y * b is a CTMC}
\end{align}
The first claim holds for any $n$ large enough due to $q^* \in (0,\infty)$; see \eqref{property, sum of q b i j is q b i, metastability, proof}.
The second claim follows from the irreducibility of the Markov chain $S_t(\bm v)$; see Assumption~\ref{assumption, value of b, irreducible, metastability} and \eqref{property, edge on graph iff q b i j strictly positive, metastability}.

$(ii)$
In light of the representation \eqref{proof, proposition: Y * b is a CTMC, part i, representation of U_j sum},
this claim is an immediate consequence of the absolute continuity of exponential distributions.

$(iii)$ 
We start by considering an equivalent representation of the continuous-time Markov chain $\bm Y^{*|b}$ (recall the definitions in \eqref{def: Y * b, initial distribution}--\eqref{def: generator of Y * b, 2}),
based on the following straightforward observation:
the law of the process would remain the same if we allow the process to jump from any state $\bm m_i$ to itself at exponential rates (i.e., by including Markovian ``dummy'' jumps where the process does not move at all).
More precisely, 
using the mapping $\Phi$ in Definition~\ref{def jump process},
we have
$\bm Y^{*|b}_{\boldsymbol{\cdot}} \distequal \Phi((\tilde U_k)_{k \geq 1},(\tilde V_k)_{k \geq 1})$
with $\tilde U_k$'s and $\tilde V_k$'s defined as follows.
Let $\tilde V_1$ be sampled from the distribution $\theta_b(\cdot|\bm m_{i_0})$ defined in \eqref{def: absorption prob, theta b i j}
and let $\tilde U_1 \equiv 0$.
Next,
for any $t > 0$, $l \geq 1$, and $m_i,\ m_j \in V^*_b$ (with possibly $m_i = m_j$),
\begin{align}
    & \P( \tilde U_{l+1} < t,\ \tilde V_{l+1} = \bm m_j\ |\ \tilde V_l = \bm m_i, (\tilde V_j)_{j = 1}^{l-1},\ (\tilde U_j)_{j = 1}^l)
    = \P( \tilde U_{l+1} < t,\ \tilde V_{l+1} = \bm m_j\ |\ \tilde V_l = \bm m_i )
    \nonumber 
    \\ 
    & = 
    r^{*|b}(i,j) \cdot \big( 1 - \exp( - q_b(i)t)\big),
    \label{def Y * b t, alternative representation, metastability}
\end{align}
where
\begin{align}
    r^{*|b}(i,j) \delequal \sum_{j^\prime \in [K]:\ j^\prime \neq i}\frac{q_b(i,j^\prime)}{q_b(i)} \cdot \theta_b(\bm m_j|\bm m_{j^\prime})
    \label{proof, def of r * b}
\end{align}
with $q_b(i)$ and $q_b(i,j)$ defined in \eqref{def: q b i, q b i j, generator for Y * b}.
That is,
by introducing ``dummy'' jumps from $\bm m_i \in V^*_b$ to itself with exponential rate
$\sum_{j^\prime \neq i}q_b(i,j^\prime)\theta_b(\bm m_i|\bm m_{j^\prime})$,
we end up with the same process and obtain a reformulation $\bm Y^{*|b}_{\boldsymbol{\cdot}} \distequal  \Phi((\tilde U_k)_{k \geq 1},(\tilde V_k)_{k \geq 1})$.


Meanwhile, we state a useful property of the mapping $\Phi$.
Recall that $U_1 = 0$,
and set $\hat T_0 = 1$. For each $k \geq 1$, define (under the convention $U_0 = 0$)
\begin{align}
    \hat T_k \delequal \min\{ j > \hat T_{k -1}:\ U_j \neq 0 \},
    \qquad 
    \hat V_k \delequal V_{ -1 + \hat T_k},
    \qquad 
    \hat U_k \delequal \sum_{ j = \hat T_{k-1} }^{- 1 + \hat T_k}U_j = U_{ \hat T_{k-1} }.
    \label{proof, part iii, def hat U hat V transform, proposition: Y * b is a CTMC}
\end{align}
Note that we have $\hat U_1 = 0$ and $\hat T_1 \geq 2$, which implies $-1 + \hat T_1 \geq 1$. This confirms that $\hat V_1$ is well-defined.
Also, \eqref{def Y * b t, metastability} dictates that $\hat V_1$ admits the law of $\theta_b(\cdot|\bm m_{i_0})$
defined in \eqref{def: absorption prob, theta b i j}.
In simple terms, $((\hat U_k)_{k \geq 1},(\hat V_k)_{k \geq 1})$ can be interpreted as a transformation of $((U_j)_{j \geq 1},(V_j)_{j \geq 1})$ with consecutive instantaneous jumps grouped together.
As a result,
\begin{align}
    \Phi\Big((U_j)_{j \geq 1},(V_j)_{j \geq 1}\Big) = \Phi\Big((\hat U_k)_{k \geq 1},(\hat V_k)_{k \geq 1}\Big).
    \label{proof, part iii, goal 1, proposition: Y * b is a CTMC}
\end{align}
\elaborate{
To see why, let $Y = \Phi\Big((U_j)_{j \geq 1},(V_j)_{j \geq 1}\Big)$
 and $\hat Y = \Phi\Big((\hat U_k)_{k \geq 1},(\hat V_k)_{k \geq 1}\Big)$.
From Definition \ref{def jump process}, given any $t \geq 0$
there exists some non-negative integer $k = k(t)$ such that 
$\sum_{j \leq k}\hat U_j \leq t$,
$\sum_{j \leq k + 1}\hat U_j > t$,
and
$
\hat Y_t = \hat V_{k}=  V_{-1 + \hat T_k}.
$
Meanwhile, from \eqref{proof, part iii, def hat U hat V transform, proposition: Y * b is a CTMC},
we get
\begin{align}
\sum_{j \leq k}\hat U_j
= \sum_{j \leq k}\sum_{i = \hat T_{j-1}}^{-1 + \hat T_j}U_i
= \sum_{i \leq -1 + \hat T_k}U_i \leq t.
\label{proof, part iii, result 1 for goal 1, proposition: Y * b is a CTMC}
\end{align}
Similarly, we yield
$
\sum_{j \leq k + 1}\hat U_j = \sum_{i \leq -1 + \hat T_{k+1}}U_i > t.
$
Moreover, from \eqref{proof, part iii, def hat U hat V transform, proposition: Y * b is a CTMC} we know that $U_{\hat T_k} > 0$ and $U_{j} = 0$ for all $\hat T_k + 1 \leq j \leq -1 + \hat T_{k+1}$.
As a result,
\begin{align}
    \sum_{i \leq \hat T_k}U_i > t.
    \label{proof, part iii, result 2 for goal 1, proposition: Y * b is a CTMC}
\end{align}
Combining results \eqref{proof, part iii, result 1 for goal 1, proposition: Y * b is a CTMC} with \eqref{proof, part iii, result 2 for goal 1, proposition: Y * b is a CTMC},
we have $\mathcal J(t) = -1 + \hat T_k$ for the $\mathcal J(t)$ defined in \eqref{def, mapping Phi for jump process}.
This leads to $Y_t = V_{-1 + \hat T_k} = \hat Y_t$ and, due to the arbitrariness of $t \geq 0$, concludes the proof of claim \eqref{proof, part iii, goal 1, proposition: Y * b is a CTMC}.
}



In light of \eqref{proof, part iii, goal 1, proposition: Y * b is a CTMC} and the representation $\bm Y^{*|b}_{\boldsymbol{\cdot}} \distequal  \Phi((\tilde U_k)_{k \geq 1},(\tilde V_k)_{k \geq 1})$ established above,
to prove part $(iii)$ it suffices to show that
\begin{align}
    (\hat U_k,\ \hat V_k)_{k \geq 1} \distequal (\tilde U_k,\ \tilde V_k)_{k \geq 1}.
    \label{proof, part iii, goal 2, proposition: Y * b is a CTMC}
\end{align}
As noted above, we have $\hat U_1 = \tilde U_1 = 0$, and that both $\hat V_1$ and $\tilde V_1$ admit the law $\theta_b(\cdot|\bm m_{i_0})$. 
Next, fix some $k \geq 1$, $\bm m_i,\ \bm m_j \in V^*_b$ (possibly with $\bm m_i = \bm m_j$) , and some $t > 0$. Observe that
\begin{align*}
    & \P( \hat U_{k+1} < t,\ \hat V_{k+1} = \bm m_j,\ \hat V_k = \bm m_i)
    \\ 
    & = \sum_{ N \geq 1}\sum_{ n \geq 1 }
   \P(  \hat U_{k+1} < t,\ V_{ N + n } = \bm m_j,\ \hat T_{k+1} - 1 = N + n,\ V_{N} = \bm m_i,\ \hat T_k - 1 = N )
   \qquad\text{by \eqref{proof, part iii, def hat U hat V transform, proposition: Y * b is a CTMC}}
   \\ 
   & = \sum_{ N \geq 1}\sum_{ n \geq 1 }
   \P( U_{ N+1 } < t,\ V_p \notin V^*_b\ \forall N+1 \leq  p \leq N+n-1;\\
   &\qquad \qquad \qquad \qquad  V_{ N + n } = \bm m_j,\ \hat T_{k+1} - 1 = N + n,\ V_{N} =\bm m_i,\ \hat T_k - 1 = N)
   \qquad 
   \text{by \eqref{proof, part iii, def hat U hat V transform, proposition: Y * b is a CTMC} and \eqref{def Y * b t, metastability}}
   \\ 
   & = \sum_{ N \geq 1}\sum_{ n \geq 1 }\sum_{ (l_1,\cdots,l_{n-1}) \in \mathscr I(i,n-1)  }
    \P( U_{ N+1 } < t,\ V_{N+p} = \bm m_{l_p}\ \forall p \in [n-1];\\
   &\qquad \qquad \qquad \qquad \qquad \qquad   \qquad \qquad V_{ N + n } = \bm m_j,\ \hat T_{k+1} - 1 = N + n,\ V_{N} = \bm m_i,\ \hat T_k - 1 = N  )
   \\ 
   &
   \text{ where } \mathscr I(i,n-1) \delequal \big\{ (l_1,\ldots,l_{n-1}):\ l_p \neq l_{p-1}\text{ and }m_{l_p} \notin V^*_b\ \forall p \in [n-1] \big\}
   \text{ with convention }l_0 = i
   \\ 
   & = 
   \sum_{ N \geq 1}
   \P( V_N = \bm m_i,\ \hat T_k - 1 = N)
   \\ 
   &\qquad 
   \cdot 
   \sum_{ n \geq 1 }\sum_{ (l_1,\cdots,l_{n-1}) \in \mathscr I(i,n-1) }
   \frac{ q_b(i,l_1) }{ q_b(i) }\Big( 1 - \exp\big( - q_b(i)t\big)\Big)
   \frac{ q_b(l_1,l_2) }{q_b(l_1)} \cdots \frac{ q_b(l_{n-2},l_{n-1}) }{ q_b(l_{n-2}) }\frac{ q_b(l_{n-1},j) }{ q_b(l_{n-1}) }
   \\ 
   &\qquad \qquad \qquad \qquad \qquad \qquad \qquad \qquad \qquad \qquad \qquad \qquad \qquad \qquad \qquad \qquad 
   \text{using }\eqref{def Y * b t, metastability}
   \\ 
   & =
    \sum_{ N \geq 1}
   \P( V_N = \bm m_i,\ \hat T_k - 1 = N)
   \\ 
   & \qquad 
   \cdot 
   \sum_{ l_1 \neq i } \frac{ q_b(i,l_1) }{ q_b(i) }\Big( 1 - \exp\big( - q_b(i)t\big)\Big)
   \cdot 
   \sum_{n \geq 1}\P( \tau( \bm m_{l_1} ) = n-1,\ S_{\tau}(\bm m_{l_1}) = m_j).
\end{align*}
In the last line of the display above, we adopt the notations in part $(i)$  that $S_n(v)$ is a discrete-time Markov chain with initial value $S_0(v) = v$
and one-step transition kernel 
$
\P(S_{n+1} = \bm m_j|S_n = \bm m _i) = q_b(i,j)/q_b(i),
$
and define
 $\tau(v) = \min\{n \geq 0:\ S_n(v) \in V^*_b\}$ as the hitting time of the set $V^*_b$;
 for notational simplicity we write $S_\tau(v) = S_{ \tau(v) }(v)$.
Now, observe that
\begin{align*}
    & \P( \hat U_{k+1} < t,\ \hat V_{k+1} = \bm m_j,\ \hat V_k = \bm m_i)
    \\ 
    & = 
    \sum_{ N \geq 1}
   \P( V_N = \bm m_i,\ \hat T_k - 1 = N)
   \cdot 
   \sum_{  l_1 \neq i } \frac{ q_b(i,l_1) }{ q_b(i) }\Big( 1 - \exp\big( - q_b(i)t\big)\Big)
   \theta_b(\bm m_j|\bm m_{l_1})
   \qquad 
   \text{by \eqref{def: absorption prob, theta b i j}}
   \\ 
   & = 
   \sum_{ N \geq 1}
   \P( V_N = \bm m_i,\ \hat T_k - 1 = N) \cdot 
   r^{*|b}(i,j) \cdot \Big( 1 - \exp\big( - q_b(i)t \big)\Big)
   \qquad \text{ with $r^{*|b}(\cdot,\cdot)$ defined in \eqref{proof, def of r * b}}
   \\ 
   & = r^{*|b}(i,j) \cdot \Big( 1 - \exp\big( - q_b(i)t \big)\Big) \cdot \P( \hat V_k = \bm m_i).
\end{align*}
This verifies 
$\P( \hat U_{k+1} < t,\ \hat V_{k+1} = \bm m_j\ |\ \hat V_k = \bm m_i) = r^{*|b}(i,j) \cdot\big( 1 - \exp( - q_b(i)t )\big)$.
By \eqref{def Y * b t, alternative representation, metastability},
we conclude the proof of \eqref{proof, part iii, goal 2, proposition: Y * b is a CTMC}.
\end{proof}

\ifshowreminders
\newpage
\footnotesize
\newgeometry{left=1cm,right=1cm,top=0.5cm,bottom=1.5cm}

\section*{\linkdest{location of reminders}Some Reminders}
\begin{itemize}[leftmargin=*]
    \item Assumption \ref{assumption gradient noise heavy-tailed}
    \begin{itemize}
        \item $\E Z_j = 0;\ H(\cdot) = \P(|Z_1| > \cdot) \in \RV_{-\alpha}$
        \item
        $\lim_{x \rightarrow \infty}\frac{ H^{(+)}(x) }{H(x)} = p^{(+)},\ \lim_{x \rightarrow \infty}\frac{ H^{(-)}(x) }{H(x)} = p^{(-)} = 1 - p^{(+)}$
        with $p^{(+)},p^{(-)} \in [0,1]$ 
        and $p^{(+)} + p^{(-)} = 1$.
    \end{itemize}
    \item Assumption \ref{assumption: lipschitz continuity of drift and diffusion coefficients} (Lipschitz Continuity)
    \begin{enumerate}
        \item[] There exists some $D \in [1, \infty)$ such that $|\sigma(x) - \sigma(y)| \vee |a(x)-a(y)| \leq D|x-y|\ \ \ \forall x,y \in \mathbb{R}.$
    \end{enumerate}
    \item Assumption \ref{assumption: boundedness of drift and diffusion coefficients} (Boundedness)
    \begin{enumerate}
        \item[] There exist some $0<c \leq 1 \leq C < \infty$ such that $|a(x)| \leq C,\ c \leq \sigma(x) \leq C\ \ \forall x \in \mathbb{R}.$
    \end{enumerate}
    \item Assumption \ref{assumption: nondegeneracy of diffusion coefficients} (Nondegeneracy)
    \begin{enumerate}
        \item[] $\sigma(x) > 0$ $\forall x\in \R$.
    \end{enumerate}
    

    \item Assumption \ref{assumption: shape of f, first exit analysis}
    \begin{enumerate}
    \item[] 
        It holds for all $x \in (s_\text{left},s_\text{right})\setminus \{0\}$ that $a(x)x < 0$. Besides,
        \begin{itemize}
            \item $a(0)= 0$; $a(\cdot)$ is differentiable around $0$ with $a^\prime(0) \neq 0$;
            \item The following claims hold for $s \in \{s_\text{left},s_\text{right}\}$: 
            If $a(s) \neq 0$, then
            $a(\cdot)$ is differentiable around $s$ and $a^\prime(s) \neq 0$. 
        \end{itemize}
    \end{enumerate}
    
\end{itemize}
\fi

\ifshownotationindex
\newpage
\footnotesize
\newgeometry{left=1cm,right=1cm,top=0.5cm,bottom=1.5cm}

\section*{\linkdest{location of notation index}Notation Index}
\begin{itemize}[leftmargin=*]
\linkdest{location, notation index A}
\item 
    \chra{Asymptotic Equivalence}\xwa{definition added}
    \notationidx{asymptotic-equivalence}{Asymptotic Equivalence}:
    $X_n$ is asymptotically equivalent to $Y_n$ when bounded away from $\mathbb{C}\subseteq \mathbb{S}$
    w.r.t.\ $\epsilon_n$
    if for each $\Delta > 0$
    and each $B \in \mathscr{S}_\mathbb{S}$ that is bounded away from $\mathbb{C}$,
    \begin{align*}
    \lim_{n \rightarrow \infty}\frac{\P\Big( \bm{d}\big(X_n, Y_n\big)\mathbf{I}\big( X_n\in B\text{ or }Y_n \in B \big) > \Delta \Big)}{ \epsilon_n } = 0.
\end{align*}

\item 
    \notationidx{order-k-time-on-[0,t]}{$(0,t)^{k\uparrow}$}:
    $(0,t)^{k\uparrow}
    \delequal
    \big\{
    (t_1,\cdots,t_k) \in \mathbb{R}^k:\ 0 < t_1 < t_2 < \cdots < t_k < t
    \big\}.
    $
    

\item 
    \notationidx{set-for-integers-below-n}{[n]}: $[n] = \{1,2,\cdots,n\}$ for any $n \in \mathbb{Z}^+$.

\item 
    \notationidx{floor-operator}{$\floor{x}$}:
    $\floor{x}\delequal \max\{n \in \mathbb{Z}:\ n \leq x\}$.

\item 
    \notationidx{a}{$a$}: drift coefficient $a: \mathbb{R} \to \mathbb{R}$

\item 
    \chra{$\alpha$ is missing}
    \notationidx{alpha-noise-tail-index-LDP}{$\alpha$}:
    $\alpha > 1$; the heavy tail index for $(Z_j)_{j \geq 1}$ in Assumption \ref{assumption gradient noise heavy-tailed}
    
\item
    \notationidx{a-M}{$a_M$}: drift coefficient $a: \mathbb{R} \to \mathbb{R}$ truncated at level $\pm M$
    

\item
    \chra{$A_i(\eta,\epsilon,\delta,t,x)$ missing}\xwa{Fixed}
    \notationidx{notation-event-A-i-concentration-of-small-jumps}{$A_i(\eta,b,\epsilon,\delta,x)$}:
    $
    A_i(\eta,b,\epsilon,\delta,x)
    \delequal 
    \Big\{ \underset{j \in E_i(\eta,\delta) }{\max}\  \eta\Big|\sum_{n = \tau^{>\delta}_{i-1}(\eta) + 1}^j \sigma\big( X^{\eta|b}_{n-1}(x) \big)Z_n\Big| \leq \epsilon \Big\}.
    $
    
\item
    \notationidx{notation-event-A-Y-i-concentration-of-small-jumps}{$A^Y_i(\eta,b,\epsilon,\delta,x)$}:
    $
     A^Y_i(\eta,b,\epsilon,\delta,x)
    \delequal 
    \Big\{
    \sup_{t \in [0,1]:\  \tau^{>\delta}_{i-1;L}(\eta) < t < \tau^{>\delta}_{i;L}(\eta)}
    \Big|
    \int_{ s \in \big(\tau^{>\delta}_{i-1;L}(\eta), t\big]}\sigma\big(Y^{\eta|b}_{s-}(x)\big)d\bar{L}^\eta_s
    \Big| \leq \epsilon
    \Big\}.
    $


    

\linkdest{location, notation index B}

\item
    \notationidx{notation-set-B-0}{$B_0$}:
    $
    \{ \bm{X}^{\eta|b}(x)\in B\text{ or }\hat{\bm X}^{\eta|b;(k)}(x) \in B;\   
    \bm{d}_{J_1}\big(\bm{X}^{\eta|b}(x),\hat{\bm X}^{\eta|b;(k)}(x)\big) > \Delta
    \}
    $

\item 
    \notationidx{notation-B1}
    {$B_1$}:
    $ \{ \tau^{>\delta}_{k+1}(\eta) > \floor{1/\eta}\}$
    
\item
    \notationidx{notation-B2}
    {$B_2$}:
    $\{ \tau^{>\delta}_{k}(\eta) < \floor{1/\eta} \}$
    
\item
    \notationidx{notation-B3}
    {$B_3$}:
    $\big\{\eta |W^{>\delta}_{i}(\eta)| > \bar{\delta}\ \text{for all }i \in [k] \big\}$
    

\item 
    \notationidx{notation-atypical-1-clipped-SDE}{$B^{(j),Y }_{1;M\downarrow}(\eta,\delta,b,c,x)$}:
    $
     \{ \bm{Y}^{\eta|b}_{M\downarrow}(x)\in B\text{ or }\hat{\bm Y}^{\eta|b;(j)}_{M\downarrow}(x) \in B \}
    \cap 
    \{ \tau^L_{n,j + 1}(\delta) \leq 1 \}
    $

\item 
    \notationidx{notation-atypical-2-clipped-SDE}{$B^{(j),Y }_{2;M\downarrow}(\eta,\delta,b,c,x)$}:
    $
     \{ \bm{Y}^{\eta|b}_{M\downarrow}(x)\in B\text{ or }\hat{\bm Y}^{\eta|b;(j)}_{M\downarrow}(x) \in B \}
    \cap 
    \{ \tau^L_{n,j}(\delta) > 1 \}
    $

\item 
    \notationidx{notation-atypical-3-clipped-SDE}{$B^{(j),Y }_{3;M\downarrow}(\eta,\delta,b,c,x)$}:
    $
    \{ \bm{Y}^{\eta|b}_{M\downarrow}(x)\in B\text{ or }\hat{\bm Y}^{\eta|b;(j)}_{M\downarrow}(x) \in B \}
\cap \big\{\tau^L_{n,j}(\delta) \leq 1 < \tau^L_{n,j+1}(\delta) ;\ |W^{>\delta}_{i;L}(\eta)| \leq c\ \text{for some }i \in [j] \big\}
    $

\item 
    \notationidx{notation-atypical-4-clipped-SDE}{$B^{(j),Y }_{4;M\downarrow}(\eta,\delta,b,c,x)$}:
    $
    \{ \bm{Y}^{\eta|b}_{M\downarrow}(x)\in B\text{ or }\hat{\bm Y}^{\eta|b;(j)}_{M\downarrow}(x) \in B \}
    \cap \big\{\tau^L_{n,j}(\delta) \leq 1 < \tau^L_{n,j+1}(\delta);\ |W^{>\delta}_{i;L}(\eta)| > c\ \forall i \in [j] \big\}
    \cap 
    \big\{ |W^{>\delta}_{i;L}(\eta)| > (1/\epsilon)^{\frac{1}{2j}}\ \text{for some }i \in [j] \big\}
    $

\item 
    \notationidx{notation-atypical-5-clipped-SDE}{$B^{(j),Y }_{5;M\downarrow}(\eta,\delta,b,c,x)$}:
    $
    \{ \bm{Y}^{\eta|b}_{M\downarrow}(x)\in B\text{ or }\hat{\bm Y}^{\eta|b;(j)}_{M\downarrow}(x) \in B \}
    \cap \big\{\tau^L_{n,j}(\delta) \leq 1 < \tau^L_{n,j+1}(\delta);\ |W^{>\delta}_{i;L}(\eta)| \in \big(c, (1/\epsilon)^{\frac{1}{2j}} \big]\ \forall i \in [j] \big\}
\cap 
    \big\{ \bm{d}_{J_1}\big(\bm{Y}^{\eta|b}_{M\downarrow}(x),\hat{\bm Y}^{\eta|b;(j)}_{M\downarrow}(x)\big) > \rho^{(j)} \sqrt{\epsilon} \big\}
    $

\linkdest{location, notation index C}

\item 
    \chra{$C$ is missing}%
    \notationidx{notation-constant-C-boundedness-assumption}{$C$}:
    $C \in [1,\infty)$ is the constant in Assumption \ref{assumption: boundedness of drift and diffusion coefficients} with $|a(x)|\vee \sigma(x) \leq C\ \ \forall x \in \mathbb{R}$.

\item 
    \notationidx{notation-C-*-first-exit-time}{$C^*$}: 
    $
    C^* \delequal  \widecheck{ \mathbf{C} }\big( I^\complement\big)
    $

\item 
    \notationidx{notation-C-b-*}{$C_b^*$}: 
    $
    C_b^* \delequal \widecheck{ \mathbf{C} }^{ (\mathcal{J}^*_b)|b }(I^\complement)
    $

\item 
    \notationidx{notation-measure-C-k-t-mu-LDP}{$\mathbf{C}^{(k)}_{[0,T]}(\ \cdot\ ;x)$}:
    $
    \int \mathbf{I}\Big\{ h^{(k)}_{[0,T]}\big( x,(w_1,\cdots,w_k),(t_1,\cdots,t_k)   \big) \in\ \cdot\  \Big\} \nu^k_\alpha(d w_1,\cdots,dw_k) \times\mathcal{L}^{k\uparrow}_T(dt_1, dt_2,\cdots,dt_k)$
    \\
    $\mathbf{C}^{(k)}\delequal\mathbf{C}^{(k)}_{[0,1]}
    =
    \int \mathbf{I}\Big\{ h^{(k)}\big( x,(w_1,\cdots,w_k),(t_1,\cdots,t_k)   \big) \in\ \cdot\  \Big\} \nu^k_\alpha(d w_1,\cdots,dw_k) \times\mathcal{L}^{k\uparrow}_1(dt_1, dt_2,\cdots,dt_k)
    $

\item 
    \notationidx{notation-measure-C-k-t-truncation-b-LDP}{$ {\mathbf{C}}^{(k)|b}_{[0,T]}(\ \cdot\ ;x)$}:
    $
    \int \mathbf{I}\Big\{ h^{(k)|b}_{[0,T]}\big( x,(w_1,\cdots,w_k),(t_1,\cdots,t_k)   \big) \in \ \cdot\  \Big\}  \nu^k_\alpha(d w_1,\cdots,dw_k) \times\mathcal{L}^{k\uparrow}_T(dt_1, dt_2,\cdots,dt_k)$
    \chra{how about $\mathbf C^{(k)|b}_{[0,T]}$?}\xwa{Done}%
    \\
    $
    {\mathbf{C}}^{(k)|b} = {\mathbf{C}}^{(k)}_{b;[0,1]}
    =
    \int \mathbf{I}\Big\{ h^{(k)|b} \big( x,(w_1,\cdots,w_k),(t_1,\cdots,t_k)   \big) \in \ \cdot\  \Big\} \nu^k_\alpha(d w_1,\cdots,dw_k) \times\mathcal{L}^{k\uparrow}_1(dt_1, dt_2,\cdots,dt_k)
    $


\item  
    \notationidx{notation-check-C}{$\widecheck{\mathbf C}(\ \cdot\ ;x)$}:
    $
    {\widecheck{\mathbf C}(\ \cdot\ ;x)} \delequal \int \mathbf{I}\Big\{ x +\sigma(x) \cdot w \in\ \cdot\ \Big\}\nu_\alpha(dw)
    $
    \\ 
    $\widecheck{\mathbf C}(\cdot)\delequal \widecheck{\mathbf C}(\ \cdot\ ;0)$.

\item 
    \notationidx{notation-check-C-k-b}{$\widecheck{ \mathbf C }^{(k)|b}(\ \cdot\ ;x )$}:
    $
    {\widecheck{ \mathbf C }^{(k)|b}(\ \cdot\ ;x )}
    \delequal 
    \int \mathbf{I}\Big\{ g^{(k-1)}_b\big( x + \varphi_b\big(\sigma(x)\cdot w_k\big), w_1,\cdots,w_{k-1},\bm t \big) \in \ \cdot \  \Big\}
    \nu^k_\alpha(d w_1,\cdots,dw_k) \times \mathcal{L}^{k-1\uparrow}_\infty(d\bm t)
    $
    \\ 
    $
    \widecheck{ \mathbf C }^{(k)|b}(\cdot) \delequal
    \widecheck{ \mathbf C }^{(k)|b}(\ \cdot\ ;0 )
    $

\item 
    \notationidx{notation-mathcal-C-S-exclude-C}{$\mathcal{C}({ \mathbb{S}\setminus \mathbb{C} })$}:
    $
\mathcal{C}({ \mathbb{S}\setminus \mathbb{C} })
$
is the set of all real-valued, non-negative, bounded and continuous functions with support bounded away from $\mathbb{C}$

    \chra{$\mathcal C(\mathbb S\setminus \mathbb C)$ instead of $\mathcal{C}_{ \mathbb{S}\setminus \mathbb{C} }$}\xwa{Fixed}

\linkdest{location, notation index D}
\item
    \chra{$L$ is missing}%
    \notationidx{notation-Lipschitz-constant-L-LDP}{$D$}:
    The Lipschitz $D \in[1,\infty)$ in Assumption \ref{assumption: lipschitz continuity of drift and diffusion coefficients}:
    $|\sigma(x) - \sigma(y)| \vee |a(x)-a(y)| \leq D|x-y|\ \ \ \forall x,y \in \mathbb{R}$

\item 
    \notationidx{notation-D-A-k-t-LDP}{$\mathbb{D}^{(k)}_A[0,T]$}:
    $\mathbb{D}^{(k)}_A[0,T] \delequal h^{(k)}_{[0,T]}\big( A \times \mathbb{R}^{k} \times (0,T)^{k\uparrow} \big)$ with convention that $\mathbb{D}_A^{(-1)}[0,T] = \emptyset$
    \\
    $
    \mathbb{D}^{(k)}_A \delequal \mathbb{D}^{(k)}_A[0,1] = h^{(k)}\big( A \times \mathbb{R}^{k} \times (0,1)^{k\uparrow} \big)
    $

\item 
    \notationidx{notation-D-A-k-t-truncation-b-LDP}{$\mathbb{D}_{A}^{(k)|b} [0,T]$}:
    ${ \mathbb{D}}_{A|b}^{(k)}[0,T] \delequal h^{(k)|b} \big( A \times \mathbb{R}^k\times(0,T)^{k\uparrow} \big)$ with convention that $\mathbb{D}_{A|b}^{(-1)}  = \emptyset$
    \chr{How about $\D_A^{(k)|b}[0,T]$?}%
    \xw{Done}
    \\
    ${\mathbb{D}}_{A}^{(k)|b} = {\mathbb{D}}_{A}^{(k)|b}[0,1] \delequal h^{(k)|b} \big( A \times \mathbb{R}^k\times(0,1)^{k\uparrow} \big)$

\item 
    \notationidx{notation-D-A-k-t-truncation-b-M-LDP}{$\mathbb{D}_{A;M\downarrow}^{(k)|b}$}: 
    $\mathbb{D}_{A;M\downarrow}^{(k)|b} \delequal 
h^{(k)|b}_{M\downarrow}\big( \mathbb{R} \times \mathbb{R}^{k} \times (0,1]^{k\uparrow} \big).
$
\chr{How about $\mathbb{D}_{A;M\downarrow}^{(k)|b}$?}
\xw{Done}

    


\item 
    \notationidx{notation-D-J1}{$\dj{[0,T]}$}:
    Skorokhod $J_1$ metric on $\mathbb{D}[0,T]$.
    \\
    $\bm{d}_{J_1} = \dj{[0,1]}$ is the Skorodhod metric on $\mathbb{D} = \mathbb{D}[0,1]$.



\linkdest{location, notation index E}

\item 
    \notationidx{notation-closure-of-set-E}{$E^-$}: closure of set $E$ 

\item 
    \notationidx{notation-interior-of-set-E}{$E^\circ$}: interior of set $E$ 


\item 
    $\notationidx{notation-E-b-of-G-b}{E_b}$: $(m_i \rightarrow m_j) \in E_b \iff \mathcal J^*_b(i,j) = \mathcal J^*_b(i)$

\item 
    \notationidx{notation-set-E-delta-LDP}{$E^\delta_{c,k}(\eta)$}:
    $E^\delta_{c,k}(\eta) \delequal \Big\{ \tau^{>\delta}_{k}(\eta) < \floor{1/\eta} < \tau^{>\delta}_{k+1}(\eta);\ \eta|W^{>\delta}_j(\eta)| > c\ \ \forall j \in [k] \Big\}$
    \quad
    $(c > \delta)$
    \quad
    (event that there are exactly $k$ ``big'' jumps)

\item 
    \notationidx{notation-epsilon-enlargement-of-set-E}{$E^\epsilon$}: 
    $E^\epsilon \delequal 
\{ y \in \mathbb{S}:\ \bm{d}(E,y)\leq\epsilon \}$
    ($\epsilon$-fattening)
    \xwa{Now it is a closed set}%
    \chra{Not necessarily closed: e.g., $E = (0,1)$}%
    \xwa{Redefined}%

\item 
    \notationidx{notation-epsilon-shrinkage-of-set-E}{$E_{\epsilon}$}:
    $E_{\epsilon} \delequal
\{x \in E: \bm{d}(x,y) < \epsilon \Longrightarrow y \in E\}$
    ($\epsilon$-shaving)

 \item 
    \notationidx{notation-E-L-eta-k}{$E^\delta_{c,k;L}(\eta)$}:
    $
    E^\delta_{c,k;L}(\eta) \delequal\big\{ \tau^{> \delta}_{k;L}(\eta) \leq 1 <   \tau^{>\delta}_{k+1;L}(\eta)\big\}.
    $
    

\item 
    \notationidx{notation-eta}{$\eta$}: step length
    
\item
    $\notationidx{notation-sigma-algebra-F}{\mathcal{F}}$:
    the $\sigma-$algebra generated by iid copies $(Z_j)_{j \geq 1}$

\linkdest{location, notation index F}
\item 
    \notationidx{notation-F}{$\mathbb F$}: 
    $\mathbb{F} = (\mathcal{F}_j)_{j \geq 0}$ where  $\mathcal{F}_0 \delequal \{\Omega, \emptyset\}$
    and $\mathcal{F}_j$ is the $\sigma$-algebra generated by $Z_1,\cdots,Z_j$


\linkdest{location, notation index G}

\item
    \notationidx{notation-check-g-k-b}{$\widecheck{g}^{(k)|b}(x,w_1,\cdots,w_k,t_1,\cdots,t_{k})$}:
    $
    {\widecheck{g}^{(k)|b}(x,w_1,\cdots,w_k,t_1,\cdots,t_{k})}
    \delequal 
    h^{(k)|b}_{[0,t_k+1]}(x,w_1,\cdots,w_k,t_1,\cdots,t_{k})(t_{k})
    $


\item
    $\notationidx{notation-G-b-typical-transition-graph}{\mathcal{G}_b}$: Directed graph ${\mathcal{G}_b} = (V,E_b)$,
    where $V = \{m_1,\cdots,m_{ n_\text{min} }\}$,
    and
    an edge $(m_i\rightarrow m_j)$ is in $E_b$ iff $\mathcal J^*_b(i,j) = \mathcal J^*_b(i)$.

\item 
    \notationidx{notation-Gamma-M-adapted-process-bounded-by-M-LDP}{$\bm{\Gamma}_M$}:
    $\bm{\Gamma}_M \delequal \big\{ (W_j)_{j \geq 0}\text{ is adapted to }\mathbb{F}:\ |W_j| \leq M\ \forall j \geq 0\text{ almost surely} \big\};$
    see \eqref{def: Gamma M, set of bounded adapted process}
    
\item
    \notationidx{notation-Gamma-M-cont-adapted-process-bounded-by-M-LDP}{$\bm{\Gamma}_M^{\text{cont}}$}:
    $\bm{\Gamma}^{\text{cont}}_M \delequal \Big\{ V\text{ takes value in }\mathbb{D}\text{ and is adapted to }\mathbb{F}:\ \sup_{t \in [0,1]}|V(t)| \leq M\text{ almost surely} \Big\}.$

\linkdest{location, notation index H}
\item 
    \notationidx{notation-H-plus}{$H^{(+)}$}:
    $H^{(+)}(x)  \delequal \P(Z_1 > x) \in\RV_{-\alpha}$ 

\item 
    \notationidx{notation-H-minus}{$H^{(-)}$:} $H^{(-)}(x) \delequal \P(Z_1 < -x) \in \RV_{-\alpha}$

\item 
    \notationidx{notation-H}{$H$:} $H(x)  \delequal H^{(+)}(x) + H^{(-)}(x) = \P(|Z_1| > x) \in \RV_{-\alpha}$
    
\item
    \notationidx{notation-H-L-plus}{$H_L^{(+)}$}:
    $H^+_L(x)\delequal \nu(x,\infty) \in \RV_{-\alpha}(x)$.
    
\item 
    \notationidx{notation-H-L-minus}{$H_L^{(-)}$}:
    $H^-_L(x)\delequal \nu(-\infty,-x) \in \RV_{-\alpha}(x)$.
    
\item
    \notationidx{notation-H-L}{$H_L$}:
    $H^-_L(x)\delequal H^+_L(x) + H^-_L(x) = \nu\big( \R \setminus [-x,x] \big) \in \RV_{-\alpha}(x)$.


\item 
    \notationidx{notation-h-k-t-mapping-LDP}{$h^{(k)}_{[0,T]}$}: 
    an operator that maps a starting point ($x_0$), jump sizes ($\bm w$), and jump times ($\bm t$) to a piecewise gradient flow satisfying \eqref{def: perturb ode mapping h k, 1}-\eqref{def: perturb ode mapping h k, 3}
    \\
    $h^{(k)}\delequal h^{(k)}_{[0,1]}$.

\item 
    \notationidx{notation-h-k-t-mapping-truncation-level-b-LDP}{$h^{(k)|b}_{[0,T]}$}: 
    ODE mapping with $k$ perturbations (truncated at level $b>0$) on $[0,T]$.
    \chr{how about $h^{(k)|b}_{[0,T]}$?}\xw{Done}
    \\
    $h^{(k)|b}  = h^{(k)}_{b;[0,1]}$.
    
    
\item
    \notationidx{notation-mapping-h-k-t-b-M-LDP}{$h^{(k)|b}_{M\downarrow}$}: ODE mapping under $a_M,\sigma_M$ with $k$ perturbations (truncated at level $b>0$) on $[0,1]$; see \eqref{def: perturb ode mapping h k b, truncated at M, 1}-\eqref{def: perturb ode mapping h k b, truncated at M, 3}
    \chr{how about $h^{(k)|b}_{M\downarrow}$}\xw{Done}

\linkdest{location, notation index I}

\item 
    $\notationidx{notation-exit-domain-I}{I}$:
    ${I} \delequal (s_\text{left},s_\text{right})$.

\item
    $\notationidx{notation-exit-domain-I-epsilon}{I_\epsilon}$:
    ${I_\epsilon} \delequal (s_\text{left} + \epsilon,s_\text{right} - \epsilon)$

\item 
    $\notationidx{notation-attraction-field-I-i}{I_i}: I_i \delequal (s_{i-1},s_i)$

\item 
    $\notationidx{notation-attraction-field-I-i-truncated-delta-M}{I_{i;\delta,M}}: {I_{i;\delta,M}} = (s_{i-1} + \delta, s_i - \delta) \cap (-M,M) = (I_i)_\delta \cap (-M,M)$

\item 
    \chra{$E_i(\eta,\delta)$ missing}\xwa{Fixed}
    \notationidx{notation-A-i-concentration-of-small-jumps-2}{$I_i(\eta,\delta)$}:
    $I_i(\eta,\delta) 
    \delequal 
    \big\{j \in \mathbb{N}:\  \tau^{>\delta}_{i-1}(\eta) + 1 \leq j \leq \big(\tau^{>\delta}_{i}(\eta) - 1 \big) \wedge \floor{1/\eta}\big\}.$

\item 
    $\notationidx{notation-transition-marker-metastability}{\hat{\mathcal I}^{\eta,\epsilon|b}_k(x)}: {\hat{\mathcal I}^{\eta,\epsilon|b}_k(x)} = i \iff X^{\eta|b}_{\hat \tau^{\eta,\epsilon|b}_k(x)}(x) \in I_i.$

\linkdest{location, notation index J}

\item
    \notationidx{J-Z-c-n}{$\mathcal{J}_Z(c,n):$}
    $\mathcal{J}_Z(c,n) \delequal \#\{i \in [n]:\ |Z_i| \geq c \}$

\item 
    \notationidx{notation-J-L-c-T}{$\mathcal{J}_L(c,T)$}:
    $
     \mathcal{J}_L(c,T) \delequal \#\big\{ i \in \mathbb{N}:\ \widetilde{\tau}^L_i \leq T,\ |Z^L_i| > c \big\}
    $

\item 
    \notationidx{notation-J-eta-leq-k-L-i}{$J^{\eta;(k)}_L(i)$}:
    $
    J^{\eta;(k)}_L(i) \delequal \min\big\{ j > J^{\eta;(k)}_L(i-1):\ |Z^{L}_j| \geq \bm{Z}^{(k)}_L(\eta)  \big\}
    $

\item 
    \notationidx{notation-J-B-x-jump-number}{$\mathcal{J}(B;x)$}:
    $
\mathcal{J}(B;x)\delequal \min\{ k \geq 0: B\cap \mathbb{D}^{(k)}_{\{x\}}\neq \emptyset \}.
    $

\item 
    \notationidx{notation-jump-number-J-b-B-x}{$\mathcal{J}_b(B;x)$}:
    $\mathcal{J}_b(B;x)\delequal \min\{ k \geq 0: B\cap \mathbb{D}^{(k)|b}_{\{x\}}\neq \emptyset \}.$

\item  
    \notationidx{notation-J-*-first-exit-analysis}{$\mathcal{J}^*_b$}:
    $
    \mathcal{J}^*_b \delequal \ceil{l/b}.
    $
    \chra{?}%
    \xwa{Fixed}%

\item 
     $\notationidx{notation-J-*-b-i}{\mathcal J^*_b(i)}: {\mathcal J^*_b(i)} \delequal \ceil{l_i/b}$

\item 
    $\notationidx{notation-J-*-b-i-j}{\mathcal J^*_b(i,j)}$:
    $
    {\mathcal J^*_b(i,j)}
        \delequal \ceil{l_{i,j}/b}
    $

\item 
    $\notationidx{notation-J-*-b-V}{\mathcal J^*_b(V)}: {\mathcal J^*_b(V)} = \max_{i:\ m_i \in V}\mathcal{J}^*_b(i)$

\linkdest{location, notation index K}
\linkdest{location, notation index L}

\item 
    $\notationidx{notation-r-radius-of-exit-domain}{l}$:
    ${l} \delequal \inf_{x \in I^\complement}|x| = |s_\text{left}| \wedge s_\text{right}$

\item 
    $\notationidx{notation-r-i-radius-of-I-i}{l_i}: {l_i} \delequal{} \inf_{x \in I_i^\complement}|x - m_i|
    = 
    |m_i - s_{i-1}| \wedge |s_i - m_i|$

\item 
    $\notationidx{notation-l-i-j}{l_{i,j}}: {l_{i,j}} \delequal \inf_{x \in I_j}|x - m_i| =
    \begin{cases}
        s_{j-1} - m_i & \text{if }\ j > i \\
        m_i - s_j & \text{if }\ j < i
    \end{cases}$

\item
    \notationidx{notation-levy-process}{$\bm L$}:
    $\bm{L} = \{L_t: t \geq 0\}$ is the L\'evy process with generating triplet $(c_L,\sigma_L,\nu)$ where $c_L \in \mathbb{R}$ is the drift parameter, $\sigma_L \geq 0$ is the magnitude of the Brownian motion term in $L_t$, and $\nu$ is the L\'evy measure.

\item
    \notationidx{notation-scaled-levy-process}{$\bar{\bm L}^\eta$}:
    $\bar{\bm L}^\eta \delequal \big\{ \bar{L}^\eta_t = \eta L_{t/\eta}:\ t \in [0,1]\big\}$

\item
    \notationidx{notation-L-larger-than-delta-eta}{$L^{>\frac{\delta}{\eta}}_t$}:
     $L^{>\frac{\delta}{\eta}}_t \delequal{} \sum_{0 \leq s \leq t}\Delta L_s\mathbf{I}\{ |\Delta L_s| > \delta/\eta\}$
     
 \item
    \notationidx{notation-L-larger-than-delta-eta-scaled}{$\bar{L}^{\eta,>\frac{\delta}{\eta}}_t$}:
     $\bar{L}^{\eta,>\frac{\delta}{\eta}}_t= \eta \cdot L^{>\frac{\delta}{\eta}}_{t/\eta} = \eta\sum_{0 \leq s \leq \frac{t}{\eta}}\Delta L_s\mathbf{I}\{ |\Delta L_s| > \delta/\eta  \}$

\item
    \notationidx{notation-L-smaller-than-delta-eta-N}{$L^{(N,\frac{\delta}{\eta}]}_t$}:
    $L^{(N,\frac{\delta}{\eta}]}_t \delequal{} -\mu_L(N)t + \sum_{0 \leq s \leq t}\Delta L_s\mathbf{I}\{ |\Delta L_s| \in (N,\delta/\eta]  \}$ where $\mu_L(z) \delequal{} \int_{|x| > z}x\nu(dx)$
    
\item 
    \notationidx{notation-L-smaller-than-delta-eta-N-scaled}{$\bar{L}^{\eta,(N,\frac{\delta}{\eta}] }_t$}:
    $\bar{L}^{\eta,(N,\frac{\delta}{\eta}] }_t = \eta \cdot L^{(N,\frac{\delta}{\eta}]}_{t/\eta}$

\item 
    \notationidx{notation-lebesgue-measure-restricted}{$\mathcal{L}_t$}: 
    Lebesgue measure restricted on $(0,t)$

\item 
    \notationidx{notation-lebesgue-measure-on-ordered-[0,t]}{$\mathcal{L}^{k\uparrow}_t$}:
    Lebesgue measure restricted on $(0,t)^{k \uparrow}$

\item
    \notationidx{notation-measure-L-k-up-infty}{$\mathcal{L}^{k\uparrow}_\infty$}:
    Lebesgue measure restricted on $\{ (t_1,\cdots,t_k) \in (0,\infty)^k:\ 0 < t_1 < t_2 < \cdots < t_k \}$.\chr{can't find the definition}\xw{fixed}

    

\item 
    $\notationidx{notation-lambda-scale-function}{\lambda(\eta)}$:
    $
    {\lambda(\eta)} \delequal \eta^{-1}H(\eta^{-1}) \in \RV_{\alpha -1}(\eta)
    $
    as $\eta \downarrow 0$. 
    
\item 
    \notationidx{notation-scale-function-lambda-L}{$\lambda_L(\eta)$}:
    $
    {\lambda_L(\eta)} \delequal \eta^{-1}H_L(\eta^{-1}) \in \RV_{\alpha-1}(\eta)
    $ as $\eta \downarrow 0$

\item 
    $\notationidx{notation-scale-function-lambda-*-b}{\lambda^*_b(\eta)}: 
    {\lambda^*_b(\eta)} \delequal \eta \cdot \lambda^{ \mathcal J^*_b(V) }(\eta) \in \RV_{ \mathcal J^*_b(V)\cdot (\alpha-1)  + 1 }(\eta).$

\item
    $\notationidx{notation-scale-function-metastability-SDE}{\lambda^*_{b;L}(\eta)}: {\lambda^*_{b;L}(\eta)}\delequal \big(\lambda_L(\eta)\big)^{ \mathcal J^*_b(V) } \in \RV_{ \mathcal J^*_b(V)\cdot (\alpha - 1) }(\eta).$
    
\linkdest{location, notation index M}
    
\item
    \notationidx{notation-M-S-exclude-C}{$\mathbb{M}(\mathbb{S}\setminus \mathbb{C})$}:
    $\mathbb{M}(\mathbb{S}\setminus \mathbb{C})
    \delequal 
    \{
    \nu(\cdot)\text{ is a Borel measure on }\mathbb{S}\setminus \mathbb{C} :\ \nu(\mathbb{S}\setminus \mathbb{C}^r) < \infty\ \forall r > 0
    \}.$
    
\item
    \notationidx{notation-M-convergence}{$\mathbb{M}(\mathbb{S}\setminus \mathbb{C})$-convergence}:
    $\mu_n(f) \rightarrow \mu(f)$ for any $f \in \mathcal{C}({ \mathbb{S}\setminus \mathbb{C} })$

\item
    \notationidx{notation-uniform-M-convergence}{$\mathbb{M}(\mathbb{S}\setminus \mathbb{C})$-convergence uniformly over $\Theta$}:
    $ \lim_{\eta \downarrow 0}\sup_{\theta \in \Theta}|\mu^\eta_\theta(f) - \mu_\theta(f)| = 0$ for any $f \in \mathcal{C}({ \mathbb{S}\setminus \mathbb{C} })$

\linkdest{location, notation index N}

\item 
    \notationidx{notation-measure-nu-alpha}{$\nu_\alpha$}: $\nu_\alpha[x,\infty) = p^{(+)} x^{-\alpha},\ \nu_\alpha(-\infty,-x] = p^{(-)} x^{-\alpha}$

\item
    $\notationidx{notation-nu-k-alpha}{\nu_\alpha^k(\cdot)}$:
    $k$-fold product measure of $\nu_\alpha$.

\linkdest{location, notation index O}
\linkdest{location, notation index P}

\item 
    \notationidx{notation-p-plus-and-minus}{$p^{(+)},p^{(-)} \in [0,1]$}: 
    $\lim_{x \rightarrow \infty}H^{(+)}(x)\big/H(x) = p^{(+)},\ \lim_{x \rightarrow \infty}H^{(-)}(x)\big/H(x) = p^{(-)}$.

\linkdest{location, notation index Q}

\item 
    $\notationidx{notation-q-b-i}{q_b(i)}: {q_b(i)} \delequal \widecheck{\mathbf C}^{ ( \mathcal J^*_b(i) )|b }( I_i^c;m_i)$

\item 
    $\notationidx{notation-q-b-i,j}{q_{b}(i,j)}: {q_{b}(i,j)} \delequal \widecheck{\mathbf C}^{ ( \mathcal J^*_b(i) )|b }( I_j;m_i)$


\item 
    $\notationidx{notation-q-i-j}{q(i,j)}: {q(i,j)} \delequal \widecheck{\mathbf C}( I_j;m_i)$

\linkdest{location, notation index R}

\item 
    \notationidx{notation-rho-LDP}{$\rho$}:
    $\rho \delequal \exp(D)$ with $D$ in Assumption \ref{assumption: lipschitz continuity of drift and diffusion coefficients}

\item 
    \notationidx{notation-rho-t-LDP}{$\rho(t)$}:
    $\rho(t) \delequal \exp(Dt)$ with $D$ in Assumption \ref{assumption: lipschitz continuity of drift and diffusion coefficients}
    

    

\item 
    \chra{$\RV_{-\alpha}$ is missing}
    \notationidx{notation-RV-LDP}{$\RV_\beta$}:
    $\phi \in \RV_\beta$ (as $x \rightarrow \infty$) if $\lim_{x \rightarrow \infty}\phi(tx)/\phi(x) = t^\beta$ for any $t>0$;
    $\phi \in \RV_\beta(\eta)$ (as $\eta \downarrow 0$) if $\lim_{\eta \downarrow 0}\phi(t\eta)/\phi(\eta) = t^\beta$ for any $t>0$

\item  
    \notationidx{notation-R-eta-b-epsilon-return-time}{$R^{\eta|b}_\epsilon(x)$}:
    $
    {R^{\eta|b}_\epsilon(x)}  \delequal \min\big\{ j \geq 0:\ X^{\eta|b}_j(x) \in (-\epsilon,\epsilon) \big\}
    $ 

\item 
    $\notationidx{notation-return-time-to-I-i}{R_{i;\epsilon}^{\eta|b}(x)}: 
    {R_{i;\epsilon}^{\eta|b}(x)} \delequal \min\{ j \geq 0:\ X^{\eta|b}_j(x) \in (m_i - \epsilon,m_i + \epsilon) \}$

\linkdest{location, notation index S}

\item 
    $\notationidx{notation-S-delta-boundary-set}{S(\delta)}: {S(\delta)} \delequal \bigcup_{i \in [n_\text{min} - 1]}[s_i - \delta,s_i + \delta]$

\item 
    \notationidx{sigma}{$\sigma$}: diffusion coefficient $\sigma: \mathbb{R} \to \mathbb{R}$
    
\item
    \notationidx{sigma-M}{$\sigma_M$}: diffusion coefficient $\sigma: \mathbb{R} \to \mathbb{R}$ truncated at level $\pm M$

\item 
     $\notationidx{notation-sigma-eta-b-i-epsilon}{\sigma^{\eta|b}_{i;\epsilon}(x)}:
     {\sigma^{\eta|b}_{i;\epsilon}(x)} \delequal \min\{j \geq 0:\ X^{\eta|b}_j(x) \in \bigcup_{ l \neq i }(m_l - \epsilon,m_l + \epsilon)\}$
    

\item 
    \notationidx{notation-support-of-function-g}{$\text{supp} (g)$}:
$\text{supp} (g) \delequal \Big(\{ \xi \in \mathbb{D}[0,t]:\ g(\xi) \neq 0 \}\Big)^-$;
support of $g: \mathbb{D}[0,t] \to \mathbb{R}$

\item 
    \notationidx{notation-borel-sigma-algebra}{$\mathscr{S}_\mathbb{S}$}:
    Borel $\sigma$-algebra of metric space $(\mathbb{S},\bm{d})$

\item 
    \notationidx{notation-support-of-mu}{$\text{supp}(\mu)$} : $\text{supp}(\mu) \delequal \bigcap_{E:\ E\text{ closed},\ \mu(E) = 1}E$;
    support of Borel measure $\mu$

\linkdest{location, notation index T}

\item 
    $\notationidx{notation-t-epsilon-ode-return-time}{ \bm{t}(\epsilon) }:
        { \bm{t}(\epsilon) } \delequal \min\big\{ t \geq 0:\ \bm{y}_t(s_\text{left} + \epsilon) \in [-\epsilon,\epsilon]\text{ and }\bm{y}_t(s_\text{right} - \epsilon) \in [-\epsilon,\epsilon] \big\}$

\item 
    \notationidx{notation-large-jump-time}{$\tau^{>\delta}_i(\eta)$}:
    $\tau^{>\delta}_i(\eta) \delequal{} \min\{ n > \tau^{>\delta}_{i-1}(\eta):\ \eta|Z_n| > \delta  \},\ \tau^{>\delta}_0(\eta) = 0$; arrival time of $j$\textsuperscript{th} large jump
\\\chra{$\tau^{\uparrow\delta}_i(\eta)\rightarrow\tau_j^{>\delta}(\eta)$?}\xwa{Fixed}
    
\item 
    \notationidx{tau-leq-j-i-n}{$\tau^{(j)}_i(\eta)$}:
    $\tau^{(j)}_i(\eta) \delequal \min\big\{ n > \tau^{(j)}_{i-1}(\eta):\ |Z_n| \geq \bm{Z}^{ (j) }(\eta) \big\}$
    
    \chra{$\tau^{(j)}_i(\eta) \rightarrow \tau^{(j)}_i(\eta)$}\xwa{Fixed}

\item
    $\notationidx{notation-tau-eta-b-i-delta-M-x}{\tau^{\eta|b}_{i;\delta,M}(x)}:\ {\tau^{\eta|b}_{i;\delta,M}(x)} \delequal \min\Big\{ j \geq 0:\ X^{\eta|b}_j(x) \notin (s_{i-1} + \delta,s_i - \delta)\cap (-M,M)\Big\}$

\item 
    \notationidx{notation-tau-eta-b-Y-f-g}{$\tau^{\eta|b;(k+1)}_{Y}(x)$}:
    $\tau^{\eta|b;(k+1)}_{Y}(x)
     \delequal 
    \min\Big\{ t > \tau^{\eta|b;(k)}_{Y}(x): \Big| g\big({Y}^{\eta|b;(k)}_{t-}(x)\big) \cdot \Delta \bar L^\eta_t \Big| > b \Big\}$

\item 
    \notationidx{notation-large-jump-time-levy-tau-delta-j-L}{$\tau^{> \delta}_{j;L}(\eta)$}:
    $
    {\tau^{> \delta}_{j;L}(\eta)} \delequal \inf\{ t > \tau^{>\delta}_{j-1;L}(\delta):\ |\Delta \bar{L}^\eta_t| > \delta  \}
    $
    with convention
    $\tau^{> \delta}_{0;L}(\eta)  \equiv 0.$

\item   
    \notationidx{notation-tau-eta-x-first-exit-time}{$\tau^\eta(x)$}:
    $
    \tau^\eta(x) \delequal \min\big\{j \geq 0:\ X^\eta_j(x) \notin I\big\}
    $

\item  
    \notationidx{notation-tau-eta-b-x-first-exit-time}{$\tau^{\eta|b}(x)$}:
    $
    \tau^{\eta|b}(x) \delequal \min\big\{j \geq 0:\ X^{\eta|b}_j(x) \notin I \big\}
    $

\item  
    \notationidx{notation-tau-eta-b-epsilon-exit-time}{$\tau^{\eta|b}_\epsilon(x)$}:
    $
    {\tau^{\eta|b}_\epsilon(x)} \delequal \min\big\{ j \geq 0:\ X^{\eta|b}_j(x) \notin I_\epsilon \big\}
    $

\item 
    $\notationidx{notation-transition-time-metastability}{\hat \tau^{\eta,\epsilon|b}_k(x)}: {\hat \tau^{\eta,\epsilon|b}_k(x)} \delequal \min\Big\{ j \geq \hat \tau^{\eta,\epsilon|b}_{k-1}(x):\ X^{\eta|b}_j(x) \in \bigcup_{i \neq \hat{\mathcal I}^{\eta,\epsilon|b}_{k-1}(x)}(m_i - \epsilon,m_i+\epsilon) \Big\}$,
    the $k$-th transition time to a different $\epsilon$-neighborhood of local minima $m_i$



\linkdest{location, notation index U}

\item 
    \notationidx{notation-U-j-t}{$U_j$}:
    an iid sequence of Unif$(0,1)$
    \chra{$U_j(t)$ is missing}%
    
\item  
    \chra{$U_{(j);k}$ is missing}%
    \notationidx{notation-U-j-k-LDP}{$U_{(j;k)}$}:
    $0 \leq U_{(1;k)} \leq U_{(2;k)} \leq \cdots \leq U_{(k;k)}$;
    the order statistics of iid $\big(U_j\big)_{j = 1}^k$

\linkdest{location, notation index V}

\item 
    $\notationidx{notation-V-of-G-b}{V}: V = \{m_1,\cdots,m_{ n_\text{min} }\}$

\item
    $\notationidx{notation-V-*-b}{V^*_b}: V^*_b \delequal \{m_i:\ \mathcal J^*_b(i) = \mathcal{J}^*_b(V)\}$
    
\item 
    \notationidx{notation-truncation-operator-level-b}{$\varphi_c$}:
    $\varphi_c(w) \delequal \varphi(w,c) \delequal{} (w\wedge c)\vee(-c)$; truncation operator at level $c > 0$

\linkdest{location, notation index W}

   
\item 
    \notationidx{notation-large-jump-size}{$W^{>\delta}_i(\eta)$}: 
    $W^{>\delta}_i(\eta) \delequal{} Z_{\tau^{>\delta}_i(\eta)}$; size of $j$\textsuperscript{th} jump above threshold $\delta/\eta$
\\\chr{$W_j^{\eta/\delta}$}
   
\item 
    \chra{$W_j^*(\cdot)$ is missing}%
    \notationidx{notation-W-*_j}{$W^*_j(\cdot)$}:
    $\P(W^*_j(c) > x) = p^{(+)}\cdot(c/x)^\alpha,\ \ \P(-W^*_j(c) > x) = p^{(-)}\cdot(c/x)^\alpha\ \ \forall x > 0, c > 0$.
    
\item
    \notationidx{W-leq-j-i-n}{$W^{(j)}_i(\eta)$}:
    $W^{(j)}_i(\eta) \delequal Z_{ \tau^{(j)}_i(\eta) }$

\item 
    \notationidx{notation-W-eta-b-Y-f-g}{$W^{\eta|b;(k+1)}_{Y}(x)$}:
    $
    {W^{\eta|b;(k+1)}_{Y}(x)} \delequal 
    \Delta {Y}^{\eta|b;(k)}_{ \tau^{\eta|b;(k + 1)}_{Y}(x)}(x)
    $
    
\item
    \notationidx{notation-large-jump-size-levy-W-delta-j-L}{$W^{> \delta}_{j;L}(\eta)$}:
    ${W^{> \delta}_{j;L}(\eta)} \delequal \Delta \bar{L}^\eta\big(\tau^{> \delta}_{j;L}(\eta)\big) = \eta\cdot{\Delta L\big(\tau^{> \delta}_{j;L}(\eta)\big/\eta\big)}$


\linkdest{location, notation index X}

\item There are many varieties of $X$. Notation system convention:
\begin{itemize}
    \item
        independent of $\eta$: no superscript\\ 
        dependent of $\eta$: superscript $\eta$
    \item 
        discrete: $x$, $X$\\
        continuous: $y$, $Y$
    \item 
        on $[0,1]$: \\
        on $[0,1/\eta]$: 
    \item 
        clipped: \\
        unclipped: 
    \item 
        random: capital letters $X$, $Y$\\ 
        deterministic: lower case letters $x$, $y$
    \item 
        piecewise deterministic perturbed by $j$ largest shocks ($(j)$)\\ 
        piecewise deterministic perturbed by shocks over the threshold: $\hat {\bm x}$)
        
    \item 
        whole process: \\
        process at continuous time $t \in [0,1]$
        process at discrete time $j$
        
    \item 
        number of big jumps: superscript in parenthesis: e.g., $\tau_i^{(j)}(\eta)$
        jumps over threshold: superscript with $>$ and threshold: e.g., $\tau_i^{>\delta}(\eta)$
\end{itemize}

\item 
    \notationidx{notation-discrete-gradient-descent}{$\bm{x}^\eta_{j}(x)$}: (deterministic) difference equation
    $\bm{x}^\eta_{j}(x) = \bm{x}^\eta_{j-1}(x) + \eta a\big(\bm{x}^\eta_{j-1}(x) \big)$ for any $j \geq 1$ with initial condition $\bm{x}^\eta_{0}(x) = x$.

\item 
    \notationidx{notation-breve-X-eta-delta-t}{$\breve{ X}^{\eta,\delta}_t(x)$}: ODE that coincides with $X^{\eta}_{\floor{t/\eta} }(x)$ at discontinuities $t = \eta \tau^{>\delta}_i(\eta)$, $i=1,2,\ldots$.
    
\item  
    \notationidx{notation-breve-X-eta-b-delta-t}{$\breve{X}^{\eta|b;\delta}_t(x)$}:
    ODE that coincides with $X^{\eta|b}_{\floor{t/\eta} }(x)$ at discontinuities $t = \eta \tau^{>\delta}_i(\eta)$, $i=1,2,\ldots$.

\item
    \notationidx{notation-hat-X-clip-b-top-j-jumps}{$\hat{\bm{X}}^{\eta|b; (j) }(x)$}:
    ODE perturbed by \chrin{$j$ largest} jumps with truncation at $b$.
    
\item 
    $\notationidx{notation-marker-process-metastability-hat-X}{\hat X^{\eta,\epsilon|b}_t(x)}$:
     ${\hat X^{\eta,\epsilon|b}_t(x)}$ is the $\Big( \Big( \big({\hat \tau^{\eta,\epsilon|b}_k(x) - \hat \tau^{\eta,\epsilon|b}_{k-1}(x)}\big) \cdot { \lambda^*_b(\eta) } \Big)_{k \geq 1},\big( m_{ \hat{\mathcal I}^{\eta,\epsilon|b}_k(x) } \big)_{k \geq 1} \Big)$ jump process.
    
\item 
    \notationidx{notation-X-j-eta-x}{$X^\eta_j(x)$}: $X^\eta_0(x) = x;\ \ X^\eta_j(x) = X^\eta_{j - 1}(x) +  \eta\big[ a\big(X^\eta_{j - 1}(x)\big) + \sigma\big(X^\eta_{j - 1}(x)\big)Z_j\big]\ \ \forall j \geq 1.$


\item 
    \notationidx{notation-scaled-X-0T-eta-LDP}{$\bm{X}^\eta_{[0,T]}(x)$}:
    $\bm{X}_{[0,T]}^\eta(x) \delequal \big\{ X^\eta_{ \floor{ t/\eta } }(x):\ t \in [0,T] \big\}$
    
    \notationidx{notation-scaled-X-eta-LDP}{$\bm{X}^\eta(x)$}: $\bm{X}^\eta(x) = \bm{X}_{[0,1]}^\eta(x) \delequal \big\{ X^\eta_{ \floor{ t/\eta } }(x):\ t \in [0,1] \big\}$
    \chra{Separate $\bm{X}^\eta(\mu),\bm{X}^\eta(x)$.}\xwa{Done}%


\item 
    \notationidx{notation-X-eta-j-truncation-b-LDP}{$X^{\eta|b}_j(x)$}:
    $X^{\eta|b}_j(x)= X^{\eta|b}_{j - 1}(x)+  \varphi_b\Big(\eta \big[a\big(X^{\eta|b}_{j - 1}(x)\big) + \sigma\big(X^{\eta|b}_{j - 1}(x)\big)Z_j\big]\Big)\ \ \forall j \geq 1$


    

\item 
    \notationidx{notation-scaled-X-eta-mu-truncation-b-LDP}{${\bm{X}}^{\eta|b}_{[0,T]}(x)$}:
    $ {\bm{X}}^{\eta|b}_{[0,T]}(x) \delequal \big\{ X^{\eta|b}_{ \floor{ t/\eta } }(x):\ t \in [0,T] \big\}$.
    \\
    $ {\bm{X}}^{\eta|b}(x) = {\bm{X}}^{\eta|b}_{[0,1]}(x) \delequal \big\{ X^{\eta|b}_{ \floor{ t/\eta } }(x):\ t \in [0,1] \big\}$
    \chra{Separate.}\xwa{done}%



\linkdest{location, notation index Y}

\item 
    \chra{$\bm{x}^\eta(\ \cdot\ ;x)$ missing}
    \notationidx{notation-continuous-gradient-descent}{$\bm{y}_t(x)$}:  Gradient flow path.
    $\frac{d\bm{y}_t(x)}{dt} = a\big(\bm{y}_t(x)\big)$ for any $t > 0$
    with initial condition $\bm{y}_0(x) = x$.

    \chra{Let's denote this with $\bm y(\cdot;)$.}\xwa{Fixed}

\item
    \notationidx{notation-Y-eta-SDE}{$Y^\eta_t(x)$}:
    $dY^\eta_t(x) = a\big(Y^\eta_{t-}(x)\big)dt + \sigma\big(Y^\eta_{t-}(x)\big)d\bar{L}^\eta_t$
    
\item 
    \notationidx{notation-bm-Y-0T-eta-SDE}{$\bm Y^\eta_{[0,T]}(x)$}:
    $
    {\bm Y^\eta_{[0,T]}(x)} = \{Y^\eta_t(x):\ t\in[0,T]\}
    $
    \\
    $\notationidx{notation-bm-Y-eta-SDE}{\bm Y^\eta(x)} = \{Y^\eta_t(x):\ t\in[0,1]\}$

\item
    \notationidx{notation-Y-eta-b-SDE}{$Y^{\eta|b}_t(x)$}:
    $ {Y^{\eta|b}_t(x)}  = Y^{\eta|b;(k)}_{t}(x;a,\sigma)
    \qquad \Longleftrightarrow \qquad 
    t \in \Big[  
    \tau^{\eta|b;(k)}_{Y}(x;a,\sigma), 
     \tau^{\eta|b;(k+1)}_{Y}(x;a,\sigma) 
    \Big).$
    
\item
    \notationidx{notation-bm-Y-0T-eta-b-SDE}{$\bm{Y}^{\eta|b}_{[0,T]}(x)$}:
    $
    {\bm{Y}^{\eta|b}_{[0,T]}(x)} \delequal \big\{Y^{\eta|b}_t(x):\ t \in [0,T]\big\}
    $
    \\
    $\notationidx{notation-bm-Y-eta-b-SDE}{\bm Y^{\eta|b}(x)}= \{Y^{\eta|b}_t(x):\ t\in[0,1]\}$

\item 
    \notationidx{notation-Y-eta-b-k-f-g}{${Y}^{\eta|b;(k)}_{t}(x)$}:
    $
     {
    {Y}^{\eta|b;(k)}_{ \tau^{\eta|b;(k)}_{Y}(x) }(x)
    } \delequal 
    {Y}^{\eta|b;(k)}_{ \tau^{\eta|b;(k)}_{Y}(x)- }(x)
    +
    \varphi_b\Big(
    W^{\eta|b;(k)}_{Y}(x)
    \Big)$

    \qquad\qquad\qquad\qquad
    $
    d{Y^{\eta|b;(k)}_{ t}(x)} \delequal 
    f\big({Y}^{\eta|b;(k)}_{ t-}(x)\big) dt 
    + 
    g\big({Y}^{\eta|b;(k)}_{ t-}(x)\big)d\bar{L}^\eta_t
    \ \ \ \forall t > \tau^{\eta|b;(k)}_{Y}(x)$


\item 
    \notationidx{notation-hat-Y-eta-b-leq-k}{$\hat{Y}^{\eta|b;(k)}(t;x)$}:
    ODE perturbed by k largest discontinuities in $\bar{L}^\eta_t$, clipped by $b$

\item 
    $\notationidx{notation-CTMC-Y-*-b}{Y^{*|b}}$:
    A continuous-time Markov chain over states $\{m_i:\ i \in [n_\text{min},\ \mathcal J^*_b(i) = \mathcal J^*_b(V)\}$;
    i.e., the set of all widest minima.
    Specifically,
     given any $m_\text{init} \in V$, $Y^{*|b}_t(m_\text{init})$ is a $\Big((U_j)_{j \geq 1},(V_j)_{j \geq 1}\Big)$ jump process with 
     $V_1 = m_\text{init}$, $U_1 = 0$,
     and
     $$
     \P\Big( U_{l+1} < t,\ V_{l+1} = m_j\ \Big|\ V_l = m_i \Big)
     = 
    \begin{cases}
        \frac{q_b(i,j)}{q_b(i)} & \text{ if }m_i \notin V^*_b,
        \\ 
        \frac{q_b(i,j)}{q_b(i)} \cdot \Big( 1 - \exp\big(-q_b(i)t\big)\Big) & \text{ if }m_i \in V^*_b.
    \end{cases}
     $$

\item 
    $\notationidx{notation-CTMC-Y-*}{Y^*_t(\cdot)}$:
    a continuous-time Markov chain over states $\{m_i:\ i \in [n_\text{min}]\}$  with generator characterized by $q(i,j)$ and initial condition $Y^*_0(m) = m$

\linkdest{location, notation index Z}

\item 
    \notationidx{notation-Z-iid-noise-LDP}{$Z_j$}:
    $(Z_j)_{j \geq 1}$ is a sequence of iid heavy-tailed RVs with $\E Z_1 = 0$ and $H(x) = \P(|Z_1| > x) \in \RV_{-\alpha}$ as $x \rightarrow \infty$


    

\item
    \notationidx{Z-leq-j-n}{$\bm{Z}^{ (j) }(\eta)$}:
    $\bm{Z}^{ (j) }(\eta) \delequal \max\Big\{ c \geq 0:\ \mathcal{J}_Z(c, \floor{1/\eta} ) \geq j \Big\}$
    



\end{itemize}
\fi

\ifshowtheoremtree
\newpage
\footnotesize
\newgeometry{left=1cm,right=1cm,top=0.5cm,bottom=1.5cm}

\section*{\linkdest{location of theorem tree}Theorem Tree}
\begin{thmdependence}[leftmargin=*]

\thmtreenode{-}
    {Theorem}{theorem: portmanteau, uniform M convergence}
    {0.8}{Portmanteau Theorem for uniform $\M(\S\setminus\C)$-convergence}
\bigskip

\thmtreenodewopf{}
    {Lemma}{lemma: asymptotic equivalence when bounded away, equivalence of M convergence}
    {0.8}{
        asympt.\ equiv.\ of $X_n$ and $Y_n$ w.r.t.\\ $\epsilon_n^{-1}$ in $\M(\mathbb S\setminus\mathbb{C})$ implies the same $\M$-convergence of $\epsilon_n^{-1}\P(X_n\in \cdot)$ and $\epsilon_n^{-1}\P(Y_n\in \cdot)$.%
    }
\bigskip

        

\thmtreenode{\complete}
    {Theorem}{theorem: LDP 1, unclipped}
    {0.8}{
    [A\ref{assumption gradient noise heavy-tailed},\ref{assumption: lipschitz continuity of drift and diffusion coefficients},\ref{assumption: nondegeneracy of diffusion coefficients},\ref{assumption: boundedness of drift and diffusion coefficients}]
    Sample Path Large Deviations for SGD $\bm X^\eta$.
    $\P ({\bm X}^{\eta}(x)\in \cdot)/\lambda^k(\eta) \to \mathbf{C}^{(k)}(\cdot;x)$ in $\mathbb{M}(\D \setminus \D_A^{(k-1)})$ uniformly in $x$ on any compact set $A$ 
    }
    \begin{thmdependence}
    \thmtreenode{-}
            {Lemma}
            {lemma: continuity of h k b mapping}
            {0.8}{
            [A\ref{assumption: lipschitz continuity of drift and diffusion coefficients},\ref{assumption: boundedness of drift and diffusion coefficients}]
            $h^{(k)}_{[0,T]}$ is continuous on $\R\times \R^k \times (0,T)^{k\uparrow}$.
            }
        \begin{thmdependence}
        \thmtreenode{\issue}
            {Lemma}
            {lemma: continuity of h k b mapping clipped}
            {0.8}{
                [A\ref{assumption: lipschitz continuity of drift and diffusion coefficients},\ref{assumption: nondegeneracy of diffusion coefficients}] 
                $h^{(k)|b}$ is continuous.
            }
            \begin{thmdependence}
                 \thmtreenode{-}
            {Corollary}{corollary: existence of M 0 bar delta bar epsilon, clipped case, LDP}
            {0.8}{
                [A\ref{assumption: lipschitz continuity of drift and diffusion coefficients},\ref{assumption: nondegeneracy of diffusion coefficients}] 
                If $d(B, \D_{A|b}^{(k-1)}) >0$, the shocks of the paths in $B \cap \D_{A}^{(k)|b}$ are bounded away from 0. 
            }
        
            \begin{thmdependence}
            \thmtreenode{-}
                {Lemma}{lemma: boundedness of k jump set under truncation, LDP clipped}
                {0.8}{
                    $\sup_{\xi\in \D_{A}^{(k)|b}} \|\xi\| < \infty$ for any $b\in(0,\infty)$, $k\in \mathbb N$, and compact set $A$.
                }
            \end{thmdependence}
            \end{thmdependence}
        \end{thmdependence}

    \thmtreenode{-}
            {Lemma}{lemma: LDP, bar epsilon and delta}
            {0.8}{
                [A\ref{assumption: lipschitz continuity of drift and diffusion coefficients},\ref{assumption: boundedness of drift and diffusion coefficients}] 
                If $d(B, \D_{A}^{(k-1)}) >0$, the shocks of the paths in $B \cap \D_{A}^{(k)}$ are bounded away from 0. 
            }

    \thmtreenode{-}
        {Lemma}{lemma: sequential compactness for limiting measures, LD of SGD}{0.8}
        {
        Verify $\lim_{n \to \infty}\mathbf C^{(k)}(f;x_{n})
            =
            \mathbf C^{(k)}(f;x^*)$
            and
            $\lim_{n \to \infty}\mathbf C^{(k)|b}(f;x_{n})
            =
            \mathbf C^{(k)|b}(f;x^*)$
        }
        \begin{thmdependence}
            \thmtreeref
                {Lemma}{lemma: continuity of h k b mapping}
            \thmtreeref
                {Lemma}{lemma: continuity of h k b mapping clipped}
            \thmtreeref
                {Lemma}{lemma: LDP, bar epsilon and delta}
            \thmtreenode{-}
                {Lemma}
                {lemma: LDP, bar epsilon and delta, clipped version}
                {0.8}{
                [A\ref{assumption: lipschitz continuity of drift and diffusion coefficients}, A\ref{assumption: boundedness of drift and diffusion coefficients}]
                If $d(B, \D_{A|b}^{(k-1)}) >0$, the shocks of the paths in $B \cap \D_{A}^{(k)|b}$ are bounded away from 0.
                }
        \end{thmdependence}

    \thmtreeref
        {Theorem}{theorem: portmanteau, uniform M convergence}

    \thmtreeref
        {Proposition}{proposition: standard M convergence, LDP unclipped}

    \end{thmdependence}

\bigskip

\thmtreenode{\complete}
    {Theorem}{corollary: LDP 2}
    {0.8}{
    [A\ref{assumption gradient noise heavy-tailed},\ref{assumption: lipschitz continuity of drift and diffusion coefficients},\ref{assumption: nondegeneracy of diffusion coefficients}]
    Sample path large deviations for $\bm X^{\eta|b}$.
    $\P ({\bm X}^{\eta|b}(x)\in \cdot)/\lambda^k(\eta) \to \mathbf{C}_b^{(k)}(\cdot;x)$ in $\mathbb{M}(\D \setminus \D_{A|b}^{(k-1)})$ uniformly in $x$ on any compact set $A$ 
    }
    
    \begin{thmdependence}
        \thmtreeref
            {Proposition}{proposition: standard M convergence, LDP clipped}
        \thmtreeref
            {Lemma}{lemma: sequential compactness for limiting measures, LD of SGD}
        \thmtreeref
            {Lemma}{lemma: LDP, bar epsilon and delta, clipped version}
        \thmtreeref
            {Theorem}{theorem: portmanteau, uniform M convergence}
    \end{thmdependence}

\bigskip

\thmtreenode{-}
    {Proposition}{proposition: standard M convergence, LDP unclipped}
    {0.8}{
    [A\ref{assumption gradient noise heavy-tailed},\ref{assumption: lipschitz continuity of drift and diffusion coefficients},\ref{assumption: boundedness of drift and diffusion coefficients}]
        $\P({\bm X}^{\eta_n}(x_n)\in \cdot)/\lambda^k(\eta_n) \to \mathbf{C}^{(k)}(\cdot; x^*)$ in $\M(\D\setminus \D_{A}^{(k-1)})$ if $\eta_n\to 0$, $x_n \to x^*$, and $x_n, x^* \in A$: cpt
    }
    \begin{thmdependence}
    
    \thmtreenode{-}
    {Proposition}{proposition: standard M convergence, LDP clipped}
    {0.8}{ [A\ref{assumption gradient noise heavy-tailed},\ref{assumption: lipschitz continuity of drift and diffusion coefficients},\ref{assumption: nondegeneracy of diffusion coefficients}]
        $\P({\bm X}^{\eta_n}(x_n)\in \cdot)/\lambda^k(\eta_n) \to \mathbf{C}^{(k)}(\cdot; x^*)$ in $\M(\D\setminus \D_{A}^{(k-1)})$ if $\eta_n\to 0$, $x_n \to x^*$, and $x_n, x^* \in A$: cpt
    }
    
    \begin{thmdependence}
    \thmtreeref{Proposition}{proposition: standard M convergence, LDP clipped, stronger boundedness assumption}
    
    \thmtreeref
            {Corollary}{corollary: existence of M 0 bar delta bar epsilon, clipped case, LDP}
        
    \end{thmdependence}


    \thmtreeref{Lemma}{lemma LDP, small jump perturbation}
    
    \thmtreeref
            {Lemma}{lemma: LDP, bar epsilon and delta}

    \end{thmdependence}

\bigskip

\thmtreenode{-}
    {Proposition}{proposition: standard M convergence, LDP clipped, stronger boundedness assumption}
    {0.8}{ [A\ref{assumption gradient noise heavy-tailed},\ref{assumption: lipschitz continuity of drift and diffusion coefficients},\ref{assumption: boundedness of drift and diffusion coefficients}]
        $\P({\bm X}^{\eta_n|b}(x_n)\in \cdot)/\lambda^k(\eta_n) \to \mathbf{C}^{(j)}_b(\cdot; x^*)$ in $\M(\D\setminus \D_{A|b}^{(k-1)})$ 
        if $\eta_n\to 0$, $x_n \to x^*$, and $x_n, x^* \in A$: cpt
    }
    \begin{thmdependence}
        \thmtreeref
            {Lemma}{lemma: asymptotic equivalence when bounded away, equivalence of M convergence}
        
        \thmtreenode{-}
            {Proposition}{proposition: asymptotic equivalence, clipped}
            {0.8}{
                $\bm{X}^{\eta_n|b}(x_n)$ and $\hat{\bm X}^{\eta_n|b;(k)}(x_n)$ are asympt.\ equiv.\ w.r.t.\\ $\lambda^k(\eta)$ in $\M(\D\setminus \D_{A|b}^{(k-1)})$
            }
    
    \begin{thmdependence}
    
        \thmtreeref
                {Lemma}
                {lemma: LDP, bar epsilon and delta, clipped version}

        \thmtreenode{-}
            {Lemma}{lemma: SGD close to approximation x circ, LDP}
            {0.8}
            {
                (time-scaled) SGD $X^{\eta|b}_{\lfloor t/\eta \rfloor}$ is close to (slow) ODE until first large jump
            }
            \begin{thmdependence}
            \thmtreenode{-} 
                {Lemma}{lemmaBasicGronwall}
                {0.8}{
                SGD and GD are close to each other if the noises are small
                }

            \thmtreenode{-}
                {Lemma}{lemma Ode Gd Gap} 
                {0.8}{
                GD ($\bm y^\eta_{\floor{s}}(y)$) and GF ($\bm x^\eta(\cdot;x)$) are close to each other
                }

            \end{thmdependence}
        
        \thmtreenode{-}
            {Lemma}{lemma LDP, small jump perturbation} 
            {0.8}{
                \begin{minipage}[t]{\linewidth}
                (a) 
                $
                \sup_{ (W_i)_{i \geq 0} \in  \bm{\Gamma}_M}\P\Big( \max_{ j \leq \floor{t/\eta} \wedge \big(\tau^{>\delta}_{1}(\eta) - 1\big) }\ \eta\big|\sum_{i = 1}^j W_{i-1}Z_i \big| > \epsilon \Big)   
                = \bm o({\eta^N}) 
                $\\
                
                (b) 
                $
                \sup_{x \in \R} \P\Big( \big(\bigcap_{i = 1}^k A_i(\eta,\epsilon,\delta,t,x)\big)^c \Big) 
                = \bm o(\eta^N)
                $
                \end{minipage}
            }
        
        \thmtreenode{-}
            {Lemma}{lemma: SGD close to approximation x breve, LDP clipped}
            {0.8}{
                $\hat{X}^{\eta|b;>\delta}_{t}(x)$ and $X_{\lfloor t\rfloor}^{\eta|b}(x)$ are close to each other on $\cap_{i=1}^{k+1} A_i(\eta,\epsilon,\delta, x)$
                \\
                \chra{Lemma \ref{lemma: SGD close to approximation x breve, LDP} not used anywhere?}\xwa{Unused lemma removed. Only using Lemma \ref{lemma: SGD close to approximation x breve, LDP clipped} now.}
            }
    \end{thmdependence}
            
    \thmtreenode{-}
        {Proposition}{proposition: uniform weak convergence, clipped}
        {0.8}{ [A\ref{assumption: boundedness of drift and diffusion coefficients}]
        $\P(\hat{\bm X}^{\eta|b;(k)}\in \cdot)/\lambda^k(\eta) \to \mathbf{C}^{(k)|b}   $ in $\M(\D\setminus \D_{A|b}^{(k-1)})$
        }
    
        \begin{thmdependence}
        \thmtreeref{Lemma}{lemma: LDP, bar epsilon and delta, clipped version}
        \thmtreeref
            {Lemma}
            {lemma: continuity of h k b mapping clipped}
        \thmtreenode{-}
            {Lemma}
            {lemma: weak convergence, expectation wrt approximation, LDP, preparation}
            {0.8}{
            On \hyperlink{index, notation-set-E-delta-LDP}{$E^\delta_{c,k}(\eta)$}, continuous and bounded functionals of scaled jump times and jump sizes converge to that of uniform and Pareto distributions.
            }
            \begin{thmdependence}
            \thmtreenode{-}
                {Lemma}{lemma: weak convergence of cond law of large jump, LDP}
                {0.8}{
                Conditional on \hyperlink{index, notation-set-E-delta-LDP}{$E^\delta_{c,k}(\eta)$}, the scaled jump times and jump sizes converge to uniform and Pareto distributions. 
                }
            \end{thmdependence}
        
        \end{thmdependence}
    \end{thmdependence}

\bigskip

\chra{Lemma~\ref{lemma: upper bound on perturbation at initial value, h k b mapping} missing}\xwa{Fixed}



\bigskip

\thmtreenode{-}
    {Theorem}{theorem: first exit time, unclipped}{0.8}
    {
    (a)
    $
        \lim_{\eta \downarrow 0}\P\Big(C^*_b \eta\cdot \lambda^{ \mathcal{J}^*_b }(\eta)\tau^{\eta|b}(x) > t
        \Big)
        =
        \exp(-t)
        $\\
    
    (b)
    $
    \lim_{\eta \downarrow 0}\P\Big(C^* \eta \lambda(\eta)\tau^\eta(x) > t
    \Big)
    =
    \exp(-t)
    $
    }
    \begin{thmdependence}
        \thmtreenode{-}
            {Lemma}{lemmaGeomFront}{0.8}{} 
        \thmtreenode{-}
            {Theorem}{thm: exit time analysis framework} {0.8}
            {
            First Exit Time Analysis Framework: $\P\big(
        \gamma(\eta)\tau_{I^\complement}^{\eta}(x)>t,\,V_{\tau}^\eta(x)\in B  
    \big) \approx C(B)e^{-t}$
            }
        \begin{thmdependence}
            \thmtreenode{-}
                {Proposition}{prop: exit time analysis main proposition}{0.8}{
                First Exit Time Analysis Framework ($\epsilon$-relaxed version):
                $
                \P\big(\gamma(\eta) \tau_{I(\epsilon)^\complement}^\eta(x) > t;\; V^\eta_{\tau_\epsilon}(x) \in B\big)
                \approx C(B)e^{-t} + \delta_{t,B}(\epsilon)
                $
                }
        \end{thmdependence}
        \thmtreenodewopf{}
            {Lemma}{lemma: exit prob one cycle, with exit location B, first exit analysis}{0.8}{Apply general framework: Verifying conditions \eqref{eq: exit time condition lower bound} and \eqref{eq: exit time condition upper bound}}
        \thmtreenodewopf{}
            {Lemma}{lemma: fixed cycle exit or return taking too long, first exit analysis}{0.8}{Apply general framework: Verifying condition \eqref{eq:E3}}
        \thmtreenodewopf{}
            {Lemma}{lemma: cycle, efficient return}{0.8}{Apply general framework: Verifying condition \eqref{eq:E4}}
    \end{thmdependence}

\bigskip

\thmtreenode{-}
    {Lemma}{lemma: exit prob one cycle, with exit location B, first exit analysis}{0.8}{Apply general framework: Verifying conditions \eqref{eq: exit time condition lower bound} and \eqref{eq: exit time condition upper bound}}
    
    \begin{thmdependence}
        \thmtreeref
            {Theorem}{corollary: LDP 2}
        \thmtreenode{-}
            {Lemma}{lemma: choose key parameters, first exit time analysis}{0.8}{}
        \thmtreenode{-}
            {Lemma}{lemma: measure check C J * b, continuity, first exit analysis}{0.8}{
            Verifying  $C(\partial I) = 0$ for the measure $C$ defined in \eqref{def: measure C and scale gamma when applying the exit time framework}
            }
            \begin{thmdependence}
                \thmtreeref{Lemma}{lemma: choose key parameters, first exit time analysis}
            \end{thmdependence}

        \thmtreenode{-}
            {Lemma}{lemma: exit rate strictly positive, first exit analysis}{0.8}{
            Verifying $
                    C^*_b = \widecheck{\mathbf C}^{(\mathcal{J}^*_b)|b}\big( I^\complement\big)\in(0,\infty)
                        $
            for the measure $C$ defined in \eqref{def: measure C and scale gamma when applying the exit time framework}
            }
            \begin{thmdependence}
                \thmtreeref{Lemma}{lemma: choose key parameters, first exit time analysis}
                \thmtreeref{Lemma}{lemma: continuity of h k b mapping clipped}
            \end{thmdependence}
        
        \thmtreenode{-}
            {Lemma}{lemma: limiting measure, with exit location B, first exit analysis}{0.8}{}
            \begin{thmdependence}
                \thmtreeref{Lemma}{lemma: choose key parameters, first exit time analysis}
            \end{thmdependence}
        
    \end{thmdependence}

\thmtreenode{-}
    {Lemma}{lemma: fixed cycle exit or return taking too long, first exit analysis}{0.8}{Apply general framework: Verifying condition \eqref{eq:E3}}
    \begin{thmdependence}
        \thmtreeref
            {Theorem}{corollary: LDP 2}
    \end{thmdependence}

\thmtreenode{-}
    {Lemma}{lemma: cycle, efficient return}{0.8}{Apply general framework: Verifying condition \eqref{eq:E4}}
    \begin{thmdependence}
        \thmtreeref
            {Theorem}{corollary: LDP 2}
    \end{thmdependence}

\bigskip 

\thmtreenode{-}
    {Theorem}{corollary irreducible case}{0.8}{metastability of $X^{\eta|b}_j(x)$: convergence to CTMC that only visits widest minima}
    \begin{thmdependence}
    
    \thmtreenode{-}
        {Lemma}{lemma: metastability, abstract framework}{0.8}{abstract framework for establishing sample-path convergence}

    \thmtreenode{-}
        {Proposition}{proposition: hat X converges to CTMC, metastability}{0.8}{sample-path convergence from $\hat X^{\eta,\epsilon|b}_t(x)$ to $Y^{*|b}_t$}
        \begin{thmdependence}
        \thmtreenode{-}
            {Lemma}{lemma weak convergence of jump process}{0.8}{weak convergence of jump process}

        \thmtreenode{-}
            {Proposition}{proposition: transition time, metastability}{0.8}{asymptotic law of transition times between attraction fields}
            \begin{thmdependence}
                \thmtreeref{Lemma}{lemma: cycle, efficient return}
                \thmtreenode{-}
                    {Lemma}{lemma: unlikely to exit M or visit s i, metastability}{0.8}{
                    Unlikely to exit $(-M,M)$ or get too close to any boundary points $s_i$; i.e.,
                    \\
                    $\limsup_{\eta \downarrow 0}\max_{ i \in [n_\text{min}] }\sup_{ x \in [m_i - \epsilon,m_i + \epsilon] }
    \P\Big( \exists j < \sigma^{\eta|b}_{i;\epsilon}(x)\ s.t.\ 
    X^{\eta|b}_j(x) \in S(\delta)\text{ or }\big|X^{\eta|b}_j(x)\big| \geq M + 1
    \Big) \leq \Delta$}
                \begin{thmdependence}
                    \thmtreenode{-}
                        {Lemma}{lemma: iff conditions for positive mass under check C}{0.8}{}
                        \begin{thmdependence}
                            \thmtreeref{Lemma}{lemma: continuity of h k b mapping clipped}
                            \thmtreeref{Lemma}{lemma: choose key parameters, first exit time analysis}
                        \end{thmdependence}
                    \thmtreeref{Lemma}{lemma: measure check C J * b, continuity, first exit analysis}
                    \thmtreeref{Lemma}{lemma: cycle, efficient return}
                    \thmtreeref{Theorem}{theorem: first exit time, unclipped}
                \end{thmdependence}
                \thmtreeref{Lemma}{lemma: measure check C J * b, continuity, first exit analysis}
                \thmtreeref{Theorem}{theorem: first exit time, unclipped}
            \end{thmdependence}
            
        \end{thmdependence}

    \thmtreenode{-}
        {Proposition}{proposition: hat X close to X, metastability}{0.8}{$\lim_{\eta\downarrow 0}\P\Big( \Big|X^{\eta|b}_{ \floor{ t/\lambda^*_b(\eta) } }(x) - \hat X^{\eta,\epsilon|b}_t(x)\Big| > \epsilon\Big) = 0.$}
        \begin{thmdependence}
            \thmtreeref{Proposition}{proposition: transition time, metastability}
            \thmtreeref{Proposition}{proposition: hat X converges to CTMC, metastability}
            \thmtreeref{Lemma}{lemma: unlikely to exit M or visit s i, metastability}
            \thmtreeref{Lemma}{lemma: cycle, efficient return}
            \thmtreeref{Theorem}{theorem: first exit time, unclipped} 
        \end{thmdependence}
        
    \end{thmdependence}

\thmtreenode{-}
    {Theorem}{theorem: metastability, unclipped}{0.8}{metastability of $X^{\eta}_j(x)$: convergence to CTMC}
    \begin{thmdependence}
        \thmtreeref{Theorem}{corollary irreducible case}
    \end{thmdependence}

\thmtreenode{-}
    {Corollary}{corollary, elimination of sharp minima, metastability}{0.8}{Elimination of sharp minima}
    \begin{thmdependence}
         \thmtreeref{Theorem}{corollary irreducible case}
        \thmtreeref{Proposition}{proposition: transition time, metastability}
    \end{thmdependence}

\bigskip

\end{thmdependence}
\fi

\ifshowtheoremlist
\newpage
\footnotesize
\newgeometry{left=1cm,right=1cm,top=0.5cm,bottom=1.5cm}
\linkdest{location of theorem list}
\listoftheorems
\fi

\ifshowequationlist
\newpage
\linkdest{location of equation number list}
\section*{Numbered Equations}

\eqref{def: X eta b j x, unclipped SGD}
\eqref{def: H, law of Z_j}
\eqref{def: perturb ode mapping h k, 1}
\eqref{def: perturb ode mapping h k, 2}
\eqref{def: perturb ode mapping h k, 3}
\eqref{def: measure nu alpha}
\eqref{def: measure C k t}
\eqref{def: scaled SGD, LDP}
\eqref{def: set of ODE with k jumps}
\eqref{def: X eta b j x, clipped SGD}
\eqref{def: perturb ode mapping h k b, 1}
\eqref{def: perturb ode mapping h k b, 2}
\eqref{def: perturb ode mapping h k b, 3}
\eqref{def: l * tilde jump number for function g, clipped SGD}
\eqref{def: measure mu k b t}
\eqref{def: gradient descent process y}
\eqref{defArrivalTime large jump}
\eqref{defSize large jump}
\eqref{property: large jump time probability}
\eqref{def: Gamma M, set of bounded adapted process}
\eqref{def: event A i concentration of small jumps, 1}
\eqref{def: event A i concentration of small jumps, 2}
\eqref{proof: LDP, small jump perturbation, asymptotics part 1}
\eqref{proof: LDP, small jump perturbation, asymptotics part 2}
\eqref{proof: LDP, small jump perturbation, asymptotics part 3}
\eqref{term E Z 1, lemma LDP, small jump perturbation}
\eqref{proof: LDP, small jump perturbation, choose p}
\eqref{proof: LDP, small jump perturbation, ineq 1}
\eqref{proof: applying berstein ineq, lemma LDP, small jump perturbation}
\eqref{term second order moment hat Z 1, lemma LDP, small jump perturbation}
\eqref{def: E eta delta set, LDP}
\eqref{def: prob measure Q, LDP}
\eqref{proof: lemma weak convergence of cond law of large jump, 1, LDP}
\eqref{proof: choose bar delta, lemma LDP, bar epsilon and delta}
\eqref{bound of difference between xi and xi prime between t_J and t_J+1}
\eqref{goal, lemma: LDP, bar epsilon and delta, clipped version}
\eqref{choice of delta, proof, lemma: continuity of h k b mapping}
\eqref{choice of delta, 2, proof, lemma: continuity of h k b mapping}
\eqref{ineq 1, proof, lemma: continuity of h k b mapping}
\eqref{condition, x prime and x, lemma: continuity of h k b mapping}
\eqref{def: a sigma truncated at M, LDP}
\eqref{def: perturb ode mapping h k b, truncated at M, 1}
\eqref{def: perturb ode mapping h k b, truncated at M, 2}
\eqref{def: perturb ode mapping h k b, truncated at M, 3}
\eqref{ineq, no jump time, a, lemma: SGD close to approximation x circ, LDP}
\eqref{ineq, no jump time, b, lemma: SGD close to approximation x circ, LDP}
\eqref{ineq, with jump time, b, lemma: SGD close to approximation x circ, LDP}
\eqref{proof, ineq gap between X and y, SGD close to approximation x circ, LDP}
\eqref{proof, ineq gap between y and xi, SGD close to approximation x circ, LDP}
\eqref{proof, up to t strictly less than eta tau_1^eta, SGD close to approximation x circ, LDP}
\eqref{goal 1, proposition: standard M convergence, LDP unclipped}
\eqref{goal 3, proposition: standard M convergence, LDP unclipped}
\eqref{goal 4, proposition: standard M convergence, LDP unclipped}
\eqref{goal 5, proposition: standard M convergence, LDP unclipped}
\eqref{subgoal for goal 2, proposition: standard M convergence, LDP unclipped}
\eqref{subgoal, goal 4, proposition: standard M convergence, LDP unclipped}
\eqref{property: choice of M 0, new, proposition: standard M convergence, LDP clipped}
\eqref{property: Y and tilde Y, 1, proposition: standard M convergence, LDP clipped}
\eqref{property: Y and tilde Y, 2, proposition: standard M convergence, LDP clipped}
\eqref{goal new 1, proposition: standard M convergence, LDP clipped}
\eqref{goal new 2, proposition: standard M convergence, LDP clipped}
\eqref{def: objects for definition of hat X leq k}
\eqref{def: time and size of top j jumps before n steps}
\eqref{def: hat X truncated b, j top jumps, 1}
\eqref{def: hat X truncated b, j top jumps, 2}
\eqref{property: equivalence between breve X and hat X, clipped at b}
\eqref{goal: asymptotic equivalence claim, proposition: asymptotic equivalence, clipped}
\eqref{choice of bar delta, proof, proposition: asymptotic equivalence, clipped}
\eqref{choice of bar epsilon, proof, proposition: asymptotic equivalence, clipped}
\eqref{goal, event B 1, clipped, proposition: asymptotic equivalence, clipped}
\eqref{goal, event B 2, clipped, proposition: asymptotic equivalence, clipped}
\eqref{goal, event B 3, clipped, proposition: asymptotic equivalence, clipped}
\eqref{goal, event B 4, clipped, proposition: asymptotic equivalence, clipped}
\eqref{def: x eta b M circ approximation, LDP, 2}
\eqref{def: x eta b M circ approximation, LDP, 3}
\eqref{proof, ineq for mathring x, proposition: asymptotic equivalence, clipped}
\eqref{proof: bounded jump size for breve X, goal, event B 2, clipped, proposition: asymptotic equivalence, clipped}
\eqref{proof: goal 1, lemma: atypical 3, large jumps being too small, LDP clipped}
\eqref{proof: ineq 1, lemma: atypical 3, large jumps being too small, LDP, clipped}
\eqref{choice of bar delta, proposition: uniform weak convergence, clipped}
\smallskip

\noindent
\eqref{def: mapping check g k b, endpoint of path after the last jump, first exit analysis}
\eqref{def: first exit time, J *}
\eqref{proof, observation on xi, lemma: choose key parameters, first exit time analysis}
\eqref{def: t epsilon function, first exit analysis}
\eqref{property: t epsilon function, first exit analysis}
\eqref{proof, goal, lemma: exit rate strictly positive, first exit analysis}
\eqref{def: epsilon relaxed first exit time}
\eqref{def: return time R in first cycle, first exit analysis}
\eqref{proof, ineq 1, theorem: first exit time, unclipped}
\smallskip

\noindent
\eqref{defSDE, initial condition x}
\eqref{def: Y eta b 0 f g, SDE clipped}
\eqref{def, tau and W, discont in Y, eta b, 1}
\eqref{def, tau and W, discont in Y, eta b, 2}
\eqref{def: objects for defining Y eta b, clipped SDE, 1}
\eqref{def: objects for defining Y eta b, clipped SDE, 2}
\eqref{def: objects for defining Y eta b, clipped SDE, 3}
\eqref{def: objects for defining Y eta b, clipped SDE, 4}
\eqref{defSDE, initial condition x, clipped}
\fi

\ifshownavigationpage
\newpage
\normalsize
\tableofcontents

\section*{Navigation Links}

\ifshowtheoremlist
    \noindent
    \hyperlink{location of theorem list}{List of Theorems}
    \bigskip
\fi

\ifshowtheoremtree
    \noindent
    \hyperlink{location of theorem tree}{Theorem Tree}
    \begin{itemize}
    \thmtreeref
        {Proposition}{proposition: standard M convergence, LDP clipped}
    
    \thmtreeref
        {Proposition}{proposition: standard M convergence, LDP unclipped}
        
    \thmtreeref
        {Theorem}{theorem: LDP 1, unclipped}
        
    \thmtreeref
        {Theorem}{theorem: sample path LDP, unclipped}
    
    \thmtreeref
        {Proposition}{proposition: standard M convergence, LDP clipped}
    
    \thmtreeref
        {Theorem}{theorem: LDP 1}
    
    \thmtreeref
        {Theorem}{theorem: first exit time, unclipped}
    
        
    \end{itemize}
\fi

\ifshowreminders
    \noindent
    \hyperlink{location of reminders}{Assumptions, etc}
    \bigskip
\fi

\ifshowtheoremlist
    \noindent
    \hyperlink{location of equation number list}{Numbered Equations}
    \bigskip
\fi

\ifshownotationindex
    \noindent
    \hyperlink{location of notation index}{Notation Index}
    \begin{itemize}
    \item[] 
        \hyperlink{location, notation index A}{A},
        \hyperlink{location, notation index B}{B},
        \hyperlink{location, notation index C}{C},
        \hyperlink{location, notation index D}{D},
        \hyperlink{location, notation index E}{E},
        \hyperlink{location, notation index F}{F},
        \hyperlink{location, notation index G}{G},
        \hyperlink{location, notation index H}{H},
        \hyperlink{location, notation index I}{I},
        \hyperlink{location, notation index J}{J},
        \hyperlink{location, notation index K}{K},
        \hyperlink{location, notation index L}{L},
        \hyperlink{location, notation index M}{M},
        \hyperlink{location, notation index N}{N},
        \hyperlink{location, notation index O}{O},
        \hyperlink{location, notation index P}{P},
        \hyperlink{location, notation index Q}{Q},
        \hyperlink{location, notation index R}{R},
        \hyperlink{location, notation index S}{S},
        \hyperlink{location, notation index T}{T},
        \hyperlink{location, notation index U}{U},
        \hyperlink{location, notation index V}{V},
        \hyperlink{location, notation index W}{W},
        \hyperlink{location, notation index X}{X},
        \hyperlink{location, notation index Y}{Y},
        \hyperlink{location, notation index Z}{Z}
    \end{itemize}
\fi

\fi

\end{document}